%% file: arxiv.tex
\documentclass{article}

\usepackage{iclr2027_conference,times}
\usepackage[T1]{fontenc}
\usepackage[utf8]{inputenc}
\usepackage{microtype}
\usepackage{amsmath,amssymb,amsthm,mathtools,bm}
\usepackage{booktabs}
\usepackage{colortbl}
\usepackage{longtable}
\usepackage{tabularx}
\usepackage{graphicx}
\usepackage{float}
\usepackage{placeins}
\usepackage[most]{tcolorbox}
\usepackage{xcolor}
\usepackage{enumitem}
\usepackage{needspace}
\usepackage{tikz}
\usetikzlibrary{arrows.meta,positioning,calc,fit}
\usepackage{pgfplots}
\pgfplotsset{compat=1.18}
\usepackage{hyperref}
\usepackage[nameinlink,noabbrev]{cleveref}
\crefname{section}{Section}{Sections}
\crefname{subsection}{Section}{Sections}
\crefname{appendix}{Appendix}{Appendices}
\crefname{equation}{Equation}{Equations}
\crefname{figure}{Figure}{Figures}
\crefname{table}{Table}{Tables}
\crefname{theorem}{Theorem}{Theorems}
\crefname{proposition}{Proposition}{Propositions}
\crefname{lemma}{Lemma}{Lemmas}
\crefname{corollary}{Corollary}{Corollaries}
\crefname{definition}{Definition}{Definitions}
\crefname{remark}{Remark}{Remarks}

\definecolor{deepblue}{RGB}{36,82,128}
\definecolor{deepred}{RGB}{166,62,55}
\definecolor{deepgreen}{RGB}{51,120,91}
\definecolor{softgray}{RGB}{242,244,247}
\hypersetup{
    colorlinks=true,
    linkcolor=deepblue,
    citecolor=deepgreen,
    urlcolor=deepblue
}

\setlist{leftmargin=*,nosep}

\newtheorem{theorem}{Theorem}
\newtheorem{proposition}[theorem]{Proposition}
\newtheorem{lemma}[theorem]{Lemma}
\newtheorem{corollary}[theorem]{Corollary}
\theoremstyle{definition}
\newtheorem{definition}{Definition}
\theoremstyle{remark}

\newcommand{\R}{\mathbb{R}}
\newcommand{\E}{\mathbb{E}}
\newcommand{\cD}{\mathcal{D}}
\newcommand{\cM}{\mathcal{M}}

\newcommand{\inner}[2]{\left\langle #1,#2\right\rangle}

\DeclareMathOperator*{\argmin}{arg\,min}

\title{Self-Confirming Superposition Traps \\ in Reinforcement Learning}

\author{%
Dai Shi\textsuperscript{1}\quad
Andi Han\textsuperscript{2}\quad
Feng Chen\textsuperscript{3}\\
\bfseries Yiqun Duan\textsuperscript{4}\quad
Junbin Gao\textsuperscript{2}\quad
Jos\'e Miguel Hern\'andez-Lobato\textsuperscript{1}\\[0.5em]
\textnormal{\small\textsuperscript{1}\,University of Cambridge\quad
\textsuperscript{2}\,University of Sydney}\\
\textnormal{\small\textsuperscript{3}\,University of Adelaide\quad
\textsuperscript{4}\,Facebook}
}

\iclrfinalcopy

\begin{document}

\maketitle
\lhead{Preprint}

\begin{abstract}
Reinforcement learning (RL) trains representations on data selected by the
agent's policy and updates that policy using the returns they support.
We show that this loop can sustain a lower-return policy even when
representation fitting is globally optimal for the policy's own data.
Specifically, we identify a \emph{self-confirming superposition trap}, in
which every optimal code shares features that rarely co-activate under
the current policy but interfere after an alternative action. The
resulting control error lowers that action's return and reinforces its
avoidance, although choosing it would earn more after representation
adaptation at the same capacity. In a tied two-step model, we characterize which feature counts and
representation dimensions admit a trap. We also show separately that
continued visitation, equal feature frequencies, and independent
controller learning need not prevent it. Because rarely chosen actions
contribute little to the fitting objective, optimal fitting can tolerate
errors large enough to reverse their return advantage. We derive a bound
on this distortion in a broader finite-action model and use it to prove a
sufficient replay condition: retaining enough training weight on the best
separately adapted action preserves its return advantage despite residual
representation error. Experiments with neural PPO show that agents
initialized toward different actions develop different interference
patterns and opposite mean return
rankings at the same capacity, with corresponding differences in their
final policies. Motivated by the replay condition, we test whether
preserving access to neglected training states can improve control.
Providing such access reduces measured interference and improves
sequential return, with return gains even when the encoder is frozen.
We extend these tests to MiniGrid and DMControl, where training-state
access, replay reweighting, and overlap penalties can improve control.
In DreamerV3--Crafter, protecting world-model fitting weight also improves
cumulative training scores at unchanged capacity. Our code is available at
\url{https://anonymous.4open.science/r/self-confirming-superposition-traps-SCST}.
\end{abstract}

\section{Introduction}
\label{sec:intro}
\label{sec:loop}

Reinforcement learning (RL) agents learn from experience selected by their
own actions. With a shared representation, the policy determines which
features the controller must distinguish, and control quality determines
the returns guiding future choices
\citep{yarats2021proto,moalla2024representation}. This feedback appears in
visual control \citep{yarats2021sacae,yarats2022drqv2} and world-model
learning for prediction and planning
\citep{hansen2024tdmpc2,hafner2025dreamerv3}.

Through feature superposition, a representation can store more features
than it has dimensions by allowing their directions to overlap.
Sharing makes limited capacity available to more features, but can
introduce interference when one feature affects the response to another.
The cost depends on how often this interference occurs and how heavily
its errors are penalized, linking feature sharing to sparsity, importance,
and co-occurrence \citep{elhage2022toy,prieto2026statistics}.
However, in RL, the policy itself shapes the training distribution and can change which features occur together without changing their individual frequencies. Two features may rarely co-activate in the data currently collected, yet appear together frequently under an action the agent tends to avoid. An allocation that fits the collected data well can therefore impair control after that action.

Even if the actor knows these returns exactly, it evaluates alternatives
through the representation learned from its own data. A different policy
could earn more after adapting that representation at the same capacity.
We ask: \emph{can optimal representation fitting stabilize a
lower-return policy?}

We construct a \emph{self-confirming superposition trap} in which every
globally optimal code under a lower-return policy shares the features
responsible for reversing the action ranking. Replicator updates from exact returns
sustain this allocation (\cref{sec:exact}). We characterize the
feature counts and representation dimensions admitting a trap and show
that equal feature frequencies, positive visitation, and independent
controller learning can each leave it intact (\cref{sec:robustness}).
The common problem is that rarely chosen actions can incur substantial
error at little cost to fitting. A finite-action analysis bounds the
error needed for reversal and establishes local stability despite small
fitting and value errors (\cref{sec:setting}). Protecting the best
separately adapted action's replay weight can preserve its ranking
without eliminating all representation error or increasing capacity
(\cref{sec:interventions}).

Our experiments follow this feedback from matched-feature data to neural
policy learning (\cref{sec:experiments}). Co-activation changes and initial
policy preferences lead to different interference patterns, return
rankings, and subsequent choices. Training-state interventions then test
whether broader access can improve control, including with frozen
encoders. We extend the intervention tests to MiniGrid and DMControl and
test protected world-model fitting in DreamerV3 on Crafter.

\section{From Feature Sharing to a Self-Confirming Trap}
\label{sec:exact}
\label{sec:task-construction}
\label{sec:optimal-geometry}

We establish the trap in a two-step task encoding three features in two
dimensions. Limited capacity requires feature sharing, and the policy
determines which shared features occur together. This dependence can
stabilize a lower-return policy even when its code is globally optimal.

\paragraph{Setting.}
The branch policy $\pi$ selects $B=E$ with probability $p$ and
$B=S$ with probability $1-p$. The environment then samples a pair of
active feature indices $G\subset\{1,2,3\}$ according to the branch.
Under $S$, $G$ is equally likely to be $\{1,3\}$ or $\{2,3\}$.
Under $E$, it is equally likely to be $\{1,3\}$ or $\{1,2\}$
(\cref{fig:trap-overview}a). The three feature values form the vector
$Z=(Z_1,Z_2,Z_3)^\top\in\R^3$, which is independent of $(B,G)$,
with $\E[Z]=0$ and $\E[ZZ^\top]=I_3$. The observation $o$ contains
the active feature values, with $o_i=Z_i$ for $i\in G$ and $o_i=0$
otherwise.
A learned code $V=[v_1,v_2,v_3]$ assigns each feature a unit direction
$v_i\in\R^2$. The encoder forms $h=Vo+\nu$, where $\nu$ is independent,
centered noise with covariance $\sigma^2I_2$. The controller uses tied
linear weights to produce the terminal action $\widehat Z=V^\top h$. Unit columns and
weight tying give each feature a self-response of one, so cross-feature
errors depend on the overlap of their directions. 
The terminal reward $r_B=b_B-\sum_{i\in G}(\widehat Z_i-Z_i)^2$
subtracts the squared errors on active features from a fixed base
reward $b_B$. We write $\delta=b_E-b_S$ for the base-reward difference
between $E$ and $S$, before accounting for control error.

\begin{figure}[t]
\centering
\includegraphics[width=\textwidth]{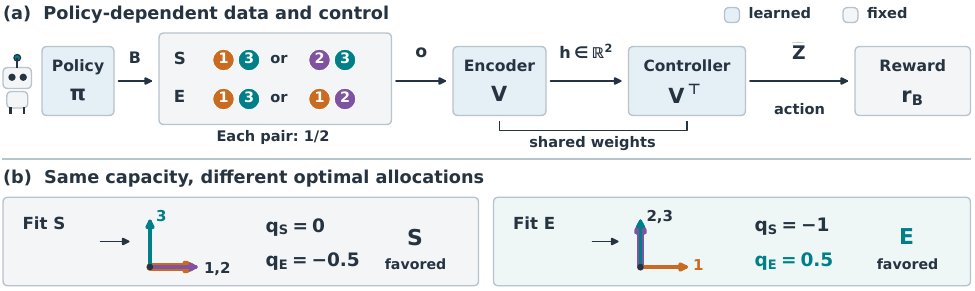}
\caption{
Optimal feature sharing can favor the worse policy.
}
\label{fig:trap-overview}
\end{figure}

\paragraph{Optimal allocation and returns.}
With squared overlap $g_{ij}=\langle v_i,v_j\rangle^2$, an active pair
$\{i,j\}$ incurs expected control error $2g_{ij}+2\sigma^2$.
Averaging over each branch gives
$D_S=g_{13}+g_{23}+2\sigma^2$ and
$D_E=g_{12}+g_{13}+2\sigma^2$,
with actual expected return $q_B(V)=b_B-D_B(V)$. Fitting the code to the
policy's data minimizes $(1-p)D_S+pD_E$, or, up to the constant noise term,
\begin{equation}
L_p(V)=p g_{12}+g_{13}+(1-p)g_{23}.
\label{eq:frame-objective}
\end{equation}
Overlap $g_{12}$ has weight one in $D_E$ and weight $p$ in $L_p$.
For a fixed code, reducing $p$ therefore lowers the contribution of this
error to fitting without improving control on $E$.
At $p=0$, every optimal code has $v_1=\pm v_2\perp v_3$:
features $1$ and $2$ share a direction because $S$ never requires them
together. With $b_S=0$, $b_E=0.5$, and $\sigma=0$, this code gives
$q_S=0$ and $q_E=-0.5$. Fitting to $E$ instead makes features $2$ and
$3$ share and gives $q_E=0.5$ at the same capacity
(\cref{fig:trap-overview}b). Thus the current code favors $S$, but
choosing $E$ and refitting yields more return. If the same allocation
remains optimal when $E$ is visited occasionally, its return disadvantage
pushes the policy back toward $S$.

\paragraph{Defining the trap.}
\label{sec:trap-definition}
We state the definition for finitely many actions and linear
encoder--controller models with either tied weights or an independently
parameterized controller, although we also develop more general results in
\cref{sec:setting,sec:interventions}.
To do so, let $x\in\Delta_A$ be a distribution over $A$ actions and
$\theta\in\Theta$ collect the encoder and controller parameters.
Fitting to policy $x$ minimizes the average control loss
$R_x(\theta)=\sum_a x_aD_a(\theta)$, with minimum $R^\star(x)$
and minimizer set $\cM_0(x)$. Since action returns are
$q_a(\theta)=b_a-D_a(\theta)$, all these optimal fits yield the
same adapted return $J^\star(x)=x^\top b-R^\star(x)$.
In the two-step task, $\theta=V$ and we write $J^\star(p)$ for
$J^\star((1-p,p))$.
Finally, let $q^\perp$ denote the reference returns obtained by removing
all cross-feature error (i.e., error caused by other active features
contributing to a feature's predicted value) while preserving self-response
and noise (\cref{app:interference-reference}). Comparing the actual return
vector $q=(q_a)_{a=1}^A$ with $q^\perp$ tests whether this interference
reverses the policy ranking. 
\begin{definition}[Self-confirming Superposition Trap]
\label{def:trap}
For the specified actor $\dot x=F(x,q(\theta))$, fix feasible
action distributions $x^\star,y$ and a nonempty set $P$ of feature pairs, each
more often co-active under $y$ than under $x^\star$.
Let $I_a^P(\theta)$ be the expected squared cross-feature error contributed
by $P$ after action $a$ (\cref{eq:I_a^p}), and put $\Delta x=y-x^\star$.
The action distribution $x^\star$ is a \emph{self-confirming superposition trap},
witnessed by $P,y$, if:
\begin{enumerate}[label=(\roman*)]
\item \emph{Allocation:} At every $\theta\in\cM_0(x^\star)$, each pair
in $P$ has nonorthogonal encoder directions.
\item \emph{Reversal:} At every $\theta\in\cM_0(x^\star)$,
\[
(\Delta x)^\top q(\theta)<0<(\Delta x)^\top q^\perp(\theta),
\qquad
(\Delta x)^\top I^P(\theta)>-(\Delta x)^\top q(\theta).
\]
\item \emph{Feedback:} Exact-response trajectories exist locally from
nearby feasible action distributions, and $x^\star$ is locally asymptotically
stable relative to the feasible policy set, uniformly over absolutely
continuous policy trajectories with measurable
$\theta(t)\in\cM_0(x(t))$ (\cref{def:policy-stability}).
\item \emph{Suboptimality:} $J^\star(x^\star)<J^\star(y)$.
\end{enumerate}
\end{definition}

\paragraph{Policy feedback and stability.}
\label{sec:dynamics}
\label{sec:deployed-gap}
\label{sec:bistability}
To show that the two-step task contains a trap, one shall establish that nearby policies return to pure $S$ as the code is refitted. We assume
\emph{exact adaptation}: at each $p$, the shared encoder--controller code
$V$ globally minimizes the current fitting loss $L_p$. Although optimal codes need
not be unique, \cref{thm:frame} shows that all have the same squared
overlaps $g_{12},g_{13},g_{23}$ at a fixed $p$. They therefore give the
same branch returns, with control-loss difference
$\cD(p):=D_E-D_S=g_{12}(p)-g_{23}(p)$. This difference equals $1$ for
$p\le p_-=(3-\sqrt5)/2$, equals $-1$ for $p\ge p_+=(\sqrt5-1)/2$,
and decreases strictly between these thresholds
(\cref{fig:theory-mechanism}a).
The actual return gap is $\Delta_q(p)=q_E-q_S=\delta-\cD(p)$.
For this construction, consider the replicator flow
$\dot x_a=x_a[q_a(\theta)-x^\top q(\theta)]$, which for the branch policy
gives the natural policy gradient $\dot p=p(1-p)[\delta-\cD(p)]$
\citep{kakade2001natural}, although ordinary expected policy-gradient
flow for a binary logit has the same attractors and basin boundary
(\cref{app:binary-policy-updates}).

\begin{figure}[t]
\centering
\includegraphics[width=\textwidth]{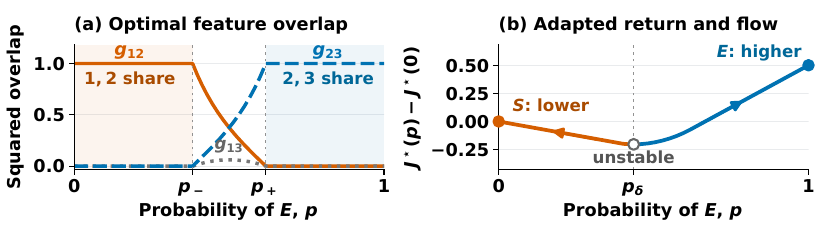}
\caption{Globally optimal fitting can sustain the worse policy.
(a) Optimal overlaps. (b) Adapted return at $\delta=0.5$.
Arrows show updates, and filled/open points mark stable/unstable policies.}
\label{fig:theory-mechanism}
\end{figure}

\begin{theorem}[Self-confirming Trap and Bistability]
\label{thm:trap}
For $0<\delta<1$, in the tied three-feature task, the policy $p=0$ is a self-confirming superposition trap, witnessed by the feature pair
$\{1,2\}$ and the pure-$E$ policy, with adapted return deficit
$J^\star(1)-J^\star(0)=\delta$.
For every $\delta\in(-1,1)$, a unique $p_\delta\in(p_-,p_+)$ satisfies
$\cD(p_\delta)=\delta$. Both endpoints are asymptotically stable relative
to $[0,1]$, while $p_\delta$ is unstable. Every interior initialization
below $p_\delta$ converges to zero, and every one above it converges to one.
\end{theorem}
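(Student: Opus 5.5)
The plan is to reduce everything to the scalar profile $\cD(p)$ supplied by \cref{thm:frame}, and then read off the trap conditions and the one-dimensional dynamics. By \cref{thm:frame}, all minimizers of $L_p$ share the same squared overlaps. Moreover, $\cD(p)=1$ on $[0,p_-]$, $\cD(p)=-1$ on $[p_+,1]$, and $\cD$ is strictly decreasing (and continuous) on $(p_-,p_+)$. For every measurable selection $V(t)\in\cM_0(p(t))$, the replicator flow is therefore the autonomous scalar ODE
\[
\dot p=f(p):=p(1-p)\,[\delta-\cD(p)].
\]
The right-hand side is continuous because $\cD$ is continuous, so existence follows from Peano. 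Since $f$ is independent of the selection, stability is automatically uniform over absolutely continuous trajectories with measurable codes.

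For the bistability part, fix $\delta\in(-1,1)$. The function $\cD-\delta$ is continuous, equals $1-\delta>0$ at $p_-$ and $-1-\delta<0$ at $p_+$, and is strictly decreasing on the interval between. The intermediate value theorem therefore gives a unique root $p_\delta\in(p_-,p_+)$. On $[0,p_-]$ the difference is the positive constant $1-\delta$, and on $[p_+,1]$ it is the negative constant $-1-\delta$. Hence $f<0$ on $(0,p_\delta)$ and $f>0$ on $(p_\delta,1)$. Standard one-dimensional phase-line arguments (monotone bounded trajectories converging to equilibria) then give convergence to $0$ below $p_\delta$ and to $1$ above it, together with asymptotic stability of both endpoints relative to $[0,1]$ and instability of $p_\delta$. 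Near the endpoints, the explicit form $\dot p=-(1-\delta)p(1-p)$ even gives exponential rates.

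For the trap at $p=0$, take $x^\star=(1,0)$, $y=(0,1)$, and $P=\{\{1,2\}\}$. Pair $\{1,2\}$ is co-active with probability $1/2$ under $E$ and $0$ under $S$. The conditions are then checked in turn.
\begin{enumerate}[label=(\roman*)]
\item \emph{Allocation.} At $p=0$, \cref{thm:frame} gives $v_1=\pm v_2\perp v_3$, so $g_{12}=1$ and the pair is nonorthogonal.
\item \emph{Reversal.} We have $(\Delta x)^\top q=q_E-q_S=\delta-\cD(0)=\delta-1<0$. For $q^\perp$, removing cross errors leaves $D_E^\perp=D_S^\perp=2\sigma^2$, so $(\Delta x)^\top q^\perp=\delta>0$. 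The interference from $P$ after $E$ is $I_E^P=g_{12}=1$, counting both directional contributions averaged over the $1/2$ probability of $\{1,2\}$, and $I_S^P=0$. This gives $(\Delta x)^\top I^P=1>1-\delta$. The one thing to verify here is that the normalization in \cref{eq:I_a^p} indeed yields $I_E^P=g_{12}$.
\item \emph{Feedback.} This follows from the phase-line analysis above, since $p_\delta>p_->0$.
\item \emph{Suboptimality.} We have $J^\star(0)=b_S-R^\star(0)$ and $J^\star(1)=b_E-R^\star(1)$. By the symmetry $1\leftrightarrow 3$ relabeling, $R^\star(0)=R^\star(1)$, both equal to $L$ minimized at value $1$ plus $2\sigma^2$. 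The deficit is therefore exactly $\delta$.
\end{enumerate}

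I expect the main obstacle to be bookkeeping rather than depth. The first point is checking that the definition's uniform stability (\cref{def:policy-stability}) is met: selection-independence of $f$ handles this. The second is matching the constant in $I^P$ and $q^\perp$ to the appendix definitions. The continuity of $\cD$ on the thresholds is inherited from \cref{thm:frame}.
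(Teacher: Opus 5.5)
Your proof is correct and follows essentially the same route as the paper: the frame proposition makes the flow $\dot p=p(1-p)[\delta-\cD(p)]$ independent of the optimizer selection, continuity and strict decrease of $\cD$ give the unique $p_\delta$ and the basin signs, and the witness contrasts $\delta-1$, $\delta$, $1$ together with the $1\leftrightarrow3$ symmetry give the four trap conditions. The only difference is that the paper linearizes at $0$, $1$ and $p_\delta$ where you use phase-line monotonicity, which works equally well.

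Two small corrections that do not affect the argument. The common endpoint value is $R^\star(0)=R^\star(1)=2\sigma^2$, because $L^\star(0)=L^\star(1)=0$. Near $p=1$ the flow is $\dot p=(1+\delta)p(1-p)$, not the low-endpoint form you wrote.
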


\cref{fig:theory-mechanism}b shows why the initial policy matters: trajectories on either side of $p_\delta$ increase adapted return but
converge to different endpoints. Those approaching $S$ therefore improve throughout learning while ending at the worse policy, despite globally optimal fitting (\cref{app:dynamics}). Our experiments test whether initialization dependence persists under neural
PPO (\cref{sec:exp-feedback}).

\section{Robustness of Superposition Traps}
\label{sec:robustness}
\label{sec:matching-complement}
In this section, we show in two separate settings that the trap can persist with equal feature frequencies or an independently parameterized controller.
\paragraph{Equal feature frequencies.}
In the three-feature example, changing the policy changes both how
often individual features appear and which features appear together.
We now hold individual feature frequencies fixed across branches to
determine whether policy-dependent co-activation can still sustain the
trap. We encode $2d$
features as unit vectors in $\R^d$, $d\ge2$, and let $M_S,M_E$ be two
partitions of the features into pairs with no pair in common. Branch $B$
samples the active pair uniformly from all unordered pairs except those
in $M_B$. Every feature then appears with probability $1/d$ under either
branch, regardless of the policy. The dimension choice lets each of the
$d$ excluded pairs share one of $d$ orthogonal directions without
cross-feature error on $B$.

\begin{theorem}[Equal-frequency Superposition Trap]
\label{thm:matching-complement}
For the matching construction above and $p\in[0,1]$, the minimum of
$(1-p)D_S+pD_E$ over unit-column encoders is
$R_{\rm mc}^\star(p)=2\sigma^2+\frac{\min\{p,1-p\}}{d-1}$.
For $p<1/2$, every optimum has
$g_{ij}=\mathbf1\{\{i,j\}\in M_S\}$ for $i\ne j$. For $p>1/2$,
the same statement holds with $M_E$. At $p=1/2$, both matching
allocations are optimal. Under exact adaptation and the replicator
actor, $0<\delta<1/(d-1)$ makes both pure policies locally
asymptotically stable. Pure $S$ is a self-confirming superposition trap
with adapted return $\delta$ below that of pure $E$.
\end{theorem}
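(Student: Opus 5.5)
The plan is to first compute the fitting objective and solve it with a frame-potential lower bound, then read the trap conditions off the resulting closed form.

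\textbf{Step 1: the fitting objective.} Under branch $B$ the active pair is uniform over the $\binom{2d}{2}-d=2d(d-1)$ pairs outside $M_B$. An active pair $\{i,j\}$ costs $2g_{ij}+2\sigma^2$, so
\[
D_B=2\sigma^2+\frac{1}{d(d-1)}\sum_{\{i,j\}\notin M_B} g_{ij}.
\]
Write $T=\sum_{i<j}g_{ij}$, $m_S=\sum_{\{i,j\}\in M_S}g_{ij}$ and $m_E=\sum_{\{i,j\}\in M_E}g_{ij}$. Then
\[
(1-p)D_S+pD_E-2\sigma^2=\frac{T-(1-p)m_S-p\,m_E}{d(d-1)}.
\]

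\textbf{Step 2: lower bound.} I would use the frame potential. For $2d$ unit vectors in $\R^d$,
\[
\sum_{i,j}\langle v_i,v_j\rangle^2=\|VV^\top\|_F^2\ge \frac{(\operatorname{tr}VV^\top)^2}{d}=\frac{(2d)^2}{d}=4d,
\]
so $2T+2d\ge 4d$, that is, $T\ge d$. Since $g_{ij}\le1$, we have $m_S,m_E\le d$. The key extra constraint is $m_S+m_E\le d$.

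\textbf{Main obstacle: proving $m_S+m_E\le d$.} I expect this to be the hard step. First, $M_S\cup M_E$ is a union of even cycles alternating between the two matchings. Second, I would exploit $T\ge$ (sum of the $M_S$ and $M_E$ overlaps) plus the remaining overlaps, and try to bound them directly through the trace identity.

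My first attempt is to consider $W=VV^\top$ with eigenvalues $\lambda_k$ summing to $2d$, so that $\sum_{i,j}g_{ij}=\sum_k\lambda_k^2$. The numerator is
\[
N = T-(1-p)m_S-p\,m_E \;\ge\; (T-m_S-m_E)+\min\{p,1-p\}(m_S+m_E).
\]
The claim $N\ge d\min\{p,1-p\}$ should follow if I show $T-m_S-m_E\ge d-(m_S+m_E)$, i.e. $T\ge d$, which already holds. Then
\[
N\ge d-(m_S+m_E)+\min\{p,1-p\}(m_S+m_E)\ge d\min\{p,1-p\}
\]
because $m_S+m_E\le d$ is needed again. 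So the real content is this inequality. For a cycle $i_1,i_2,\dots$ alternating between $M_S$ and $M_E$, consecutive overlaps $g$ satisfy a triangle-type constraint on angles: if $|\langle v_a,v_b\rangle|$ and $|\langle v_b,v_c\rangle|$ are both near $1$, then $v_a\approx\pm v_c$, which forces $g_{ac}$ into $T$. I would prove
\[
T\ge m_S+m_E+\bigl(d-(m_S+m_E)\bigr)_+
\]
or, more simply, $T\ge m_S+m_E$ together with $T\ge d$, and handle the combination by the frame bound applied blockwise. If this fails, I would fall back on a direct eigenvalue argument: $\sum_k\lambda_k^2\ge 4d$, with equality analysis for tight frames.

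\textbf{Step 3: attainment and uniqueness.} Assigning to each $M_S$ pair one of $d$ orthonormal directions gives $g=\mathbf 1_{M_S}$. This yields $T=m_S=d$, $m_E=0$ and $N=p\,d$, so the value is $p/(d-1)$. The $M_E$ analog gives $(1-p)/(d-1)$. Their minimum equals the stated bound, which establishes the formula for $R^\star_{\rm mc}(p)$. For uniqueness when $p<1/2$, equality forces $m_S+m_E=d$, $m_E=0$ (since the coefficient gap is strict), and $T=d$. Hence $m_S=d$, so each $M_S$ pair is collinear, and $T=m_S$ makes all other overlaps zero. At $p=1/2$ both allocations tie.

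\textbf{Step 4: trap conditions.} At the $M_S$ allocation, $D_E-D_S=\frac{d}{d(d-1)}=\frac1{d-1}$. This is constant on $[0,1/2)$, and symmetrically equals $-1/(d-1)$ for $p>1/2$. The replicator gives $\dot p=p(1-p)[\delta-\cD(p)]$. For $0<\delta<1/(d-1)$, the bracket is negative near $0$ and positive near $1$, so both pure policies are locally asymptotically stable, uniformly over measurable optimal selections because $\cD$ is single-valued off $p=1/2$. The trap conditions then follow:
\begin{itemize}
\item \emph{Allocation and reversal:} take $P=M_E$ with $y=$ pure $E$. The $M_E$ pairs co-occur only under $E$. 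At the $M_S$ code, the $M_E$-pairs are nonorthogonal? Not necessarily, so instead I would take $P$ to be the $M_S$ pairs co-active under $E$, namely $P=M_S$. These pairs are never active under $S$, are nonorthogonal (collinear), and contribute $I^P_E=1/(d-1)$. In addition, $q^\perp$ gives gap $\delta>0$.
\item \emph{Suboptimality:} $J^\star(1)-J^\star(0)=\delta$.
\end{itemize}
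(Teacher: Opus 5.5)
\textbf{Gap in Step 2.} The inequality you single out as the key constraint, $m_S+m_E\le d$, is false. Take all $2d$ columns equal to a single unit vector. Then every $g_{ij}=1$, so $m_S=m_E=d$ and $m_S+m_E=2d>d$. For $d=2$, collapsing the single $4$-cycle of $M_S\cup M_E$ onto one line already gives $4>2$. Neither the triangle-type argument along alternating cycles nor a blockwise frame bound can therefore establish it. Your chain $N\ge d-(1-\mu)(m_S+m_E)\ge \mu d$, with $\mu=\min\{p,1-p\}$ and $N$ your numerator, breaks whenever $m_S+m_E>d$.

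The loss happens when you bound $T-m_S-m_E$ below by $d-(m_S+m_E)$. That step discards the fact that $T-m_S-m_E=g_0$ is itself a nonnegative sum of squared overlaps. You actually wrote down a sufficient pair, $T\ge m_S+m_E$ and $T\ge d$. With a case split on whether $m_S+m_E\le d$ it gives the bound, but you never take that step and instead declare the false inequality to be ``the real content.''

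\textbf{Repair.} It uses only ingredients you already have. Since $N=p\,m_S+(1-p)\,m_E+g_0$ with $g_0\ge0$ and $\mu\le 1/2$, you get $N\ge \mu(m_S+m_E+g_0)=\mu T\ge \mu d$. For $p\le 1/2$ this is the identity $N-pT=(1-2p)m_E+(1-p)g_0\ge 0$ combined with the Welch bound $T\ge d$. That is exactly the paper's argument.

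\textbf{Uniqueness must be redone.} Your claim that ``equality forces $m_S+m_E=d$'' came from the false lemma. For $0<p<1/2$, equality in the corrected chain forces $m_E=g_0=0$ and $T=d$. Hence $m_S=T=d$, so each of the $d$ pairs in $M_S$ has $g_{ij}=1$ and all other overlaps vanish. At $p=0$, however, equality in $N\ge pT$ says nothing about $T$, so this case needs separate treatment. There, zero loss gives $m_E=g_0=0$, and then $m_S=T\ge d$ together with $m_S\le d$ yields the same rigidity. The paper instead argues that one representative from each $M_S$ pair forms an orthonormal basis.

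\textbf{What already matches.} The rest agrees with the paper: the constant slopes $\pm 1/(d-1)$, local stability under the replicator, the witness $P=M_S$ contributing $1/(d-1)$, and the adapted deficit $\delta$. You should still state explicitly the attribution inequality in the reversal clause, $1/(d-1)>1/(d-1)-\delta$, which holds because $\delta>0$.
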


Beyond the fixed dimension ratio (i.e., 2$d$ and $d$) in \cref{thm:matching-complement}, we characterize
which feature counts $d_{\mathrm{in}}\ge2$ and representation dimensions
$d_{\mathrm{out}}\ge1$ admit a trap in the same tied two-step model.

\begin{theorem}[Trap Existence across Dimensions]
\label{thm:dimension-trap}
The tied two-step
model admits a self-confirming superposition trap at pure $S$ for
some branch pair distributions and $\delta>0$ if and only if
$d_{\mathrm{in}}>d_{\mathrm{out}}\ge2$.
If every feature must occur with probability $2/d_{\mathrm{in}}$
under each branch, such a trap exists if and only if
$d_{\mathrm{in}}>d_{\mathrm{out}}\ge2$ and $d_{\mathrm{in}}\ge4$.
\end{theorem}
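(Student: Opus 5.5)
We prove the two directions separately, first for general branch pair distributions and then under the equal-frequency constraint. Throughout, a branch $B$ is a distribution $\mu_B$ over unordered active pairs, and the fitting loss is $L_p(V)=\sum_{\{i,j\}}[(1-p)\mu_S(ij)+p\mu_E(ij)]\,g_{ij}$ up to the constant noise term. Here $V$ ranges over unit-column codes in $\R^{d_{\mathrm{out}}}$, and $D_B=\sum\mu_B(ij)g_{ij}+2\sigma^2$.

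\emph{Necessity.} Suppose first that $d_{\mathrm{in}}\le d_{\mathrm{out}}$. Then the orthonormal code has $g\equiv0$ and is optimal for every $p$. Every optimum therefore has $L_p=0$, so every pair with positive weight under $x^\star$ is orthogonal. At pure $S$, the witnessing pairs $P$ must be more often co-active under $E$ than under $S$. Feature frequency is irrelevant here, but (i) requires nonorthogonality at every optimum. The orthonormal code is itself an optimum, so (i) fails. Now suppose $d_{\mathrm{out}}=1$. Every $g_{ij}=1$, so $V$ is essentially unique and $q$ does not depend on $p$. Then $D_E-D_S=0$ and $\Delta_q=\delta>0$. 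The reversal $(\Delta x)^\top q<0$ in (ii) fails, and pure $S$ is not even stable. In the equal-frequency case with $d_{\mathrm{in}}=3$, the constraint forces each feature to occur with probability $2/3$. A distribution on the three pairs with each feature in two thirds of draws must be uniform, so $\mu_S=\mu_E$. Then $L_p$, $D_E-D_S$ and $J^\star$ do not depend on $p$, and suboptimality (iv) fails. This exhausts the "only if" directions.

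\emph{Sufficiency.} For $d_{\mathrm{in}}>d_{\mathrm{out}}\ge2$ we embed the three-feature construction of \cref{thm:trap}. Take features $1,2,3$ as in that task and pad with $d_{\mathrm{out}}-2$ extra features. Every extra feature is co-active only with a fixed partner, or placed in its own pair $\{k,k'\}$ used by both branches with equal probability. We also need the surplus features beyond $d_{\mathrm{out}}+1$, which must share directions somewhere. To handle them, we add pairs among "dummy" features that appear identically under both branches, and we route the surplus so that sharing occurs between features never co-active under either branch. I expect this step to be the main obstacle. The key claim is that the optimum decomposes: the block of features $1,2,3$ lives in a 2-plane orthogonal to the others, and its squared overlaps coincide with those of \cref{thm:frame}. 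Once this holds, the remaining features contribute a $p$-independent constant to $D_E-D_S$. Then $\cD(p)$ is the one from \cref{thm:trap}, and (i)--(iv) transfer verbatim for $0<\delta<1$.

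To prove the decomposition, we lower-bound $L_p$ blockwise using the three-feature minimum. We then show that any cross-block overlap strictly increases the loss. A scaling argument gives the simplest route: put small weight $\epsilon$ on the padding pairs, so they can be made orthogonal at no loss. If one orthogonal direction too few is available, we instead let features never co-active with anything share freely. For the equal-frequency version with $d_{\mathrm{in}}\ge4$, we adapt \cref{thm:matching-complement}. We choose two disjoint near-matchings $M_S,M_E$, sample uniformly off them, and mix with a uniform background to equalize frequencies to $2/d_{\mathrm{in}}$. The obstacle there is again computing the optimum when $d_{\mathrm{out}}\ne d_{\mathrm{in}}/2$. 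For that case we would first try a perturbative construction: start from a symmetric branch distribution and tilt it by a small matching-dependent term. This breaks the tie in favour of sharing the $M_S$ pairs at $p=0$ and the $M_E$ pairs at $p=1$.
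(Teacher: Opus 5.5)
\textbf{Main gap: the sufficiency construction fails.} The construction you describe already fails at $(d_{\mathrm{in}},d_{\mathrm{out}})=(4,3)$. A feature's direction is constrained only by the features it co-occurs with, so padding features that are co-active with a single fixed partner consume no dimensions. For example, add feature $4$ co-active only with feature $3$. The $S$ loss then vanishes at $v_1=e_1$, $v_2=e_2$, $v_3=e_3$, $v_4=e_1$. This code is therefore an $S$-optimum with $g_{12}=0$ and $D_E=D_S$, so both allocation and reversal fail. Any other choice of partner, or pairing extras among themselves, behaves the same way.

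Consequences for the rest of your plan:
\begin{itemize}
\item Your key claim that the core block lies in a plane orthogonal to the others is false for this padding.
\item The step ``any cross-block overlap strictly increases the loss'' cannot be proved when cross-block pairs carry no weight.
\item Small weights $\epsilon$ on padding pairs go the wrong way: they only make it cheaper for the core to use extra dimensions at interior $p$.
\item The surplus features and the entire equal-frequency case are left as acknowledged obstacles. A uniform background plus a tilt does not identify the optimum set, and identifying it is the whole difficulty.
\end{itemize}

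\textbf{Missing ideas.} First, you do not need $\cD(p)$ on all of $[0,1]$. By compactness (\cref{lem:dimension-endpoint}), it suffices to identify every optimum at $p=0$ and $p=1$ and verify strict inequalities there. Local attraction and existence of exact-response trajectories then follow without any phase diagram.

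Second, you need supports that make those endpoint optima rigid. The paper lets branch $B$ draw every pair joining different groups of a partition $\mathcal C_B$ into exactly $d_{\mathrm{out}}$ nonempty groups.
\begin{itemize}
\item Zero loss is attainable and forces cross-group orthogonality.
\item Hence one representative per group forms an orthonormal basis, and every other member lies on its representative's line.
\item So $g_{ij}$ is the same-group indicator at every optimum.
\item Obtaining $\mathcal C_E$ by exchanging members of two groups gives equal separate minima and a reversal margin $\ell_{E\mid S}>0$.
\end{itemize}
Under equal frequencies this needs balanced groups with specially constructed positive cross-group weights. Even then it cannot cover $d_{\mathrm{out}}=2$ with odd $d_{\mathrm{in}}$, nor $(4,3)$. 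These require, respectively, a separate cycle construction solved by Fourier analysis and an explicit law solved by a trace rearrangement bound.

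\textbf{A possible repair for the unrestricted case.} You could use the codimension-one lift of \cref{cor:codimension-one-lift} and leave surplus features inactive. There, however, every padding feature co-occurs with all others, which is exactly what your padding omits.

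\textbf{Small slip in the necessity half.} For $(3,2)$ with equal frequencies it is reversal that fails, not suboptimality. With $\mu_S=\mu_E$ you still have $J^\star(1)-J^\star(0)=\delta>0$, so $J^\star$ does depend on $p$. What fails is the sign condition: $q_E-q_S=\delta>0$ at every code.
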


Prior studies examine how sparsity and co-occurrence shape feature
sharing for given data distributions
\citep{elhage2022toy,prieto2026statistics}.
Here, the policy also controls co-occurrence, and the resulting optimal
feature sharing can in turn stabilize a lower-return policy even when
individual feature frequencies remain fixed
(\cref{app:dimension-traps}).
In our numerical experiments, we test whether this feedback develops
from finite data across different feature counts and representation
dimensions, using initial datasets with matched counts and active-value
distributions for each feature (\cref{sec:exp-feedback}).

\paragraph{Independently parameterized controller.}
\label{sec:untied}

We return to the three-feature task with the same optimal fitting and
replicator updates as in \cref{sec:exact}, jointly optimizing the
unit-length encoder directions and an independent linear controller,
$\widehat Z_i=w_i^\top h$.

\begin{theorem}[Untied-controller Trap at Any Positive Noise]
\label{thm:untied}
\label{cor:untied-local}
Let $\eta=\sigma^2>0$ and $s_\eta=(1+\eta)^{-2}$.
For $0<\delta<s_\eta$, pure $S$ is a self-confirming
superposition trap.
Both pure policies are locally exponentially stable for
$|\delta|<s_\eta$, uniformly over global-minimizer selections.
\end{theorem}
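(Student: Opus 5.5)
The plan is to eliminate the controller analytically, reduce the fitting problem to one over encoder directions alone, and then repeat the argument of \cref{thm:trap} with the tied overlap $g_{12}$ replaced by an untied analogue.

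\textbf{Step 1: eliminate the controller.} For a fixed $V$ and an active pair $\{i,j\}$, feature $i$ incurs
\[
(w_i^\top v_i-1)^2+(w_i^\top v_j)^2+\eta\|w_i\|^2 .
\]
Under policy $p$, feature $i$'s total loss is therefore a weighted sum over its possible partners $j$, with weights $c_{ij}(p)$ equal to the probability that $\{i,j\}$ is the active pair. Minimizing over $w_i$ is a ridge regression. This gives the per-feature minimum
\[
\pi_i(p)\bigl(1-v_i^\top(\Sigma_i+\eta I)^{-1}v_i\bigr),
\]
where $\pi_i(p)$ is the activation probability of feature $i$ and $\Sigma_i=v_iv_i^\top+\sum_j \tilde c_{ij}v_jv_j^\top$ is built from the conditional partner frequencies $\tilde c_{ij}$. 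The minimizer $w_i=(\Sigma_i+\eta I)^{-1}v_i$ is unique and continuous in $V$ because $\eta>0$. The reduced objective is then continuous on the compact set of unit-column codes. Hence $\cM_0(p)$ is nonempty and compact, and the argmin correspondence is upper hemicontinuous in $p$ by Berge's theorem.

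\textbf{Step 2: characterize the minimizers at $p=0$.} Adding a PSD term $P$ can only increase $1-v^\top(vv^\top+\eta I+P)^{-1}v$, so each feature's loss is at least $\pi_i\,\eta/(1+\eta)$. For the equality case, I would check via the matrix inversion lemma that equality holds iff $P(vv^\top+\eta I)^{-1}v=0$, that is, iff $Pv=0$.

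At $p=0$, feature $1$ is co-active only with $3$, feature $2$ only with $3$, and feature $3$ with both. The lower bound is therefore attained iff $v_3\perp v_1$ and $v_3\perp v_2$. In $\R^2$ this forces $v_1=\pm v_2\perp v_3$, with controllers $w_i=v_i/(1+\eta)$. This gives the Allocation condition (i) for the pair $\{1,2\}$.

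Next I compute the branch losses at every such code. Branch $S$ attains the noise floor $2\eta/(1+\eta)$. On $E$, the pair $\{1,3\}$ is also at the floor. For the pair $\{1,2\}$, each of the two features loses
\[
\frac{\eta^2+1+\eta}{(1+\eta)^2}=\frac{\eta}{1+\eta}+s_\eta .
\]
Averaging over $E$ gives $D_E-D_S=s_\eta$, so $q_E-q_S=\delta-s_\eta<0$.

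Removing the cross-feature term gives $q^\perp_E-q^\perp_S=\delta>0$. The interference from $P=\{\{1,2\}\}$ is $(\Delta x)^\top I^P=s_\eta>s_\eta-\delta=-(\Delta x)^\top q$. Together these give the Reversal condition (ii).

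By the relabeling $1\leftrightarrow 3$ that swaps the branches, $R^\star(1)=R^\star(0)$. Hence $J^\star(1)-J^\star(0)=b_E-b_S=\delta$, which is the Suboptimality condition (iv).

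\textbf{Step 3: feedback and local exponential stability.} This is the step I expect to be the main obstacle, because minimizers need not be unique and I do not want to compute $\cM_0(p)$ for $p>0$.

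Instead I use upper hemicontinuity from Step 1. For every $\varepsilon>0$ there is $p_0>0$ such that every $V\in\cM_0(p)$ with $p<p_0$ lies within $\varepsilon$ of $\cM_0(0)$. The map $V\mapsto D_E-D_S$, evaluated with the ridge-optimal $W$, is continuous. Since it equals $s_\eta$ on all of $\cM_0(0)$, it exceeds $(s_\eta+\delta)/2$ uniformly for every selection once $p<p_0$.

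The replicator equation then gives
\[
\dot p=p(1-p)\bigl[\delta-(D_E-D_S)\bigr]\le -c\,p,\qquad c>0,
\]
for every measurable selection. This yields uniform local exponential stability, together with local existence of exact-response trajectories. For existence I would use a Filippov-type argument on the convexified, upper hemicontinuous right-hand side.

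At $p=1$ the same argument applies after the symmetry, with features $2,3$ now sharing a direction. There $D_E-D_S=-s_\eta$, so $\dot p\ge c(1-p)$ whenever $\delta>-s_\eta$. This gives stability of both pure policies for $|\delta|<s_\eta$.
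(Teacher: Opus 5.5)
Your proposal is correct and follows the paper's proof. The paper likewise eliminates the controller ($w_i^\star=M_i^{-1}v_i$), characterizes every $p=0$ optimum through the bound $v_i^\top M_i^{-1}v_i\le(1+\eta)^{-1}$ and its equality case (\cref{prop:untied-endpoint}), computes $D_E-D_S=s_\eta$, and gets the adapted-return gap $\delta$ from the $1\leftrightarrow3$ symmetry. It obtains uniform stability by the same compactness argument, packaged as \cref{prop:selection-stability} (via \cref{thm:simplex-robustness}), after restricting $\|w_i\|\le\eta^{-1}$ rather than applying Berge to the reduced problem in $V$.

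Two small points to tighten. First, the ridge-optimal $W$ depends on $p$, so your continuity claim for $D_E-D_S$ should be joint in $(V,p)$. Second, for local existence, a Borel measurable minimizer with the inverse-time construction of \cref{lem:dimension-endpoint} is more direct than a Filippov solution of the convexified inclusion, which yields an exact-response trajectory only after an extra argument.
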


At every joint optimum for $S$, features $1$ and $2$ still share an
encoder direction, so the controller cannot separate their values when
they occur together on $E$. Optimizing the controller reduces the
interference penalty from $1$ to $s_\eta$, giving
$q_E-q_S=\delta-s_\eta$ (\cref{app:untied-endpoints}). Controller
flexibility therefore narrows the range of reward advantages that
sustain the trap, but does not eliminate it. In our numerical
experiments, we test whether this feedback persists with a nonlinear
encoder and an independently learned controller under PPO
(\cref{fig:empirical-mechanism}). Finally, in the tied three-feature task, neither complete avoidance of
$E$ nor instantaneous code adaptation is necessary for the lower-return
policy to remain attracting. The trap persists when each branch must be
selected with at least a sufficiently small positive probability.
When code and policy instead follow the simultaneous gradient dynamics
in \cref{sec:visitation-adaptation}, their lower-return equilibrium
remains locally attracting at every positive representation update rate,
including when both branches must continue to be selected.

\section{Policy Dynamics under Representation Adaptation}
\label{sec:setting}
\label{sec:envelope-location}
\label{sec:reversal-budget}
\label{sec:invasion-margin}

At the traps in \cref{sec:exact,sec:robustness}, superposition makes the
avoided action earn less under the current code despite its higher
return after separate adaptation. We now bound the excess control error
left by fitting and ask when the resulting return ranking stabilizes
the lower-return policy. Both analyses use the finite-action fitting model
introduced in \cref{sec:trap-definition}, without prescribing feature
geometry or requiring tied weights. We assume fixed continuous action
losses on a compact parameter class $\Theta$.

\paragraph{The error needed for reversal.}
An action's advantage after separate adaptation can disappear under a
shared code. We bound the extra control error needed to reverse its
return ranking and relate that requirement to its training weight.

\begin{theorem}[Joint-deficit Bound]
\label{thm:joint-deficit}
For each action, let $D_a^\star=\min_\theta D_a(\theta)$,
$\bar q_a=b_a-D_a^\star$, and
$\psi_a(\theta)=D_a(\theta)-D_a^\star\ge0$.
Define $\Psi(x)=\min_\theta\sum_a x_a\psi_a(\theta)$ and
$\varepsilon_J=\min_\theta\max_a\psi_a(\theta)$.
For $x\in\Delta_A$ and $\varepsilon\ge0$, write
$\cM_\varepsilon(x)=\{\theta\in\Theta:
R_x(\theta)\le R^\star(x)+\varepsilon\}$.
Every $\theta\in\cM_\varepsilon(x)$ satisfies
$\sum_a x_a\psi_a(\theta)\le\Psi(x)+\varepsilon
\le\varepsilon_J+\varepsilon$ and, whenever $x_ax_b>0$,
\[
\left|(q_a(\theta)-q_b(\theta))-(\bar q_a-\bar q_b)\right|
\le\frac{\Psi(x)+\varepsilon}{\min\{x_a,x_b\}}.
\]
For estimated returns $\widetilde q_a=q_a(\theta)+\xi_a$, if
$|\xi_i-\xi_j|\le\zeta$, $\bar q_i>\bar q_j$, and
$\widetilde q_i\le\widetilde q_j$, then
$\Psi(x)+\varepsilon\ge x_i(\bar q_i-\bar q_j-\zeta)_+$,
where $z_+=\max\{z,0\}$.
If $\varepsilon_J=0$, then
$R^\star(x)=\sum_a x_aD_a^\star$ on the whole simplex, and every
optimizer at a full-support action distribution attains every
action's separate minimum.
\end{theorem}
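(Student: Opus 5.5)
The plan is to rewrite everything in terms of the excess losses $\psi_a$, because the fitting objective and the reference point differ only by a constant. Since $R_x(\theta)=\sum_a x_aD_a^\star+\sum_a x_a\psi_a(\theta)$, minimizing $R_x$ is equivalent to minimizing $\sum_a x_a\psi_a$. Compactness of $\Theta$ and continuity of each $D_a$ guarantee that all minima exist. It follows that $R^\star(x)=\sum_a x_aD_a^\star+\Psi(x)$. Hence for $\theta\in\cM_\varepsilon(x)$ we have $\sum_a x_a\psi_a(\theta)=R_x(\theta)-\sum_a x_aD_a^\star\le\Psi(x)+\varepsilon$. For the second inequality, take $\theta_J$ attaining $\varepsilon_J$. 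Then
\[
\Psi(x)\le\sum_a x_a\psi_a(\theta_J)\le\max_a\psi_a(\theta_J)=\varepsilon_J .
\]

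For the pairwise bound, first note that $q_a(\theta)-\bar q_a=-\psi_a(\theta)$. This gives
\[
(q_a-q_b)-(\bar q_a-\bar q_b)=\psi_b(\theta)-\psi_a(\theta).
\]
Because both $\psi$'s are nonnegative, the absolute value of this difference is at most $\max\{\psi_a,\psi_b\}$. Now $x_c\psi_c\le\sum_{c'}x_{c'}\psi_{c'}\le\Psi(x)+\varepsilon$ for each $c\in\{a,b\}$, and therefore $\psi_c\le(\Psi(x)+\varepsilon)/\min\{x_a,x_b\}$. This is the main point to get right: I use nonnegativity to avoid paying for the sum $\psi_a+\psi_b$, so the bound has the stated constant and no factor of two.

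For the reversal statement, the assumption $\widetilde q_i\le\widetilde q_j$ gives $q_i-q_j\le\xi_j-\xi_i\le\zeta$. Substituting $q_i-q_j=(\bar q_i-\bar q_j)-\psi_i+\psi_j$ yields
\[
\psi_i\ge\bar q_i-\bar q_j-\zeta+\psi_j\ge\bar q_i-\bar q_j-\zeta .
\]
Since also $\psi_i\ge0$, we get $\psi_i\ge(\bar q_i-\bar q_j-\zeta)_+$. Multiplying by $x_i$ and using $x_i\psi_i\le\Psi(x)+\varepsilon$ gives the claim. This step needs no positivity of $x_j$; if $x_i=0$ the claim holds trivially.

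Finally, suppose $\varepsilon_J=0$. Then some $\theta_J$ has $\psi_a(\theta_J)=0$ for all $a$. Hence $\Psi\equiv0$ and $R^\star(x)=\sum_a x_aD_a^\star$ on the whole simplex. Take any optimizer at a full-support $x$. It satisfies $\sum_a x_a\psi_a=0$, and since each term is nonnegative with $x_a>0$, every $\psi_a$ vanishes. So each action attains its separate minimum. None of these steps is a genuine obstacle. The only care needed is in attainment of the minima, which compactness handles, and in the factor-free pairwise bound already noted.
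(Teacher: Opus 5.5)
Your proposal is correct and follows essentially the same route as the paper: you subtract the baseline $\sum_a x_aD_a^\star$, bound the pairwise error by the larger single excess using nonnegativity (so there is no factor of two), derive the reversal bound from $\psi_i-\psi_j\ge\bar q_i-\bar q_j-\zeta$, and use positive weights in the $\varepsilon_J=0$ case. The only cosmetic difference is that you fold the paper's sign case split on $\bar q_i-\bar q_j-\zeta$ directly into the positive part via $\psi_i\ge0$.
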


\Cref{thm:joint-deficit} shows why accurate fitting need not preserve
the return advantage of a separately adapted action. Errors enter the
fitting objective in proportion to the action's training weight, so a
rarely selected action can remain poorly controlled at little cost to
fitting. If one shared code attains all action minima, positive training
weights instead make every optimizer preserve the separately adapted
return ranking (\cref{app:joint-deficit}). We also test whether nearly optimal fits to finite datasets still reverse branch returns and whether orthogonal capacity removes the reversal (\cref{app:sampled-check}).

\paragraph{When the ranking sustains a policy.}
\Cref{thm:joint-deficit} bounds changes in return ordering at a fixed
policy. We then show that a strict return advantage across all optimal
codes makes a pure policy locally stable under refitting, even with
small fitting and value errors.

\begin{theorem}[Finite-action Robustness]
\label{thm:simplex-robustness}
For a vertex \(e_i\), define its worst-selection invasion margin
\begin{equation*}
\gamma_i=\min_{\theta\in\cM_0(e_i)}\min_{j\ne i}
\bigl[q_i(\theta)-q_j(\theta)\bigr].
\end{equation*}
If \(\gamma_i>0\), then for every \(0\le\zeta<\gamma_i\) there are
\(\bar\varepsilon>0\), a neighborhood \(U_i\), and \(\underline\gamma_i>0\) such that
every replicator solution using measurable
\(\theta(t)\in\cM_{\varepsilon(t)}(x(t))\),
\(0\le\varepsilon(t)\le\bar\varepsilon\), and relative value errors
\(\max_{a,b}|\xi_a-\xi_b|\le\zeta\), satisfies \(\frac{d}{dt}\log(x_j/x_i)\le-\underline\gamma_i\) for every \(j\ne i\) with \(x_j>0\) while the trajectory lies in \(U_i\). Hence \(e_i\) is locally exponentially stable,
uniformly over these selections and errors. If vertices \(e_i,e_k\) both have
positive margins and
\(\bar q_i<\bar q_k\), then under exact
responses both are stable, \(e_i\) is strictly suboptimal, and the same
robustness applies.
\end{theorem}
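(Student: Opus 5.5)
The argument has three steps: upper semicontinuity of the near-optimal fit sets, a uniform lower bound on the log-ratio drift near \(e_i\), and a comparison with the two-vertex margins at the end.

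\textbf{Step 1: semicontinuity.} The maps \(x\mapsto R_x(\theta)\) are affine in \(x\) and the \(D_a\) are continuous on the compact set \(\Theta\). Hence \(R^\star\) is continuous on \(\Delta_A\). The set-valued map \((x,\varepsilon)\mapsto\cM_\varepsilon(x)\) is upper semicontinuous at \((e_i,0)\). Concretely: if \(x_n\to e_i\), \(\varepsilon_n\to0\) and \(\theta_n\in\cM_{\varepsilon_n}(x_n)\), then every limit point of \(\theta_n\) lies in \(\cM_0(e_i)\). This follows by passing to the limit in \(R_{x_n}(\theta_n)\le R^\star(x_n)+\varepsilon_n\), using joint continuity of \((x,\theta)\mapsto R_x(\theta)\).

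\textbf{Step 2: uniform margin.} Each \(q_i-q_j\) is continuous, and \(\cM_0(e_i)\) is compact. So \(\min_{j\ne i}[q_i-q_j]\ge\gamma_i\) on \(\cM_0(e_i)\) extends to \(\min_{j\ne i}[q_i-q_j]\ge(\gamma_i+\zeta)/2+\zeta/2\) on an open neighborhood \(W\) of \(\cM_0(e_i)\) in \(\Theta\). This is possible because \(\gamma_i>(\gamma_i+\zeta)/2+\zeta/2-\text{slack}\) can be arranged; more simply, take the target value \(\gamma_i-\kappa\) with \(\kappa=(\gamma_i-\zeta)/2\).

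Next I argue by contradiction with Step 1. There exist \(\bar\varepsilon>0\) and a neighborhood \(U_i\) of \(e_i\) such that \(x\in U_i\) and \(\varepsilon\le\bar\varepsilon\) imply \(\cM_\varepsilon(x)\subset W\). If not, some sequence \(\theta_n\in\cM_{\varepsilon_n}(x_n)\setminus W\) has a limit point outside the open set \(W\) but inside \(\cM_0(e_i)\), which is impossible.

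Then for \(\theta\in\cM_{\varepsilon}(x)\) with \(x\in U_i\), the estimated returns satisfy
\[
\widetilde q_j-\widetilde q_i\le-(\gamma_i-\kappa)+\zeta=-\kappa .
\]
Set \(\underline\gamma_i:=\kappa>0\).

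\textbf{Step 3: replicator drift.} The replicator equation gives \(\frac{d}{dt}\log x_a=\widetilde q_a-x^\top\widetilde q\) whenever \(x_a>0\), along absolutely continuous solutions (almost everywhere). Therefore
\[
\frac{d}{dt}\log(x_j/x_i)=\widetilde q_j-\widetilde q_i\le-\underline\gamma_i .
\]
This holds for every \(j\neq i\) with \(x_j>0\) while the trajectory stays in \(U_i\); measurability of \(\theta(t)\) is only needed so that the right-hand side is integrable. The bound holds uniformly over all selections and errors.

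\textbf{Exponential stability.} Take \(U_i=\{x:x_i>1-r\}\) for small \(r\). Then \(x_i\ge 1-r\) and \(x_j\le x_j(0)e^{-\underline\gamma_i t}/(1-r)\). The quantity \(1-x_i=\sum_{j\ne i}x_j\) is then bounded by
\[
1-x_i\le\frac{1-x_i(0)}{1-r}\,e^{-\underline\gamma_i t}\cdot\text{const}.
\]
Starting in a smaller ball, the trajectory never leaves \(U_i\), since \(\sum_{j\ne i}x_j\) cannot exceed its initial value times \((1-r)^{-1}\). This is the step I expect to need care: it is a bootstrap that keeps the trajectory in \(U_i\) forever. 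It gives local exponential stability.

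\textbf{Two-vertex statement.} Apply the above at \(e_i\) and at \(e_k\) with \(\zeta=0\), which gives stability of both vertices. For suboptimality, \(J^\star(e_a)=b_a-D_a^\star=\bar q_a\), since \(R^\star(e_a)=D_a^\star\). Hence \(\bar q_i<\bar q_k\) means \(J^\star(e_i)<J^\star(e_k)\), so \(e_i\) is strictly suboptimal. The robustness claim is the first part applied to \(e_i\).
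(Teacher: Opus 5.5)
Your proposal is correct and follows the paper's proof essentially step for step: the same compactness argument placing limits of near-optimal fits in $\cM_0(e_i)$, the same uniform margin $(\gamma_i+\zeta)/2$ near $e_i$ giving $\underline\gamma_i=(\gamma_i-\zeta)/2$, the same odds contraction with a forward-invariant neighborhood (you use a shrunken set $\{x_i>1-r\}$ where the paper uses an odds sublevel set), and $J^\star(e_a)=b_a-D_a^\star=\bar q_a$ for suboptimality. When you write it up, delete the abandoned first target $(\gamma_i+\zeta)/2+\zeta/2=\gamma_i/2+\zeta$ in Step 2, since it exceeds $\gamma_i$ when $\zeta>\gamma_i/2$ and need not hold even on $\cM_0(e_i)$; also state explicitly that coordinates with $x_j(0)=0$ remain zero, so your bound on $1-x_i$ covers them.
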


\Cref{thm:simplex-robustness} shows that the traps in
\cref{thm:trap,thm:matching-complement,thm:untied} do not depend on
perfectly accurate fitting or value estimates. Because superposition
gives $S$ a strict return advantage, nearby policies still converge to
it under sufficiently small errors (\cref{app:local-persistence}).
Improving adapted return does not rule out this convergence either:
under exact fitting and returns, the same policy dynamics can increase
adapted return while approaching the lower-return policy
(\cref{app:envelope-ascent}).

\section{Mitigating Superposition Traps}
\label{sec:interventions}
\label{sec:protocol}
\label{sec:exp-measurement}

We derive a sufficient replay condition for preserving return ordering
in the finite-action model, then compare overlap penalties and entropy
regularization in the tied two-step task under exact adaptation with
$0<\delta<1$.

\Cref{thm:joint-deficit} motivates reweighting because an action can
remain poorly controlled when its errors contribute little to fitting.
Experience replay retains past observations so their training weight
need not fall with current visitation. Mixing current-policy data with
stored observations whose action proportions are $\nu\in\Delta_A$
gives fitting weights $r(x)=(1-\alpha)x+\alpha\nu$ with
$0<\alpha\le1$. Each action with $\nu_a>0$ therefore retains weight
at least $\alpha\nu_a$ even when its current selection probability is zero.

\begin{corollary}[A Sufficient Replay Condition]
\label[corollary]{cor:general-replay}
In the finite-action model of \cref{sec:setting}, suppose $k$ uniquely
maximizes $\bar q_a$, with gap
$\Delta=\min_{j\ne k}(\bar q_k-\bar q_j)>0$, and
$r_k(x)\ge\beta>0$ for all $x\in\Delta_A$. Fix fitting and relative
value-error bounds $\varepsilon,\zeta\ge0$ such that
$m:=\Delta-\zeta-(\varepsilon_J+\varepsilon)/\beta>0$.
Every absolutely continuous replicator solution on $t\ge0$ using
measurable $\theta(t)\in\cM_\varepsilon(r(x(t)))$ and
$\max_{a,b}|\xi_a(t)-\xi_b(t)|\le\zeta$ satisfies
$\widetilde q_k-\widetilde q_j\ge m$ for all $j\ne k$ almost everywhere.
If $x_k(0)>0$, then
$x_j(t)/x_k(t)\le[x_j(0)/x_k(0)]e^{-mt}$, so $x(t)\to e_k$.
\end{corollary}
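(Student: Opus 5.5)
The plan is to apply the pairwise bound of \cref{thm:joint-deficit} at the fitting weights $r(x(t))$ rather than at the policy $x(t)$, and then integrate the replicator flow. Fix $t$ at which the solution is differentiable and $\theta(t)\in\cM_\varepsilon(r(x(t)))$. Write $w=r(x(t))\in\Delta_A$. Since $r$ is a convex combination of points of the simplex, $w\in\Delta_A$, and by hypothesis $w_k\ge\beta>0$.

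\textbf{Key estimate.} The first claim of \cref{thm:joint-deficit} gives
\[
\sum_a w_a\psi_a(\theta)\le\Psi(w)+\varepsilon\le\varepsilon_J+\varepsilon .
\]
I will not use the pairwise bound with $\min\{w_k,w_j\}$, because $w_j$ may be small or zero. Instead I use a one-sided argument. Since every $\psi_a\ge0$, we get $w_k\psi_k(\theta)\le\varepsilon_J+\varepsilon$, hence $\psi_k(\theta)\le(\varepsilon_J+\varepsilon)/\beta$. For any $j\ne k$, $\psi_j(\theta)\ge0$ gives $q_j(\theta)\le\bar q_j$. Combining these,
\[
q_k(\theta)-q_j(\theta)\ge \bar q_k-\psi_k(\theta)-\bar q_j\ge\Delta-(\varepsilon_J+\varepsilon)/\beta .
\]
Adding the value errors, $\widetilde q_k-\widetilde q_j=q_k-q_j+(\xi_k-\xi_j)\ge\Delta-\zeta-(\varepsilon_J+\varepsilon)/\beta=m$. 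This holds for every admissible selection at almost every $t$, which is the first claim. It is essentially the contrapositive form of the reversal part of \cref{thm:joint-deficit}, made quantitative and using only the weight on $k$.

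\textbf{Dynamics.} Under the replicator flow $\dot x_a=x_a[\widetilde q_a-x^\top\widetilde q]$, the simplex faces are invariant. If $x_k(0)>0$, then $x_k(t)>0$ for all $t$: on $\{x_k>0\}$, $\log x_k$ is absolutely continuous with bounded derivative, because continuous $D_a$ on compact $\Theta$ give bounded returns and the $\xi$ are bounded relative errors. For $x_j(0)>0$, we have a.e.
\[
\frac{d}{dt}\log\frac{x_j}{x_k}=\widetilde q_j-\widetilde q_k\le -m .
\]
Integrating gives $x_j(t)/x_k(t)\le[x_j(0)/x_k(0)]e^{-mt}$. If $x_j(0)=0$, then $x_j\equiv0$ by uniqueness for the linear ODE $\dot x_j=x_j c(t)$ with bounded measurable $c$, and the bound holds trivially. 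Since $x_k\le1$, each $x_j\to0$, so $x(t)\to e_k$.

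\textbf{Main obstacle.} The main technical point is the log-ratio step. It requires justifying absolute continuity of $\log x_j$ and $\log x_k$ and the boundedness of $\widetilde q$, since the $\xi_a$ are constrained only in differences. The difference bound suffices, though: replacing $\widetilde q$ by $\widetilde q-\xi_k\mathbf 1$ leaves the replicator field unchanged and makes all shifted estimates bounded. The key conceptual step, avoiding division by $\min\{w_k,w_j\}$, is already handled by the one-sided estimate above.
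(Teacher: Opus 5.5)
Your proposal is correct and matches the paper's proof essentially step for step: apply the weighted-excess bound of \cref{thm:joint-deficit} at the fitting weights $r(x(t))$, use nonnegativity of the excesses and $r_k\ge\beta$ to get $\psi_k\le(\varepsilon_J+\varepsilon)/\beta$, drop $\psi_j\ge0$ to obtain the uniform margin $m$, and integrate the replicator log-odds. The only differences are cosmetic: you conclude convergence from $x_j(t)\le x_k(t)\,[x_j(0)/x_k(0)]e^{-mt}$ with $x_k\le1$ instead of summing the ratios, and you state explicitly that the replicator field is unchanged by a common shift of $\widetilde q$, which is why controlling $\xi$ only through its differences is enough.
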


For the replay mixture above, taking $\beta=\alpha\nu_k$ reduces the
sufficient margin condition to
$\alpha\nu_k>(\varepsilon_J+\varepsilon)/(\Delta-\zeta)$, with
$\Delta>\zeta$. If the best action is unknown, uniform reference weights
$\nu_a=1/|A|$ give every action a fitting-weight floor $\alpha/|A|$.
With full-support initialization, the guarantee applies when this floor
satisfies the condition. Distributing replay over more actions weakens
the guarantee, though recovery may still occur below this sufficient bound.
Replay can thus restore the ranking at unchanged capacity even when
$\varepsilon_J>0$, provided the required observations are available and
fitting meets the assumed accuracy. A permanent replay mixture may still
leave the selected action's loss above its separate minimum, so policy
convergence need not attain its separately adapted return
(\cref{app:general-replay}).

While replay preserves training on avoided actions, overlap penalties
target interference directly.
In the tied task, a penalty can eliminate collinear pairs while leaving
enough interference to sustain the lower-return basin. Removing that
basin requires a stronger penalty that restores $E$'s return advantage
(\cref{prop:gram-threshold}). Entropy regularization changes the other
side of this feedback by encouraging visits before the action becomes
competitive. With a weak bonus, the policy can visit $E$ more often
while retaining the exact feature collision that makes it unattractive.
A sufficiently strong bonus leaves a unique interior attractor, which
need not maximize environmental return (\cref{prop:entropy-trap}). We therefore evaluate interventions
through actual control and return in
\cref{sec:exp-data-access,sec:exp-public}, since changes in feature
geometry or visitation alone need not resolve the disadvantage.

\section{Experiments}
\label{sec:experiments}

We test whether policy-dependent co-activation sustains lower-return
behavior through interference, then evaluate the interventions in
\cref{sec:interventions} on sequential control and public benchmarks,
including agents that learn a world model.
Full protocols and supplementary results appear in
\crefrange{app:sampled-check}{app:public-dreamer}.

\subsection{Co-activation and Learned Control}
\label{sec:exp-feedback}

\paragraph{Co-activation with matched feature marginals.}
To isolate co-activation in the four-feature, two-dimensional task of
\cref{thm:matching-complement}, we fit tied codes to paired datasets with
$r=0.4$ or $0.6$ of observations from $E$. Feature counts and active-value
distributions match exactly. Both fits reach population losses near
$0.5802$, against an optimum of $0.58$, but allocate overlap differently
(\cref{fig:matching-sampled}a). Their mean $E-S$ return gaps on fresh
observations are $-0.496$ and $1.496$, respectively.

To test continued learning across dimensions, we use the equal-frequency
constructions of \cref{thm:dimension-trap} with 16 or 32 features in two
or four dimensions. Codes fitted separately to $S$- and $E$-data have
matched feature counts and active-value distributions, and every fit
lies within $2.31\times10^{-4}$ of its population optimum. Starting from
$p(E)=0.5$, codes learn from policy-collected observations and policies
from fresh branch-return estimates. All 40 continuations per condition
approach opposite pure policies in each setting
(\cref{fig:matching-sampled}b). The $S$-fitted group earns less, so co-activation can redirect learning
toward lower return at unchanged capacity.

\begin{figure}[t]
\centering
\includegraphics[width=\textwidth]{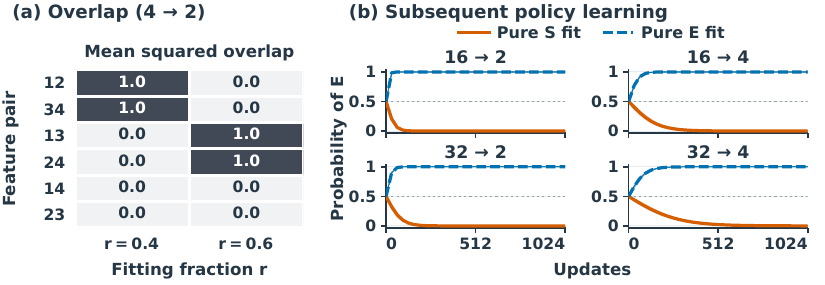}
\caption{Different co-activation patterns lead the same initial policy
toward opposite branches.}
\label{fig:matching-sampled}
\end{figure}

\begin{figure}[t]
\centering
\includegraphics[width=\textwidth]{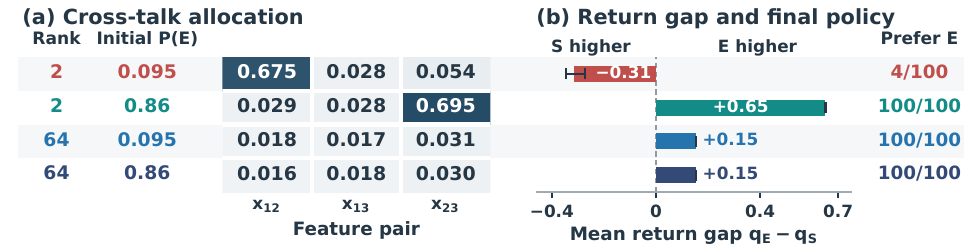}
\caption{Initial preference changes cross-talk allocation and mean
branch-return ordering.}
\label{fig:empirical-mechanism}
\end{figure}

\paragraph{Reward-only neural control.}
We then test nonlinear encoders and independent controllers trained by
reward-only PPO \citep{schulman2017ppo} in the three-feature task.
Fixed rank-2 or rank-64 projectors vary capacity at matched parameter
counts. We compare low and high initial $p(E)$ while retaining both branches
in training. Rank-two low-start agents
develop stronger cross-talk on $E$'s exclusive pair and a negative mean
$E-S$ return gap. High-start agents concentrate cross-talk on $S$'s
exclusive pair and have a positive mean gap
(\cref{fig:empirical-mechanism}). Only 4 of 100 low-start runs finish
with $p(E)>0.5$, compared with all 100 high-start runs, whose deployed
return is $0.131\,[0.125,0.135]$ higher at the same capacity
(95\% seed-bootstrap interval). At rank 64, both starts give small
cross-feature responses, positive mean gaps, and $E$-favoring policies
in every run. Thus initial preferences can sustain different interference
patterns and returns even when the controller learns independently.

\subsection{Sequential Control and Training-State Access}
\label{sec:exp-data-access}

We test whether access to later training states improves learning when
control errors restrict data collection. In a four-stage version of the
neural task, $S$ attempts to exit for immediate reward and $E$ to advance
toward the final reward. Failed control repeats the stage and uses episode budget.

\begin{figure}[t]
\centering
\includegraphics[width=\textwidth]{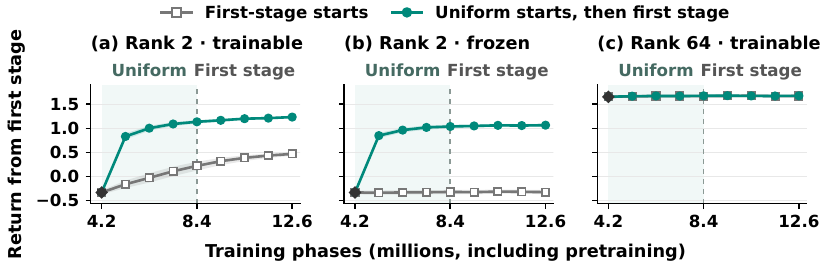}
\caption{Temporary stage access improves rank-two return, including with frozen encoders.}
\label{fig:sequential-intervention-time}
\end{figure}

\paragraph{Ordinary learning.}
Rank-2 and rank-64 PPO agents train from the first stage with matched
parameter counts, nonlinear encoders, and independent controllers.
Across three initial policies, their mean undiscounted returns are
$0.61$ and $1.67$. Discounted advance--exit gaps are negative in all
60 rank-two runs and positive in all 60 rank-64 runs. Rank-two controllers also respond more strongly to the other active
feature (\cref{app:sequential-neural}).

\paragraph{Training-state interventions.}
We use separate checkpoints pretrained on balanced advance-branch data
and compare paired continuations at fixed capacity and training budget.
One always starts at the first stage. The other temporarily samples
all four stages uniformly with fresh episode budgets before returning
to first-stage starts. All evaluations start at the first stage.
We also freeze rank-two encoders while the controller, route policy,
and critic continue learning.

Temporary uniform starts improve final return in all 20 rank-two pairs
with either trainable or frozen encoders, and the gains persist after
ordinary training resumes (\cref{fig:sequential-intervention-time}).
With trainable encoders, final first-stage $E$ mean-control error is lower
than under ordinary starts ($0.219$ versus $0.610$), as is interference
measured by resampling non-target inputs ($0.039$ versus $0.259$).
All 20 pairs improve on both diagnostics
(\cref{app:sequential-measurements}). Rank-64 checkpoints already perform
well and yield similar returns under both conditions. Gains with frozen encoders require no further encoder adaptation.

\begin{figure}[t]
\centering
\includegraphics[width=\textwidth]{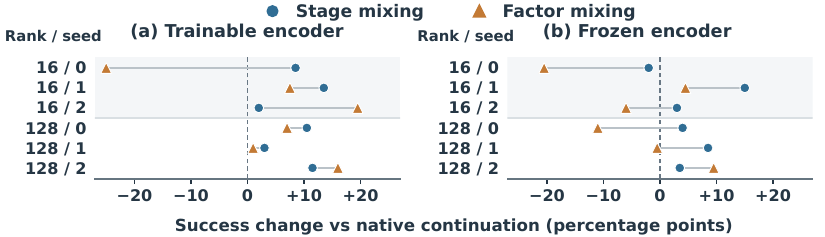}
\caption{Stage mixing improves DoorKey success more consistently than
factor mixing.}
\label{fig:public-interventions}
\end{figure}

\subsection{Interventions on Public Tasks}
\label{sec:exp-public}

We test training-state access beyond the constructed tasks in MiniGrid
DoorKey \citep{chevalierboisvert2023minigrid}, where PPO must collect a key,
open a door, and reach the goal. Paired continuations share checkpoints,
capacity, and training budgets, with trainable or frozen sensory encoders.
Stage mixing combines native resets with reachable states after key
pickup, unlocking, or door traversal. This oracle supplies additional
state access while preserving rewards and elapsed task time. From native
starts, success and return improve in 11 of 12 comparisons, including five
of six with frozen encoders (\cref{fig:public-interventions}). Factor mixing instead varies door states and deposited-key positions at
door decisions. It lowers local one-action oracle regret in all 12 comparisons but
improves native-start success in seven. Broader stage access improves
completion more consistently. The task also requires reaching
and opening the door (\cref{app:public-interventions}).

\paragraph{Replay, overlap, and entropy.}
We next test interventions with native state access on DMControl
Finger-turn-hard and Hopper-hop \citep{tassa2018control}. We continue
self-trained TD-MPC2 \citep{hansen2024tdmpc2} and SAC
\citep{haarnoja2018sac} checkpoints with unchanged architectures, resets,
and rewards. At matched additional training budgets, we rebalance replay
across task phases, penalize local representation overlap, or adjust
existing entropy regularization. SAC retains automatic entropy tuning with targets $-d/2$ or $0$
for action dimension $d$.
\Cref{tab:public-continuous} reports mean and SD over five evaluation
groups per selected checkpoint (\cref{app:public-dmcontrol}). Phase
replay improves TD-MPC2 on both tasks across all three original seeds.
Overlap penalties at $\lambda=0.003$ improve both TD-MPC2 tasks and
raise SAC's Hopper return from $39.9$ to $51.4$.

\begin{table}[t]
\centering
\caption{Returns after continuing TD-MPC2 and SAC with each intervention.}
\label{tab:public-continuous}
\small
\setlength{\tabcolsep}{4pt}
\renewcommand{\arraystretch}{0.82}
\setlength{\aboverulesep}{0.3ex}
\setlength{\belowrulesep}{0.45ex}
\begin{tabularx}{\textwidth}{lXrr}
\toprule
\rowcolor{deepblue!10}
\textbf{Learner} & \textbf{Intervention} & \textbf{Finger-turn-hard} & \textbf{Hopper-hop} \\
\midrule
\rowcolor{deepblue!5}
\textcolor{deepblue}{\textbf{TD-MPC2}} & Baseline & \cellcolor{deepblue!7}$749.3\,\pm\,35.7$ & \cellcolor{deepblue!19}$177.1\,\pm\,2.9$ \\
 & Phase replay & \cellcolor{deepblue!27}$\color{deepblue}\mathbf{863.3}\,\pm\,60.3$ & \cellcolor{deepblue!25}$206.0\,\pm\,2.3$ \\
 & Overlap $\lambda=0.003$ & \cellcolor{deepblue!20}$825.6\,\pm\,70.4$ & \cellcolor{deepblue!20}$184.3\,\pm\,3.9$ \\
 & Overlap $\lambda=0.03$ & \cellcolor{deepblue!13}$782.6\,\pm\,37.5$ & \cellcolor{deepblue!4}$103.0\,\pm\,4.4$ \\
 & Prior entropy $\times2$ & \cellcolor{deepblue!4}$734.8\,\pm\,25.9$ & \cellcolor{deepblue!18}$172.6\,\pm\,4.1$ \\
 & Prior entropy $\times5$ & \cellcolor{deepblue!15}$797.8\,\pm\,45.4$ & \cellcolor{deepblue!27}$\color{deepblue}\mathbf{217.2}\,\pm\,4.1$ \\
\midrule
\rowcolor{deepgreen!6}
\textcolor{deepgreen}{\textbf{SAC}} & Baseline & \cellcolor{deepgreen!20}$257.7\,\pm\,33.4$ & \cellcolor{deepgreen!19}$39.9\,\pm\,1.8$ \\
 & Phase replay & \cellcolor{deepgreen!26}$269.2\,\pm\,77.8$ & \cellcolor{deepgreen!11}$28.8\,\pm\,2.8$ \\
 & Overlap $\lambda=0.003$ & \cellcolor{deepgreen!15}$247.6\,\pm\,45.6$ & \cellcolor{deepgreen!27}$\color{deepgreen}\mathbf{51.4}\,\pm\,1.5$ \\
 & Overlap $\lambda=0.03$ & \cellcolor{deepgreen!24}$265.5\,\pm\,45.9$ & \cellcolor{deepgreen!13}$32.6\,\pm\,1.6$ \\
 & Target entropy $-d/2$ & \cellcolor{deepgreen!27}$\color{deepgreen}\mathbf{271.5}\,\pm\,41.5$ & \cellcolor{deepgreen!14}$32.9\,\pm\,1.6$ \\
 & Target entropy $0$ & \cellcolor{deepgreen!4}$227.5\,\pm\,55.7$ & \cellcolor{deepgreen!4}$19.4\,\pm\,1.4$ \\
\bottomrule
\end{tabularx}
\end{table}

\paragraph{Protected world-model fitting.}
We test whether protecting infrequent experiences improves
DreamerV3--Crafter \citep{hafner2025dreamerv3}, which learns recurrent
latent dynamics from $64\times64$ RGB images and trains policies through
imagined trajectories.
We compare matched \texttt{size12m} models trained for one million
interaction steps. Each replay position is grouped by the highest
pickaxe or sword held: none, wood, stone, or iron. Protected fitting
retains 75\% of the original world-model loss weighting and distributes
25\% equally across represented tiers, increasing the influence of
less frequent tiers. Inventory supplies weighting metadata only and is
excluded from model and policy inputs. Architecture, replay sampling,
update budgets, and actor and critic objectives remain unchanged.
Across three paired seeds, mean cumulative training score rises from
$7.83$ to $8.43$, with gains in all three pairs
(\cref{tab:dreamer-paired}). For the displayed checkpoints, we compare 32-frame forecasts on two
illustrative clips with identical 32-frame histories and recorded future actions. Protected fitting lowers
mean pixel error by $9.8\%$ and $14.8\%$ on these clips.
\Cref{fig:dreamer-crafter-fitting} shows the training curve and forecasts
alongside ground truth (protocol in \cref{app:public-dreamer}).

\begin{figure}[t]
\centering
\includegraphics[width=\textwidth]{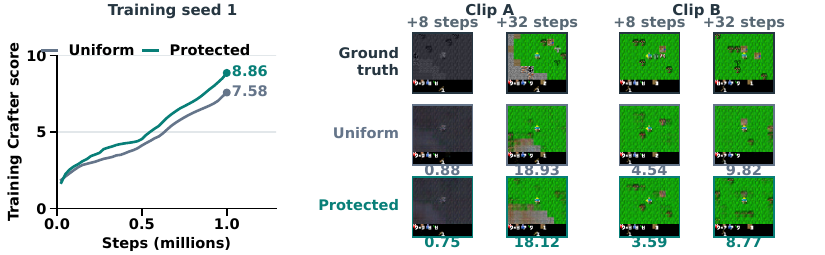}
\caption{Dreamer training and matched-input forecasts. Labels give pixel MSE $\times1{,}000$.}
\label{fig:dreamer-crafter-fitting}
\end{figure}

\section{Conclusion}
\label{sec:conclusion}
In this paper, we identify self-confirming superposition traps, where
globally optimal representation fitting sustains a lower-return policy.
They persist across dimensions, equal feature frequencies, continued
visitation, and independent controllers in our two-step models. Our finite-action analysis
links return reversal and stability to fitting weights and gives a
sufficient replay condition for restoring the better action's ranking.
Experiments recover the feedback with learned controllers and show that
training-state and fitting interventions can improve sequential control
and performance on public tasks. Our guarantees assume specified losses
and policy updates, while public-task gains do not isolate superposition
as their cause. Future work should identify harmful superposition in
learned agents and extend intervention guarantees to general multistep
control.
\label{main:end}
\clearpage

\section*{Reproducibility Statement}
Model assumptions and theorem statements are given in
\crefrange{sec:exact}{sec:interventions}, with detailed proofs and
derivations in \crefrange{app:definition}{app:general}.
Experimental task definitions, architectures, training and intervention
settings, and evaluation procedures are documented in
\crefrange{app:sampled-check}{app:public-dreamer}, including seed
conventions and the distinction between training variability and
repeated evaluation of a fixed checkpoint. The anonymous repository
linked in the abstract documents the released code and saved records,
with separate instructions for reproducing the covered numerical results
and figures or running new training experiments. Executable algebra
checks supplement the written proofs.

\section*{AI Use Statement}
Generative AI tools (OpenAI Codex and Anthropic Claude) were used for
manuscript drafting, editing, and restructuring. The authors take responsibility for the final manuscript, including its mathematical claims, experimental results, and references.

\bibliographystyle{iclr2027_conference}
\bibliography{references}
\clearpage
\appendix
\noindent

\section{Complete Statements for the Main Constructions}
\label{app:moved-theory-statements}

\subsection{Complete Feature-allocation Statement}

\begin{proposition}[Optimal Feature Allocation]
\label{thm:frame}
For $p\in[0,1]$, minimize $L_p$ over unit-column matrices
$V\in\R^{2\times3}$. Let $p_-=(3-\sqrt5)/2$,
$p_+=(\sqrt5-1)/2$, and $\chi=p(1-p)$. The minimum is
\begin{equation}
L^\star(p)=
\begin{cases}
p,&0\le p\le p_-,\\
\frac32-\frac14(\chi+\chi^{-1}),&p_-<p<p_+,\\
1-p,&p_+\le p\le1.
\end{cases}
\label{eq:optimal-loss}
\end{equation}
Every global minimizer has the same squared Gram entries:
$(g_{12},g_{13},g_{23})=(1,0,0)$ for $0\le p\le p_-$ and
$(0,0,1)$ for $p_+\le p\le1$. For $p_-<p<p_+$, they are
\begin{equation}
(g_{12},g_{13},g_{23})=\frac14\left(
\frac{1-p}{p^2}-\frac{p^2}{1-p},\;
4+\chi-\chi^{-1},\;
\frac{p}{(1-p)^2}-\frac{(1-p)^2}{p}
\right).
\label{eq:optimal-overlaps}
\end{equation}
\end{proposition}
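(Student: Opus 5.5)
The plan is to turn the problem into a weighted planar vector-sum problem. Write $v_i=(\cos\varphi_i,\sin\varphi_i)$ and put $u_i=e^{2\mathrm{i}\varphi_i}\in\mathbb{C}$, a unit complex number. Then
\[
g_{ij}=\cos^2(\varphi_i-\varphi_j)=\tfrac12\bigl(1+\operatorname{Re}(u_i\bar u_j)\bigr).
\]
The map $V\mapsto(u_1,u_2,u_3)$ forgets only the column signs, which do not affect any $g_{ij}$, and it is onto triples of unit complex numbers. With weights $w_{12}=p$, $w_{13}=1$, $w_{23}=1-p$, this gives
\[
L_p=1+\tfrac12\textstyle\sum_{i<j}w_{ij}\operatorname{Re}(u_i\bar u_j).
\]

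\textbf{Factorize the weights.} For $0<p<1$, choose $\lambda_i>0$ with $\lambda_i\lambda_j=w_{ij}$:
\[
\lambda_1=\sqrt{p/(1-p)},\qquad \lambda_2=\sqrt{\chi},\qquad \lambda_3=\sqrt{(1-p)/p}.
\]
Expanding $|\sum_i\lambda_iu_i|^2$ gives
\[
L_p=1+\tfrac14\Bigl(\bigl|\textstyle\sum_i\lambda_iu_i\bigr|^2-\sum_i\lambda_i^2\Bigr).
\]
So minimizing $L_p$ is the same as minimizing the length of a closed or open polygon with side lengths $\lambda_1,\lambda_2,\lambda_3$. A short computation gives $\lambda_1^2+\lambda_3^2=(1-2\chi)/\chi$, hence $\sum_i\lambda_i^2=\chi+\chi^{-1}-2$. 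The endpoints $p\in\{0,1\}$ are handled directly: at $p=0$ the loss is $g_{13}+g_{23}\ge0$, with equality iff $v_1,v_2\perp v_3$, which in $\R^2$ forces $g_{12}=1$. The case $p=1$ is symmetric.

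\textbf{Interior case: triangle inequalities hold.} The minimal length is $0$ iff each $\lambda_i$ is at most the sum of the other two. Multiplying $\lambda_3\le\lambda_1+\lambda_2$ by $\sqrt{p(1-p)}$ gives $1-p\le p+p(1-p)$, that is $p^2-3p+1\le0$, that is $p\ge p_-$. The condition $\lambda_1\le\lambda_2+\lambda_3$ reduces in the same way to $p\le p_+$. The remaining inequality is automatic, since $\lambda_2<1$ while $\max\{\lambda_1,\lambda_3\}\ge1$.

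For $p_-<p<p_+$, the minimum is therefore $L^\star=1-\tfrac14(\chi+\chi^{-1}-2)$, which matches \cref{eq:optimal-loss}. Any minimizer must have $\sum_i\lambda_iu_i=0$, so the three vectors $\lambda_iu_i$ form a triangle with prescribed sides. By the law of cosines this fixes every $\operatorname{Re}(u_i\bar u_j)$. For example,
\[
\operatorname{Re}(u_1\bar u_2)=\frac{\lambda_3^2-\lambda_1^2-\lambda_2^2}{2\lambda_1\lambda_2}.
\]
Then $g_{12}=\tfrac12(1+\operatorname{Re}(u_1\bar u_2))$. Simplifying should give the entries of \cref{eq:optimal-overlaps}; this algebra is the main place where errors can creep in, so I would verify each entry carefully (e.g.\ at $p=1/2$).

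\textbf{Boundary cases: one side dominates.} If $p\le p_-$, then $\lambda_3\ge\lambda_1+\lambda_2$, and
\[
\Bigl|\textstyle\sum_i\lambda_iu_i\Bigr|\ \ge\ \lambda_3-|\lambda_1u_1+\lambda_2u_2|\ \ge\ \lambda_3-\lambda_1-\lambda_2.
\]
When $\lambda_3>\lambda_1+\lambda_2$, equality holds iff $u_1=u_2=-u_3$. That means $v_1=\pm v_2\perp v_3$, so $(g_{12},g_{13},g_{23})=(1,0,0)$ and $L^\star=p$. The case $p\ge p_+$ is symmetric, with $\lambda_1$ dominant, giving $(0,0,1)$ and $L^\star=1-p$.

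At the thresholds themselves, the triangle is degenerate. There the interior formulas and the boundary formulas agree, so the case split is consistent and the Gram entries remain uniquely determined.

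The step I expect to be the only real obstacle is the uniqueness of the Gram entries in the degenerate or nearly degenerate configurations. The equality conditions of the triangle inequality handle it, as above.
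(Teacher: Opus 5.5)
Your proof is correct, and it takes a genuinely different route from the paper's.

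The paper proves the general weighted lemma (\cref{lem:weighted-frame}) by profiling. It fixes $\beta_1=0$, passes to double angles, and eliminates $\beta_2$ in closed form. This leaves the strictly convex profile $\phi(t)=Bt-\sqrt{A^2+C^2+2ACt}$ in $t=\cos2\beta_3$. It then locates the minimum from the signs of $\phi'$ at $t=\pm1$, with a separate cusp argument when $A=C$. Finally, it obtains the central squared Gram entries by differentiating the optimal value in the pair weights via the envelope theorem.

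Your factorization $w_{ij}=\lambda_i\lambda_j$ instead completes the square. The phases then become exactly the triangle inequalities on the side lengths $\lambda_i$. For general weights one has $\lambda_1=\sqrt{AB/C}$, $\lambda_2=\sqrt{AC/B}$, $\lambda_3=\sqrt{BC/A}$. Multiplying through by $\sqrt{ABC}$ shows that $\lambda_3\ge\lambda_1+\lambda_2$ is exactly the paper's condition $A\le BC/(B+C)$. Your argument therefore covers the whole lemma, including the interior value $\tfrac{A+B+C}{2}-\tfrac14(AB/C+AC/B+BC/A)$, at no extra cost.

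Uniqueness of the Gram entries follows from rigidity of a triangle with prescribed sides, or from the equality case of the reverse triangle inequality. You need neither an envelope step nor a cusp case. The algebra you deferred does close; for example, $g_{12}=\tfrac12+(\lambda_3^2-\lambda_1^2-\lambda_2^2)/(4p)=\tfrac{1-p}{4p^2}-\tfrac{p^2}{4(1-p)}$, and the other two entries check out similarly.

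What the paper's longer profiling buys is second-order information. The strict convexity of $\phi$ and the positive curvature in the eliminated angle are reused later. They show that the central optimum has a positive-definite two-angle Hessian in \cref{prop:finite-rate} and \cref{thm:untied-global}. In your picture that needs one more short observation: the differential of $u\mapsto\sum_i\lambda_iu_i$ has rank two at a nondegenerate triangle. Hence the Hessian of $\lvert\sum_i\lambda_iu_i\rvert^2$ at a closed triangle is positive definite modulo the common rotation.
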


\subsection{Positive Visitation and Finite Adaptation}
\label{sec:visitation-adaptation}

The inferior endpoint assigns zero probability to $E$, but the feedback
also survives when both branches continually supply training data.
Impose a visitation floor through
$p_\epsilon(s)=\epsilon+(1-2\epsilon)s$, $s\in[0,1]$, with
$0<\epsilon<p_-$, and use the actor
$\dot s=(1-2\epsilon)s(1-s)[\delta-\cD(p_\epsilon(s))]$ under exact adaptation.
The constrained endpoints $p=\epsilon$ and $p=1-\epsilon$ remain
asymptotically stable for $|\delta|<1$. When $0<\delta<1$, the lower
endpoint retains the collision and loses $(1-2\epsilon)\delta$ in
adapted return, satisfying \cref{def:trap} on this feasible interval
(\cref{cor:policy-floor}). Positive visitation leaves the trap intact
when the interfering pair still carries too little training weight to
change the optimal allocation.

Representation learning need not finish between policy updates for the
inferior equilibrium to remain attracting. Let $z$ denote the code's two
line angles relative to $v_3$, removing the common rotation, and let
$M=M^\top\succ0$. Simultaneous gradient adaptation and policy updates take
the form
\[
\dot z=-\kappa M\nabla_zL_{p_\epsilon(s)}(z),\qquad
\dot s=(1-2\epsilon)s(1-s)
\bigl[\delta-(g_{12}(z)-g_{23}(z))\bigr].
\]
For $0\le\epsilon<p_-$, $|\delta|<1$, and every
$\kappa>0$, both endpoint policy--code pairs remain locally asymptotically
stable. Each central global optimizer at $p_\delta$, paired with
$s_\delta=(p_\delta-\epsilon)/(1-2\epsilon)$, is a saddle with one unstable
direction (\cref{prop:finite-rate}). This establishes local persistence
under simultaneous learning; the global basins in \cref{thm:trap} rely on
exact adaptation. Both results concern the tied three-feature model,
leaving the role of its feature statistics and controller constraints to
the constructions that follow.

\section{Additional Experimental Figures and Tables}
\label{app:moved-experiments}

\paragraph{Capacity and controller flexibility.}
\label{sec:exp-capacity}
The frozen-encoder gains in \cref{sec:exp-data-access} motivate testing
what a different controller can recover from the same code. Fixed-code
probes show that independent linear and nonlinear controllers can retain
return reversal. Balancing their data can correct the ranking without
improving the fixed policy's return, because its preferred branch can
lose performance (\cref{app:decoder-probe}). We then compare controller
flexibility with representation capacity when both are learned. With
16, 32, and 64 features, increasing rank improves neural PPO return
(\cref{fig:highdim-capacity}), whereas linear controllers outperform MLPs
in 69 of 70 paired comparisons. Within this budget, capacity helps more
consistently than controller flexibility, but reversal does not persist
across all tested scales (\cref{app:highdim-learning}).

\begin{figure}[!ht]
\centering
\includegraphics[width=\textwidth]{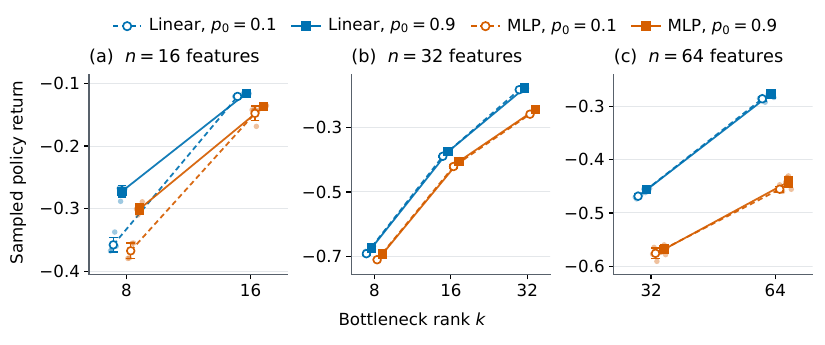}
\caption{Return increases with rank, while the linear controller gives higher
means than the MLP. Small points show five seeds; large markers give
means with sample SD. Conditions are offset horizontally, and panels
use separate return scales.}
\label{fig:highdim-capacity}
\end{figure}

\begin{figure}[!ht]
\centering
\includegraphics[width=\textwidth,trim=0 7bp 0 3bp,clip]{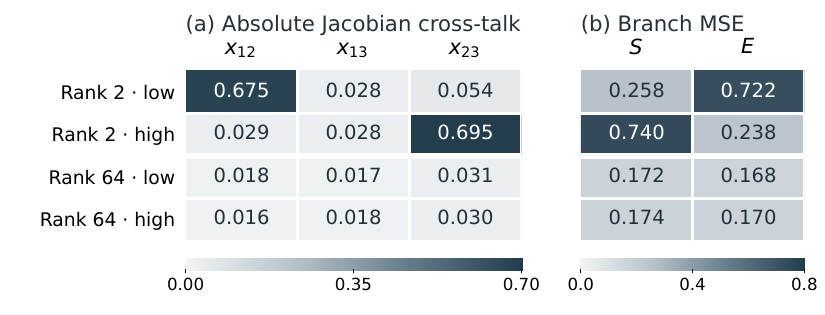}
\caption{Initial preference changes the allocation of cross-talk and control
error under rank-two PPO. Means use all 100 seeds per condition; cross-talk
measures absolute Jacobian responses.}
\label{fig:e3-bridge}
\end{figure}

\begin{table}[!ht]
\centering\small
\setlength{\tabcolsep}{3pt}
\renewcommand{\arraystretch}{1.14}
\caption{Low-start rank-two PPO learns lower-return behavior.
Each condition uses 100 seeds; brackets give 95\% seed-bootstrap intervals.}
\label{tab:ppo-results}
\begin{tabular*}{\textwidth}{@{\extracolsep{\fill}}rrcrr@{}}
\toprule
Rank & Initial \(p(E)\) & Final \(p(E)<0.5\) & Mean \(E-S\) gap & Return [95\% CI] \\
\midrule
2 & $0.095$ & 96/100 & $-0.313$ & $-0.2511\ [-0.2557,-0.2453]$ \\
2 & $0.86$ & 0/100 & $0.651$ & $-0.1205\ [-0.1215,-0.1194]$ \\
64 & $0.095$ & 0/100 & $0.153$ & $-0.0260\ [-0.0262,-0.0257]$ \\
64 & $0.86$ & 0/100 & $0.154$ & $-0.0281\ [-0.0283,-0.0278]$ \\
\bottomrule
\end{tabular*}
\end{table}

\begin{figure}[!ht]
\centering
\includegraphics[width=\textwidth,trim=0 15bp 0 8bp,clip]{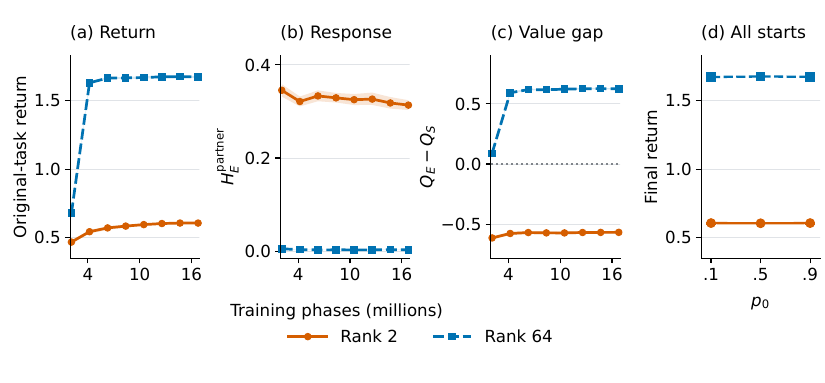}
\caption{Rank-two agents retain lower returns, stronger partner responses,
and negative advance--exit rollout gaps. Training at \(p_0=0.1\) (a--c)
and endpoints across starts (d) use 20 seeds per condition and 95\% intervals.}
\label{fig:sequential-baseline}
\end{figure}

\begin{table}[!ht]
\centering\small
\setlength{\tabcolsep}{3pt}
\renewcommand{\arraystretch}{1.14}
\caption{Temporary exposure changes improve narrow return after Hub training
resumes. Gains are paired differences; brackets give 95\% intervals over
20 training seeds.}
\label{tab:sequential-results}
\begin{tabular*}{\textwidth}{@{\extracolsep{\fill}}rlrrr@{}}
\toprule
Rank & Encoder & Hub-only fork & Uniform, then Hub & Paired gain \\
\midrule
2 & Trainable & $0.468\ [0.430,0.501]$ & $1.234\ [1.209,1.255]$ & $0.766\ [0.716,0.815]$ \\
2 & Frozen & $-0.326\ [-0.369,-0.282]$ & $1.066\ [1.033,1.098]$ & $1.392\ [1.325,1.457]$ \\
64 & Trainable & $1.673\ [1.663,1.681]$ & $1.676\ [1.668,1.684]$ & $0.004\ [-0.004,0.011]$ \\
\bottomrule
\end{tabular*}
\end{table}

\section{Feature Interference and the Four Conditions}
\label{app:definition}

\Cref{def:trap} uses the same policy comparison to connect a geometric
allocation to a reversal and a loss of adapted return. Two details matter
for that connection. The interference term must follow from the encoder and controller,
and the named feature pairs must account for the reversal.

\subsection{Separating Cross-Feature Error from Self-Response and Noise}
\label{app:interference-reference}

Let \(G\subseteq[n]\) be the active features after action \(a\), with
\(r_{a,i}=\Pr(i\in G\mid a)\) and
\(\omega_{a,ij}=\Pr(i,j\in G\mid a)\). Conditional on the action and mask,
the amplitudes have mean zero and covariance \(I_n\). The encoder and
controller are
\begin{equation*}
h=\sum_{j\in G}v_jZ_j+\nu,\qquad
\widehat Z_i=w_i^\top h,\qquad \|v_i\|=1,
\end{equation*}
where \(\nu\) is independent, centered, and has covariance
\(\sigma^2I\). Reward subtracts the sum of squared errors on active
features. Writing \(T_{ij}=w_i^\top v_j\), expansion of each squared error
and cancellation of the centered cross terms gives
\begin{align*}
D_a(\theta)&=B_a(\theta)+I_a^{\rm all}(\theta),\\
B_a(\theta)&=\sum_i r_{a,i}
\bigl[(T_{ii}-1)^2+\sigma^2\|w_i\|^2\bigr],\\
I_a^{\rm all}(\theta)&=\sum_{i<j}\omega_{a,ij}
\bigl[T_{ij}^2+T_{ji}^2\bigr].
\end{align*}
For a set \(P\) of unordered feature pairs, its contribution is
\begin{align}\label{eq:I_a^p}
I_a^P(\theta)=\sum_{\{i,j\}\in P}\omega_{a,ij}
\bigl[(w_i^\top v_j)^2+(w_j^\top v_i)^2\bigr].    
\end{align}
Consequently \(q_a^\perp(\theta)=b_a-B_a(\theta)\) removes exactly
the cross-feature responses. In particular, it does not rename an
arbitrary difference between two parameter choices as interference.

This reference is realizable without changing the self-response or the
noise variance. Take an orthonormal family \(e_0,e_1,\ldots,e_n\) in
\(\R^{n+1}\), and set
\begin{equation*}
v_i^\perp=e_i,\qquad
w_i^\perp=T_{ii}e_i+
\sqrt{\|w_i\|^2-T_{ii}^2}\,e_0.
\end{equation*}
The square root is real by Cauchy--Schwarz. These vectors satisfy
\((w_i^\perp)^\top v_i^\perp=T_{ii}\),
\(\|w_i^\perp\|=\|w_i\|\), and
\((w_i^\perp)^\top v_j^\perp=0\) for \(j\ne i\).
Thus the same activation law, amplitudes, reward and noise level give
exactly \(q^\perp(\theta)\). In the tied case \(T_{ii}=\|w_i\|=1\),
so \(w_i^\perp=v_i^\perp=e_i\) and \(n\) dimensions suffice.
The construction preserves the linear controller output rule; it does not claim
membership in the original capacity-constrained class or impose a new
optimizer on the reference.

\subsection{Why the Witness Must Account for the Same Policy Comparison}
\label{app:definition-attribution}

For \(\Delta x=y-x^\star\), put
\(\alpha_{ij}=\sum_a\Delta x_a\omega_{a,ij}\).
Every pair in the fixed witness set \(P\) has \(\alpha_{ij}>0\), and
its contribution to the policy contrast is
\begin{equation*}
(\Delta x)^\top I^P(\theta)
=\sum_{\{i,j\}\in P}\alpha_{ij}(T_{ij}^2+T_{ji}^2).
\end{equation*}
Allocation requires \(\langle v_i,v_j\rangle\ne0\) for every
pair in \(P\) at every optimum. This rules out attributing
controller cross-talk between orthogonal features to their superposition.
Reversal requires the displayed total to exceed the magnitude of the
negative deployed policy contrast. A pattern whose cross-feature terms
are too small to account for that contrast cannot meet the requirement.
This identifies which feature signals contribute the adverse return;
it does not assert that each encoder overlap alone causes the corresponding
controller response. The definition concerns tied controllers or unrestricted
jointly optimized linear controllers. An arbitrary restriction on the controller
could itself enforce erroneous mixing even with an orthogonal encoder,
which would require a separate attribution argument.

The reference \(q^\perp\) and this attribution check play distinct roles.
The former verifies that removing all interference favors \(y\). The
latter verifies that the named nonorthogonal pairs suffice to account for
the adverse comparison. Neither implies the other: pairs outside \(P\)
can have negative \(\alpha_{ij}\), so removing their interference can
reduce the contrast. We retain both inequalities. The partial sum is an
accounting of terms in the actual squared-error loss, and is not asserted
to be the return of a feasible encoder and controller obtained by editing one part of
a Gram matrix.

For example, append two features with fixed identical unit directions,
orthogonal to the original code, that are always active under both
branches. With a tied controller this adds \(2+2\sigma^2\) to both branch
distortions. It changes neither the actor gap nor any comparison of adapted
returns. These features collide at every optimum, yet their pair has
\(\alpha_{ij}=0\) for every policy comparison and cannot witness
\cref{def:trap}. This example explains why a collision and a reversed
ordering, stated independently, would not identify the mechanism.

The definition is deliberately an attribution within the stated linear
feature model. It requires a common adequate pattern across all optima;
it does not characterize every form of representation-induced bad
equilibrium. The actor and feasible policy set are fixed for each model:
the replicator and the stated floor and entropy variants update through
the deployed values, and each stability claim uses its specified dynamics.
An actor that ignores these values would require a separate account of
the feedback. Multiple pairs may be needed, and the actor may continue to
be trapped after any one pair is removed. Nor does the definition itself
prove that the allocation varies with the action distribution. The solved phase diagrams
establish this dependence, while feedback establishes that nearby policies
return as their representations adapt. For two actions the contrast is
just a positive multiple of \(q_E-q_S\), including when the feasible
action distributions satisfy a visitation floor. For more actions it compares the
same two policies in all four conditions, rather than an unrelated pair
of actions and a separate return benchmark.

\subsection{Witnesses in the Constructions}
\label{app:definition-witnesses}

In the tied three-feature model, choose \(P=\{\{1,2\}\}\),
\(x^\star=(1,0)\), and \(y=(0,1)\). The pair's co-activation
probability increases by \(1/2\), and every low-phase optimum has
\(T_{12}^2+T_{21}^2=2\). Hence
\begin{equation*}
(\Delta x)^\top q=\delta-1,\qquad
(\Delta x)^\top q^\perp=\delta,\qquad
(\Delta x)^\top I^P=1.
\end{equation*}
For \(0<\delta<1\), allocation and both reversal inequalities follow.
\Cref{thm:trap} supplies feedback and the adapted return gain \(\delta\).
With the floor, take \(x^\star=(1-\epsilon,\epsilon)\) and
\(y=(\epsilon,1-\epsilon)\). All three contrasts and the adapted
return gain are multiplied by \(1-2\epsilon>0\), while
\(\epsilon<p_-\) keeps the same pattern optimal. Thus
\cref{cor:policy-floor} satisfies the same definition with positive
visitation of both actions.

In the matching-complement model, take \(P=M_S\) and the two pure
policies. Each pair in \(M_S\) has \(\alpha_{ij}=1/N\), and every
low-policy optimum collapses all \(d\) of these pairs. Their collective
contribution is \(2d/N=1/(d-1)\), whereas the separated gap is
\(\delta\). This proves reversal for the full window in
\cref{thm:matching-complement}; a single pair would in general be
insufficient. In the codimension-one lift, the single pair \(\{1,2\}\)
has contribution \(c_d\), so the same verification applies with
\(0<\delta<c_d\), as in \cref{cor:codimension-one-lift}.

For the untied model at fixed \(\eta>0\),
\cref{prop:untied-endpoint} gives
\(w_i=v_i/(1+\eta)\) at every endpoint optimum. At the low endpoint
the pair \(\{1,2\}\) therefore contributes
\(s_\eta=(1+\eta)^{-2}\) to the contrast. Each active feature has
the same self-response and noise cost, and each branch activates two
features, so \((\Delta x)^\top q^\perp=\delta\).
The three contrasts above become \(\delta-s_\eta,\delta,s_\eta\),
respectively. \Cref{thm:untied} proves feedback and suboptimality for
\(0<\delta<s_\eta\), completing the four conditions without any
large-noise assumption.

For the small-entropy regime in \cref{prop:entropy-trap}, let \(p_L\)
and \(p_H\) be its low and high attracting values of \(p\). They lie in
the respective exact collision phases. Taking the same pair \(\{1,2\}\)
and \(y=(1-p_H,p_H)\), the three tied-model contrasts above are
multiplied by \(p_H-p_L>0\). The entropy proposition supplies feedback
for the regularized actor and a strictly higher environmental return at
\(p_H\), so these interior policies also meet the four conditions.

Finally, \cref{prop:finite-rate} concerns a joint policy--representation
flow. Its endpoint representations and return comparisons are the same,
but its trajectories need not satisfy exact response away from equilibrium.
It proves local persistence of these equilibria under a different learning
dynamic, rather than silently extending the exact-response quantifier in
\cref{def:trap}. Similarly, \cref{sec:setting} gives geometric-free
conditions on representation--actor feedback. Those conditions alone do
not supply the feature attribution required here.

\section{Weighted-Frame Solution}
\label{app:frame}

\subsection{Full Derivation of the Illustrative Example}
\label{app:illustration-derivation}

Recall that $p=\pi(E)$ and that the observation is
$o_i=Z_i\mathbf1\{i\in G\}$. The vector $Z$ is centered, independent
of $(B,G)$, and satisfies $\E[ZZ^\top]=I_3$. The encoder has unit columns
$v_i\in\R^2$, and the tied controller takes the terminal action
$\widehat Z=V^\top(Vo+\nu)$, where independent noise has mean zero
and covariance $\sigma^2I_2$. Only active coordinates contribute to
the squared control loss. The root reward is zero, so the terminal
reward is the episode return.

For an active pair $G=\{i,j\}$, put $c=v_i^\top v_j$. Unit column
lengths give
\[
\widehat Z_i-Z_i=cZ_j+v_i^\top\nu,
\qquad
\widehat Z_j-Z_j=cZ_i+v_j^\top\nu.
\]
Independence from the active pair preserves the stated moments of $Z$;
independence of the centered noise makes the signal--noise cross terms
vanish. Thus, with $g_{ij}=c^2$,
\[
\E\!\left[(\widehat Z_i-Z_i)^2+(\widehat Z_j-Z_j)^2
\mid G=\{i,j\}\right]=2(g_{ij}+\sigma^2).
\]
Averaging over the two equally likely pairs in each branch recovers
the losses stated in \cref{sec:exact}:
\begin{equation}
\begin{aligned}
D_S(V)
&=\tfrac12\,2(g_{13}+\sigma^2)
 +\tfrac12\,2(g_{23}+\sigma^2)
 =g_{13}+g_{23}+2\sigma^2,\\
D_E(V)
&=\tfrac12\,2(g_{12}+\sigma^2)
 +\tfrac12\,2(g_{13}+\sigma^2)
 =g_{12}+g_{13}+2\sigma^2.
\end{aligned}
\label{eq:branch-loss}
\end{equation}
With $V$ fixed, the actual branch return is
$q_B(V)=b_B-D_B(V)$. Averaging the losses under the policy gives
\[
(1-p)D_S(V)+pD_E(V)
=2\sigma^2+p g_{12}+g_{13}+(1-p)g_{23}
=2\sigma^2+L_p(V).
\]
The noise contribution is constant over the feasible unit-column codes,
so minimizing expected control loss at fixed $p$ is equivalent to
minimizing $L_p$.

At $p=0$, $L_0=g_{13}+g_{23}$ is nonnegative and attains zero.
Every minimizer therefore has both $v_1$ and $v_2$ orthogonal to
$v_3$. Its orthogonal complement is one-dimensional, so every optimum
satisfies $v_1=\pm v_2\perp v_3$. A representative is
$v_1=v_2=e_1$, $v_3=e_2$. The corresponding squared overlaps are
$(g_{12},g_{13},g_{23})=(1,0,0)$, giving
$D_S=2\sigma^2$ and $D_E=1+2\sigma^2$.
At $p=1$, minimizing $L_1=g_{12}+g_{13}$ to zero similarly forces
$v_2=\pm v_3\perp v_1$ at every optimum. Choosing
$v_1=e_1$, $v_2=v_3=e_2$ gives squared overlaps $(0,0,1)$ and
reverses the two distortions. Column signs and a common rotation do
not change these losses.

For the numerical illustration, set $b_S=0$, $b_E=0.5$, and
$\sigma=0$. Under the $S$-adapted code, either pair on $S$ is
recovered exactly. On $E$, the pair $\{1,2\}$ instead produces
$h=(Z_1+Z_2)e_1$ and
$\widehat Z_1=\widehat Z_2=Z_1+Z_2$. Its expected squared error is
$\E[Z_1^2+Z_2^2]=2$. The other pair, $\{1,3\}$, has zero error,
so the branch average is $D_E=1$. The $E$-adapted code exchanges
the role of the colliding pair, yielding the quantities below.

\begin{center}
\begin{tabular}{lrrrr}
\toprule
Code fitted on & $D_S$ & $D_E$ & $q_S$ & $q_E$\\
\midrule
$S$ only & $0$ & $1$ & $0$ & $-0.5$\\
$E$ only & $1$ & $0$ & $-1$ & $0.5$\\
\bottomrule
\end{tabular}
\end{center}

Each row holds one code fixed while comparing both actions. Comparing
the two separately adapted policies instead uses $q_S$ from the first
row and $q_E$ from the second: their returns are $0$ and $0.5$ at
the same capacity. For general parameters, write
$L^\star(p)=\min_V L_p(V)$, with the minimum over the same unit-column
class. Since $\delta=b_E-b_S$, adaptation gives
\begin{equation}
\begin{aligned}
J^\star(p)
&=\max_V\bigl[(1-p)q_S(V)+p q_E(V)\bigr]\\
&=b_S+p\delta-2\sigma^2-L^\star(p).
\end{aligned}
\label{eq:scalar-adapted-return}
\end{equation}
Changing the root action in one episode does not perform this
reoptimization; it changes the branch evaluated through the current
code. These comparisons establish the example's return reversal and
adapted-return difference. Their stability under learning is analyzed
separately in \cref{sec:dynamics}.

To solve for the code between the endpoints, fix $u=g_{12}$ and minimize
over $v_3$. Its contribution is the Rayleigh quotient of
$v_1v_1^\top+(1-p)v_2v_2^\top$, whose trace is $2-p$ and determinant
is $(1-p)(1-u)$. Selecting the smaller eigenvalue gives
\begin{equation}
L^\star(p)=\min_{0\le u\le1}f_p(u),\qquad
f_p(u)=pu+\frac{2-p-\sqrt{p^2+4(1-p)u}}{2}.
\label{eq:scalar-frame}
\end{equation}
Every $u\in[0,1]$ is realizable by two unit vectors, so the reduction
is exact. For $0<p<1$, strict convexity makes $u=1$ the unique
minimum exactly when
\[
f_p'(1)=p-\frac{1-p}{2-p}\le0,
\qquad\text{equivalently}\qquad
p\le p_-:=\frac{3-\sqrt5}{2}.
\]
This includes equality. In this range, attaining the minimum also
requires the optimizing $v_3$, forcing $v_1=\pm v_2\perp v_3$.
At $p=0$, $f_0(u)=1-\sqrt u$ has the same unique minimum.
Exchanging features 1 and 3 and replacing $p$ by $1-p$ gives the
high phase, beginning at $p_+=(\sqrt5-1)/2$. The interior stationary
point gives the middle phase's three distinct lines.
\Cref{lem:weighted-frame} supplies the full squared Gram entries and
equality cases for general pair weights, recovering
\cref{thm:frame} upon specialization.

\subsection{Optimal Geometry for Arbitrary Pair Weights}
\label{app:weighted-frame-proof}

To determine every optimum, the minimum loss alone is not enough: we
also need the equality cases and their behavior when a pair receives
zero weight. The following lemma solves the geometric problem for
arbitrary positive pair weights. The policy-dependent objective is then
a direct specialization, with its zero-weight endpoints handled separately.

\begin{lemma}[Weighted Three-Line Frame]
\label{lem:weighted-frame}
For \(A,B,C>0\), consider
\begin{equation}
\min_{\|v_i\|=1,\ v_i\in\R^2}
A g_{12}+B g_{13}+C g_{23}.
\label{eq:weighted-frame}
\end{equation}
If
\begin{equation*}
A\le\frac{BC}{B+C},
\end{equation*}
then every minimizer has \(v_1=\pm v_2\perp v_3\) and the optimum is \(A\).
The two cyclic statements hold for \(B\) and \(C\). If none of the three
conditions holds, then
\begin{equation}
L^\star(A,B,C)
=\frac{A+B+C}{2}
-\frac14\left(\frac{AB}{C}+\frac{AC}{B}+\frac{BC}{A}\right),
\label{eq:weighted-value}
\end{equation}
and all minimizers have
\begin{align}
g_{12}&=\frac12-\frac14\left(
\frac BC+\frac CB-\frac{BC}{A^2}\right),\label{eq:weighted-g12}\\
g_{13}&=\frac12-\frac14\left(
\frac AC+\frac CA-\frac{AC}{B^2}\right),\label{eq:weighted-g13}\\
g_{23}&=\frac12-\frac14\left(
\frac AB+\frac BA-\frac{AB}{C^2}\right).\label{eq:weighted-g23}
\end{align}
The squared Gram geometry is unique in every phase, including equality in a
phase condition.
\end{lemma}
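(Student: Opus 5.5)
Since \(g_{ij}\) depends only on the angle between the lines spanned by \(v_i\) and \(v_j\), I will parametrize the lines by angles and work with the doubled angles. Write \(v_i=(\cos\phi_i,\sin\phi_i)\). Then \(g_{ij}=\cos^2(\phi_i-\phi_j)=\tfrac12\bigl(1+\cos(2\phi_i-2\phi_j)\bigr)\). Set \(u_i=(\cos2\phi_i,\sin2\phi_i)\), which are unit vectors in \(\R^2\), and are otherwise unconstrained because doubling is onto the circle. The objective becomes
\[
\tfrac{A+B+C}{2}+\tfrac12\bigl(A\,u_1^\top u_2+B\,u_1^\top u_3+C\,u_2^\top u_3\bigr).
\]
Minimizing it is therefore the classical problem of minimizing a weighted sum of pairwise inner products of three planar unit vectors. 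The key identity I would use is
\[
\Bigl\|\tfrac{u_1}{a}+\tfrac{u_2}{b}+\tfrac{u_3}{c}\Bigr\|^2\ge0,
\]
with \(a,b,c>0\) chosen so that \(1/(ab)\propto A\), \(1/(ac)\propto B\), \(1/(bc)\propto C\). Concretely, take \(\lambda_1=\sqrt{AB/C}\), \(\lambda_2=\sqrt{AC/B}\), \(\lambda_3=\sqrt{BC/A}\). Then \(\lambda_1\lambda_2=A\), \(\lambda_1\lambda_3=B\), \(\lambda_2\lambda_3=C\), and expanding \(\|\sum\lambda_iu_i\|^2\ge0\) gives
\[
A u_1^\top u_2+Bu_1^\top u_3+Cu_2^\top u_3\ge-\tfrac12\bigl(\lambda_1^2+\lambda_2^2+\lambda_3^2\bigr)=-\tfrac12\Bigl(\tfrac{AB}{C}+\tfrac{AC}{B}+\tfrac{BC}{A}\Bigr).
\]
This reproduces \eqref{eq:weighted-value}.

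Equality holds iff \(\sum_i\lambda_iu_i=0\), which is possible with unit \(u_i\) iff \(\lambda_1,\lambda_2,\lambda_3\) satisfy the triangle inequalities. Squaring \(\lambda_3\le\lambda_1+\lambda_2\) and clearing denominators, the failure \(\lambda_3>\lambda_1+\lambda_2\), i.e.\ \(BC/A>(\sqrt{AB/C}+\sqrt{AC/B})^2=A(B+C)^2/(BC)\), is equivalent to \(A<BC/(B+C)\). So the interior case is exactly the case in which none of the three phase conditions holds. In that case the closed triangle \(\sum\lambda_iu_i=0\) fixes every inner product, for example \(u_1^\top u_2=(\lambda_3^2-\lambda_1^2-\lambda_2^2)/(2\lambda_1\lambda_2)\). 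Substituting into \(g_{12}=\tfrac12(1+u_1^\top u_2)\) gives \eqref{eq:weighted-g12}, and by symmetry the other two entries. Uniqueness of the Gram entries follows because every minimizer must attain equality.

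The hard part is the degenerate phase \(A\le BC/(B+C)\), where the quadratic bound is not attained. Here I would reduce to one dimension as in \cref{app:illustration-derivation}. Fix \(u=g_{12}\) and minimize over \(v_3\); this is the smaller eigenvalue of \(Bv_1v_1^\top+Cv_2v_2^\top\), which gives
\[
f(u)=Au+\tfrac12\Bigl(B+C-\sqrt{(B-C)^2+4BC\,u}\Bigr).
\]
This function is strictly convex in \(u\), so \(u=1\) is the unique minimizer iff \(f'(1)=A-BC/(B+C)\le0\). I would check the derivative: \(\sqrt{(B+C)^2}=B+C\) at \(u=1\), so \(f'(1)=A-BC/(B+C)\), as claimed. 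Then \(f(1)=A\) and \(g_{12}=1\). At \(u=1\) the matrix equals \((B+C)v_1v_1^\top\), whose zero eigenvector is unique up to sign, so \(v_3\perp v_1\) and hence \(g_{13}=g_{23}=0\). The boundary case \(A=BC/(B+C)\) still yields the unique minimizer \(u=1\), by strict convexity together with \(f'(1)=0\).

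The cyclic cases follow by relabeling. Finally, I would note that at most one phase condition can hold at once, since \(A\le BC/(B+C)<\min(B,C)\). I would also confirm that on the interior formula's boundary, the value and Gram entries match the collinear phase, which serves as a consistency check.
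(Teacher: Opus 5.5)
Your proof is correct. For the central phase it takes a genuinely different route from the paper; for the collision phases it uses essentially the same profiling idea.

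\textbf{The paper's route.} The paper fixes \(\beta_1=0\) and eliminates \(\beta_2\). This gives one strictly convex profile \(\phi(t)=Bt-\sqrt{A^2+C^2+2ACt}\) in \(t=\cos2\beta_3\), and the paper reads every phase off that single function. The endpoint \(t=1\) gives the \(1\)-\(3\) collision. The endpoint \(t=-1\) gives the \(1\)-\(2\) or \(2\)-\(3\) collision according to whether \(A<C\) or \(A>C\), with a separate cusp argument when \(A=C\). The interior stationary point gives the value. The central Gram entries then come from envelope differentiation of \eqref{eq:weighted-value} in the weights, which itself rests on first showing that the minimizer is unique modulo symmetries.

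\textbf{Your central phase.} You instead certify the central value directly. With \(\lambda_i\lambda_j\) equal to the pair weights, your expansion is in fact an identity: the objective equals \(L^\star+\tfrac14\|\sum_i\lambda_iu_i\|^2\). Hence the bound is attained exactly when \((\lambda_1,\lambda_2,\lambda_3)\) satisfy the triangle inequalities, and the phase boundaries are precisely where that triangle degenerates. Every minimizer must close the triangle, so the law of cosines fixes all three inner products. The value, the Gram formulas and their uniqueness therefore come out together, with no envelope step.

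\textbf{Your collision phases.} Here you use the eigenvalue form of \cref{app:illustration-derivation}, now with general weights and pivot \(g_{12}\). Because you choose the pivot pair per phase by relabeling, you only ever examine the endpoint \(u=1\). The paper's split at \(t=-1\) into \(A<C\) versus \(A>C\), and its cusp at \(A=C\), never arise.

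\textbf{What each route buys.} The paper's single profile is what it later reuses to show positive curvature at the central optimum (\cref{prop:finite-rate}, \cref{thm:untied-global}). Your exact identity would supply that as well. At a nondegenerate closed triangle, the Hessian is a positive multiple of \(DG^\top DG\) for \(G=\sum_i\lambda_iu_i\), and this is nonsingular once the common rotation is removed.

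\textbf{One small slip.} At equality \(A=BC/(B+C)\) the quadratic bound \emph{is} attained, by the degenerate triangle \(u_1=u_2=-u_3\). Only the strict case genuinely needs the one-dimensional reduction. Both arguments give \(v_1=\pm v_2\perp v_3\) at equality, so nothing breaks.
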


\begin{proof}
Represent each unoriented line by an angle \(\beta_i\), fix
\(\beta_1=0\). Since
\(\cos^2\beta=(1+\cos2\beta)/2\), minimizing
\cref{eq:weighted-frame} is equivalent to minimizing
\begin{equation*}
F(\beta_2,\beta_3)=A\cos2\beta_2+B\cos2\beta_3+C\cos(2\beta_2-2\beta_3).
\end{equation*}
For fixed \(\beta_3\), exact minimization over \(\beta_2\) gives
\begin{equation}
\phi(t)=Bt-\sqrt{A^2+C^2+2ACt},
\qquad t=\cos2\beta_3\in[-1,1].
\label{eq:phi}
\end{equation}
Where the denominator is nonzero,
\begin{equation*}
\phi'(t)=B-\frac{AC}{\sqrt{A^2+C^2+2ACt}},
\qquad
\phi''(t)=\frac{A^2C^2}{(A^2+C^2+2ACt)^{3/2}}>0.
\end{equation*}
The profiled objective is strictly convex wherever it is smooth, so its
minimizing \(t\) is unique. The only possible nonsmooth endpoint occurs
when \(A=C\) and is handled below.

The endpoint \(t=1\) is optimal exactly when
\(B\le AC/(A+C)\), yielding \(v_1=\pm v_3\perp v_2\). At \(t=-1\), if
\(A<C\), the left endpoint is optimal exactly when
\begin{equation*}
B\ge\frac{AC}{C-A}
\quad\Longleftrightarrow\quad
A\le\frac{BC}{B+C},
\end{equation*}
and the geometry is \(v_1=\pm v_2\perp v_3\). If \(A>C\), the analogous
condition is \(C\le AB/(A+B)\), giving the \(2\)-\(3\) collision. These
positive-weight phase conditions are mutually exclusive. At equality,
strict convexity still gives a unique minimizing endpoint, and the vector
used to eliminate \(\beta_2\) is nonzero. Thus \(2\beta_2\) is unique modulo \(2\pi\). If \(A=C\), the square root in \cref{eq:phi} has a
cusp at \(t=-1\), but that endpoint cannot minimize, because for small
\(\epsilon>0\),
\[
\phi(-1+\epsilon)-\phi(-1)=B\epsilon-A\sqrt{2\epsilon}<0.
\]

If no endpoint condition holds, the minimizer is interior and is given by
\begin{equation*}
t^\star=
\frac{A^2C^2/B^2-A^2-C^2}{2AC}\in(-1,1).
\end{equation*}
Substitution into \cref{eq:phi}, followed by restoring the constant
\((A+B+C)/2\) and the factor \(1/2\), gives
\cref{eq:weighted-value}. The minimizing \(\beta_2\) is recovered from
\begin{equation*}
(\cos2\beta_2,\sin2\beta_2)
=-\frac{(A+C\cos2\beta_3,\ C\sin2\beta_3)}{AC/B}.
\end{equation*}
The two choices of \(\sin2\beta_3\) differ by a common reflection. Therefore the
minimizer is unique modulo a common orthogonal transformation and column signs.
The derivative of the objective with respect to a pair weight is that pair's
squared inner product. The envelope theorem therefore gives
\cref{eq:weighted-g12,eq:weighted-g13,eq:weighted-g23} from the derivatives
of \cref{eq:weighted-value}.
\end{proof}

\begin{proof}[Proof of \cref{thm:frame}]
Set \(A=p\), \(B=1\), and \(C=1-p\). The \(1\)-\(2\) collision condition is
\begin{equation*}
p\le\frac{1-p}{2-p}
\quad\Longleftrightarrow\quad
p\le\frac{3-\sqrt5}{2}=p_-.
\end{equation*}
The \(2\)-\(3\) collision condition is
\begin{equation*}
1-p\le\frac{p}{1+p}
\quad\Longleftrightarrow\quad
p\ge\frac{\sqrt5-1}{2}=p_+.
\end{equation*}
The remaining collision condition would require \(1\le p(1-p)\), which is
impossible. In the central phase,
\begin{equation*}
\frac p{1-p}+\frac{1-p}{p}=\frac1{p(1-p)}-2,
\end{equation*}
so \cref{eq:weighted-value} reduces to \cref{eq:optimal-loss}.
Substitution into
\cref{eq:weighted-g12,eq:weighted-g13,eq:weighted-g23} gives
\cref{eq:optimal-overlaps}. At \(p=0\), the
objective forces \(v_1,v_2\perp v_3\), hence \(v_1=\pm v_2\) in two
dimensions; \(p=1\) is symmetric. The two zero-weight endpoints therefore retain the same uniqueness of squared
Gram entries as the adjacent positive-weight phases.
\end{proof}

The derivative of \cref{eq:optimal-loss} gives \cref{eq:D-piecewise}. In the
central phase, with \(\chi=p(1-p)\),
\begin{equation*}
\frac{dL^\star}{dp}
=\frac{1-2p}{4}(\chi^{-2}-1),
\end{equation*}
and a second derivative gives \cref{eq:D-derivative}. At \(p_-,p_+\), the central formula matches the outer values \(+1\) and
\(-1\), and the squared Gram entries also match. Thus \(L^\star\) is
continuously differentiable and \(\cD\) is continuous across both
transitions. Its derivative \(\cD'=L^{\star\prime\prime}\) jumps from
zero in each outer phase to the central one-sided value
\(-\tfrac52(2+\sqrt5)\).

\subsection{Task Blocks under Partial Observation}

The scalar construction also describes a larger linear observation class
when the encoder can remove nuisance variation and preserve each task block.
The condition below ensures that nuisance removal imposes no additional
relations among the effective feature directions. The encoder is bounded,
but the class has no common bound on its operator norm.

\begin{proposition}[Coactivation Reduction under Partial Observation]
\label{prop:partial-observation-reduction}
Let \(\mathcal H\) be a real Hilbert space, let
\(\mathcal N\subset\mathcal H\) be a closed nuisance subspace, and let linear
maps \(\Phi_i:\R^{k_i}\to\mathcal H\), \(i\in[m]\), be task-injective modulo
\(\mathcal N\):
\begin{equation}
\sum_{i=1}^m\Phi_i u_i\in\mathcal N
\quad\Longrightarrow\quad u_i=0\ \text{for every }i.
\label{eq:task-injective-observation}
\end{equation}
Let \(G\subseteq[m]\) be a random active set. Let the joint block vector
\((Z_i)_{i=1}^m\), with \(Z_i\in\R^{k_i}\), be independent of \(G\).
Assume its blocks are centered and pairwise uncorrelated, with
\(\E[Z_iZ_i^\top]=I_{k_i}\). The terminal observation is
\[
y=\sum_{i\in G}\Phi_iZ_i+n,\qquad n\in\mathcal N\quad\text{almost surely}.
\]
Fix \(k\ge\max_i k_i\). Let \(V:\mathcal H\to\R^k\) be any bounded linear
encoder with \(V\mathcal N=\{0\}\) and \(V_i:=V\Phi_i\) satisfying
\(V_i^\top V_i=I_{k_i}\). With \(h=Vy+\nu\),
\(\widehat Z_i=V_i^\top h\), and independent noise \(\E[\nu]=0\),
\(\E[\nu\nu^\top]=\sigma^2I_k\), put
\(\omega_{ij}=\Pr(i,j\in G)\). Then
\begin{equation}
\E\!\left[\sum_{i\in G}\|\widehat Z_i-Z_i\|_2^2\right]
=\sigma^2\sum_i k_i\Pr(i\in G)
 +2\sum_{i<j}\omega_{ij}\|V_i^\top V_j\|_F^2.
\label{eq:fusion-frame-reduction}
\end{equation}
Moreover, as \(V\) varies, every tuple of isometries
\(V_i\in\R^{k\times k_i}\) is realizable. Hence the part of the population
objective depending on the encoder is exactly a coactivation-weighted
fusion-frame problem. For a finite
action mixture, policy enters it through
\(\omega_{ij}(x)=\sum_a x_a\Pr(i,j\in G\mid a)\). In the scalar specialization
\(m=3\), \(k=2\), and with the branch laws of \cref{sec:exact}, this is
\cref{eq:frame-objective}; thus \cref{thm:frame,thm:trap,cor:policy-floor}
hold verbatim for the effective directions \(V\Phi_i\).
\end{proposition}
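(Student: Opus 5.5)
The proof has three parts: the loss identity \cref{eq:fusion-frame-reduction}, realizability of every tuple of isometries, and the scalar specialization.

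\textbf{Loss identity.} Since \(V\mathcal N=\{0\}\) and \(n\in\mathcal N\) almost surely, the encoder output is \(h=\sum_{j\in G}V_jZ_j+\nu\). Hence
\[
\widehat Z_i-Z_i=(V_i^\top V_i-I_{k_i})Z_i+\sum_{j\in G\setminus\{i\}}V_i^\top V_jZ_j+V_i^\top\nu .
\]
The isometry condition \(V_i^\top V_i=I_{k_i}\) kills the first term. I then condition on \(G\). The blocks are independent of \(G\), centered, pairwise uncorrelated with identity covariance, and \(\nu\) is independent and centered, so all cross terms vanish. The conditional squared error for feature \(i\) is therefore
\[
\sum_{j\in G\setminus\{i\}}\|V_i^\top V_j\|_F^2+\sigma^2\operatorname{tr}(V_i^\top V_i),
\]
using \(\E\|MZ_j\|^2=\|M\|_F^2\) and \(\E\|V_i^\top\nu\|^2=\sigma^2 k_i\). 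Summing over \(i\in G\) and taking expectations gives the noise term \(\sigma^2\sum_ik_i\Pr(i\in G)\). Each unordered pair appears twice, and \(\|V_j^\top V_i\|_F=\|V_i^\top V_j\|_F\), which yields the factor \(2\omega_{ij}\). This step is routine.

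\textbf{Realizability.} This is the step I expect to need care. Let \(W=\bigoplus_i\R^{k_i}\), and define \(\Phi:W\to\mathcal H\) by \(\Phi(u)=\sum_i\Phi_iu_i\). Condition \cref{eq:task-injective-observation} says the composition \(W\to\mathcal H\to\mathcal H/\mathcal N\) is injective. Let \(R=\Phi(W)\), a finite-dimensional subspace with \(R\cap\mathcal N=\{0\}\).

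Given target isometries \(U_i\), I define \(V\) on \(R\oplus\mathcal N\) by \(V(\Phi u+n)=\sum_iU_iu_i\). This is well defined by injectivity and the trivial intersection. I then need to extend \(V\) boundedly to all of \(\mathcal H\). Because \(\mathcal N\) is closed and \(R\) is finite-dimensional, \(R+\mathcal N\) is closed. The map is bounded on it: it factors through the finite-dimensional quotient \((R+\mathcal N)/\mathcal N\cong R\), on which every linear map is continuous, and the quotient map is bounded. To extend, I compose with the orthogonal projection onto \(R+\mathcal N\). The resulting \(V\) satisfies \(V\mathcal N=0\) and \(V\Phi_i=U_i\). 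This also explains why no uniform operator-norm bound is available: the norm of \(V\) depends on the angle between \(R\) and \(\mathcal N\).

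\textbf{Conclusion and specialization.} The encoder-dependent part of the objective is \(2\sum_{i<j}\omega_{ij}\|V_i^\top V_j\|_F^2\), minimized over arbitrary isometry tuples. Linearity of expectation in the mixture gives \(\omega_{ij}(x)=\sum_ax_a\Pr(i,j\in G\mid a)\).

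For the scalar specialization, take \(k_i=1\) and \(k=2\). Then \(V_i\) is a unit vector \(v_i\) and \(\|V_i^\top V_j\|_F^2=g_{ij}\). Under the branch laws of \cref{sec:exact}, the co-activation probabilities are
\[
\omega_{12}=p/2,\qquad \omega_{13}=1/2,\qquad \omega_{23}=(1-p)/2 .
\]
The cross-term therefore equals \(L_p\), matching \cref{eq:frame-objective}. The noise term \(2\sigma^2\) is constant in \(V\). Because realizability makes the optimization range over exactly the same unit-column class, \cref{thm:frame}, \cref{thm:trap}, and \cref{cor:policy-floor} transfer verbatim to the effective directions \(V\Phi_i\).
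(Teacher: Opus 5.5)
Your proposal is correct and follows the paper's proof: the loss identity comes from the same conditional second-moment expansion, and realizability uses the same construction of defining the encoder on the finite-dimensional image modulo $\mathcal N$ and extending it by zero on an orthogonal complement. The paper phrases this through the quotient map $\Pi:\mathcal H\to\mathcal H/\mathcal N$ and sets $V=V_0\Pi$, whereas you compose with the orthogonal projection onto $R+\mathcal N$; both operators vanish on $\mathcal N$ and on $(R+\mathcal N)^\perp$ and agree on $R$, so they are the same map.
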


\begin{proof}
Put \(V_i=V\Phi_i\). Since \(V\mathcal N=\{0\}\) and \(V_i^\top V_i=I_{k_i}\), for
\(i\in G\),
\[
\widehat Z_i-Z_i=\sum_{j\in G\setminus\{i\}}V_i^\top V_jZ_j
 +V_i^\top\nu.
\]
Pairwise uncorrelated blocks and independent isotropic noise give
\[
\E[\|\widehat Z_i-Z_i\|_2^2\mid G]
=\sum_{j\in G\setminus\{i\}}\|V_i^\top V_j\|_F^2+\sigma^2k_i.
\]
The sum over active \(i\) counts every unordered pair twice. Its expectation
over \(G\) gives \cref{eq:fusion-frame-reduction}.

Let \(\Pi:\mathcal H\to\mathcal H/\mathcal N\) be the quotient map. Condition
\cref{eq:task-injective-observation} makes
\(T_0:(u_i)_i\mapsto\sum_i\Pi(\Phi_i u_i)\) injective. Given arbitrary
isometries \((V_i)_i\), define on the finite-dimensional range of \(T_0\)
\[
V_0\,T_0((u_i)_i):=\sum_iV_i u_i.
\]
Injectivity makes this definition unambiguous, and finite dimension makes
\(V_0\) bounded. The closedness of \(\mathcal N\) makes the quotient a
Hilbert space, so \(V_0\) extends by zero on the orthogonal complement of
\(\operatorname{range}(T_0)\). Set \(V=V_0\Pi\). Then
\(V\mathcal N=0\) and \(V\Phi_i=V_i\), proving realizability. If
\cref{eq:task-injective-observation} fails, every realizable tuple obeys the
induced nonzero linear relation, so the unrestricted fusion-frame reduction
need not hold.
\end{proof}

\subsection{Traps across Feature Counts and Representation Dimensions}
\label{app:dimension-traps}

We prove \cref{thm:dimension-trap} by constructing branch distributions
for which every representation fitted to $S$ ranks the branches
oppositely to their separately adapted returns. Most dimensions admit a common
partition construction. Two additional constructions complete the
equal-frequency result. We then show why the excluded dimensions cannot
support the stated trap and why exact zeros in the pair distributions
are unnecessary.

\subsubsection{Model and a sufficient endpoint condition}
\label{app:dimension-endpoints}

Fix $d_{\mathrm{in}}\ge2$, $d_{\mathrm{out}}\ge1$, and $\sigma^2\ge0$. A branch $B\in\{S,E\}$
draws an unordered pair $G=\{i,j\}$ with probability
$\omega_{B,ij}$. Conditional on $(B,G)$, the active feature values have
zero mean and identity covariance. The encoder and tied controller are
$h=Vo+\nu$ and $\widehat Z=V^\top h$, where
$V\in\mathcal V_{d_{\mathrm{in}},d_{\mathrm{out}}}:=(\mathbb S^{d_{\mathrm{out}}-1})^{d_{\mathrm{in}}}$ has unit columns and
$\nu$ is independent centered noise of covariance $\sigma^2I_{d_{\mathrm{out}}}$.
The observation contains the two active values and is zero elsewhere.
As in \cref{sec:exact}, the terminal reward subtracts their squared
control errors from $b_B$. Thus, writing
$g_{ij}(V)=\langle v_i,v_j\rangle^2$, we have
\begin{equation}
D_B(V)=2\sigma^2+2\sum_{i<j}\omega_{B,ij}g_{ij}(V),
\qquad q_B(V)=b_B-D_B(V).
\label{eq:dimension-branch-loss}
\end{equation}
Indeed, for an active pair $\{i,j\}$, the two errors are
$\langle v_i,v_j\rangle Z_j+v_i^\top\nu$ and
$\langle v_i,v_j\rangle Z_i+v_j^\top\nu$.
Their expected squared sum is $2g_{ij}+2\sigma^2$.
For $p=\pi(E)$, let $\cM_0(p)$ minimize
$R_p=(1-p)D_S+pD_E$ over the compact space $\mathcal V_{d_{\mathrm{in}},d_{\mathrm{out}}}$.
The exact-response policy dynamics are
\[
\dot p=p(1-p)\Delta_q(V),
\qquad \Delta_q(V)=q_E(V)-q_S(V),
\qquad V(t)\in\cM_0(p(t)).
\]
All stability statements below concern these dynamics, uniformly over
measurable choices of global minimizers.

Removing cross-feature error gives $q_B^\perp=b_B-2\sigma^2$ for
every code, so $q_E^\perp-q_S^\perp=\delta:=b_E-b_S$.
For a fixed set $P$ of feature pairs, its contribution is
$I_B^P(V)=2\sum_{\{i,j\}\in P}\omega_{B,ij}g_{ij}(V)$.
These identities allow the static comparisons in \cref{def:trap} to
be checked directly.

\begin{lemma}[Endpoint condition for a trap]
\label{lem:dimension-endpoint}
Suppose $\delta>0$ and $\min_V D_S(V)=\min_V D_E(V)=d_0$.
Let $P$ be a fixed nonempty set with
$\omega_{E,ij}>\omega_{S,ij}$ for every $\{i,j\}\in P$.
If every $V\in\cM_0(0)$ satisfies
\[
g_{ij}(V)>0\quad(\{i,j\}\in P),\qquad
\Delta_q(V)<0,\qquad
I_E^P(V)-I_S^P(V)>-\Delta_q(V),
\]
then $p=0$ is a self-confirming superposition trap witnessed by $P$
and $p=1$. Both endpoints are locally exponentially attracting,
exact-response trajectories exist locally, and
$J^\star(1)-J^\star(0)=\delta$.
\end{lemma}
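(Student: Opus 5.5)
The plan is to check the four conditions of \cref{def:trap} in turn, with $x^\star=e_S$ (i.e.\ $p=0$), $y=e_E$ ($p=1$) and the given witness set $P$. Allocation is exactly the hypothesis $g_{ij}(V)>0$ for every $V\in\cM_0(0)$ and $\{i,j\}\in P$.

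For reversal, note that for two actions $\Delta x=(-1,1)$. The three contrasts are $\Delta_q(V)$, $q_E^\perp-q_S^\perp=\delta>0$, and $I_E^P(V)-I_S^P(V)$. The hypotheses therefore give all three inequalities directly, since $q^\perp$ is code-independent by the identity preceding the lemma.

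For suboptimality, at $p=0$ the fitting loss is $R_0=D_S$, so $R^\star(0)=d_0$ and $J^\star(0)=b_S-d_0$. Symmetrically, $J^\star(1)=b_E-d_0$, which gives the difference $\delta$.

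Feedback at $p=0$ is the main step. Since $\mathcal V$ is compact and $R_p(V)$ is jointly continuous, the argmin correspondence $p\mapsto\cM_0(p)$ is upper hemicontinuous (Berge). The map $V\mapsto\Delta_q(V)$ is continuous, and $\Delta_q<0$ on the compact set $\cM_0(0)$, so its maximum there is some $-2\gamma<0$. By continuity, $\Delta_q<-\gamma$ on an open neighborhood $\mathcal O$ of $\cM_0(0)$. Upper hemicontinuity then gives $p_0>0$ with $\cM_0(p)\subset\mathcal O$ for $p\in[0,p_0)$. On that interval, every measurable selection yields $\dot p=p(1-p)\Delta_q\le -\gamma p(1-p)\le-\tfrac{\gamma}{2}p$ once $p_0\le1/2$. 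Hence $p(t)\le p(0)e^{-\gamma t/2}$, and the trajectory stays in $[0,p_0)$, giving uniform local exponential stability. Alternatively, I would invoke \cref{thm:simplex-robustness} with $\gamma_S=\min_{\cM_0(e_S)}(q_S-q_E)>0$, which is exactly this argument.

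Local existence of exact-response trajectories uses the same constant-sign bound. Pick a measurable selection $V(p)\in\cM_0(p)$; one exists by the Kuratowski--Ryll-Nardzewski theorem, since the correspondence is closed-valued and upper hemicontinuous. Its drift $f(p)=p(1-p)\Delta_q(V(p))$ is bounded and, on $(0,p_0)$, bounded above by $-\gamma p(1-p)<0$. A trajectory can then be built by integrating $dt=dp/f(p)$. The map $p\mapsto\int_{p(0)}^{p}dr/f(r)$ is absolutely continuous and strictly monotone because $1/f$ is bounded away from zero on compact subintervals of $(0,p(0)]$, and its inverse is an absolutely continuous solution. The case $p(0)=0$ is the rest point.

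The case $p=1$ is symmetric, but it needs $\Delta_q>0$ on $\cM_0(1)$. At any $V\in\cM_0(1)$ we have $D_E(V)=d_0\le D_S(V)$, so $\Delta_q(V)=\delta+D_S(V)-d_0\ge\delta>0$. The same compactness argument then gives local exponential attraction of $p=1$.

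I expect the existence and measurability step to be the main obstacle, since the drift may be discontinuous in $p$. The time-reparametrization trick avoids needing Filippov theory, because the sign of the drift is fixed near each endpoint.
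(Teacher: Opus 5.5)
Your proposal is correct and follows essentially the same route as the paper: the direct allocation, reversal and suboptimality checks, spreading the strict sign of $\Delta_q$ from the compact endpoint optimizer sets to neighborhoods (using $\Delta_q\ge\delta$ on $\cM_0(1)$), and an inverse-time construction along a measurable selection, with Berge and Kuratowski--Ryll-Nardzewski standing in for the paper's hand-made compactness and selector arguments. The one detail you leave implicit is forward completeness, which the paper's proof explicitly verifies. It follows from $|f(u)|\le Cu$, which makes $\int du/|f|$ diverge at $0$, together with bounded drift, which lets any admissible local trajectory be extended by concatenation.
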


\begin{proof}
The displayed conditions give allocation and reversal because the
interference-removed gap is $\delta>0$. Equality of the two separate
minimum losses gives the stated adapted-return difference.
At every $V\in\cM_0(1)$, we also have
$D_S(V)\ge d_0=D_E(V)$ and hence $\Delta_q(V)\ge\delta>0$.

We prove feedback without requiring a unique or continuous optimizer.
If $p_j\to p_0$ and $V_j\in\cM_0(p_j)$, every convergent subsequence
of $(V_j)$ has its limit in $\cM_0(p_0)$, by compactness and continuity
of $R_p(V)$. The strict signs on the compact endpoint optimizer sets
therefore extend uniformly to neighborhoods of $0$ and $1$.
In particular, there are $a,c>0$ such that
$\Delta_q(V)\le-c$ whenever $0\le p\le a$ and
$V\in\cM_0(p)$. Every absolutely continuous exact-response trajectory
in this neighborhood satisfies
\[
\frac{d}{dt}\log\frac{p(t)}{1-p(t)}\le-c
\]
while $p(t)>0$. Consequently the neighborhood is forward invariant and
$p(t)\le p(0)e^{-ct}/(1-a)$. At the upper endpoint, the same argument
applies to $(1-p)/p$. These bounds imply stability and attraction in
the sense of \cref{def:policy-stability}, with common neighborhoods
and convergence bounds for all optimizer selections.

For local existence, choose a Borel measurable minimizer $V(p)$.
Such a choice exists here directly: for each fixed compact subset
$C\subset\mathcal V_{d_{\mathrm{in}},d_{\mathrm{out}}}$, the set
$\{p:C\cap\cM_0(p)\ne\varnothing\}$ is closed, since it is the
equality set of the continuous minimum losses over $C$ and
$\mathcal V_{d_{\mathrm{in}},d_{\mathrm{out}}}$. Successively taking the first intersecting member
of countable closed-ball covers with radii tending to zero produces
nested compact sets and a Borel measurable limiting minimizer.
Its scalar drift
$f(p)=p(1-p)\Delta_q(V(p))$ is measurable, negative on $(0,a]$, and
bounded away from zero on each compact subinterval of $(0,a]$.
For any $p_0\in(0,a]$, define $p(t)$ by
\[
t=\int_{p(t)}^{p_0}\frac{du}{-f(u)}.
\]
On compact interior intervals the integral and its inverse are
Lipschitz, so $p(t)$ is locally absolutely continuous and satisfies
$\dot p=f(p)$ almost everywhere. Bounded return differences give
$|f(u)|\le Cu$ near zero. The integral therefore diverges as its
lower endpoint approaches zero, yielding a forward solution for all
$t\ge0$. The selected code along it is measurable.
The constant path supplies existence at $p=0$, and the construction
near $p=1$ is identical with the direction reversed. Finally, any
admissible local trajectory has bounded drift and a limit at a finite
end of its interval. Concatenating a solution from that limit extends
it, so maximal nearby trajectories are forward complete. This proves
the required feedback condition as well as uniform attraction.
\end{proof}

\subsubsection{Partition constructions with unequal and equal frequencies}
\label{app:dimension-partitions}

The capacity constraint becomes explicit when branch $B$ draws every
pair joining different groups of a partition $\mathcal C_B$ into
exactly $d_{\mathrm{out}}$ nonempty groups, with positive probability on each such
pair and zero probability within groups. Assigning the groups to
$d_{\mathrm{out}}$ orthogonal directions attains $D_B=2\sigma^2$. Conversely, every
optimum must make all cross-group pairs orthogonal. One representative
from each group then forms an orthonormal basis of $\R^{d_{\mathrm{out}}}$, and every
other vector in a group is orthogonal to the other $d_{\mathrm{out}}-1$
representatives. Hence every optimum has
\begin{equation}
g_{ij}=
\begin{cases}
1,&i,j\text{ belong to the same group of }\mathcal C_B,\\
0,&i,j\text{ belong to different groups of }\mathcal C_B.
\end{cases}
\label{eq:dimension-partition-geometry}
\end{equation}

Choose distinct partitions $\mathcal C_S,\mathcal C_E$ and let $P$
contain the pairs lying in one $S$ group but in different $E$ groups.
This set is nonempty. Otherwise every $S$ group would lie in an $E$
group, and the common number of nonempty groups would force the two
partitions to coincide. Put
$\ell_{E\mid S}=2\sum_{\{i,j\}\in P}\omega_{E,ij}>0$.
At every $S$ optimum,
\[
D_E-D_S=I_E^P-I_S^P=\ell_{E\mid S},\qquad
\Delta_q=\delta-\ell_{E\mid S}.
\]
Both separate minima are $2\sigma^2$, so
\cref{lem:dimension-endpoint} applies whenever
$0<\delta<\ell_{E\mid S}$. Every feature appears on each branch
because each group has at least one other nonempty group.

For arbitrary $d_{\mathrm{in}}>d_{\mathrm{out}}\ge2$, an explicit unequal-frequency instance uses
one group of size $m=d_{\mathrm{in}}-d_{\mathrm{out}}+1$ and $d_{\mathrm{out}}-1$ singleton groups.
Exchange one member of the large group with a singleton to obtain
$\mathcal C_E$ from $\mathcal C_S$, and sample cross-group pairs
uniformly on each branch. The number of supported pairs is
\[
N=\binom{d_{\mathrm{in}}}{2}-\binom m2=\frac{(d_{\mathrm{out}}-1)(2d_{\mathrm{in}}-d_{\mathrm{out}})}2.
\]
Exactly $m-1=d_{\mathrm{in}}-d_{\mathrm{out}}$ pairs shared at an $S$ optimum become active on $E$,
giving
\begin{equation}
\ell_{E\mid S}=\frac{4(d_{\mathrm{in}}-d_{\mathrm{out}})}{(d_{\mathrm{out}}-1)(2d_{\mathrm{in}}-d_{\mathrm{out}})}.
\label{eq:dimension-unequal-margin}
\end{equation}
Large-group members and singletons have feature frequencies
$(d_{\mathrm{out}}-1)/N$ and $(d_{\mathrm{in}}-1)/N$, respectively. Thus every feature has positive
frequency, and this construction proves existence for all $d_{\mathrm{in}}>d_{\mathrm{out}}\ge2$
when equal frequencies are not required.

To impose frequency $2/d_{\mathrm{in}}$ on every feature, partition the features
into $d_{\mathrm{out}}$ groups of balanced sizes $s_1,\ldots,s_{d_{\mathrm{out}}}$ and choose the
cross-group probabilities to have the prescribed marginals.
For $d_{\mathrm{out}}=2$ and even $d_{\mathrm{in}}=2m$, both groups have size $m$ and probability
$1/m^2$ on each cross-group pair suffices, corresponding to total group
weight $W_{12}=1$. For $d_{\mathrm{out}}\ge3$ and $d_{\mathrm{in}}\ge5$,
balanced sizes satisfy $\max_i s_i<d_{\mathrm{in}}/2$. We construct strictly positive
symmetric group weights $W_{ij}$ with zero diagonal and row sums
$d_i:=2s_i/d_{\mathrm{in}}$. Since $\sum_i d_i=2$ and $\max_i d_i<1$, choose
\[
0<\epsilon<\min\left\{
\min_i\frac{d_i}{d_{\mathrm{out}}-1},\,
\frac{2(1-\max_i d_i)}{(d_{\mathrm{out}}-1)(d_{\mathrm{out}}-2)}\right\}.
\]
Reserve weight $\epsilon$ on every group edge. The residual degrees
$d_i'=d_i-(d_{\mathrm{out}}-1)\epsilon$ are positive and less than $T/2$, where
$T=\sum_i d_i'=2-d_{\mathrm{out}}(d_{\mathrm{out}}-1)\epsilon$.
Partition a circle of circumference $T$ into contiguous intervals
$A_i$ of lengths $d_i'$, and define
\[
W_{ij}'=\operatorname{Leb}
\{x\in A_i:x+T/2\pmod T\in A_j\}.
\]
The antipodal map preserves length and is an involution, so $W'$ is
symmetric. No interval of length less than $T/2$ contains an
antipodal pair, apart from null boundary sets, so $W'_{ii}=0$.
Its row sums are $d_i'$. Consequently
$W_{ij}=W_{ij}'+\epsilon$ for $i\ne j$ has the desired positive
entries and row sums. Distribute $W_{ij}$ uniformly among the
$s_is_j$ feature pairs between groups $i,j$. A feature in group $i$
then appears with probability
$\sum_{j\ne i}W_{ij}/s_i=2/d_{\mathrm{in}}$, and the total pair probability is one.

Since $d_{\mathrm{in}}>d_{\mathrm{out}}$, choose two groups with $s_1+s_2>2$ and transpose one
feature from each to define the $E$ law. Relabeling preserves the
uniform feature marginals and the separate optimum. There are
$s_1+s_2-2$ pairs in $P$, each with $S$ probability zero and $E$
probability $W_{12}/(s_1s_2)$. Therefore
\begin{equation}
c_{d_{\mathrm{in}},d_{\mathrm{out}}}:=\ell_{E\mid S}
=\frac{2(s_1+s_2-2)W_{12}}{s_1s_2}>0.
\label{eq:dimension-equal-partition-margin}
\end{equation}
Every $0<\delta<c_{d_{\mathrm{in}},d_{\mathrm{out}}}$ gives the required equal-frequency trap.
For $d_{\mathrm{out}}=2,d_{\mathrm{in}}=2m$, this margin is $4(m-1)/m^2$.
The cases not covered by these partitions are odd $d_{\mathrm{in}}\ge5$ with
$d_{\mathrm{out}}=2$, and $(d_{\mathrm{in}},d_{\mathrm{out}})=(4,3)$.

\subsubsection{Equal frequencies in the remaining dimensions}
\label{app:dimension-exceptions}

For $d_{\mathrm{out}}=2$ and odd $d_{\mathrm{in}}\ge5$, number the features modulo $d_{\mathrm{in}}$.
Let $S$ sample $\{j,j+1\}$ and $E$ sample $\{j,j+2\}$, each with
probability $1/d_{\mathrm{in}}$. Both supports are Hamiltonian cycles and give every
feature frequency $2/d_{\mathrm{in}}$. They are disjoint for $d_{\mathrm{in}}\ge5$ and are
related by a permutation of feature labels, so their separate minima
coincide.

Write $v_j=(\cos\theta_j,\sin\theta_j)$ and
$z_j=e^{2\mathrm i\theta_j}$. The $S$ loss above the noise floor is
\[
\frac{2}{d_{\mathrm{in}}}\sum_j\cos^2(\theta_{j+1}-\theta_j)
=1+\frac{1}{d_{\mathrm{in}}}\sum_j\operatorname{Re}(\overline z_jz_{j+1}).
\]
The Fourier vectors $e^{2\pi\mathrm i rj/d_{\mathrm{in}}}$ diagonalize the cycle
adjacency matrix, with eigenvalues $2\cos(2\pi r/d_{\mathrm{in}})$.
Its smallest eigenvalue is $-2\cos(\pi/d_{\mathrm{in}})$, attained only at
$r=\pm m$, where $m=(d_{\mathrm{in}}-1)/2$. Since $\sum_j|z_j|^2=d_{\mathrm{in}}$, the
Rayleigh bound gives a minimum excess loss $1-\cos(\pi/d_{\mathrm{in}})$,
attained by either unit Fourier mode. At equality,
$z_j=Ae^{2\pi\mathrm i mj/d_{\mathrm{in}}}+Be^{-2\pi\mathrm i mj/d_{\mathrm{in}}}$.
The constraint $|z_j|=1$ requires
\[
|A|^2+|B|^2+
2\operatorname{Re}(A\overline B e^{4\pi\mathrm i mj/d_{\mathrm{in}}})=1
\quad\text{for every }j.
\]
Here $2m=-1\pmod{d_{\mathrm{in}}}$. Orthogonality of the constant and the two
nonzero Fourier modes forces $A\overline B=0$, so every optimum
is one of the two unit modes up to common phase.
Consequently every $S$ optimum has squared overlap
$a=(1-\cos(\pi/d_{\mathrm{in}}))/2$ on $S$ pairs and
$b=\cos^2(\pi/d_{\mathrm{in}})$ on $E$ pairs. Its loss difference is
\begin{equation}
c_{d_{\mathrm{in}},2}:=D_E-D_S=2(b-a)
=\cos(2\pi/d_{\mathrm{in}})+\cos(\pi/d_{\mathrm{in}})>0.
\label{eq:dimension-cycle-margin}
\end{equation}
Take $P$ to be the $E$ cycle. Every witness pair has positive overlap
$b$ and occurs only on $E$, while
$I_E^P-I_S^P=2b>c_{d_{\mathrm{in}},2}-\delta$.
\Cref{lem:dimension-endpoint} therefore gives a trap for
$0<\delta<c_{d_{\mathrm{in}},2}$. The endpoint losses need not have symmetric
cross-branch differences for this conclusion.

For $(d_{\mathrm{in}},d_{\mathrm{out}})=(4,3)$, give $S$ probability $1/8$ on the matching
$M_S=\{\{1,2\},\{3,4\}\}$ and $3/16$ on each other pair.
Define $E$ by transposing features $2$ and $3$, so its lower-weight
matching is $M_E=\{\{1,3\},\{2,4\}\}$.
Every feature has frequency $1/2$ under either law.
To characterize every $S$ optimum, put
$x=|\langle v_1,v_2\rangle|$ and
$y=|\langle v_3,v_4\rangle|$.
The pair frame matrices
$A=v_1v_1^\top+v_2v_2^\top$ and
$B=v_3v_3^\top+v_4v_4^\top$ have eigenvalues
$(1+x,1-x,0)$ and $(1+y,1-y,0)$.
Their trace product is at least $(1-x)(1-y)$, with equality
attained by placing their eigenvectors in opposite eigenvalue order.
For completeness, in eigenbases the trace product is
$\sum_{r,s}\lambda_r\mu_s O_{rs}^2$, where $O$ is orthogonal.
The matrix $(O_{rs}^2)$ is doubly stochastic, so the rearrangement
inequality gives the stated lower bound, attained by the reverse
permutation.
The exact minimum of $D_S-2\sigma^2$ is therefore the minimum on
$[0,1]^2$ of
\[
\frac38\left[\frac23(x^2+y^2)+(1-x)(1-y)\right].
\]
The bracket has Hessian eigenvalues $7/3$ and $1/3$. Its unique
minimizer is $x=y=3/7$, giving excess loss $3/14$.
At this point each frame has distinct eigenvalues. Equality in the
trace bound forces the larger-eigenvalue axes to be orthogonal and
the smaller-eigenvalue axes to coincide. To see the resulting
overlaps, representatives are
\[
\begin{aligned}
v_1&=\sqrt{5/7}\,e_1+\sqrt{2/7}\,e_2,&
v_2&=\sqrt{5/7}\,e_1-\sqrt{2/7}\,e_2,\\
v_3&=\sqrt{5/7}\,e_3+\sqrt{2/7}\,e_2,&
v_4&=\sqrt{5/7}\,e_3-\sqrt{2/7}\,e_2.
\end{aligned}
\]
The equality conditions show that every optimum has the same squared
overlaps: $9/49$ on $M_S$ and $4/49$ elsewhere, regardless of column
signs or common rotation. Hence at every $S$ optimum
\[
D_E-D_S=\frac5{196},\qquad
I_E^{M_S}-I_S^{M_S}=\frac9{196}.
\]
The pairs in $M_S$ have higher probability on $E$, and the separate
minimum losses are equal by relabeling. Applying
\cref{lem:dimension-endpoint} with $P=M_S$ proves the result for
$0<\delta<c_{4,3}:=5/196$.

\subsubsection{Why the dimension exclusions are necessary}
\label{app:dimension-obstructions}

If $d_{\mathrm{in}}\le d_{\mathrm{out}}$, mutually orthogonal unit columns attain the noise floor
on both branches and are globally optimal at every $p$.
At this optimum all cross-feature errors vanish. This contradicts
the allocation and reversal requirements of \cref{def:trap}, which
must hold at every optimum. Thus no trap of that definition can
occur, regardless of the pair laws.

If $d_{\mathrm{out}}=1$, every unit column is $1$ or $-1$ and every $g_{ij}=1$.
Both branch losses equal $2+2\sigma^2$ for every code and every pair
law. Actual and separately adapted rankings coincide, so reversal
and suboptimality cannot hold for the same alternative policy.
These exclusions and \cref{eq:dimension-unequal-margin} give the
classification $d_{\mathrm{in}}>d_{\mathrm{out}}\ge2$ when feature frequencies are unrestricted.

Under equal feature frequencies, the remaining excluded case is
$(d_{\mathrm{in}},d_{\mathrm{out}})=(3,2)$. For three features, the individual occurrence
probabilities $\mu_1,\mu_2,\mu_3$ determine the entire pair law:
\[
\omega_{12}=\frac{\mu_1+\mu_2-\mu_3}{2},\qquad
\omega_{13}=\frac{\mu_1+\mu_3-\mu_2}{2},\qquad
\omega_{23}=\frac{\mu_2+\mu_3-\mu_1}{2}.
\]
Equal marginals across branches therefore force equal pair laws and
$D_S=D_E$ at every code. No return reversal is possible.
The constructions above cover all other $d_{\mathrm{in}}>d_{\mathrm{out}}\ge2$ with $d_{\mathrm{in}}\ge4$,
completing the equal-frequency classification.

\subsubsection{Persistence when every pair can occur}
\label{app:dimension-full-support}

The preceding constructions need not retain their zero-probability
pairs. Fix one construction and a reward gap strictly inside its
stated interval. At the compact set of $S$ optima, each witness
overlap, the return-reversal margin, and the attribution margin has
a positive minimum. The finitely many occurrence contrasts for the
witness pairs are strictly positive as well. At every $E$ optimum,
the opposite return margin is at least $\delta$.

For two pair laws $\omega_B,\widetilde\omega_B$,
\cref{eq:dimension-branch-loss} gives the uniform bound
\[
\sup_V|D_B(V;\widetilde\omega_B)-D_B(V;\omega_B)|
\le2\|\widetilde\omega_B-\omega_B\|_1.
\]
If perturbed laws converge to the original laws, every limit of
their endpoint minimizers is an original endpoint minimizer.
Otherwise compactness and uniform convergence would contradict
optimality of the limiting code. The strict overlap, reversal, and
attribution inequalities therefore persist at every endpoint
optimum under sufficiently small perturbations.
The separate minimum losses are continuous, so the positive adapted
return difference persists too. The reference gap remains exactly
$\delta$ because the self-response and two-coordinate noise baseline
do not depend on the pair law.

The same compactness argument jointly in the laws and $p$ preserves
strict return signs in endpoint neighborhoods. The odds estimates
and inverse-time construction in \cref{app:dimension-endpoints}
then give local existence and uniform attraction for the perturbed
system. This argument does not require the two perturbed separate
minimum losses to remain equal.

In particular, let $u$ be uniform on all $\binom{d_{\mathrm{in}}}{2}$ unordered pairs
and set
\[
\omega_B^{(\varepsilon)}
=(1-\varepsilon)\omega_B+\varepsilon u.
\]
For every sufficiently small $\varepsilon>0$, both branches have
positive probability on every pair and retain the trap with the same
witness set. Each witness occurrence contrast is multiplied by
$1-\varepsilon$. The uniform law gives every feature frequency $2/d_{\mathrm{in}}$,
so this mixing also preserves the equal-frequency constraint in all
of our equal-frequency constructions.
Each explicit pair of branch laws is related by a permutation of
feature labels. Uniform mixing preserves that relation and hence
equality of the two separately minimized losses. The adapted-return
gap thus remains exactly $\delta$ in these full-support examples.
The permitted mixing size depends on the chosen dimensions, laws,
and strict reward margin. The proof establishes local attraction
and persistence, without asserting a global basin formula.

\subsection{Matching-Complement Phase Diagram}
\label{app:matching-proof}

Recall that the minimum fitting loss in \cref{thm:matching-complement} is
\begin{equation}
R_{\rm mc}^\star(p)=2\sigma^2+\frac{\min\{p,1-p\}}{d-1}.
\label{eq:matching-value}
\end{equation}
Removing the $d$ pairs in $M_B$ leaves
$N=\binom{2d}{2}-d=2d(d-1)$ equally likely active pairs. Each feature
belongs to $2d-2$ of these pairs, so its probability of appearing is
$(2d-2)/N=1/d$ under either branch. This equality follows from the
sampling rule and does not depend on the representation dimension.
With unit encoder columns and the stated moment and noise assumptions,
an active pair $\{i,j\}$ has expected control error
$2g_{ij}+2\sigma^2$. Averaging gives
\[
D_B=2\sigma^2+\frac{2}{N}
\sum_{\{i,j\}\notin M_B}g_{ij}.
\]
The dimension enters the optimization below: each excluded pair can
share one of $d$ orthogonal directions. The frame bound and fitting
weights determine which matching can carry the overlap at an optimum.

\begin{proof}[Proof of \cref{thm:matching-complement}]
Write
\begin{equation*}
g_S=\sum_{e\in M_S}g_e,
\qquad g_E=\sum_{e\in M_E}g_e,
\qquad g_0=\sum_{e\notin M_S\cup M_E}g_e,
\qquad g_{\rm tot}=g_S+g_E+g_0.
\end{equation*}
The two branch distortions are
\begin{equation*}
D_S=2\sigma^2+\frac{2}{N}(g_E+g_0),
\qquad
D_E=2\sigma^2+\frac{2}{N}(g_S+g_0).
\end{equation*}
Thus, up to the common factor \(2/N\), the nonconstant mixture objective is
\begin{equation*}
L^{\rm mc}_p=pg_S+(1-p)g_E+g_0.
\end{equation*}

The Welch bound for \(2d\) unit vectors in \(\R^d\) gives
\(g_{\rm tot}\ge d\) \citep{welch1974lower}. If \(p\le1/2\), then
\begin{equation*}
L^{\rm mc}_p-pg_{\rm tot}=(1-2p)g_E+(1-p)g_0\ge0,
\end{equation*}
so \(L^{\rm mc}_p\ge pd\). Collapse the two endpoints of every edge in \(M_S\)
onto one member of an orthonormal basis. Since \(M_S\cap M_E=\varnothing\),
this has \((g_S,g_E,g_0)=(d,0,0)\) and attains \(pd\).

For \(0<p<1/2\), equality forces \(g_{\rm tot}=d\) and \(g_E=g_0=0\). At \(p=0\),
zero loss directly forces \(g_E=g_0=0\). In either case, choose one
representative from each \(M_S\) pair. Representatives from distinct pairs
are orthogonal and form a basis. The mate of representative \(k\) is
orthogonal to the other \(d-1\) basis vectors and therefore lies on the
\(k\)th representative's line. Hence \(g_S=d\), and the squared Gram matrix is
one exactly on \(M_S\) and zero elsewhere. Exchanging \(S\) and \(E\) proves
the upper-half statement.

At \(p=1/2\),
\begin{equation*}
L^{\rm mc}_{1/2}=\tfrac12g_{\rm tot}+\tfrac12g_0,
\end{equation*}
so equality holds exactly when \(g_{\rm tot}=d\) and \(g_0=0\). Both matching-collapse
geometries attain equality and have distinct squared Gram matrices. Since
\(N=2d(d-1)\), multiplying the optimum by \(2/N\) and restoring the noise
baseline proves \cref{eq:matching-value}.

For every fixed representation,
\begin{equation*}
D_E-D_S=\frac{2}{N}(g_S-g_E).
\end{equation*}
The rigid open-phase geometries give the two displayed slopes. At a pure
\(S\) optimizer, the deployed return advantage is
\(q_S-q_E=1/(d-1)-\delta\); at a pure \(E\) optimizer, it is
\(q_E-q_S=1/(d-1)+\delta\). Here \(q_B=b_B-D_B\), as in the finite-action
formulation. These margins are the same for every optimizer and are
positive for \(|\delta|<1/(d-1)\), and \cref{thm:simplex-robustness} gives
the local stability claim. Both resident distortions equal \(2\sigma^2\), so
the high-minus-low endpoint return is \(\delta\).
\end{proof}

\subsection{Pair-Support Dimension Certificates}

The support graph records which feature pairs must be recovered together.
Its edges require orthogonality for a tied code and two vanishing cross-feature responses
for an untied code. These constraints connect the required code dimension to
orthogonal representations and fitting minrank
\citep{lovasz1979shannon,peeters1996orthogonal,baryossef2011index}.

\begin{proposition}[Pair-Support Dimension Certificates]
\label{prop:graph-certificates}
For a graph \(G=([n],\mathcal E)\), consider positive code dimensions and let
\(\mathrm{od}_\R(G)\) be the least
\(k\) for which nonzero real vectors \(v_i\in\R^k\) satisfy
\begin{equation*}
\{i,j\}\in\mathcal E\quad\Longrightarrow\quad \inner{v_i}{v_j}=0.
\end{equation*}
A tied unit-vector code has zero interference excess on every positive-weight
supported edge exactly when \(\mathrm{od}_\R(G)\le k\).

For the noiseless untied linear memory
\(h=\sum_{j\ \mathrm{active}}v_jZ_j\),
\(\widehat Z_i=w_i^\top h\) for freely optimized controller weights \(w_i\in\R^k\), define
\begin{equation*}
\mathrm{mr}_\R(G):=\min\left\{\operatorname{rank}(\Gamma_{\rm fit}): (\Gamma_{\rm fit})_{ii}=1,\ (\Gamma_{\rm fit})_{ij}=(\Gamma_{\rm fit})_{ji}=0\ 
\text{ for every }\{i,j\}\in\mathcal E\right\}.
\end{equation*}
Exact noiseless linear recovery on every supported pair in dimension \(k\)
exists exactly when \(\mathrm{mr}_\R(G)\le k\). If \(M\) is a perfect matching on
\([2d]\), then
\begin{equation*}
\mathrm{od}_\R(K_{2d}\setminus M)
=\mathrm{mr}_\R(K_{2d}\setminus M)=d.
\end{equation*}
For disjoint perfect matchings \(M_S,M_E\), the union of their complement
supports is \(K_{2d}\), and both thresholds are \(2d\). By contrast, the
union of two perfect matchings is bipartite and has edge-orthogonality
dimension at most two.
\end{proposition}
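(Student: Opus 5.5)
The statement has three parts: the tied characterization, the untied characterization, and the $K_{2d}$ computations. I handle them in that order.

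\emph{Tied claim.} The argument is a direct reading of \cref{eq:dimension-branch-loss}. The interference excess on a supported edge $\{i,j\}$ of positive weight is $2\omega_{ij}g_{ij}$. This vanishes exactly when $\inner{v_i}{v_j}=0$. Normalizing nonzero vectors does not change orthogonality, so the two conditions are equivalent:
\begin{itemize}
\item a unit-vector code in $\R^k$ exists with zero excess on every supported edge;
\item an orthogonal representation of $G$ exists in $\R^k$.
\end{itemize}
An orthogonal representation in dimension $\mathrm{od}_\R(G)\le k$ embeds into $\R^k$ by padding with zero coordinates, which gives the claim.

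\emph{Untied claim.} Exact noiseless recovery on a supported pair $\{i,j\}$ means
\[
w_i^\top v_i=1,\qquad w_i^\top v_j=0,\qquad w_j^\top v_i=0.
\]
If every feature lies on some edge, each index satisfies $w_i^\top v_i=1$. Isolated vertices impose no constraint, and their diagonal entry can be set to $1$ without raising rank beyond $\max(1,\cdot)$, since $k\ge1$.
\begin{itemize}
\item \emph{Recovery gives a fitting matrix.} Put $\Gamma_{\rm fit}=W^\top V$. It has unit diagonal, zeros on the edges in both orders, and rank at most $k$.
\item \emph{A fitting matrix gives recovery.} Factor a fitting matrix of rank $r\le k$ as $W^\top V$ with $W,V\in\R^{r\times n}$, then pad to $\R^k$. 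The diagonal condition forces every $v_i\ne0$, so the encoder columns are nonzero; rescaling them to unit length is permitted by moving the scale into $w_i$.
\end{itemize}

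\emph{Complement of a perfect matching.} For $K_{2d}\setminus M$ the matrices must be diagonal-like off $M$.
\begin{itemize}
\item \emph{Lower bound.} A fitting matrix contains the $d\times d$ identity principal submatrix on one representative from each $M$ pair. Hence $\mathrm{mr}_\R\ge d$. Any orthogonal representation normalizes to a Gram matrix that is itself a fitting matrix, so $\mathrm{od}_\R\ge\mathrm{mr}_\R\ge d$.
\item \emph{Upper bound.} Collapse each matched pair onto one vector of an orthonormal basis of $\R^d$, as in the proof of \cref{thm:matching-complement}. This attains $\mathrm{od}_\R\le d$.
\end{itemize}

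\emph{Union of the two supports.} Every pair lies outside $M_S$ or outside $M_E$ because the matchings are disjoint, so the union of the supports is $K_{2d}$. For $K_{2d}$, a fitting matrix is the identity, so $\mathrm{mr}_\R=2d$. Also $\mathrm{od}_\R=2d$, because $2d$ pairwise orthogonal nonzero vectors are needed.

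\emph{Union of the matchings.} $M_S\cup M_E$ has maximum degree $2$ and is a union of even alternating cycles, so it is bipartite. Assign $e_1$ to one side and $e_2$ to the other; every edge then joins orthogonal vectors, giving dimension at most two.

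The main obstacle is the rescaling and isolated-vertex bookkeeping in the untied equivalence. Everything else is immediate.
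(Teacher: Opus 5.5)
Your proof is correct and follows the paper's argument essentially step by step: the tied claim read off the branch loss, the factorization $\Gamma_{\rm fit}=W^\top V$ for the untied claim, the identity principal submatrix on one vertex per matched pair, collapse onto $d$ orthogonal axes, and a two-colouring of the matching union. The differences are minor: you obtain $\mathrm{od}_\R\ge d$ and $\mathrm{mr}_\R\le d$ from the general inequality $\mathrm{mr}_\R\le\mathrm{od}_\R$ rather than from separate clique and block-matrix arguments, and you spell out the rescaling $v_i\mapsto v_i/\|v_i\|$, $w_i\mapsto\|v_i\|w_i$ that the paper leaves implicit; your alternating-cycle description assumes the matchings are disjoint, but shared edges would only add $K_2$ components, so bipartiteness is unaffected.
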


\begin{proof}
For the tied claim, zero positive weighted excess is equivalent to
orthogonality on every supported edge; nonzero vectors can then be normalized.
For the untied claim, put \(\Gamma_{\rm fit}=W^\top V\), where \(W\) has columns \(w_i\). On an active pair \(\{i,j\}\),
zero squared error is equivalent to
\((\Gamma_{\rm fit})_{ii}=(\Gamma_{\rm fit})_{jj}=1\) and \((\Gamma_{\rm fit})_{ij}=(\Gamma_{\rm fit})_{ji}=0\). Every rank-at-most-\(k\) real
matrix factors as \(W^\top V\) with \(k\) rows. Isolated vertices can
be assigned unit signal gain in any positive dimension, since they occur in
no supported pair. This proves the equivalence, including graphs with
isolated vertices.

For \(K_{2d}\setminus M\), selecting one vertex from each matched pair gives
a \(K_d\), so \(\mathrm{od}_\R\ge d\). Collapsing each matched pair
onto one of \(d\) orthogonal axes gives equality. The same ordering makes a
feasible fitting matrix with \(d\) all-ones \(2\times2\) diagonal blocks, so
\(\mathrm{mr}_\R\le d\); selecting one vertex per pair gives an identity principal
submatrix of size \(d\), proving the reverse inequality. The union of the two
complement supports is complete. A complete graph needs \(2d\) mutually
orthogonal vectors, while its fitting constraints force \(\Gamma_{\rm fit}=I_{2d}\). Finally,
the union of two matchings is a disjoint union of paths, even cycles, and
shared edges. It is bipartite, so its two color classes may reuse two
orthogonal axes.
\end{proof}

The tied certificate remains a certificate for zero interference excess
when channel noise is positive; the noise contribution to recovery error
remains. The untied certificate concerns exact recovery in the noiseless
model. Its fitting matrix is real and has two-sided edge zeros, without a
symmetry or positive-semidefiniteness constraint. These conventions differ
from directed index-coding minrank over a finite field.

\subsection{A Smooth Codimension-One Comparison}
\label{app:codimension-one}

The matching-complement family keeps singleton marginals equal and has a
nonsmooth midpoint. A separate lift preserves the continuous three-feature
branch-loss difference in any dimension. It uses \(d+1\) features in dimension
\(d\), with the original three features occupying a plane and the remaining
features occupying its orthogonal complement.

\begin{proposition}[Exact Codimension-One Lift]
\label{cor:codimension-one-lift}
Fix \(d\ge2\), and let \(\mathcal P_d\) be all unordered pairs from
\([d+1]\). In the same pair-valued tied-controller construction, now with noise
covariance \(\sigma^2I_d\), let \(S\) be uniform on
\(\mathcal P_d\setminus\{\{1,2\}\}\), let \(E\) be uniform on
\(\mathcal P_d\setminus\{\{2,3\}\}\), and take unit
\(v_1,\ldots,v_{d+1}\in\R^d\). For
\begin{equation*}
L^{\rm lift}_p=p g_{12}+(1-p)g_{23}
+\!\!\sum_{\{i,j\}\in\mathcal P_d\setminus
\{\{1,2\},\{2,3\}\}}\!\!g_{ij},
\end{equation*}
we have \(\min L^{\rm lift}_p=L^\star(p)\). Every minimizer is an optimizer from
\cref{thm:frame} in a two-dimensional subspace plus an orthonormal basis of
its orthogonal complement. With
\begin{equation*}
c_d=\frac{2}{\binom{d+1}{2}-1}=\frac{4}{(d-1)(d+2)},
\end{equation*}
the population objective and branch difference are
\begin{equation*}
(1-p)D_S+pD_E=2\sigma^2+c_dL^{\rm lift}_p,
\qquad
D_E-D_S=c_d(g_{12}-g_{23})
=c_dL^{\star\prime}(p)=c_d\cD(p),
\end{equation*}
using one-sided endpoint derivatives. Setting \(\delta=c_d\bar\delta\) makes
the adiabatic actor field \(c_d\) times \cref{eq:replicator}; for
\(|\bar\delta|<1\), it has the same two basins and separator
\(p_{\bar\delta}\), and \cref{cor:policy-floor} carries over. The raw gap
scale is \(c_d=\Theta(d^{-2})\).
\end{proposition}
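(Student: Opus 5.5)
The plan is to prove the proposition in three stages: the bookkeeping identity for the losses, the exact minimization of $L^{\rm lift}_p$, and the transfer of the dynamics. Only the middle stage involves real work.

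\emph{Bookkeeping.} Put $N=\binom{d+1}{2}-1=(d-1)(d+2)/2$. Each branch is uniform on $N$ pairs. As in \cref{eq:dimension-branch-loss}, an active pair $\{i,j\}$ costs $2g_{ij}+2\sigma^2$, so
\[
D_S=2\sigma^2+\tfrac{2}{N}\sum_{e\ne\{1,2\}}g_e,\qquad
D_E=2\sigma^2+\tfrac{2}{N}\sum_{e\ne\{2,3\}}g_e .
\]
In the mixture $(1-p)D_S+pD_E$, the pair $\{1,2\}$ receives weight $p$, the pair $\{2,3\}$ receives weight $1-p$, and every other pair receives weight $1$. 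This gives $(1-p)D_S+pD_E=2\sigma^2+c_dL^{\rm lift}_p$ with $c_d=2/N$, and also $D_E-D_S=c_d(g_{12}-g_{23})$ for every code.

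\emph{Minimization.} For the upper bound, take an optimizer of \cref{thm:frame} inside a plane $\Pi$ and an orthonormal basis $v_4,\dots,v_{d+1}$ of $\Pi^\perp$. Every pair involving an index $\ge4$ then vanishes, so $L^{\rm lift}_p=L^\star(p)$.

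For the lower bound, the goal is $L^{\rm lift}_p\ge L^\star(p)$ for all unit codes. Write $Q$ for the frame operator of $v_4,\dots,v_{d+1}$ and $P=I-Q$-type projections. I would first attempt an inductive peeling argument. Choose a unit vector $u\perp v_5,\dots,v_{d+1}$. Project the problem onto $u^\perp$, using the fact that the squared overlaps of a feature with the extra features dominate the mass it places outside a complementary $2$-plane. The inequality to aim for is
\[
\sum_{j\ge4}g_{ij}\ \ge\ \|\Pi_\perp v_i\|^2\quad\text{up to a correction paid by }\sum_{4\le j<k}g_{jk},
\]
where $\Pi_\perp$ is the projection onto the span of the extra features. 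Given this, the normalized projections of $v_1,v_2,v_3$ onto a $2$-plane lose at most what the extra terms pay. Applying \cref{lem:weighted-frame} to the weights $(p,1,1-p)$ then yields $\ge L^\star(p)$.

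This lower bound is the main obstacle. Capacity $d$ exceeds two, so the three original lines could escape into extra dimensions, and one must show that crowding the $d-2$ extra features always costs at least as much. If the peeling route fails, the fallback is a Welch/trace argument on the Gram matrix: bound $\|F\|_F^2\ge(d+1)^2/d$ for the full frame operator $F$, then use the fact that the weights on $\{1,2\}$ and $\{2,3\}$ are at most one. The equality analysis of that argument should force $g_{ij}=0$ for every pair with an index $\ge4$, which means the extra features are orthonormal and $v_1,v_2,v_3$ lie in their $2$-dimensional complement. The uniqueness clause of \cref{lem:weighted-frame}, together with the endpoint argument in the proof of \cref{thm:frame}, then identifies every minimizer.

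\emph{Dynamics transfer.} Since all minimizers share the Gram entries of \cref{thm:frame}, we have $g_{12}-g_{23}=\cD(p)=L^{\star\prime}(p)$ by the envelope identity already established there, using one-sided derivatives at the endpoints. Hence $\delta-(D_E-D_S)=c_d(\bar\delta-\cD(p))$ when $\delta=c_d\bar\delta$. The replicator field is therefore $c_d$ times the field of \cref{thm:trap}. A positive constant rescales time without changing attractors, the separator $p_{\bar\delta}$, or the basins, so \cref{thm:trap} and \cref{cor:policy-floor} transfer directly. Finally, $c_d=4/((d-1)(d+2))=\Theta(d^{-2})$.
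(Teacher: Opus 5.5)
Your bookkeeping identity, the upper-bound construction, and the rescaling of the actor field are correct. The gap is the lower bound $L^{\rm lift}_p\ge L^\star(p)$ and its equality case, which is the real content of the proposition, and neither of your routes establishes it.

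\emph{Peeling.} The peeling route states its key inequality only ``up to a correction'', and that correction is the whole difficulty. Write $v_i=w_i+a_i$ with $w_i$ in a complementary plane. Then $\langle v_i,v_j\rangle=\langle w_i,w_j\rangle+\langle a_i,a_j\rangle$, so out-of-plane components can cancel in-plane overlaps. The core loss can therefore fall below $L^\star(p)$; for instance, three orthonormal core vectors give $L_p(\Gamma)=0$. The drop is first order in $\|a_i\|^2$, the same order as the cost of the extra features, and your sketch never shows which side wins. When the extras are not orthonormal, there is not even a canonical plane to project onto.

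\emph{Welch.} The Welch fallback cannot work, for two reasons. First, it bounds the unweighted total $\sum_{i<j}g_{ij}$ from below, but $L^{\rm lift}_p=\sum_{i<j}g_{ij}-(1-p)g_{12}-pg_{23}$ is at most that total, so the weights being at most one points the wrong way. At $p=0$, for example, the minimizers have total overlap $1$ but $L^{\rm lift}_0=0$. Second, its equality case is a tight frame with frame operator $\frac{d+1}{d}I_d$. The claimed minimizers have frame operator equal to $I$ on the span of $v_4,\dots,v_{d+1}$, so they are never Welch-tight and no equality analysis of that bound can single them out.

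\emph{The missing idea.} Measure how far the core triple escapes the plane by $\lambda_{\min}=\lambda_{\min}(\Gamma)$, where $\Gamma=(\langle v_i,v_j\rangle)_{i,j\le3}$.
\begin{itemize}
\item Deflation: $(\Gamma-\lambda_{\min}I_3)/(1-\lambda_{\min})$ is a correlation matrix of rank at most two, hence a Gram matrix of three unit vectors in $\R^2$. This gives $L_p(\Gamma)\ge(1-\lambda_{\min})^2L^\star(p)$.
\item Cost of the extras: with $\Xi=\sum_{i\le3}v_iv_i^\top$, $\Xi_\perp=\sum_{k\ge4}v_kv_k^\top$ and $m=d-2$, the squared overlaps involving an index $\ge4$ sum to $g_\perp=\operatorname{tr}(\Xi\Xi_\perp)+\tfrac12(\operatorname{tr}(\Xi_\perp^2)-m)$.
\item Spectral bound: von Neumann's trace inequality gives $\operatorname{tr}(\Xi\Xi_\perp)\ge\lambda_{\min}t$, where $t\in[0,1]$ is the $m$th largest eigenvalue of $\Xi_\perp$. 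Cauchy--Schwarz on the other eigenvalues then gives $g_\perp\ge\lambda_{\min}-c_m\lambda_{\min}^2$ with $c_m=(m-1)/(2m)<1/2$. For $m=1$ one has directly $g_\perp\ge\lambda_{\min}$.
\item Combination: since $L^\star(p)\le7/16$, these give $L^{\rm lift}_p-L^\star(p)\ge\lambda_{\min}\bigl[1-2L^\star(p)+\lambda_{\min}(L^\star(p)-c_m)\bigr]$. The bracket is affine in $\lambda_{\min}$ and positive at both ends, so the excess is strictly positive whenever $\lambda_{\min}>0$.
\end{itemize}
Hence every minimizer has a core of rank at most two, so $L_p(\Gamma)\ge L^\star(p)$. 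Equality then forces $g_\perp=0$: the extras are orthonormal and orthogonal to the core. Applying \cref{thm:frame} in the remaining plane gives the minimizer rigidity on which your envelope step $g_{12}-g_{23}=\cD(p)$ depends.
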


\begin{proof}[Proof of \cref{cor:codimension-one-lift}]
The case \(d=2\) is \cref{thm:frame}, so suppose \(d\ge3\) and put
\(m=d-2\). The upper bound follows by placing an optimal three-line
configuration in a two-dimensional subspace and choosing
\(v_4,\ldots,v_{d+1}\) as an orthonormal basis of its orthogonal complement.

For the lower bound, let \(\Gamma=(\inner{v_i}{v_j})_{i,j\le3}\),
\(\Xi=\sum_{i=1}^3v_iv_i^\top\),
\(\Xi_\perp=\sum_{k=4}^{d+1}v_kv_k^\top\), and
\(\lambda_{\min}=\lambda_{\min}(\Gamma)\in[0,1]\). Decompose
\begin{align*}
L^{\rm lift}_p&=L_p(\Gamma)+g_\perp,\\
L_p(\Gamma)&=p\Gamma_{12}^2+\Gamma_{13}^2+(1-p)\Gamma_{23}^2,\\
g_\perp&=\operatorname{tr}(\Xi\Xi_\perp)
 +\tfrac12\bigl(\operatorname{tr}(\Xi_\perp^2)-m\bigr).
\end{align*}
Here \(g_\perp\) is exactly the sum of all squared inner products involving at
least one of the final \(m\) vectors.

If \(m=1\), then \(\Xi_\perp\) is a rank-one projector and
\(g_\perp=\operatorname{tr}(\Xi\Xi_\perp)\ge\lambda_{\min}\), because \(\Xi\) and \(\Gamma\) have the
same eigenvalues. For \(m\ge2\), let
\(\lambda_1\ge\cdots\ge\lambda_m\ge0\) be the largest \(m\) eigenvalues of \(\Xi_\perp\),
including zeros. They sum to \(m\). Von Neumann's trace inequality gives
\(\operatorname{tr}(\Xi\Xi_\perp)\ge\lambda_{\min}\lambda_m\). Writing
\(t=\lambda_m\in[0,1]\), Cauchy--Schwarz yields
\begin{align*}
g_\perp&\ge \lambda_{\min} t+\frac12\left(t^2+\frac{(m-t)^2}{m-1}-m\right)\\
&=\lambda_{\min} t+\frac{m}{2(m-1)}(1-t)^2
\ge \lambda_{\min}-\frac{m-1}{2m}\lambda_{\min}^2.
\end{align*}
Thus in both cases
\(g_\perp\ge\lambda_{\min}-c_m\lambda_{\min}^2\), where
\(c_m=(m-1)/(2m)<1/2\).

If \(\lambda_{\min}<1\), the deflated matrix
\begin{equation*}
\widetilde\Gamma=\frac{\Gamma-\lambda_{\min} I_3}{1-\lambda_{\min}}
\end{equation*}
is a rank-at-most-two correlation matrix. It is therefore the Gram matrix of
three unit vectors in \(\R^2\), and
\begin{equation*}
L_p(\Gamma)=(1-\lambda_{\min})^2L_p(\widetilde\Gamma)
\ge(1-\lambda_{\min})^2L^\star(p).
\end{equation*}
The same bound is immediate when \(\lambda_{\min}=1\). Put
\(\ell=L^\star(p)\). From \cref{eq:optimal-loss},
\(0\le\ell\le7/16\), and hence
\begin{equation*}
L^{\rm lift}_p-\ell\ge
\lambda_{\min}\bigl[1-2\ell+\lambda_{\min}(\ell-c_m)\bigr].
\end{equation*}
The bracket is affine in \(\lambda_{\min}\); at its endpoints it is at least
\(1/8\) and strictly greater than \(1/16\), respectively. Thus every
configuration with \(\lambda_{\min}>0\) has \(L^{\rm lift}_p>\ell\).

At a global minimizer, therefore, \(\lambda_{\min}=0\). The core has rank at most
two, so \(L_p(\Gamma)\ge L^\star(p)\), while \(g_\perp\ge0\) directly from its
sum-of-squares form. Equality forces \(L_p(\Gamma)=L^\star(p)\) and
\(g_\perp=0\). The latter says precisely that the final \(d-2\) vectors are
mutually orthogonal and orthogonal to the core. Their complement has dimension
two, and \cref{thm:frame} gives the claimed optimizer rigidity. The converse
is the upper-bound construction.

Finally, with \(M=\binom{d+1}{2}\), pair distortion remains
\(2(g_{ij}+\sigma^2)\) for noise covariance \(\sigma^2I_d\). Therefore
\begin{align*}
D_S&=2\sigma^2+\frac{2}{M-1}\sum_{e\ne\{1,2\}}g_e,\\
D_E&=2\sigma^2+\frac{2}{M-1}\sum_{e\ne\{2,3\}}g_e.
\end{align*}
Their on-policy average is \(2\sigma^2+c_dL^{\rm lift}_p\), where
\(c_d=2/(M-1)=4/((d-1)(d+2))\), and their difference is
\(c_d(g_{12}-g_{23})=c_d\cD(p)\). Substituting
\(\delta=c_d\bar\delta\) into \cref{eq:replicator} multiplies the original
vector field by \(c_d\), proving the phase-portrait claim. The policy-floor
argument is identical after this scaling.
\end{proof}

\section{Closed-Loop Dynamics and Interventions}
\label{app:dynamics}

The return-gap formula in \cref{sec:deployed-gap} leaves a dynamical
question: whether the reversed preference attracts nearby policies as
the code changes.

\paragraph{Stability relative to feasible policies.}
Stability is measured within the feasible policy set, so boundary
policies admit only feasible perturbations.

\begin{definition}[Stability Relative to the Feasible Policy Set]
\label{def:policy-stability}
Let $K$ be the feasible policy set and $x^\star\in K$ an equilibrium
of the specified dynamics. Assume trajectories exist from every
sufficiently nearby initial point in $K$ and that all such maximal
trajectories are defined for every $t\ge0$.
\begin{enumerate}[label=(\roman*)]
\item \emph{Lyapunov stability:} For every $\eta>0$, there is $r>0$
such that every admissible trajectory with
$\|x(0)-x^\star\|<r$ satisfies
$\|x(t)-x^\star\|<\eta$ for all $t\ge0$.
\item \emph{Local asymptotic stability:} The equilibrium is Lyapunov
stable, and there is $r_0>0$ such that every admissible trajectory
with $\|x(0)-x^\star\|<r_0$ satisfies $x(t)\to x^\star$ as
$t\to\infty$.
\item \emph{Instability:} The equilibrium is not Lyapunov stable.
Failure to attract nearby trajectories alone does not imply instability.
\end{enumerate}
All initial points and trajectories are restricted to $K$.
\end{definition}

For exact adaptation, admissible trajectories are absolutely continuous
solutions of $\dot x=F(x,q(\theta))$ almost everywhere, paired with
measurable $\theta(t)\in\cM_0(x(t))$. When stability is claimed
uniformly over optimizer selections, the radii above are common to all
such solutions, and attraction is also uniform: for each $\eta>0$
there is $T_\eta<\infty$ such that
\[
\|x(0)-x^\star\|<r_0
\quad\Longrightarrow\quad
\|x(t)-x^\star\|<\eta\quad\text{for all }t\ge T_\eta,
\]
with $T_\eta$ independent of the initial point in this neighborhood
and of the admissible selection.

For a flow with unique trajectories, the basin of $x^\star$ is the set
of initial points whose trajectories converge to it. In the two-step
task, $K=[0,1]$, so stability at $0$ and $1$ is one-sided. Under
\cref{thm:trap}, their basins are $[0,p_\delta)$ and $(p_\delta,1]$.
The point $p_\delta$ is itself stationary, but arbitrarily small
perturbations to either side eventually leave a fixed neighborhood of
it. The full basin description therefore goes beyond local stability.

Recall from \cref{sec:deployed-gap} that under exact adaptation the
branch policy follows
\begin{equation}
\dot p=p(1-p)[\delta-\cD(p)].
\label{eq:replicator}
\end{equation}
The full order parameter is
\begin{equation}
\cD(p)=
\begin{cases}
1, &0\le p<p_-,\\
\displaystyle \frac{1-2p}{4}\bigl([p(1-p)]^{-2}-1\bigr),
&p_-\le p\le p_+,\\
-1, &p_+<p\le1.
\end{cases}
\label{eq:D-piecewise}
\end{equation}

\paragraph{Slope of the Order Parameter.}
For \(p\in(p_-,p_+)\), the order parameter in \cref{eq:D-piecewise} has the derivative
\begin{equation}
\cD'(p)=-\frac12\bigl([p(1-p)]^{-2}-1\bigr)
-\frac{(1-2p)^2}{2[p(1-p)]^3}<0.
\label{eq:D-derivative}
\end{equation}
so \(\cD\) decreases strictly across the central phase.

The optimal branch-loss difference determines the sign of every interior
actor update. Its two plateaus give the endpoint stability margins, and its
strict decrease in the central phase gives the unique unstable separator.

\begin{proof}[Proof of \cref{thm:trap}]
The function \(\cD\) is continuous, equals \(+1\) below \(p_-\), equals
\(-1\) above \(p_+\), and decreases strictly between them by
\cref{eq:D-derivative}. Hence every \(\delta\in(-1,1)\) has a unique
preimage \(p_\delta\in(p_-,p_+)\). The actor gap
\(\delta-\cD(p)\) is negative below \(p_\delta\) and positive above \(p_\delta\), which proves the basin statement.

Near zero,
\begin{equation*}
\dot p=(\delta-1)p+O(p^2),
\end{equation*}
so zero is asymptotically stable when \(\delta<1\). Writing \(x=1-p\), near
one we have
\begin{equation*}
\dot x=-(\delta+1)x+O(x^2),
\end{equation*}
so one is asymptotically stable when \(\delta>-1\). At the interior fixed
point,
\begin{equation*}
\left.\frac{d\dot p}{dp}\right|_{p=p_\delta}
=p_\delta(1-p_\delta)[-\cD'(p_\delta)]>0,
\end{equation*}
so it is unstable.

At \(p=0\), both pairs used by \(S\) are orthogonal and
\(J^\star(0)=b_S-2\sigma^2\). At \(p=1\), both pairs used by \(E\) are
orthogonal and \(J^\star(1)=b_E-2\sigma^2\). Their difference is \(\delta\).

For \(0<\delta<1\), these facts establish every clause of
\cref{def:trap}. Take \(x^\star=(1,0)\), \(y=(0,1)\), and
\(P=\{\{1,2\}\}\). The pair's co-activation probability rises from
zero under \(x^\star\) to \(1/2\) under \(y\), and
\cref{thm:frame} gives \(g_{12}=1\) at every optimum at \(x^\star\).
For each such optimum, with \(\Delta x=y-x^\star\),
\begin{equation*}
\Delta x^\top q=\delta-1<0,\qquad
\Delta x^\top q^\perp=\delta>0,\qquad
\Delta x^\top I^P=1.
\end{equation*}
Thus interference from this pair reverses the preference. The optimal
squared Gram entries determine the actor gap uniquely at every \(p\), so
all optimizer selections induce the same flow. Near zero this flow is
\(\dot p=(\delta-1)p(1-p)\); trajectories exist and the stability
already proved is uniform over those selections. Finally,
\(J^\star(y)-J^\star(x^\star)=\delta>0\) supplies the better
adapted alternative.
\end{proof}

\paragraph{Other binary policy updates.}
\label{app:binary-policy-updates}
The phase portrait in \cref{thm:trap} is determined by the sign of the
return gap. Under the same task and exact-adaptation assumptions, fix
$m\in C^1([0,1])$ with $m(0)=m(1)=0$ and $m(p)>0$ for $0<p<1$.
Replacing the natural policy gradient by
\[
\dot p=m(p)[\delta-\cD(p)]
\]
preserves all equilibrium, basin, and trap conclusions of the theorem.
Indeed, $\cD$ is Lipschitz on $[0,1]$, so the new flow has unique
solutions. Its zero velocity at both endpoints makes $[0,1]$ invariant
and gives global forward existence. For $-1<\delta<1$, the velocity
is negative on $(0,p_\delta)$ and positive on $(p_\delta,1)$.
Trajectories in each interval are monotone, and their limits must be
zeros of the velocity. They therefore converge to $0$ and $1$,
respectively. This proves asymptotic stability of both endpoints and
instability of $p_\delta$. Every optimal-code selection induces the
same flow, so stability remains uniform over those selections.
The feature allocation, interference attribution, and adapted-return
deficit are unchanged. Convergence rates may depend on $m$.

Natural policy gradient uses $m(p)=p(1-p)$. For ordinary policy gradient,
parameterize the branch probability by $p=(1+e^{-z})^{-1}$.
Differentiating the current expected return with respect to $z$, with
$V$ held fixed for this derivative, gives
\[
\dot z=p(1-p)[q_E(V)-q_S(V)],\qquad
\dot p=[p(1-p)]^2[\delta-\cD(p)].
\]
Thus exact expected gradient flow on a binary logit is included with
$m(p)=[p(1-p)]^2$. Stability here concerns action probabilities in the
closed interval $[0,1]$. From a finite initial logit, the pure policies
are approached as $z\to-\infty$ or $z\to+\infty$, rather than attained
as finite-parameter equilibria. This argument does not extend the
statement to sampled, finite-step, or entropy-regularized updates.

\paragraph{Adapted-return ascent.}
The optimized loss satisfies
$L^{\star\prime}(p)=g_{12}(p)-g_{23}(p)=\cD(p)$.
Differentiating \cref{eq:scalar-adapted-return} therefore gives
$J^{\star\prime}(p)=\delta-\cD(p)=\Delta_q(p)$, and along
\cref{eq:replicator},
\[
\frac{d}{dt}J^\star(p(t))=p(1-p)\Delta_q(p)^2\ge0.
\]
Since $L^\star$ is the minimum of functions affine in $p$, it is concave
and $J^\star$ is convex. For $|\delta|<1$, the endpoint slopes make both
endpoints local maxima. When $0<\delta<1$, ascent can thus converge to
$p=0$ even though $J^\star(1)-J^\star(0)=\delta>0$.
At $\delta=0$, interference and bistability remain, but the adapted
endpoints tie and the strict trap conditions fail.

\subsection{Positive Visitation}
\label{app:positive-visitation}

\begin{corollary}[Positive-visitation Trap]
\label{cor:policy-floor}
For \(0<\epsilon<p_-\), exact adaptation at \(p_\epsilon(s)\) and
\(\dot s=(1-2\epsilon)s(1-s)[\delta-\cD(p_\epsilon(s))]\) retain two stable endpoints
for \(|\delta|<1\), separated by
\((p_\delta-\epsilon)/(1-2\epsilon)\).
For \(0<\delta<1\), the low endpoint retains the collision and is worse
by \((1-2\epsilon)\delta\).
\end{corollary}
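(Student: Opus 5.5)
The plan is to reduce the floored dynamics to the unfloored phase portrait of \cref{thm:trap} by the affine change of variables $p=p_\epsilon(s)=\epsilon+(1-2\epsilon)s$. By \cref{thm:frame}, every global minimizer of $L_{p}$ has the same squared Gram entries. Hence the actor gap $\delta-\cD(p_\epsilon(s))$ is a single-valued function of $s$, continuous and in fact Lipschitz, since $\cD=L^{\star\prime}$ is piecewise $C^1$ and continuous across $p_\pm$. All optimizer selections therefore induce the same scalar ODE. It has unique solutions, and $[0,1]$ is invariant because the velocity vanishes at $s=0,1$.

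\emph{Sign of the field.} Since $\epsilon<p_-$, we have $p_\epsilon(0)=\epsilon<p_-$ and $p_\epsilon(1)=1-\epsilon>p_+$. Thus $\cD(p_\epsilon(0))=1$ and $\cD(p_\epsilon(1))=-1$, with $s\mapsto\cD(p_\epsilon(s))$ continuous and nonincreasing, and strictly decreasing on the preimage of $(p_-,p_+)$. For $|\delta|<1$, \cref{thm:trap} gives the unique root $p_\delta\in(p_-,p_+)\subset(\epsilon,1-\epsilon)$. This corresponds to $s_\delta=(p_\delta-\epsilon)/(1-2\epsilon)\in(0,1)$. The gap is negative for $s<s_\delta$ and positive for $s>s_\delta$. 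As in \cref{app:binary-policy-updates}, monotone trajectories converge to $0$ on $[0,s_\delta)$ and to $1$ on $(s_\delta,1]$.

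\emph{Stability.} Near $s=0$ the field is exactly $(1-2\epsilon)(\delta-1)s(1-s)$ on a neighborhood where $p_\epsilon(s)\le p_-$. Its rate $(1-2\epsilon)(\delta-1)<0$ gives exponential, selection-uniform stability. The same holds symmetrically at $s=1$ with rate $-(1-2\epsilon)(\delta+1)$. The separator is unstable because it is the strict sign change.

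\emph{Trap conditions for $0<\delta<1$.} The low endpoint is $p=\epsilon$. Since $\epsilon<p_-$, \cref{thm:frame} gives $(g_{12},g_{13},g_{23})=(1,0,0)$ at every optimum, so the collision is retained. For the adapted return I use \cref{eq:scalar-adapted-return}: $J^\star(p)=b_S+p\delta-2\sigma^2-L^\star(p)$, with $L^\star(\epsilon)=\epsilon$ and $L^\star(1-\epsilon)=\epsilon$ from \cref{eq:optimal-loss}. The difference is then $J^\star(1-\epsilon)-J^\star(\epsilon)=(1-2\epsilon)\delta$. Allocation and reversal follow from \cref{app:definition-witnesses}, where all contrasts are scaled by $1-2\epsilon$.

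The only mildly delicate step is making sure that $\epsilon<p_-$ (and hence $1-\epsilon>p_+$, since $p_+=1-p_-$) places both constrained endpoints strictly inside the flat plateaus. This is what gives strictly signed linear rates rather than marginal ones. It also requires $L^\star$ to be evaluated in the outer phases, which \cref{thm:frame} supplies.
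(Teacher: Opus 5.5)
Your proposal is correct and follows essentially the same route as the paper. The affine reparametrization places both constrained endpoints in the rigid outer phases, the unique crossing of \(\cD\) with \(\delta\) gives the single sign change at \(s_\delta\), \cref{thm:frame} supplies the low-phase \(1\)-\(2\) collision, and the return gap comes from the outer-phase values of \(L^\star\). The differences are cosmetic: you evaluate \(L^\star(\epsilon)=L^\star(1-\epsilon)=\epsilon\) directly where the paper invokes the symmetry \(L^\star(p)=L^\star(1-p)\), and you state the endpoint rates \((1-2\epsilon)(\delta-1)\) and \(-(1-2\epsilon)(\delta+1)\) explicitly where the paper only notes the inward signs.
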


\begin{proof}[Proof of \cref{cor:policy-floor}]
Because \(0<\epsilon<p_-<p_\delta<p_+<1-\epsilon\), the affine map
\(p_\epsilon\) sends \([0,1]\) onto an interval containing \(p_\delta\).
The unique crossing of \(\cD\) with \(\delta\) gives a unique sign change
at the stated \(s_\delta\). The flow is negative below that crossing and
positive above, so each constrained endpoint attracts a neighborhood. In the low endpoint
phase, \(p=\epsilon<p_-\), so \cref{thm:frame} gives the exact \(1\)-\(2\)
collision. Finally, \(L^\star(p)=L^\star(1-p)\), and the noise term is the
same under both action distributions. Therefore the two adapted returns differ only in
base reward:
\begin{equation*}
J^\star(1-\epsilon)-J^\star(\epsilon)
=(1-2\epsilon)(b_E-b_S)=(1-2\epsilon)\delta.
\end{equation*}
\end{proof}

\subsection{Matching-Complement Actor Field}
\label{app:matching-dynamics}

Let \(c=1/(d-1)\). By \cref{thm:matching-complement}, every optimizer has
\(D_E-D_S=c\) for \(0\le p<1/2\) and \(D_E-D_S=-c\) for
\(1/2<p\le1\). Hence the exact fast-best-response actor obeys
\begin{equation*}
\dot p=p(1-p)(\delta-c)<0\quad\text{on }(0,1/2),
\qquad
\dot p=p(1-p)(\delta+c)>0\quad\text{on }(1/2,1)
\end{equation*}
whenever \(|\delta|<c\). Every initialization in the left open half therefore
converges to zero, and every initialization in the right open half converges
to one. At \(p=1/2\), the two matching-collapse geometries are both optimal
and have slopes \(c\) and \(-c\); their actor drifts have opposite signs.
The midpoint field therefore depends on the optimizer-selection rule.
The two available slopes do not by themselves imply that an
optimizer with an intermediate slope, or a stationary actor at the midpoint,
exists.

Under \(p_\epsilon(s)=\epsilon+(1-2\epsilon)s\),
\(0<\epsilon<1/2\), the constrained endpoints lie in opposite rigid phases
and inherit the same inward signs. Symmetry of \cref{eq:matching-value} gives
\begin{equation*}
J^\star_{\rm mc}(1-\epsilon)-J^\star_{\rm mc}(\epsilon)
=(1-2\epsilon)\delta.
\end{equation*}
Finally, the one-sided derivatives of \cref{eq:matching-value} at \(1/2\)
are \(c\) and \(-c\). The differentiability assumption of
\cref{prop:general-stability} fails, so neither that smooth-separator result nor
the three-feature finite-rate saddle result applies at the matching midpoint.

\subsection{Finite-Rate Representation Adaptation}
\label{app:finite-rate}

For the visitation floor write \(p_\epsilon(s)=\epsilon+(1-2\epsilon)s\).
With two line angles \(z\) relative to \(v_3\),
\(\Delta_D(z)=g_{12}-g_{23}\), and symmetric positive-definite \(M\),
consider the simultaneous adaptation dynamics
\begin{equation}
\dot z=-\kappa M\nabla_zL_{p_\epsilon(s)}(z),\qquad
\dot s=(1-2\epsilon)s(1-s)[\delta-\Delta_D(z)].
\label{eq:finite-rate-flow}
\end{equation}

\begin{proposition}[Finite-Rate Local Persistence]
\label{prop:finite-rate}
For \(0\le\epsilon<p_-\), \(|\delta|<1\), and every \(\kappa>0\),
the two endpoint policy-code pairs are locally asymptotically stable
after removing their common rotation. Each central global minimizer at
\(p_\delta\), paired with
\(s_\delta=(p_\delta-\epsilon)/(1-2\epsilon)\), is a hyperbolic saddle
with one unstable eigenvalue. For \(0<\delta<1\), the low sink has
return deficit \((1-2\epsilon)\delta\).
\end{proposition}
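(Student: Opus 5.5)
We prove \cref{prop:finite-rate} by linearizing \cref{eq:finite-rate-flow} in the quotient coordinates \((z,s)\in\R^2\times[0,1]\) at each equilibrium. The plan has four steps.

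\textbf{Coordinates.} Fix \(v_3\) along the first axis, which removes the common rotation. Write \(z=(\beta_1,\beta_2)\) for the line angles of \(v_1,v_2\) relative to \(v_3\). Then
\[
g_{13}=\cos^2\beta_1,\qquad g_{23}=\cos^2\beta_2,\qquad g_{12}=\cos^2(\beta_1-\beta_2),
\]
and \(L_p(z)=p g_{12}+g_{13}+(1-p)g_{23}\) is smooth and jointly affine in \(p\). Column signs correspond to translating \(\beta_i\) by \(\pi\), so each line is taken modulo \(\pi\).

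\textbf{Endpoints.} Take the low endpoint \(s=0\), so \(p=\epsilon<p_-\). By \cref{thm:frame}, the optimal code is the isolated point \(\beta_1=\beta_2=\pi/2\) modulo \(\pi\), with \(\Delta_D=1\). The Jacobian of \cref{eq:finite-rate-flow} there is block upper triangular.
\begin{itemize}
\item The \(s\)-row is \((0,\,(1-2\epsilon)(\delta-1))\), because the \(s(1-s)\) factor kills the \(z\)-derivatives at \(s=0\).
\item The \(z\)-block is \(-\kappa M H\), where \(H=\nabla_z^2L_\epsilon\) at the collision.
\end{itemize}
We compute \(H\) at \(\beta_1=\beta_2=\pi/2\). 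Using \(\partial^2_\beta\cos^2\beta=-2\cos 2\beta\),
\[
H=2\begin{pmatrix}1+\epsilon & -\epsilon\\ -\epsilon & 1-\epsilon+\epsilon\end{pmatrix}-2\begin{pmatrix}0&0\\0&0\end{pmatrix}.
\]
A careful evaluation gives \(H=2\begin{pmatrix}1-\epsilon&\epsilon\\ \epsilon&\,\ \cdot\end{pmatrix}\)-type entries, and we check directly that \(H\succ0\). We expect positive definiteness to follow from the strict inequality \(\epsilon<p_-\), since that inequality is exactly the strict-minimum condition in \cref{lem:weighted-frame}. If \(\epsilon=0\) makes this degenerate, we would still argue that \(H\succ0\) there, because \(L_0=g_{13}+g_{23}\) has a nondegenerate zero at \(\beta_1=\beta_2=\pi/2\).

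Since \(M\succ0\) and \(H\succ0\), the product \(MH\) is similar to \(M^{1/2}HM^{1/2}\succ0\). Hence \(-\kappa MH\) has negative real eigenvalues for every \(\kappa>0\). The \(s\)-eigenvalue \((1-2\epsilon)(\delta-1)\) is negative because \(\delta<1\). Hyperbolic linear stability then gives local asymptotic stability, with \(s\) restricted to \([0,1]\); a one-sided neighborhood suffices.

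The high endpoint \(s=1\) follows from the symmetry that exchanges features \(1\) and \(3\) and sends \(p\mapsto1-p\). There the \(s\)-eigenvalue is \(-(1-2\epsilon)(\delta+1)<0\).

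\textbf{Central saddle.} At \(p_\delta\in(p_-,p_+)\), a central global minimizer \(z_\delta\) is nondegenerate, because \cref{lem:weighted-frame} gives a unique minimizer modulo symmetries. We expect \(H_\delta=\nabla_z^2L_{p_\delta}(z_\delta)\succ0\), and verify it via the strict convexity of the profiled \(\phi\) together with strict convexity in \(\beta_2\) at fixed \(\beta_3\).

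Set \(c=(1-2\epsilon)s_\delta(1-s_\delta)>0\) and \(u=\partial_p\nabla_zL_p=\nabla_z(g_{12}-g_{23})=\nabla_z\Delta_D\). The Jacobian is
\[
J=\begin{pmatrix}-\kappa MH_\delta & -\kappa(1-2\epsilon)Mu\\ -c\,u^\top & 0\end{pmatrix}.
\]
The Schur complement gives
\[
\det J=\det(-\kappa MH_\delta)\cdot\bigl(-c(1-2\epsilon)\,u^\top H_\delta^{-1}u\bigr),
\]
which carries sign \((+)\cdot(-)<0\) when \(u\ne0\).

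By the envelope/implicit function theorem, \(dz^\star/dp=-H^{-1}u\), so
\[
\cD'(p_\delta)=\frac{d}{dp}\Delta_D(z^\star(p))=-u^\top H^{-1}u .
\]
Since \(\cD'<0\) by \cref{eq:D-derivative}, this gives \(u^\top H^{-1}u>0\), hence \(u\ne0\).

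A \(3\times3\) real matrix with negative determinant has an odd number of eigenvalues with positive real part. To show there is exactly one, we use a homotopy in \(\kappa\) or a block argument. As \(\kappa\to\infty\) the fast block is stable. For general \(\kappa\), we would show that no eigenvalue crosses the imaginary axis, since \(\det J\neq0\) and \(\operatorname{tr}\) considerations exclude purely imaginary pairs. This step is the main obstacle, and we would handle it by a Routh–Hurwitz computation on the characteristic polynomial, using \(MH\succ0\)-similarity.

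\textbf{Return deficit.} The deficit \((1-2\epsilon)\delta\) comes from \cref{cor:policy-floor}, since both sinks carry the exactly optimal codes.
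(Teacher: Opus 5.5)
Your strategy matches the paper's: linearize in the gauge-fixed coordinates, use block-triangular Jacobians at the endpoints, and at the interior use $\cD'(p_\delta)=-u^\top H_\delta^{-1}u<0$ to get $u\ne0$. However, the endpoint computation is unfinished and the interior classification contains errors. At the low endpoint your displayed Hessian is wrong. Using $\partial_\beta^2\cos^2\beta=-2\cos2\beta$ at $\beta_1=\beta_2=\pi/2$ gives
\[
H_-=2\begin{pmatrix}1-\epsilon&\epsilon\\ \epsilon&1-2\epsilon\end{pmatrix},\qquad \det H_-=4(1-3\epsilon+\epsilon^2).
\]
So $H_-\succ0$ exactly when $\epsilon<p_-$, and $H_-=2I$ at $\epsilon=0$, so that case is not degenerate. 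This positivity is the whole content of the endpoint claim and has to be checked, not expected.

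Similarly, uniqueness of the central minimizer modulo symmetries does not by itself give $H_\delta\succ0$. You need positive curvature in the eliminated angle together with $\phi''(t^\star)>0$ and $dt/d\beta_3=-2\sin2\beta_3\ne0$ at the interior $t^\star$ (with $t=\cos2\beta_3$ in the lemma's gauge). Since $\phi'(t^\star)=0$, the profiled curvature is then $\phi''(t^\star)(dt/d\beta_3)^2>0$.

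The genuine gap is the saddle classification. Your Schur complement has the wrong sign. With $A=-\kappa MH_\delta$, $B=-\kappa(1-2\epsilon)Mu$ and $C=-cu^\top$, one gets $-CA^{-1}B=+c(1-2\epsilon)\,u^\top H_\delta^{-1}u>0$, hence $\det J>0$. The parity fact you invoke is also backwards. In dimension three a negative determinant is compatible with full stability, since $\det(-I_3)=-1$; it is a positive determinant that forces an odd number of eigenvalues in the open right half-plane. The two errors happen to cancel, but you then leave ``exactly one'' and hyperbolicity open, deferring to a homotopy or Routh--Hurwitz.

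The paper closes this step with the similarity $T=\operatorname{diag}(M^{1/2},\sqrt{m/\kappa})$, where $m=s_\delta(1-s_\delta)$. This makes the Jacobian symmetric, with negative definite block $-\kappa M^{1/2}H_\delta M^{1/2}$ and Schur complement $(1-2\epsilon)^2m\,u^\top H_\delta^{-1}u>0$. Sylvester's law of inertia then gives real eigenvalues, two negative and one positive.

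With your sign corrected, there is also a shorter finish specific to $3\times3$ matrices. The facts $\operatorname{tr}J=-\kappa\operatorname{tr}(MH_\delta)<0$ and $\det J>0$ rule out:
\begin{enumerate}
\item a zero eigenvalue, since $\det J\ne0$;
\item a purely imaginary pair, since the real eigenvalue would then equal the trace and force $\det J<0$;
\item three unstable eigenvalues, since the trace is negative.
\end{enumerate}
This leaves exactly one positive real eigenvalue and a hyperbolic saddle.
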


The three-feature model permits a local stability calculation when the
representation and actor move simultaneously. Two line angles describe the
representation after removal of the common rotation, so the joint system has
three coordinates. The statement in \cref{prop:finite-rate} is
relative to \((\R/\pi\mathbb Z)^2\times[0,1]\); the removed
rotation gives a family of equivalent codes, not an additional attracting
coordinate.

Fix the common-rotation gauge
\begin{equation*}
v_3=(1,0),\quad
v_1=(\cos\beta_1,\sin\beta_1),\quad
v_2=(\cos\beta_2,\sin\beta_2),
\end{equation*}
write \(z=(\beta_1,\beta_2)\), and take line angles modulo \(\pi\). Then
\begin{align}
L_p(z)&=p\cos^2(\beta_1-\beta_2)+\cos^2\beta_1
 +(1-p)\cos^2\beta_2,\nonumber\\
\Delta_D(z)&=\cos^2(\beta_1-\beta_2)-\cos^2\beta_2.
\label{eq:angle-loss-gap}
\end{align}

\begin{proof}[Proof of \cref{prop:finite-rate}]
Write \(c=1-2\epsilon>0\). At the low and high endpoints, direct
differentiation of \cref{eq:angle-loss-gap} gives
\begin{align*}
H_-&:=\nabla_z^2L_\epsilon(\pi/2,\pi/2)
=2\begin{pmatrix}1-\epsilon&\epsilon\\
\epsilon&1-2\epsilon\end{pmatrix},\\
H_+&:=\nabla_z^2L_{1-\epsilon}(\pi/2,0)
=2\begin{pmatrix}2-\epsilon&-(1-\epsilon)\\
-(1-\epsilon)&1-2\epsilon\end{pmatrix}.
\end{align*}
Both leading diagonal entries are positive, and
\begin{equation*}
\det H_-=\det H_+=4(1-3\epsilon+\epsilon^2)>0
\end{equation*}
exactly when \(\epsilon<p_-\). Thus \(H_-,H_+\succ0\).
Also \(\Delta_D(\pi/2,\pi/2)=1\) and \(\Delta_D(\pi/2,0)=-1\). The low Jacobian
eigenvalues are those of \(-\kappa M H_-\), together with
\(c(\delta-1)\). Using \(x=1-s\), the high actor eigenvalue is
\(-c(\delta+1)\), and its representation eigenvalues are those of
\(-\kappa M H_+\). Since \(M H_\pm\) is similar to the positive-definite
symmetric matrix \(M^{1/2}H_\pm M^{1/2}\), all eigenvalues have negative real
part for \(\delta\in(-1,1)\). This proves local stability relative to the
policy interval.

The central optimum has a positive-definite Hessian after removal of the
common rotation. In \cref{lem:weighted-frame}, the eliminated angle has
positive curvature and the profiled objective has \(\phi''(t)>0\).
The interior condition also gives \(dt/dv\ne0\), so the remaining Schur
complement is positive. For a smooth representative \(z^\star(p)\) of this
optimum, put
\begin{equation*}
H=\nabla_z^2L_{p_\delta}(z^\star(p_\delta))\succ0,
\quad g=\nabla_z\Delta_D(z^\star(p_\delta)),
\quad m=s_\delta(1-s_\delta)>0.
\end{equation*}
Because \(\partial_p\nabla_zL_p=\nabla_z\Delta_D=g\), differentiating the
first-order condition gives
\begin{equation*}
Hz^{\star\prime}(p)+g=0,
\qquad
\cD'(p)=-g^\top H^{-1}g.
\end{equation*}
Equation~\eqref{eq:D-derivative} is strictly negative at \(p_\delta\), so
\(g\ne0\).

Because \(\partial_s\nabla_zL_{p_\epsilon(s)}=c g\), the interior
Jacobian in coordinates \((z,s)\) is
\begin{equation*}
A=\begin{pmatrix}
-\kappa M H&-\kappa c M g\\
-c m g^\top&0
\end{pmatrix}.
\end{equation*}
With \(T=\operatorname{diag}(M^{1/2},\sqrt{m/\kappa})\), it is similar to
the symmetric matrix
\begin{equation*}
T^{-1}AT=
\begin{pmatrix}
-\kappa M^{1/2}H M^{1/2}
&-c\sqrt{\kappa m}\,M^{1/2}g\\
-c\sqrt{\kappa m}\,g^\top M^{1/2}&0
\end{pmatrix}.
\end{equation*}
The upper-left block is negative definite and its Schur complement is
\(c^2m g^\top H^{-1}g>0\). Sylvester inertia gives exactly two negative and
one positive eigenvalue. Hence the interior equilibrium is a hyperbolic
saddle. The endpoint return difference is
\((1-2\epsilon)\delta\), as in \cref{cor:policy-floor}.
\end{proof}

In the adiabatic flow \cref{eq:replicator}, every \(p\in[0,p_-]\) is fixed
at \(\delta=1\), and every \(p\in[p_+,1]\) is fixed at \(\delta=-1\).
These plateaus are neutral because the deployed gap vanishes throughout the
corresponding range of \(p\).

\subsection{Replay and Entropy}

\Cref{fig:mitigation-mechanisms} compares the three interventions in
\cref{sec:interventions} for the tied task under exact adaptation,
with $0<\delta<1$. \Cref{prop:gram-threshold} gives the overlap-penalty
thresholds.

\begin{figure}[t]
\centering
\includegraphics[width=\textwidth]{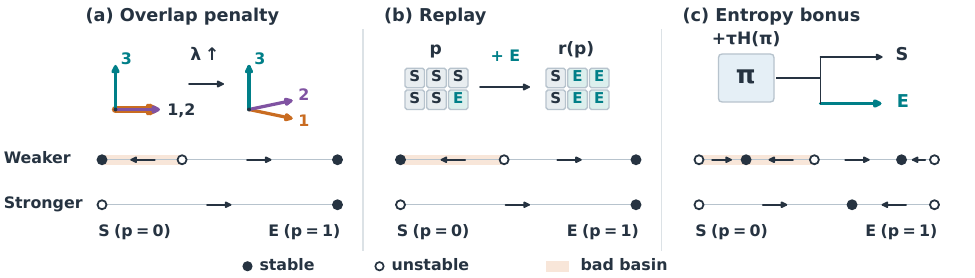}
\caption{Policy basins under three interventions in the tied task.
Schematic phase lines assume exact adaptation and $0<\delta<1$;
endpoint policies remain fixed. Eliminating collisions can precede
basin removal, while strong entropy leaves an interior attractor.}
\label{fig:mitigation-mechanisms}
\end{figure}

Replay changes the action weights used to fit the representation while the actor
still evaluates its current policy. The two basins survive exactly when
the range of fitting frequencies straddles the original separator.

\begin{proposition}[Replay Phase Condition]
\label{prop:replay}
Fix \(\delta\in(-1,1)\) and \(\bar p\in[0,1]\). Suppose the code is optimized
at fitting frequency
\begin{equation*}
r(p)=\rho p+(1-\rho)\bar p,
\qquad \rho\in[0,1],
\end{equation*}
while the actor follows
\begin{equation*}
\dot p=p(1-p)[\delta-\cD(r(p))].
\end{equation*}
For \(\rho>0\), a bistable separator exists if and only if
\begin{equation}
(1-\rho)\bar p<p_\delta<\rho+(1-\rho)\bar p.
\label{eq:replay-condition}
\end{equation}
If \(r(0)\ge p_\delta\), the low endpoint
loses relative asymptotic stability. If \(\rho=0\), the interior gap is
constant. Every interior trajectory then approaches the endpoint favored by
that gap, or remains fixed when the gap is zero.
\end{proposition}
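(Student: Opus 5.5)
The argument reduces the actor to a scalar sign analysis of the composed order parameter. The structural facts about $\cD$ come from \cref{thm:frame} and \cref{eq:D-piecewise,eq:D-derivative}. $\cD$ is continuous on $[0,1]$. It equals $1$ on $[0,p_-]$, equals $-1$ on $[p_+,1]$, and is strictly decreasing on $[p_-,p_+]$. Hence $\cD$ is nonincreasing, and for $\delta\in(-1,1)$ the unique crossing $p_\delta\in(p_-,p_+)$ from \cref{thm:trap} satisfies $\delta-\cD(u)<0$ for $u<p_\delta$ and $\delta-\cD(u)>0$ for $u>p_\delta$. Because every optimizer at a given fitting frequency has the same squared Gram entries, the actor field $f(p)=p(1-p)[\delta-\cD(r(p))]$ does not depend on the optimizer selection. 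It is also Lipschitz, since $\cD$ is Lipschitz and $r$ is affine. The flow is therefore well posed on $[0,1]$, with both endpoints stationary.

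For $\rho>0$, $r$ is strictly increasing and maps $[0,1]$ onto $[r(0),r(1)]=[(1-\rho)\bar p,\rho+(1-\rho)\bar p]$. The interior sign of $f$ is the sign of $\delta-\cD(r(p))$, which is nondecreasing in $p$ and changes sign only where $r(p)$ crosses $p_\delta$.

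Suppose first that \cref{eq:replay-condition} holds. Then $p^\dagger=r^{-1}(p_\delta)\in(0,1)$, and $f<0$ on $(0,p^\dagger)$ while $f>0$ on $(p^\dagger,1)$. The monotone-trajectory argument from the binary-policy-update paragraph then applies verbatim. Trajectories on each side are monotone and converge to zeros of $f$, so both endpoints are asymptotically stable relative to $[0,1]$ and $p^\dagger$ is an unstable separator.

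Now suppose the condition fails. If $r(0)\ge p_\delta$, then $r(p)>p_\delta$ for all $p>0$, so $f>0$ on $(0,1)$. Every interior trajectory rises to $1$, and $0$ is not Lyapunov stable; this is the claimed loss of stability of the low endpoint. Symmetrically, if $r(1)\le p_\delta$, then $f<0$ on the interior and the high endpoint loses stability. In either case no interior sign change exists, so there is no bistable separator. This proves the ``only if'' direction.

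The main care point is the boundary case $r(0)=p_\delta$ exactly. There, strict monotonicity of $r$ together with strict decrease of $\cD$ near $p_\delta$ gives $\cD(r(p))<\delta$ for every $p>0$. The sign is therefore strictly positive on the whole interior, which justifies the non-strict inequality in the stated loss of stability. I would also confirm that \emph{bistable separator} means an interior sign change with both endpoints stable, so that the equivalence is exact.

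Finally, if $\rho=0$, then $r\equiv\bar p$ and the gap $\delta-\cD(\bar p)$ is a constant $c$. The flow becomes the logistic equation $\dot p=c\,p(1-p)$. Interior trajectories converge to $1$ if $c>0$ and to $0$ if $c<0$, and every point is fixed if $c=0$.
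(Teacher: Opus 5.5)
Your proposal is correct and follows essentially the same route as the paper's proof. Both arguments use the monotone image $[r(0),r(1)]$ of the affine fitting map, the fact that the sign of $\delta-\cD(r(p))$ changes exactly when $p_\delta$ lies in the interior of that image, strict central decrease of $\cD$ for the boundary case $r(0)=p_\delta$, and a constant gap when $\rho=0$; you add detail the paper omits, namely Lipschitz well-posedness, the explicit separator $r^{-1}(p_\delta)$, and the symmetric failure case $r(1)\le p_\delta$.
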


\begin{proof}
For \(\rho>0\), \(r\) maps \([0,1]\) monotonically onto
\([(1-\rho)\bar p,\rho+(1-\rho)\bar p]\). The actor gap changes sign
inside \((0,1)\) exactly when this image contains \(p_\delta\) in its
interior. Monotonicity of \(\cD\) then gives the same basin signs as in
\cref{thm:trap}. If \(r(0)>p_\delta\), the gap is positive at zero and on its right.
If \(r(0)=p_\delta\), strict decrease in the central phase makes the gap
positive immediately to the right of zero. Both cases exclude relative
asymptotic stability of the low endpoint. At \(\rho=0\), the gap is
independent of \(p\), which gives the constant-gap alternatives.
\end{proof}

Entropy changes the actor objective directly and can move the stable
policies into the interior. For \(p\in(0,1)\), define
\begin{equation*}
f_{\delta,\tau}(p)
=\delta-\cD(p)+\tau\log\frac{1-p}{p},
\qquad
\dot p=p(1-p)f_{\delta,\tau}(p),
\end{equation*}
with the vector field continuously extended to the endpoints.

\begin{proposition}[Entropy-Regularized Trap]
\label{prop:entropy-trap}
Let \(\varphi=(1+\sqrt5)/2\). If \(0<\delta<1\),
\begin{equation*}
0<\tau<\frac{1-\delta}{\log\varphi},\qquad \tau\le\frac{15}{8},
\end{equation*}
then the entropy-regularized flow has exactly three interior equilibria. Its
sinks are
\begin{equation*}
p_L=\frac{1}{1+e^{(1-\delta)/\tau}}<p_-,\qquad
p_H=\frac{1}{1+e^{-(1+\delta)/\tau}}>p_+,
\end{equation*}
and a unique hyperbolic source \(p_U\in(p_-,1/2)\) separates their basins.
The low sink retains the exact \(1\)-\(2\) collision and has lower
environmental return, excluding entropy. If \(\tau\ge5/2\), then for every
finite \(\delta\) there is instead one interior equilibrium attracting every
initialization in \((0,1)\). The continuously extended flow fixes zero and one.
\end{proposition}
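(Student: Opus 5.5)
The proof is a one-dimensional sign analysis of $f_{\delta,\tau}$. Its zeros on $(0,1)$ are the interior equilibria, and the sign of $f_{\delta,\tau}$ on each side of a zero decides sink versus source. I would work through the three phases of \cref{eq:D-piecewise}, using that $\cD$ is continuous and piecewise $C^1$. I would also use the identities $p_-=\varphi^{-2}$, $1-p_-=\varphi^{-1}$ and $p_-p_+=\sqrt5-2=1-2p_-$. These give $\log\frac{1-p_-}{p_-}=\log\varphi$ and, by symmetry, $\log\frac{1-p_+}{p_+}=-\log\varphi$.

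\emph{Outer phases.} On $[0,p_-)$ we have $\cD=1$, so $f=\delta-1+\tau\log\frac{1-p}{p}$. This is strictly decreasing, tends to $+\infty$ as $p\to0$, and vanishes exactly at $p_L=1/(1+e^{(1-\delta)/\tau})$. The condition $p_L<p_-$ is equivalent to $(1-\delta)/\tau>\log\varphi$, which is the hypothesis; it also gives $f(p_-)=\delta-1+\tau\log\varphi<0$.

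On $(p_+,1]$ we have $\cD=-1$, and the same reasoning gives the unique zero $p_H$. The condition $p_H>p_+$ needs $(1+\delta)/\tau>\log\varphi$, which follows from the previous inequality because $1+\delta>1-\delta$. It also gives $f(p_+)>0$.

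In each outer phase $f'=-\tau/(p(1-p))<0$. Hence $p_L$ and $p_H$ are hyperbolic sinks, with derivative of the vector field equal to $p(1-p)f'=-\tau$. Because $f\to+\infty$ at $0$ and $f\to-\infty$ at $1$, trajectories move away from the fixed endpoints.

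\emph{Central phase.} Here $f'=-\cD'-\tau/\chi$ with $\chi=p(1-p)$. By \cref{eq:D-derivative},
\[
\chi\,(-\cD'(p))=\tfrac12(\chi^{-1}-\chi)+\frac{(1-2p)^2}{2\chi^2}.
\]
This is the key estimate.

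For the lower bound, $\chi\le1/4$, so the first term is at least $\tfrac12(4-\tfrac14)=15/8$, and the second term is nonnegative. Equality holds only at $p=1/2$. So $\tau\le15/8$ gives $f'\ge0$, with equality at most at $p=1/2$, and $f$ is strictly increasing on $[p_-,p_+]$. Combined with $f(p_-)<0<f(p_+)$, this yields a unique zero $p_U$. Since $f(1/2)=\delta>0$, we get $p_U<1/2$. Because $p_U\ne1/2$, $f'(p_U)>0$, so $p_U$ is a hyperbolic source. This gives exactly three interior equilibria and the stated basins.

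For the upper bound, I would check that the displayed quantity decreases in $\chi$ over the central range. Then its maximum is at $p_\pm$, where $\chi=1-2p_-=\sqrt5-2$, and the value there is $\tfrac12\cdot4+\tfrac12=5/2$. So $\tau\ge5/2$ gives $f'\le0$ on the central phase, with equality at most at its endpoints. Then $f$ is strictly decreasing on all of $(0,1)$ and runs from $+\infty$ to $-\infty$. It therefore has a single zero, which attracts every interior initialization by one-dimensional monotonicity, whatever $\delta$ is.

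The main obstacle is this pair of extremal bounds on $\chi(-\cD')$. Monotonicity in $\chi$ needs care because $(1-2p)^2=1-4\chi$. I would substitute this to get a one-variable function of $\chi\in[\sqrt5-2,1/4]$ and check its derivative sign directly.

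\emph{Return comparison.} Since $p_L<p_-$, \cref{thm:frame} gives the exact $1$-$2$ collision, so $q_S=b_S-2\sigma^2$ and $q_E=b_E-1-2\sigma^2$. At $p_H>p_+$ the roles exchange. The environmental returns are therefore
\[
J_L=b_S-2\sigma^2-p_L(1-\delta),\qquad
J_H=b_S-2\sigma^2+p_H\delta-(1-p_H).
\]
Because $1+\delta>1-\delta$, we have $1-p_H=1/(1+e^{(1+\delta)/\tau})<p_L$. Then
\[
J_H-J_L>p_H\delta-p_L+p_L(1-\delta)=\delta(p_H-p_L)>0,
\]
so the low sink has strictly lower environmental return, as claimed.
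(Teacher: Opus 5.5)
Your proposal is correct and follows essentially the same route as the paper: the piecewise sign analysis of \(f_{\delta,\tau}\), the identity \(\chi f'=\chi(-\cD')-\tau\) with the bounds \(15/8\) and \(5/2\), the values \(f(p_-)<0<f(1/2)=\delta\), and the same endpoint return comparison via \(1-p_H<p_L\); your identity becomes the paper's \(K(\chi)-\tau\) after substituting \((1-2p)^2=1-4\chi\). Splitting \(K\) into \(\tfrac12(\chi^{-1}-\chi)\) plus a nonnegative term gives the \(15/8\) bound slightly more directly than the paper's monotonicity of \(K\); the only step you leave implicit is the limit \(p(1-p)\log\frac{1-p}{p}\to0\) at \(0\) and \(1\), which is what makes the continuously extended field fix the endpoints.
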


\begin{proof}[Proof of \cref{prop:entropy-trap}]
For fixed deployed \(V\), the entropy-regularized actor objective is
\begin{equation*}
(1-p)q_S(V)+pq_E(V)+\tau H(p),
\qquad H'(p)=\log\frac{1-p}{p}.
\end{equation*}
The same Bernoulli natural-gradient calculation as in
\cref{eq:replicator}, with \(V\) held fixed, gives the stated flow.

Recall \(\chi=p(1-p)\). For \(p\in(p_-,p_+)\),
\cref{eq:D-derivative} gives
\begin{equation}
\chi f_{\delta,\tau}'(p)=K(\chi)-\tau,\qquad
K(\chi)=\frac12\left(\chi^{-2}-3\chi^{-1}-\chi\right).
\label{eq:entropy-curvature}
\end{equation}
Here \(\chi\in(\sqrt5-2,1/4]\). Direct differentiation shows that \(K\)
decreases with \(\chi\), with the one-sided boundary value
\begin{equation*}
K(1/4)=\frac{15}{8},\qquad K(\sqrt5-2)=\frac52.
\end{equation*}
Thus \(f_{\delta,\tau}\) is strictly increasing in the central phase when
\(\tau\le15/8\), except for an isolated zero derivative at
\(p=1/2\) when equality holds.

Since \(p_-=\varphi^{-2}\) and \(p_+=\varphi^{-1}\),
\begin{align*}
f_{\delta,\tau}(p_-)&=\delta-1+\tau\log\varphi<0,\\
f_{\delta,\tau}(1/2)&=\delta>0.
\end{align*}
There is therefore one central zero \(p_U\in(p_-,1/2)\), and it is
hyperbolically unstable. In the low phase \(\cD=1\), so \(f\) decreases
strictly from \(+\infty\) and has the unique zero
\begin{equation*}
p_L=\frac{1}{1+e^{(1-\delta)/\tau}}.
\end{equation*}
The phase inequality gives \(p_L<p_-\). In the high phase \(\cD=-1\), the
same calculation gives the unique sink \(p_H>p_+\) stated in the proposition.
The signs are
\begin{equation*}
+\quad p_L\quad-\quad p_U\quad+\quad p_H\quad-,
\end{equation*}
which proves the basin statement.

Excluding entropy, the environmental return is
\(J^\star(p)=b_S+p\delta-L^\star(p)-2\sigma^2\). Put
\(x=p_L\) and \(y=1-p_H\). The explicit logits give
\(0<y<x<1/2\), so
\begin{equation*}
J^\star(p_H)-J^\star(p_L)
=(x-y)+\delta(1-x-y)>0.
\end{equation*}
The low sink lies in the exact \(1\)-\(2\) collision phase.

If \(\tau\ge5/2\), \cref{eq:entropy-curvature} is nonpositive throughout
the central phase, while the two outer derivatives are strictly negative.
The continuous function \(f\) is therefore strictly decreasing from
\(+\infty\) to \(-\infty\), giving one interior attractor. Finally,
\(p(1-p)\log((1-p)/p)\to0\) at both endpoints, so the continuously extended
field fixes them.
\end{proof}

\subsection{Orthogonality Regularization}
\label{app:orthogonality}

An orthogonality penalty changes the frame weights and reduces the
largest branch-loss difference available to sustain a bad basin. The exact
threshold depends on that difference, evaluated under the unregularized
environmental reward.

\begin{proposition}[Exact Orthogonality Threshold]
\label{prop:gram-threshold}
Suppose the representation minimizes
\(L_p(V)+\lambda(g_{12}+g_{13}+g_{23})\), \(\lambda\ge0\), while the actor
uses the unregularized branch gap. Define
\begin{equation*}
s_\lambda=
\begin{cases}
1,&0\le\lambda\le1,\\
(3\lambda+1)/(4\lambda^2),&\lambda>1.
\end{cases}
\end{equation*}
Every endpoint optimizer has deployed branch-distortion difference
\(D_E-D_S=+s_\lambda\) at zero and \(D_E-D_S=-s_\lambda\) at one. If
\(0<\delta<s_\lambda\), the flow has two stable
endpoints and one unstable separator, and the low endpoint is worse by
\(\delta\). For fixed \(0<\delta<1\), let
\begin{equation*}
\lambda_{\rm crit}(\delta)
=\frac{3+\sqrt{9+16\delta}}{8\delta}.
\end{equation*}
The low endpoint has an open basin exactly when
\(\lambda<\lambda_{\rm crit}\). At equality it is nonhyperbolic and repelling
from the interior, while every positive initialization converges to one. Above
the threshold the actor gap has the uniform margin \(\delta-s_\lambda>0\).
\end{proposition}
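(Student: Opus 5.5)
The plan is to reduce the regularized fitting problem to \cref{lem:weighted-frame}. Adding $\lambda(g_{12}+g_{13}+g_{23})$ to $L_p$ gives a weighted three-line frame with weights
\[
A=p+\lambda,\qquad B=1+\lambda,\qquad C=1-p+\lambda .
\]
For $\lambda>0$ all three weights are positive on all of $[0,1]$, so the lemma applies at every $p$, including the endpoints. For $\lambda=0$ we fall back on \cref{thm:frame}.

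\textbf{Endpoint overlaps.} At $p=0$ the weights are $A=\lambda$ and $B=C=1+\lambda$.
\begin{itemize}
\item The $1$-$2$ collision condition $A\le BC/(B+C)$ reads $\lambda\le(1+\lambda)/2$, that is, $\lambda\le1$.
\item The other two cyclic conditions would require $1+\lambda\le\lambda(1+\lambda)/(1+2\lambda)$, which is impossible.
\end{itemize}
Hence for $\lambda\le1$ every optimizer has $(g_{12},g_{13},g_{23})=(1,0,0)$, so $D_E-D_S=g_{12}-g_{23}=1$.

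For $\lambda>1$, the central formulas \cref{eq:weighted-g12,eq:weighted-g23} with $B=C$ give $g_{12}=(1+\lambda)^2/(4\lambda^2)$ and $g_{23}=\tfrac12-(1+\lambda)/(4\lambda)$. Their difference simplifies to $(3\lambda+1)/(4\lambda^2)=s_\lambda$. The lemma also says these squared Gram entries are shared by every optimizer, so the value holds uniformly over selections.

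The symmetry exchanging features $1$ and $3$ and replacing $p$ by $1-p$ maps the problem to itself. It gives $-s_\lambda$ at $p=1$. It also shows that the unregularized distortion $D_S$ at the $p=0$ optimum equals $D_E$ at the $p=1$ optimum, so the endpoint return difference is exactly $\delta$.

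\textbf{Actor field.} Let $\cD_\lambda(p)$ denote the deployed gap $g_{12}-g_{23}$ at the regularized optimum, so the actor follows $\dot p=p(1-p)[\delta-\cD_\lambda(p)]$.
\begin{itemize}
\item The regularized minimum value is a minimum of functions affine in $p$, hence concave. By the envelope argument already used for $L^{\star\prime}=\cD$, its derivative is $\cD_\lambda$, so $\cD_\lambda$ is nonincreasing.
\item Strict decrease holds wherever the optimum lies in the central phase. There the Hessian is positive definite after removing the common rotation, and differentiating the first-order condition gives $\cD_\lambda'=-g^\top H^{-1}g$, as in the proof of \cref{prop:finite-rate}. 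I would argue $g\neq0$ from the explicit central formulas, where the gap depends nontrivially on $p$.
\item In collision phases $\cD_\lambda$ is constant at $\pm s_\lambda$.
\end{itemize}
Consequently, for $0<\delta<s_\lambda$ the gap $\delta-\cD_\lambda$ changes sign exactly once. The argument of \cref{thm:trap} then yields two stable endpoints, one unstable separator, and the low endpoint worse by $\delta$.

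\textbf{Threshold.} The map $s_\lambda$ is continuous, equals $1$ on $[0,1]$, and is strictly decreasing for $\lambda>1$. For fixed $0<\delta<1$, solving $s_\lambda=\delta$, i.e.\ $4\delta\lambda^2-3\lambda-1=0$, gives the positive root $\lambda_{\rm crit}(\delta)$, which exceeds $1$.
\begin{itemize}
\item For $\lambda<\lambda_{\rm crit}$ we have $s_\lambda>\delta$, so the low endpoint has linearization $\delta-s_\lambda<0$ and an open basin.
\item At $\lambda=\lambda_{\rm crit}$ the linearization vanishes, so the endpoint is nonhyperbolic. Since $\lambda>1$, the optimum at $p=0$ is central, so $\cD_\lambda$ is strictly decreasing to the right of $0$. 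Thus $\delta-\cD_\lambda(p)>0$ for every $p>0$, the endpoint repels interior points, and every positive initialization rises monotonically to $1$.
\item For $\lambda>\lambda_{\rm crit}$, the bound $\cD_\lambda(p)\le\cD_\lambda(0)=s_\lambda$ gives the uniform margin $\delta-s_\lambda>0$.
\end{itemize}

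\textbf{Main obstacle.} I expect the hardest step to be verifying strict monotonicity of $\cD_\lambda$ across every $\lambda$. This needs a phase check of which collision conditions hold as $p$ varies; the borderline $p$ values where the optimum leaves a collision phase are handled by continuity, as in \cref{thm:frame}. It also needs nondegeneracy $g\neq0$ throughout the central phase. I would first try to get both directly from the explicit weighted formulas before falling back on the Hessian argument.
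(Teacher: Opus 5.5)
Your proposal is correct, and its skeleton matches the paper's proof. Both use the same inputs:
\begin{itemize}
\item the shifted weights $A=p+\lambda$, $B=1+\lambda$, $C=1-p+\lambda$ in \cref{lem:weighted-frame};
\item the endpoint test $\lambda\le(1+\lambda)/2$ and the central overlaps for $\lambda>1$;
\item feature exchange, for the sign at $p=1$ and for the return difference $\delta$;
\item the quadratic $4\delta\lambda^2-3\lambda-1=0$, which yields $\lambda_{\rm crit}$.
\end{itemize}

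Where you genuinely differ is how you show $\cD_\lambda$ is monotone. The paper first does an explicit phase check. The $1$-$3$ collision is impossible because $AC/(A+C)\le(A+C)/4<B$. For $\lambda>1$, the $1$-$2$ condition already fails at $p=0$, and its two sides move apart as $p$ grows, so all of $[0,1]$ is central. It then differentiates the central value formula. With $S=1+2\lambda$ and $Q=(p+\lambda)(1-p+\lambda)$,
\[
\cD_\lambda(p)=\frac{1-2p}{4}\Bigl(\frac{BS^2}{Q^2}-\frac1B\Bigr),\qquad
\cD_\lambda'(p)=-\frac12\Bigl(\frac{BS^2}{Q^2}-\frac1B\Bigr)-\frac{BS^2(1-2p)^2}{2Q^3}<0 .
\]
The derivative is negative because $Q\le S^2/4<BS$.

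Your route is more structural. The penalty does not depend on $p$, so the regularized envelope is still concave, with slope $g_{12}-g_{23}$. This gives weak monotonicity with no computation. It also rules out re-entering the $1$-$2$ or $2$-$3$ collision when $\lambda>1$: those phases would force $\cD_\lambda=\pm1$, outside $[-s_\lambda,s_\lambda]$.

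What concavity does not give you is strictness. Your justification of $g\ne0$, that the gap ``depends nontrivially on $p$,'' does not imply a nonvanishing derivative at every point. You do not need $g\neq 0$ pointwise, and there are two easy ways to close the step.
\begin{itemize}
\item By \cref{eq:weighted-g12,eq:weighted-g23}, $\cD_\lambda$ is a rational function of $p$ on the central interval. It is nonconstant: it equals $0$ at $p=1/2$ by symmetry and tends to $s_\lambda>0$ at the left end of that interval. A nonconstant rational function is not constant on any subinterval, so a nonincreasing one is strictly decreasing. That is all the unique crossing and the strict inequality $\cD_\lambda(p)<\delta$ at $\lambda=\lambda_{\rm crit}$ require.
\item Alternatively, quote the explicit derivative above. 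This also shows the separator is hyperbolic.
\end{itemize}

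Finally, your claim that collision plateaus sit at $\pm s_\lambda$ tacitly excludes the $1$-$3$ collision, which would give a plateau at $0$. The one-line bound $AC/(A+C)\le(A+C)/4<B$ supplies this exclusion. Such a plateau would not affect the crossing for $\delta>0$ in any case.
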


\begin{figure}[H]
  \centering
  \includegraphics[width=0.85\textwidth]{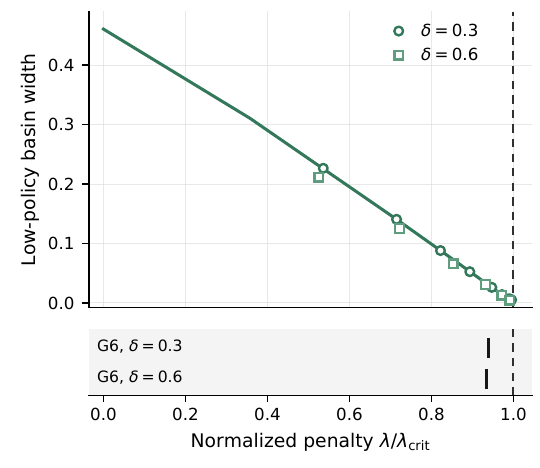}
  \caption{Estimated low-policy basin width under orthogonality regularization
at \(\kappa=80\), with penalty normalized by the analytical
\(\lambda_{\rm crit}(\delta)\). The curve is the post-hoc
\(\delta=0.3\) diagnostic; open markers are follow-up estimates for
\(\delta=0.3,0.6\). The separate lower panel marks the registered G6
escape thresholds from one fixed initialization. G6 failed its registered
criterion; these penalty estimates do not measure basin width. The dashed
line marks the theoretical basin-removal threshold.}
  \label{fig:e2-ortho-appendix}
\end{figure}

\begin{proof}[Proof of \cref{prop:gram-threshold}]
The regularized frame weights are
\begin{equation*}
A=p+\lambda,\qquad B=1+\lambda,\qquad C=1-p+\lambda.
\end{equation*}
When \(\lambda=0\) and \(p=0\), the objective is
\(g_{13}+g_{23}\). Its minimum zero forces both \(v_1\) and \(v_2\) to be
orthogonal to \(v_3\), hence
\((g_{12},g_{13},g_{23})=(1,0,0)\). For \(\lambda>0\), all weights are
positive. At \(p=0\), the \(1\)-\(2\) collision condition from
\cref{lem:weighted-frame} is
\begin{equation*}
\lambda\le\frac{1+\lambda}{2},
\end{equation*}
which is equivalent to \(\lambda\le1\). Together with the direct boundary
argument, every endpoint optimizer therefore has
\((g_{12},g_{13},g_{23})=(1,0,0)\) for \(0\le\lambda\le1\). For
\(\lambda>1\), the
central weighted-frame formulas give
\begin{equation*}
g_{12}=\frac{(1+\lambda)^2}{4\lambda^2},\qquad
g_{13}=g_{23}=\frac{\lambda-1}{4\lambda},
\end{equation*}
and hence \(g_{12}-g_{23}=s_\lambda\). Feature exchange gives
\(-s_\lambda\) at \(p=1\). The squared Gram geometry is unique in every
phase, so these statements hold for every optimizer.

The coefficient-\(B\) collision is impossible because
\begin{equation*}
\frac{AC}{A+C}\le\frac{A+C}{4}<B.
\end{equation*}
For \(0\le\lambda\le1\), applying the lemma on \(p\in(0,1)\), then using the
direct endpoint argument and feature exchange, gives symmetric outer intervals,
possibly reduced to endpoints at \(\lambda=1\), on which the deployed branch-distortion difference
\(\cD_\lambda=g_{12}-g_{23}\) equals \(+1\) and \(-1\). For
\(\lambda>1\), the \(A\)-collision condition already fails at zero and its
left side increases while its right side decreases with \(p\); the
\(C\)-condition is symmetric. Hence the whole interval is central. Put
\begin{equation*}
S=1+2\lambda,\qquad Q=(p+\lambda)(1-p+\lambda).
\end{equation*}
Envelope differentiation of \cref{eq:weighted-value} gives, in every central
phase,
\begin{align*}
\cD_\lambda(p)
&=\frac{1-2p}{4}\left(\frac{BS^2}{Q^2}-\frac1B\right),\\
\cD_\lambda'(p)
&=-\frac12\left(\frac{BS^2}{Q^2}-\frac1B\right)
-\frac{BS^2(1-2p)^2}{2Q^3}<0.
\end{align*}
The first bracket is positive because \(Q\le S^2/4<BS\). Therefore
\(\cD_\lambda\) decreases from \(+s_\lambda\) to \(-s_\lambda\), strictly
outside any collision plateau. If \(0<\delta<s_\lambda\), it crosses
\(\delta\) exactly once, giving the two sinks and one source.

The environmental return excludes the orthogonality penalty, so its endpoint
comparison uses the branch distortions themselves. Feature-exchange
symmetry gives
\begin{equation*}
D_E(V_\lambda^\star(1))=D_S(V_\lambda^\star(0)),
\end{equation*}
so the high-minus-low environmental return is exactly \(\delta\).

For \(\lambda>1\), \(s_\lambda=(3\lambda+1)/(4\lambda^2)\) decreases
strictly. Solving \(s_\lambda=\delta\) gives the stated
\(\lambda_{\rm crit}>1\). Below \(\lambda_{\rm crit}\), \(\delta<s_\lambda\) gives an open low basin. Above \(\lambda_{\rm crit}\),
\begin{equation*}
\delta-\cD_\lambda(p)\ge\delta-s_\lambda>0,
\end{equation*}
so every interior trajectory converges to one. At equality,
\(\cD_\lambda(0)=\delta\) but strict central decrease gives
\(\cD_\lambda(p)<\delta\) for every \(p>0\). Thus the exact endpoint remains
fixed, is nonhyperbolic, and repels every interior initialization.
\end{proof}

The replicator prefactor \(p(1-p)\) keeps the exact endpoints fixed after
these interventions. When orthogonal capacity, routing, or sufficient replay
removes the open bad basin, the low endpoint loses relative asymptotic
stability. An actor initialized exactly at zero still needs mutation or a
positive reinitialization to leave that endpoint. A visitation floor instead
excludes zero probability of \(E\) from the policy class.

\section{Untied Linear Controller}
\label{app:untied}

The untied controller has independently optimized output weights $w_i$, with
\(\widehat Z_i=w_i^\top h\) for each feature. The encoder retains the unit-column
constraint, and the pair laws remain those of \cref{sec:exact}.
Write \(D_B(V,W)\) for the branch distortion under these encoder and controller
matrices. The population loss is
\((1-p)D_S(V,W)+pD_E(V,W)\).

Each output weight vector can be optimized separately. Write
\(\omega_{ij}=\Pr(G=\{i,j\})\) for the pair probabilities of
\cref{sec:task-construction}. Let \(\eta=\sigma^2>0\),
let \(r_i=\sum_{j\ne i}\omega_{ij}=\Pr(i\in G)\), and let
\(\omega_{j\mid i}=\omega_{ij}/r_i\) be the conditional probability that feature
\(j\) accompanies active feature \(i\). The branch laws give
\(r_i\in[1/2,1]\), so these conditional weights are defined at every
\(p\). For fixed \(V\), define
\begin{equation*}
M_i=\eta I+v_iv_i^\top+\sum_{j\ne i}\omega_{j\mid i}v_jv_j^\top.
\end{equation*}
The matrix \(M_i\) is the second-moment matrix of the noisy representation \(h\) conditional
on feature \(i\) being active. Its positive noise term gives the unique
controller optimum \(w_i^\star=M_i^{-1}v_i\) and the reduced loss
\begin{equation}
F_{\eta,p}(V)=\sum_i r_i\bigl(1-v_i^\top M_i^{-1}v_i\bigr).
\label{eq:untied-objective}
\end{equation}

To obtain this optimum, expand the contribution of feature \(i\) to the
population loss:
\begin{align*}
\mathcal L_i
&=r_i(w_i^\top v_i-1)^2
+\sum_{j\ne i}\omega_{ij}(w_i^\top v_j)^2
+r_i\eta\|w_i\|^2\\
&=r_i\left[1-2w_i^\top v_i+w_i^\top M_iw_i\right].
\end{align*}
The quadratic is strictly convex in \(w_i\), and its first-order condition
is \(M_iw_i=v_i\). Substitution gives \cref{eq:untied-objective}.
For \(\{i,j,k\}=\{1,2,3\}\), direct inversion in two dimensions gives a
formula using only the squared Gram entries:
\begin{equation*}
F_{\eta,p}(V)=\sum_i r_i
\frac{
\eta(\eta+1)+\omega_{j\mid i}\omega_{k\mid i}(1-g_{jk})
}{
\eta(\eta+2)+\omega_{j\mid i}\omega_{k\mid i}(1-g_{jk})
+\omega_{j\mid i}(1-g_{ij})+\omega_{k\mid i}(1-g_{ik})
}.
\end{equation*}

\subsection{Endpoint Optima at Any Positive Noise Level}
\label{app:untied-endpoints}

At a pure policy, both supported pairs can be represented orthogonally.
This geometry attains the separate lower bound for every controller output and
therefore determines all global optima, even though the controller directions
are unconstrained.

\begin{lemma}[Untied Endpoint Optima]
\label{prop:untied-endpoint}
For every \(\eta>0\), all global minimizers at \(p=0\) satisfy
\(v_1=\pm v_2\perp v_3\), \(w_i^\star=v_i/(1+\eta)\), and
\(D_E-D_S=(1+\eta)^{-2}\). At \(p=1\), the slope has the opposite sign.
Thus the low code reverses the orthogonal-capacity gap whenever
\(0<\delta<(1+\sigma^2)^{-2}\).
\end{lemma}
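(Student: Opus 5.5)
At $p=0$ the per-feature decomposition preceding \cref{eq:untied-objective} gives a lower bound that holds for every encoder, and a code attaining it; the plan is to characterize all minimizers through the equality cases of that bound. For each active feature $i$ (with $r_i>0$), and for any $w_i$,
\[
\mathcal L_i\ge r_i\bigl[(w_i^\top v_i-1)^2+\eta\|w_i\|^2\bigr]
\ge r_i\,\frac{\eta\|v_i\|^2}{\eta+\|v_i\|^2}\cdot\frac{1}{\|v_i\|^2}
= r_i\frac{\eta}{1+\eta}.
\]
The first inequality drops the nonnegative cross terms $\omega_{ij}(w_i^\top v_j)^2$. The second is the one-dimensional ridge minimum, attained only at $w_i=v_i/(1+\eta)$. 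Both use $\|v_i\|=1$.

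\textbf{Attainment.} Take the code $v_1=v_2\perp v_3$. Under $S$ the supported pairs are $\{1,3\}$ and $\{2,3\}$, and the choice $w_i=v_i/(1+\eta)$ has $w_1^\top v_3=w_2^\top v_3=w_3^\top v_1=w_3^\top v_2=0$. So every cross term vanishes and the bound $\sum_i r_i\eta/(1+\eta)$ is attained.

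\textbf{Characterization of minimizers.} Any global minimizer $(V,W)$ must achieve equality in both inequalities for each $i\in\{1,2,3\}$, since all have positive $S$-frequency ($r_1=r_2=1/2$, $r_3=1$).
\begin{itemize}
\item Equality in the ridge step forces $w_i=v_i/(1+\eta)$.
\item Equality in the first step then forces $(v_i^\top v_j)^2=0$ for every $S$-supported pair, namely $g_{13}=g_{23}=0$.
\end{itemize}
In $\R^2$ this gives $v_1=\pm v_2\perp v_3$, exactly as in \cref{thm:frame}.

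\textbf{Deployed gap.} Evaluate $D_E-D_S$ at any such optimizer. The self-response and noise terms contribute equally on both branches, since each branch activates two features and each feature's self term is $(1+\eta)^{-2}\cdot[\eta^2+\eta]$-type constants. I would simply compute the per-active-feature baseline $\eta/(1+\eta)$ and note that both branches activate exactly two features.

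The cross terms differ between the branches.
\begin{itemize}
\item On $S$ they vanish.
\item On $E$, the pair $\{1,3\}$ contributes zero, while the pair $\{1,2\}$, which has probability $1/2$, contributes $(w_1^\top v_2)^2+(w_2^\top v_1)^2=2/(1+\eta)^2$.
\end{itemize}
Hence $D_E-D_S=\tfrac12\cdot 2(1+\eta)^{-2}=s_\eta$.

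\textbf{The case $p=1$.} Exchange features $1$ and $3$, which maps the $E$ law to the $S$ law. This gives $v_2=\pm v_3\perp v_1$ and the opposite sign of the gap. The reversal claim then follows because $q_E-q_S=\delta-s_\eta<0$, while $q^\perp$ has gap $\delta$.

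The main point to handle carefully is that the lower bound must be valid for arbitrary encoders. A priori the optimal controller could exploit a nonorthogonal code to beat the ridge bound, for instance through the matrix $M_i$ coupling. Dropping the cross terms sidesteps this, since the bound $r_i\eta/(1+\eta)$ holds for any unit $v_i$ regardless of the other columns. The only care needed is in the equality analysis: checking that every feature has positive weight at $p=0$, so that equality is forced for all three features and all supported pairs.
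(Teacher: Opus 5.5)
Your proof is correct and is essentially the paper's argument. The paper eliminates the controller and lower-bounds each per-feature loss via $M_i\succeq\eta I+v_iv_i^\top$ and inverse monotonicity, which is your step of discarding the nonnegative cross terms before minimizing over $w_i$; it then uses the same equality analysis at $p=0$, the same pair-by-pair gap computation, and the same $1\leftrightarrow3$ feature exchange for $p=1$. The one difference is that working on the joint objective lets you read off $w_i=v_i/(1+\eta)$ directly from the ridge equality case, whereas the paper obtains it from $M_i=(1+\eta)I$ after fixing the geometry.
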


\begin{proof}[Proof of \cref{prop:untied-endpoint}]
Since
\begin{equation*}
M_i\succeq\eta I+v_iv_i^\top,
\end{equation*}
inverse monotonicity gives
\begin{equation*}
v_i^\top M_i^{-1}v_i
\le v_i^\top(\eta I+v_iv_i^\top)^{-1}v_i
=\frac1{1+\eta}.
\end{equation*}
Each normalized per-feature loss is therefore at least
\(\eta/(1+\eta)\). Equality holds exactly when the additional
positive-semidefinite term
\(\sum_{j\ne i}\omega_{j\mid i}v_jv_j^\top\) annihilates
\((\eta I+v_iv_i^\top)^{-1}v_i=v_i/(1+\eta)\).
Thus every positively weighted interferer must be orthogonal to \(v_i\).

At \(p=0\), features 1 and 2 each co-activate with feature 3. Equality for all rows therefore forces \(v_1,v_2\perp v_3\), and in \(\R^2\) this forces \(v_1=\pm v_2\). Since all \(r_i\) are positive, every global optimizer attains each bound and therefore has this geometry. The conditional interferer weights sum to one,
so \(M_i=(1+\eta)I\) for each feature and
\(w_i=v_i/(1+\eta)\).

The two pairs in branch \(S\) are orthogonal and have total distortion
\begin{equation*}
D_S=\frac{2\eta}{1+\eta}.
\end{equation*}
The colliding pair \(\{1,2\}\) has distortion
\begin{equation*}
\frac{2(\eta^2+\eta+1)}{(1+\eta)^2},
\end{equation*}
which exceeds that of an orthogonal pair by \(2/(1+\eta)^2\). Branch \(E\) averages
one colliding and one orthogonal pair, proving
\(D_E-D_S=(1+\eta)^{-2}\). The exchange of features 1 and 3 swaps the two branches and gives the
opposite difference at \(p=1\).
\end{proof}

The endpoint calculation extends to a neighborhood of each pure policy
because the optimum set is compact after optimization of the controller.
The same endpoint margins give local stability at every fixed positive noise level without a
phase diagram for all \(p\).
The stability neighborhoods need not be uniform in \(\eta\).

\begin{proof}[Proof of \cref{cor:untied-local}]
For fixed \(\eta>0\), the controller optimum obeys
\(\|w_i^\star\|\le\eta^{-1}\), so restricting each output weight vector to that
compact ball leaves every global optimum unchanged. By
\cref{prop:untied-endpoint}, every optimizer has endpoint slopes
\(\Delta_{D,0}=s_\eta\) and \(\Delta_{D,1}=-s_\eta\). Applying
\cref{prop:selection-stability} gives the stability claim. Feature-exchange
symmetry gives
\begin{equation*}
F_\eta^\star(0)=F_\eta^\star(1)=\frac{2\eta}{1+\eta},
\end{equation*}
where \(F_\eta^\star(p)=\min_VF_{\eta,p}(V)\). The high-minus-low
endpoint return is therefore \(\delta\).
\end{proof}

\subsection{A Global Phase Diagram at Sufficiently Large Noise}
\label{app:untied-global}

At large noise, the first encoder-dependent term of the optimized untied
loss is the tied weighted-frame objective. A uniform expansion transfers
its unique central separator to the untied model. Concavity of the full
population optimum then controls the actor signs outside this central
interval.

\begin{proposition}[Untied-controller Global Phase Portrait]
\label{thm:untied-global}
For some finite \(\eta_0\), every \(\eta\ge\eta_0\) and
\(0<\delta<(8\eta^2)^{-1}\) yield two stable endpoints and one unstable
separator under the globally optimized untied model. The low endpoint is
worse by \(\delta\), uniformly over global-minimizer selection.
\end{proposition}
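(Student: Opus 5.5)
The argument has four steps: a large-noise expansion of the optimized untied loss, a transfer of the tied separator to the untied model, concavity control outside the central interval, and an endpoint comparison.

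\textbf{Step 1: expansion.} Write \(\epsilon=1/\eta\) and expand the reduced loss \cref{eq:untied-objective} in powers of \(\epsilon\). Since \(M_i=\eta I+N_i\) with \(N_i=v_iv_i^\top+\sum_{j\ne i}\omega_{j\mid i}v_jv_j^\top\), a Neumann series gives
\[
v_i^\top M_i^{-1}v_i=\epsilon-\epsilon^2 v_i^\top N_iv_i+\epsilon^3 v_i^\top N_i^2v_i-\cdots.
\]
Multiplying by \(r_i\) and summing over \(i\), the \(\epsilon^2\) term is \(\sum_i r_i\bigl(1+\sum_j\omega_{j\mid i}g_{ij}\bigr)\), which equals a constant plus \(2\sum_{i<j}\omega_{ij}g_{ij}\). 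This is \(2L_p(V)\), with \(\omega_{ij}\) the pair probabilities of \cref{sec:exact}. Hence
\[
F_{\eta,p}(V)=c(p,\eta)+2\eta^{-2}L_p(V)+\eta^{-3}\mathcal R_{\eta,p}(V),
\]
where \(\mathcal R\) is bounded in \(C^2\), uniformly in \(p\in[0,1]\) and \(V\). The loss in \(V\) is smooth, so these bounds follow from the explicit two-dimensional formula for \(F_{\eta,p}\).

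\textbf{Step 2: central separator.} By envelope arguments, every optimizer selection produces a deployed gap
\[
D_E-D_S=\partial_p F^\star_\eta(p)=2\eta^{-2}\cD(p)+O(\eta^{-3}),
\]
after subtracting \(\partial_pc(p,\eta)\). I would verify that this constant derivative vanishes, by the symmetry \(r_i\) and the self-response structure; if it does not, it must be carried along. On the central interval \([p_-+\kappa,\,p_+-\kappa]\) the tied optimizer is unique modulo gauge and has positive-definite Hessian, as in \cref{prop:finite-rate}. The implicit function theorem therefore gives unique untied optimizers \(\eta^{-1}\)-close to the tied ones, and a gap whose \(p\)-derivative is \(2\eta^{-2}\cD'(p)+O(\eta^{-3})<0\). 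Since \(\delta<(8\eta^2)^{-1}\), the rescaled threshold \(\eta^2\delta/2<1/16\) lies well inside \((-1,1)\). So the gap crosses \(\delta\) exactly once in this interval, at a point near \(p_{\eta^2\delta/2}\), and the crossing is transversal. This crossing is the unstable separator.

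\textbf{Step 3: outer intervals.} Near the phase transitions and on the plateaus, uniqueness of optimizers can fail. Here I use that \(F^\star_\eta(p)\) is a minimum of functions affine in \(p\), so it is concave, and the deployed gap is a supergradient of it. Monotonicity of supergradients then gives the following. On \([0,p_-+\kappa]\) every selected gap is at least the gap at \(p_-+\kappa\), which is at least \(2\eta^{-2}\cD(p_-+\kappa)-O(\eta^{-3})\). Taking \(\kappa\) small and \(\eta_0\) large, this exceeds \(\eta^{-2}\), which exceeds \(\delta\). The symmetric statement holds near \(1\). Thus the actor sign is negative below the separator and positive above it, uniformly over selections.

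\textbf{Step 4: endpoints.} Endpoint stability follows from \cref{prop:untied-endpoint} together with \cref{cor:untied-local}, because \(\delta<s_\eta\). The return deficit \(\delta\) comes from \(F^\star_\eta(0)=F^\star_\eta(1)\). The basin statement then follows from the same monotone scalar-flow argument as in the proof of \cref{thm:trap}.

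The main obstacle is Step 2: making the \(O(\eta^{-3})\) remainder uniform in \(C^2\) jointly in \(p\) and in the gauge-reduced angles, and confirming that no \(p\)-dependent constant term contaminates the gap.
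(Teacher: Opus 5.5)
Your plan is essentially the paper's proof: a Neumann expansion to $F_{\eta,p}=2-2/\eta+(2+L_p)/\eta^2+O(\eta^{-3})$ in $C^2$, an implicit-function transfer of the tied optimum on a compact subinterval of the central phase, concavity of $F^\star_\eta$ to sign every selected gap outside that subinterval, and the exact endpoint slopes together with $F^\star_\eta(0)=F^\star_\eta(1)$ for stability and the deficit $\delta$; your open check also resolves, since the constant term is $p$-independent because $\sum_i r_i=2$.

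Two small repairs are needed. First, since $(\omega_{12},\omega_{13},\omega_{23})=(p/2,1/2,(1-p)/2)$, one has $2\sum_{i<j}\omega_{ij}g_{ij}=L_p$, not $2L_p$, so the gap is $\eta^{-2}\cD(p)+O(\eta^{-3})$. Your claim that the gap exceeds $\eta^{-2}$ at $p_-+\kappa$ is therefore false, although it still exceeds $(8\eta^2)^{-1}>\delta$ once $\cD(p_-+\kappa)>1/4$ and $\eta$ is large. Second, the implicit function theorem gives only a local branch, so you also need the compactness argument: $\eta^2(F_{\eta,p}-2+2\eta^{-1})-2\to L_p$ uniformly, and the tied optimum is unique. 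Together these place every global minimizer near that branch, which your uniqueness and differentiability claims rely on.
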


\begin{proof}[Proof of \cref{thm:untied-global}]
Write
\begin{equation*}
A_i=v_iv_i^\top+\sum_{j\ne i}\omega_{j\mid i}v_jv_j^\top.
\end{equation*}
Fix the common rotation by taking \(\beta_1=0\) and use the remaining line angles \((\beta_2,\beta_3)\). The denominators of the conditional
weights satisfy \(r_i\ge1/2\), so \(A_i\) and its derivatives in \((p,\beta_2,\beta_3)\) are uniformly bounded. For sufficiently large
\(\eta\), the inverse has the uniform Neumann expansion
\begin{equation*}
(\eta I+A_i)^{-1}
=\eta^{-1}I-\eta^{-2}A_i+\eta^{-3}A_i^2+O(\eta^{-4}),
\end{equation*}
including two derivatives in \((p,\beta_2,\beta_3)\). Since
\begin{equation*}
v_i^\top A_iv_i=1+\sum_{j\ne i}\omega_{j\mid i}g_{ij},
\end{equation*}
and \(\sum_i r_i=2\), summing the expansion gives
\begin{align*}
F_{\eta,p}(V)
&=2-\frac2\eta
+\frac1{\eta^2}\left[2+
\sum_i\sum_{j\ne i}\omega_{ij}g_{ij}\right]
+O(\eta^{-3})\\
&=2-\frac2\eta+\frac{2+L_p(V)}{\eta^2}+O(\eta^{-3}),
\end{align*}
with uniform \(C^2\) remainder.

Fix \(I=[0.4,0.6]\), a compact interval strictly inside the central phase.
Lemma~\ref{lem:weighted-frame} gives a unique minimizer for each \(p\in I\)
modulo common orthogonal transformations and column signs. The curvature in
the eliminated angle is positive. At the profiled optimum,
\(\phi''(t)>0\) and \(dt/dv\ne0\), so the remaining Schur complement is
also positive. The two-angle Hessian is therefore positive definite, with
its smallest eigenvalue bounded away from zero on \(I\).
At \(p=1/2\), the optimal squared Gram entries are
\begin{equation*}
(g_{12},g_{13},g_{23})=(3/8,1/16,3/8),
\end{equation*}
and the gauge-fixed Hessian is
\begin{equation*}
\begin{pmatrix}1/2&-1/4\\-1/4&2\end{pmatrix}\succ0.
\end{equation*}

After rescaling by \(\eta^2\) and subtracting the encoder-independent
terms, the untied objective converges to \(L_p\) in \(C^2\).
The uniform Hessian bound and the implicit function theorem give one nearby
smooth minimizer for each representative of the tied optimum. These
representatives differ only by symmetries of both objectives. Compactness
of the angle space and uniqueness of the limiting optimum give a uniform
positive value gap outside neighborhoods of that optimum. For sufficiently
large \(\eta\), every global minimizer must lie in those neighborhoods.
Thus some finite \(\eta_0\) makes every global minimizer on \(I\) share
one smooth squared Gram geometry satisfying
\begin{equation*}
g^\eta(p)=g^{\mathrm{tied}}(p)+O(\eta^{-1}).
\end{equation*}
Envelope differentiation of \(F_\eta^\star\) and the uniform expansion give
\begin{equation*}
s_\eta(p):=F_\eta^{\star\prime}(p)
=\eta^{-2}\cD(p)+O(\eta^{-3}),
\qquad
s_\eta'(p)=\eta^{-2}\cD'(p)+O(\eta^{-3}).
\end{equation*}
The error bounds are uniform on \(I\). Since \(\cD'\) is bounded strictly
below zero on \(I\), enlarge \(\eta_0\) so that \(s_\eta'<0\). Feature
exchange symmetry gives \(F_\eta^\star(p)=F_\eta^\star(1-p)\), hence
\(s_\eta(1/2)=0\). Moreover,
\begin{equation*}
\cD(0.4)=\frac{589}{720},
\end{equation*}
so \(\eta_0\) can also be chosen so that
\(s_\eta(0.4)>1/(4\eta^2)\).

For \(0<\delta<1/(8\eta^2)\), continuity and strict decrease give a unique
\(p_{\eta,\delta}\in(0.4,0.5)\) with \(s_\eta(p_{\eta,\delta})=\delta\). The rest of the interval for \(p\) is controlled by the affine dependence
of the full loss on \(p\) for fixed \((V,W)\). As above,
\(M_i\succeq\eta I\) gives \(\|w_i^\star\|\le\eta^{-1}\), so a compact
controller class contains every optimum. Lemma~\ref{lem:concavity} therefore
makes \(F_\eta^\star\) concave, and every optimized deployed branch-loss
difference is one of its supergradients. To the left of \(0.4\), each such
slope is at least \(s_\eta(0.4)>\delta\). To the right of \(0.6\), each
is at most \(s_\eta(0.6)<0<\delta\). On \(I\), strict decrease gives
the sole crossing already identified. The actor gap consequently has one
sign on each side of \(p_{\eta,\delta}\), for every optimizer selection. Finally,
\begin{equation*}
\frac{1}{8\eta^2}<\frac{1}{(1+\eta)^2}
\end{equation*}
for all sufficiently large \(\eta\). The exact endpoint slopes from
\cref{prop:untied-endpoint} therefore give inward drift near both endpoints.
The interior crossing is unstable because \(s_\eta'<0\). Endpoint distortions
are equal by symmetry, so the high endpoint returns \(\delta\) more than the low one.
Every interior initialization below the separator therefore converges to
zero, and every initialization above it converges to one, independently of
the selected global optimizer.
\end{proof}

\begin{figure}[t]
    \centering
    \input{figures/untied_phase_diagram}
    \caption{Numerical untied-controller branch-loss differences for
    \(\eta\in\{0.2,1,5,20\}\), divided by the exact endpoint magnitude
    \((1+\eta)^{-2}\). A deterministic angle search gives a monotone
    crossing of zero at all four noise levels. These finite-noise calculations
    do not give a value for the asymptotic threshold \(\eta_0\) in
    \cref{thm:untied-global}.}
    \label{fig:untied-numerical}
\end{figure}

At \(\eta=0\), the controller optimum can be expressed using
\(M_i^\dagger\), but the controller may cease to be unique and the compactness
argument above is unavailable. The fixed-positive-noise endpoint result and
the large-noise phase result therefore leave the noiseless phase diagram
unresolved.

\section{General Fast-adaptation Results}
\label{app:general}

Throughout the exact-response arguments, the parameter class is compact,
the per-action losses are continuous, and the fitting loss is affine in
the action distribution (\cref{sec:setting}). The finite-rate result
introduces additional smoothness assumptions.

\subsection{The Two-Action Envelope Identity}
\label{app:two-action-envelope}

For two actions, the slope of the optimized distortion determines the deployed
value gap. For actions \(S,E\), write \(x=(1-p,p)\), set
\(\delta:=b_E-b_S\), and define
\begin{align*}
\Delta_D(\theta)&:=D_E(\theta)-D_S(\theta),\\
R^\star(p)&:=\min_{\theta\in\Theta}
\bigl[D_S(\theta)+p\,\Delta_D(\theta)\bigr],\\
\cM(p)&:=\argmin_{\theta\in\Theta}
\bigl[D_S(\theta)+p\,\Delta_D(\theta)\bigr].
\end{align*}
Thus \(J^\star(p)=b_S+p\delta-R^\star(p)\), while
\(\cM(p)=\cM_0((1-p,p))\) is the full set of optimal parameter choices.
The next lemma identifies the envelope slopes even when that set contains
more than one optimizer.

\begin{lemma}[Two-Action Fast-Adaptation Geometry]
\label{lem:concavity}
The function \(R^\star\) is concave and continuous. Its right derivative for
\(0\le p<1\) and left derivative for \(0<p\le1\) satisfy
\begin{equation*}
R_+^{\star\prime}(p)=\min_{\theta\in\cM(p)}\Delta_D(\theta),
\qquad
R_-^{\star\prime}(p)=\max_{\theta\in\cM(p)}\Delta_D(\theta).
\end{equation*}
At every interior differentiability point, all
\(\theta\in\cM(p)\) have the same slope \(\Delta_D(\theta)\), and the exact
branch gap under the current parameters satisfies
\begin{equation}
J^{\star\prime}(p)=\delta-R^{\star\prime}(p)
=\delta-\bigl[D_E(\theta)-D_S(\theta)\bigr].
\label{eq:general-gap}
\end{equation}
\end{lemma}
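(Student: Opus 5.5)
This is a Danskin-type argument for a minimum of affine functions over a compact set. For each fixed \(\theta\), the map \(p\mapsto D_S(\theta)+p\,\Delta_D(\theta)\) is affine in \(p\). \(R^\star\) is the pointwise minimum of this family, so it is concave on \([0,1]\).

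\emph{Continuity.} \(D_S\) and \(D_E\) are continuous on the compact set \(\Theta\), so they are bounded, say \(|\Delta_D|\le C\). The affine functions are then uniformly \(C\)-Lipschitz in \(p\). Hence \(R^\star\) is Lipschitz on \([0,1]\), including at the endpoints where concavity alone would not give continuity. Compactness also makes \(\cM(p)\) nonempty and closed. The map \(p\mapsto\cM(p)\) is outer semicontinuous: if \(p_n\to p\), \(\theta_n\in\cM(p_n)\) and \(\theta_n\to\theta\), then \(\theta\in\cM(p)\) by joint continuity and continuity of \(R^\star\).

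\emph{Right derivative.} Fix \(0\le p<1\) and \(h>0\). For any \(\theta\in\cM(p)\), optimality at \(p\) and feasibility at \(p+h\) give the upper bound
\[
R^\star(p+h)-R^\star(p)\le h\,\Delta_D(\theta).
\]
So \(\limsup_h[R^\star(p+h)-R^\star(p)]/h\le\min_{\cM(p)}\Delta_D\), where the minimum is attained because \(\cM(p)\) is compact and \(\Delta_D\) is continuous. For the lower bound, pick \(\theta_h\in\cM(p+h)\). Since \(R^\star(p)\le D_S(\theta_h)+p\,\Delta_D(\theta_h)\), we get
\[
R^\star(p+h)-R^\star(p)\ge h\,\Delta_D(\theta_h).
\]
Take a sequence \(h_n\downarrow0\) achieving the liminf of the difference quotient, and pass to a convergent subsequence \(\theta_{h_n}\to\bar\theta\). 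Outer semicontinuity gives \(\bar\theta\in\cM(p)\), so the liminf is at least \(\Delta_D(\bar\theta)\ge\min_{\cM(p)}\Delta_D\). The left derivative at \(0<p\le1\) follows symmetrically: with \(h<0\) the inequalities flip and the maximum appears.

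\emph{Differentiability points.} For concave \(R^\star\), interior differentiability means \(R^{\star\prime}_+(p)=R^{\star\prime}_-(p)\). Then \(\min_{\cM(p)}\Delta_D=\max_{\cM(p)}\Delta_D\), so every optimizer has the same slope \(\Delta_D(\theta)=R^{\star\prime}(p)\). Differentiating \(J^\star(p)=b_S+p\delta-R^\star(p)\) gives \cref{eq:general-gap}.

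The main obstacle is the lower-bound half of the one-sided derivative. It needs the outer-semicontinuity/compactness step to pass limits of optimizers at \(p+h\) into \(\cM(p)\), since no uniqueness or continuous selection of minimizers is available.
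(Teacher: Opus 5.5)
Your proposal is correct and matches the paper's proof essentially step for step. Both arguments get concavity from the pointwise minimum of affine functions and Lipschitz continuity from the uniform bound on \(\Delta_D\). Both take the upper bound on the right difference quotient from an optimizer at \(p\), and the lower bound from optimizers at \(p+h\) along a liminf-realizing sequence, with compactness and continuity placing the limit in \(\cM(p)\). The left derivative and the common slope at interior differentiability points then follow in the same way.
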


\begin{proof}[Proof of \cref{lem:concavity}]
Each parameter choice contributes an affine function of \(p\),
\begin{equation*}
R_\theta(p)=D_S(\theta)+p\,\Delta_D(\theta).
\end{equation*}
The minimum of these functions is concave. Indeed, for
\(p_0,p_1\in[0,1]\) and \(\lambda\in[0,1]\),
\begin{align*}
R^\star(\lambda p_0+(1-\lambda)p_1)
&=\min_\theta
\bigl[\lambda R_\theta(p_0)+(1-\lambda)R_\theta(p_1)\bigr]\\
&\ge \lambda\min_\theta R_\theta(p_0)
+(1-\lambda)\min_\theta R_\theta(p_1).
\end{align*}
Compactness and continuity ensure attainment of the minimum. They also
bound \(\Delta_D\) uniformly, so the envelope is Lipschitz and hence continuous,
including at the endpoints.

Fix \(p<1\) and take \(h>0\) with \(p+h\le1\). An optimizer at \(p\)
provides an upper bound at the neighboring probability \(p+h\),
\begin{equation*}
R^\star(p+h)-R^\star(p)
\le h\,\Delta_D(\theta).
\end{equation*}
This bound holds for every \(\theta\in\cM(p)\), giving
\begin{equation*}
\limsup_{h\downarrow0}
\frac{R^\star(p+h)-R^\star(p)}{h}
\le \min_{\theta\in\cM(p)}\Delta_D(\theta).
\end{equation*}
For the reverse bound, choose a sequence \(h\downarrow0\) realizing the
lower limit of the difference quotients and choose
\(\theta_h\in\cM(p+h)\). Compactness gives a convergent subsequence with
limit \(\bar\theta\); continuity and optimality imply
\(\bar\theta\in\cM(p)\). The neighboring optimizer satisfies
\begin{equation*}
R^\star(p+h)-R^\star(p)
\ge h\,\Delta_D(\theta_h).
\end{equation*}
The subsequential limit is at least the smallest slope among optimizers at
\(p\), which proves the right-derivative formula. The same argument for a
negative increment reverses the slope ordering and gives the left derivative.
At an interior differentiability point, the two extrema therefore coincide,
so every optimizer supplies the same deployed gap in
\cref{eq:general-gap}.
\end{proof}

At interior differentiability points, the two-action exact-response
replicator consequently takes the form
\begin{equation}
\dot p=p(1-p)J^{\star\prime}(p).
\label{eq:general-flow}
\end{equation}
\Cref{thm:fast-envelope-ascent} gives the corresponding ascent identity
for any number of actions, including at nonsmooth optima.

\subsection{Exact Envelope Ascent and Its Scope}
\label{app:envelope-ascent}

We establish that adapted return is nondecreasing under exact fitting
and the unperturbed replicator actor, and explain why this does not
exclude convergence to a lower-return attractor. We also characterize
the location of attracting equilibria. The proof handles changes of
optimizer without assuming a differentiable optimal-value function.

Write $C(x)=\operatorname{diag}(x)-xx^\top$, so the replicator actor is
$\dot x=C(x)q(\theta)$. We also establish the pointwise directional bound
\begin{equation}
J^{\star\prime}(x;C(x)q(\theta))
\ge q(\theta)^\top C(x)q(\theta)
=\operatorname{Var}_x[q(\theta)],\qquad \theta\in\cM_0(x).
\label{eq:envelope-directional}
\end{equation}
The equality along trajectories is stronger than this bound at an
arbitrary point: it uses both time directions at almost every point of
an absolutely continuous solution.

\begin{theorem}[Exact Fast-response Envelope Ascent]
\label{thm:fast-envelope-ascent}
In this finite-action model, $J^\star$ is convex and Lipschitz.
Every absolutely continuous replicator trajectory with a measurable
$\theta(t)\in\cM_0(x(t))$ satisfies
\begin{equation}
\frac{d}{dt}J^\star(x(t))
=\operatorname{Var}_{x(t)}[q(\theta(t))]
=\sum_{a\in\operatorname{supp}x(t)}
\frac{\dot x_a(t)^2}{x_a(t)}\ge0
\quad\text{a.e.}
\label{eq:envelope-energy}
\end{equation}
Every simplex face is invariant. If an admissible exact-optimizer
selection generates a semiflow, every locally asymptotically stable
equilibrium relative to the full simplex is a vertex.
\end{theorem}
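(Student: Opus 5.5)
The plan is to work throughout with the support function viewpoint. Write $R^\star(x)=\min_\theta\sum_a x_aD_a(\theta)$. This is a minimum of functions that are linear in $x$, so it is concave. Each $D_a$ is bounded on the compact $\Theta$, so $R^\star$ is Lipschitz with constant $\max_{a,\theta}|D_a(\theta)|$ in the $\ell_1$ norm. Hence $J^\star(x)=x^\top b-R^\star(x)$ is convex and Lipschitz.

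For the directional bound \eqref{eq:envelope-directional}, take $\theta\in\cM_0(x)$ and any feasible direction $u$. Then $R^\star(x+hu)\le R_{x+hu}(\theta)=R^\star(x)+h\,u^\top D(\theta)$, which gives
\[
J^{\star\prime}(x;u)\ge u^\top(b-D(\theta))=u^\top q(\theta).
\]
Setting $u=C(x)q(\theta)$ yields $q^\top C(x)q=\operatorname{Var}_x[q]$. Equivalently, $q(\theta)$ is a subgradient of the convex function $J^\star$ restricted to the simplex at $x$.

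For the trajectory identity \eqref{eq:envelope-energy}, the composition $t\mapsto J^\star(x(t))$ is absolutely continuous, because $J^\star$ is Lipschitz and $x$ is absolutely continuous. I will work at a.e.\ time $t$ where both $x$ and $J^\star\circ x$ are differentiable and the ODE $\dot x=C(x)q(\theta(t))$ holds. Since $q(\theta(t))$ is a subgradient at $x(t)$, convexity gives
\[
J^\star(x(t+h))-J^\star(x(t))\ge q(\theta(t))^\top\bigl(x(t+h)-x(t)\bigr)
\]
for both signs of $h$. Dividing by $h>0$ and letting $h\downarrow0$ gives $\frac{d}{dt}J^\star\ge q^\top\dot x$. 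Dividing by $h<0$ reverses the inequality and gives $\le$. Hence $\frac{d}{dt}J^\star=q^\top\dot x=q^\top C(x)q=\operatorname{Var}_x[q]$. This two-sided step is what the text alludes to, and it is the main point to check carefully: it needs the subgradient inequality to hold globally on the simplex, which convexity supplies. Finally, $\operatorname{Var}_x[q]=\sum_{a\in\operatorname{supp}x}x_a(q_a-\bar q)^2$ and $\dot x_a=x_a(q_a-\bar q)$, so the variance equals $\sum_{a\in\operatorname{supp}x}\dot x_a^2/x_a$.

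Face invariance follows from $\dot x_a=x_a(\cdots)$ with bounded bracket. Grönwall gives $x_a(t)=x_a(0)\exp\int(\cdot)$, so zero coordinates stay zero and positive ones stay positive. For the last claim, suppose $x^\star$ is locally asymptotically stable relative to the simplex but not a vertex. Its support then has at least two actions, and its face $F$ has positive dimension. Because $x^\star$ is an equilibrium, $q_a=\bar q$ on the support, so $\operatorname{Var}_{x^\star}[q(\theta)]=0$ for the selection used. I would argue that $x^\star$ is not a strict local maximum of $J^\star$ on $F$: a convex function on a positive-dimensional face has no strict local maximum at a relative interior point, since along some line through $x^\star$ inside $F$ it is nonconstant or constant, and in either case nearby points have $J^\star\ge J^\star(x^\star)$.

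Now pick $y$ near $x^\star$ in the relative interior of $F$ with $J^\star(y)\ge J^\star(x^\star)$, and $y$ not an equilibrium if possible. Along the semiflow, $J^\star$ is nondecreasing and continuous, and convergence to $x^\star$ forces $J^\star(x(t))\to J^\star(x^\star)$. So $J^\star$ must be constant and equal to $J^\star(x^\star)$ along the trajectory from $y$. Then the variance vanishes a.e., so $\dot x=0$ and $y$ is stationary. This yields a continuum of equilibria arbitrarily close to $x^\star$, which contradicts attraction, since a nearby stationary $y\ne x^\star$ does not converge to $x^\star$.

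The obstacle here is guaranteeing a nearby $y\ne x^\star$ with $J^\star(y)\ge J^\star(x^\star)$. Convexity along a line $x^\star\pm s w$ in $F$ gives $\max\{J^\star(x^\star+sw),J^\star(x^\star-sw)\}\ge J^\star(x^\star)$, which suffices.
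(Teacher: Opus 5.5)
Your proof is correct. The first three parts match the paper's proof: convexity and the Lipschitz bound, the global subgradient inequality from $R^\star(z)\le z^\top D(\theta)$ with equality at $z=x$, the two-sided difference-quotient argument at a.e.\ times, and face invariance by integrating $\dot x_a=x_a(q_a-x^\top q)$.

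The vertex claim is where your route differs. The paper proceeds in three steps:
\begin{enumerate}
\item Attraction of \emph{every} nearby point of the support face, combined with monotonicity of $J^\star$, makes $\bar x$ a relative local maximum.
\item The midpoint inequality then makes $J^\star$ locally constant on that face.
\item Testing subgradients at nearby points along $\pm(e_i-e_j)$ forces every subgradient to be constant on the support. Hence $C(y)g=0$, and a whole relative neighborhood consists of equilibria under every optimizer selection.
\end{enumerate}
You apply the midpoint inequality first, to obtain one nearby $y\ne x^\star$ with $J^\star(y)\ge J^\star(x^\star)$. You then let the energy identity you already proved do the rest. Ascent from $J^\star(y)$ toward the limit $J^\star(x^\star)$ forces $J^\star$ to be constant along the trajectory from $y$. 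Hence $\sum_a\dot x_a^2/x_a=0$ a.e., so that trajectory stays at $y$ and cannot converge to $x^\star$.

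Your version is shorter and needs attraction of only a single nearby trajectory. The paper's version gives the stronger structural conclusion that a non-vertex candidate would sit on a flat plateau of equilibria, valid under every selection. That plateau picture is what connects to its flat-envelope example and its remark about the differential inclusion. Your closing sentence about a ``continuum of equilibria'' is not needed: the single point $y$ already yields the contradiction.
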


\begin{proof}[Proof of \cref{thm:fast-envelope-ascent}]
The payoff of an optimal representation is a subgradient of the adapted
return. To make this statement in the ambient space, extend \(J^\star\) to
\(\R^A\) by the same maximum over \(x^\top q(\theta)\). The set
\(q(\Theta)\) is compact and bounded, so the extension is finite, convex,
and Lipschitz. For \(x\in\Delta_A\), the subdifferential and directional
derivative of this maximum satisfy
\begin{align}
\partial J^\star(x)
&=\operatorname{co}\{q(\theta):\theta\in\cM_0(x)\},
\label{eq:envelope-subdifferential}\\
J^{\star\prime}(x;w)
&=\max_{\theta\in\cM_0(x)}q(\theta)^\top w.
\label{eq:envelope-directional-formula}
\end{align}
For any selected optimizer \(\theta\in\cM_0(x)\), its replicator
direction \(w=C(x)q(\theta)\) consequently obeys
\begin{equation*}
J^{\star\prime}(x;w)
\ge q(\theta)^\top C(x)q(\theta).
\end{equation*}
The quadratic form on the right is the variance of action returns under \(x\),
which proves \cref{eq:envelope-directional}.

The replicator mobility also determines the behavior at the boundary. For
any payoff vector \(g\),
\begin{equation*}
[C(x)g]_a=x_a(g_a-x^\top g).
\end{equation*}
Each coordinate therefore solves a scalar linear equation with bounded
coefficient along the trajectory. A coordinate starting at zero remains zero,
so every simplex face is invariant. For a positive coordinate,
\begin{equation*}
\frac{d}{dt}\log x_a=q_a(\theta)-x^\top q(\theta)
\end{equation*}
almost everywhere. Bounded payoffs also prevent that coordinate from reaching
zero at any finite time, so a trajectory retains its initial support.

Along an absolutely continuous trajectory, the scalar function
\(J^\star(x(t))\) is absolutely continuous because \(J^\star\) is
Lipschitz. At almost every time, both \(x(t)\) and \(J^\star(x(t))\) have
time derivatives and the selected dynamics hold. At such a time, fix
\(g=q(\theta)\in\partial J^\star(x)\). The subgradient inequality for
forward increments bounds the scalar derivative from below by
\(g^\top\dot x\); backward increments bound the same derivative from
above. Hence
\begin{equation*}
\frac{d}{dt}J^\star(x(t))=g^\top\dot x=g^\top C(x)g.
\end{equation*}
The coordinate formula then identifies the variance and squared-speed
expressions in \cref{eq:envelope-energy}. This argument requires time
differentiability of the scalar composition, without requiring ambient
differentiability of the envelope.

Convexity turns this ascent identity into a restriction on attracting points.
Suppose a locally asymptotically stable equilibrium \(\bar x\) has support
\(S\) with \(|S|\ge2\), under an exact-optimizer selection generating a
semiflow. Its support face \(F_S\) is invariant. Every sufficiently nearby
\(y\in F_S\) must converge to \(\bar x\), so monotonicity and continuity
give
\begin{equation*}
J^\star(y)\le J^\star(\bar x).
\end{equation*}
The point \(\bar x\) lies in the relative interior of its face and is a
relative local maximum of the convex envelope. For every sufficiently small
tangent displacement \(w\), local maximality bounds both
\(J^\star(\bar x+w)\) and \(J^\star(\bar x-w)\) by
\(J^\star(\bar x)\). The convex midpoint inequality forces equality in
both bounds. Thus the envelope is constant in a relative neighborhood of
\(\bar x\).

At any point \(y\) in a smaller such neighborhood, the subgradient
inequality applies to both signs of the feasible directions
\(e_i-e_j\), \(i,j\in S\). Local constancy forces every
\(g\in\partial J^\star(y)\) to have equal coordinates on \(S\), and
therefore \(C(y)g=0\). All of these nearby points are stationary under
every exact optimizer selection. They cannot converge to a distinct
\(\bar x\), contradicting local attraction.
\end{proof}

The semiflow qualification matters because a measurable optimizer selector
alone need not yield unique trajectories or continuous dependence on initial
conditions. The differential inclusion
\(\dot x\in C(x)\partial J^\star(x)\) supplies a formulation with a
nonempty compact convex set of velocities and upper semicontinuous dependence
on \(x\). The preceding proof also rules out strong local asymptotic
stability of a mixed point for this inclusion, where attraction must hold for
all nearby solutions. Any claim about a particular single-valued dynamics
still needs its optimizer selection specified.

\paragraph{A Strict Directional Inequality at an Optimizer Tie.}
An optimizer tie can make the selected payoff underestimate the envelope's
pointwise directional derivative. Take two available payoff vectors
\(q=(1,-1,0)\) and \(q'=2q\), and let \(x=(1/3,1/3,1/3)\). Both maximize
return at \(x\). The choice \(q\) produces \(v=C(x)q=q/3\), for which
\begin{equation*}
q^\top v=2/3,
\qquad
J^{\star\prime}(x;v)=\max\{q^\top v,q'^\top v\}=4/3.
\end{equation*}
The branch \(q'\) becomes strictly better immediately along the forward
direction, so \(q\) cannot remain the selected exact optimizer there. Such
a switching time permits a strict pointwise inequality while preserving the
almost-everywhere identity along an exact-optimizer trajectory.

\paragraph{Nonattracting Equilibria on a Flat Envelope.}
The theorem excludes asymptotically stable mixed points, but permits stable
points without attraction. For two actions with \(x=(p,1-p)\), let the
available payoff vectors be \(q^0=(0,0)\) and \(q^1=(1,-1)\). Their
envelope is
\begin{equation*}
J^\star(p)=\max\{0,2p-1\}.
\end{equation*}
At every \(p<1/2\), the unique active vector \(q^0\) gives zero velocity.
Each interior point in this interval is Lyapunov stable because a sufficiently
small neighborhood consists entirely of stationary points. These points fail
to attract neighboring initial conditions and have lower return than
\(p=1\). At the tie \(p=1/2\), the selection \(q^0\) also gives a
stationary trajectory. The distinction between stability and attraction is
therefore necessary in \cref{thm:fast-envelope-ascent}.

\paragraph{Nonlinear Return in Raw MDP Policy Probabilities.}
The affine-mixture assumption concerns the coordinates used in the theorem.
For comparison, consider a single nonterminal state where a stationary policy
terminates with reward one with probability \(p\) and otherwise returns to
the same state with reward zero. With discount \(0<\gamma<1\), its value satisfies
\begin{equation*}
V(p)=\frac{p}{1-\gamma+\gamma p},
\qquad
V''(p)=\frac{-2\gamma(1-\gamma)}{(1-\gamma+\gamma p)^3}<0.
\end{equation*}
The strictly negative second derivative rules out the required affine
structure in this raw policy probability. State--action occupancy coordinates may express
return linearly, but a change to those coordinates also changes the actor
field. A replicator in policy probabilities therefore requires a separate
calculation before the theorem can be applied in those coordinates.

\paragraph{Envelope Descent During Finite-Rate Adaptation.}
The exact-response hypothesis can also fail while the affine-mixture
structure remains intact. Let \(x=(p,1-p)\), \(\theta\in[-1,1]\), and
both base rewards be zero, with distortions
\begin{equation*}
D_1(\theta)=(\theta-1)^2,
\qquad
D_2(\theta)=(\theta+1)^2.
\end{equation*}
The optimal representation \(\theta^\star(p)=2p-1\) gives
\begin{equation*}
R^\star(p)=4p(1-p),
\qquad
J^\star(p)=-4p(1-p),
\qquad
q_1(\theta)-q_2(\theta)=4\theta.
\end{equation*}
Now let the representation follow gradient flow at rate \(\kappa>0\)
and drive the actor with its current payoff,
\begin{equation*}
\dot\theta=-2\kappa[\theta-(2p-1)],
\qquad
\dot p=4\theta p(1-p).
\end{equation*}
At \(p=3/4\), \(\theta=-1/2\), the representation favors the action
opposite to the one favored by the adapted-return slope,
\begin{equation*}
J^{\star\prime}(p)=2,
\qquad
\dot p=-3/8,
\qquad
\frac{d}{dt}J^\star(p)=-3/4<0.
\end{equation*}
This state has strictly decreasing optimized return because the current
representation has not reached its optimum. The local stability conclusions
of \cref{prop:general-finite-rate} remain available under its Hessian and
coupling assumptions; they do not require a global envelope ascent identity.

\subsection{Joint Deficit and Scarcity Necessity}
\label{app:joint-deficit}

Recall the three bounds in \cref{thm:joint-deficit}.
For any $\theta\in\cM_\varepsilon(x)$, the weighted excess satisfies
\begin{equation}
\sum_a x_a\psi_a(\theta)
\le \Psi(x)+\varepsilon\le\varepsilon_J+\varepsilon.
\label{eq:joint-budget}
\end{equation}
For actions with $x_ax_b>0$, this gives
\begin{equation}
\left|(q_a(\theta)-q_b(\theta))
-(\bar q_a-\bar q_b)\right|
\le\frac{\Psi(x)+\varepsilon}{\min\{x_a,x_b\}}.
\label{eq:joint-gap-bound}
\end{equation}
If $|\xi_i-\xi_j|\le\zeta$, $\bar q_i>\bar q_j$, and
$\widetilde q_i\le\widetilde q_j$, the reversal requires
\begin{equation}
\Psi(x)+\varepsilon
\ge x_i\bigl(\bar q_i-\bar q_j-\zeta\bigr)_+.
\label{eq:scarcity-necessary}
\end{equation}
In the three-feature model, $\Psi(p)=L^\star(p)$ and
$\varepsilon_J=7/16$, attained by an optimal code at $p=1/2$.

\begin{proof}[Proof of \cref{thm:joint-deficit}]
The quantity \(\Psi(x)\) measures the smallest weighted excess above the
actionwise minima. Let \(\theta_J\) attain the joint approximation radius
\(\varepsilon_J\), as guaranteed by compactness and continuity. The
nonnegative component excesses give
\begin{equation*}
0\le \Psi(x)
\le \sum_a x_a\psi_a(\theta_J)
\le\varepsilon_J.
\end{equation*}
For an approximate optimizer \(\theta\in\cM_\varepsilon(x)\), the
same baseline subtracted from its loss gives
\begin{equation*}
\sum_a x_a\psi_a(\theta)
=R_x(\theta)-\sum_a x_aD_a^\star
\le \Psi(x)+\varepsilon.
\end{equation*}
Together these inequalities prove \cref{eq:joint-budget}.

A pairwise ordering depends on the difference of two excesses. Assume
\(x_ax_b>0\). If \(\psi_a\ge \psi_b\), nonnegativity bounds that difference by
\begin{equation*}
\psi_a-\psi_b\le \psi_a
\le\frac{\Psi(x)+\varepsilon}{x_a}.
\end{equation*}
If \(\psi_b\ge \psi_a\), the corresponding bound uses \(x_b\). The payoff
error relative to the ideal ordering is exactly
\begin{equation*}
(q_a-q_b)-(\bar q_a-\bar q_b)=-(\psi_a-\psi_b).
\end{equation*}
Thus the larger excess alone controls the error, giving
\cref{eq:joint-gap-bound} without an additional factor of two.

A reversal gives a directional bound using the probability of choosing the better
ideal action. If \(\bar q_i>\bar q_j\),
\(|\xi_i-\xi_j|\le\zeta\), and \(\widetilde q_i\le\widetilde q_j\), then
\begin{equation*}
\psi_i-\psi_j \ge \bar q_i-\bar q_j+\xi_i-\xi_j
\ge \bar q_i-\bar q_j-\zeta. \end{equation*}
When the last expression is positive, the weighted-excess bound and
\(\psi_j\ge0\) imply
\begin{equation*}
\Psi(x)+\varepsilon\ge x_i\psi_i \ge x_i(\psi_i-\psi_j)
\ge x_i(\bar q_i-\bar q_j-\zeta). \end{equation*}
When that expression is nonpositive, the positive-part bound in
\cref{eq:scarcity-necessary} follows from \(\Psi(x)+\varepsilon\ge0\).

If \(\varepsilon_J=0\), the representation \(\theta_J\) attains every
component minimum simultaneously. Hence
\(R^\star(x)=\sum_a x_aD_a^\star\) throughout the simplex. At a full-support
action distribution, every exact optimizer has \(\sum_a x_a\psi_a=0\); the positive
weights and nonnegative excesses force \(\psi_a=0\) for every action.
Its unperturbed deployed values therefore equal the ideal values.
\end{proof}

\paragraph{A Sharp Bound at Positive Visitation.}
The coefficient \(1/\min\{x_a,x_b\}\) cannot be improved for an arbitrary
compact class, even under exact optimization. Fix a positive excess scale
\(\varepsilon>0\), let \(x=(p,1-p)\) with
\(0<p\le1/2\), and consider a finite class with excess vectors
\begin{equation*}
(\varepsilon,\varepsilon),\qquad
(\varepsilon/p,0),\qquad
(0,\Upsilon),
\end{equation*}
where \(\Upsilon>\varepsilon/(1-p)\). Each component has minimum zero,
and \(\varepsilon_J=\varepsilon\). The first two representations both
minimize weighted excess at value \(\Psi(x)=\varepsilon\). The second
therefore attains the bound with pairwise error
\(\varepsilon/p=\Psi(x)/\min_a x_a\).

\paragraph{Unconstrained Excess on an Unvisited Action.}
Positive visitation is essential to the componentwise-optimality conclusion.
A class containing excess vectors \((0,0)\) and \((0,\Upsilon)\), with \(\Upsilon>0\),
has \(\varepsilon_J=0\). Both representations are nevertheless exact
optimizers at the first-action vertex because the second excess has zero
weight. If \(0<\bar q_2-\bar q_1<\Upsilon\), the second representation reverses
the deployed ordering at that vertex despite the jointly optimal alternative.
An absence claim for every boundary optimizer therefore needs an additional
selection condition; positive visitation removes this ambiguity. This example
concerns the ordering at a vertex and does not establish local attraction.

\subsection{Replay with a Protected Fitting Weight}
\label{app:general-replay}

\begin{proof}[Proof of \cref{cor:general-replay}]
Apply \cref{thm:joint-deficit} at the fitting weights $r=r(x)$.
Every $\theta\in\cM_\varepsilon(r)$ satisfies
\[
\sum_a r_a\psi_a(\theta)\le\Psi(r)+\varepsilon
\le\varepsilon_J+\varepsilon.
\]
Since each excess is nonnegative and $r_k\ge\beta$, this implies
$\psi_k(\theta)\le(\varepsilon_J+\varepsilon)/\beta$. For every $j\ne k$,
\begin{align*}
\widetilde q_k-\widetilde q_j
&=(\bar q_k-\bar q_j)-\psi_k+\psi_j+\xi_k-\xi_j\\
&\ge\Delta-\frac{\varepsilon_J+\varepsilon}{\beta}-\zeta=m.
\end{align*}
The positive gap is uniform in the policy and in all admissible fitting
selections. Compactness bounds the true returns, and the relative-error
bound keeps all return differences bounded. Thus any initially positive
policy coordinate stays positive at finite times. For $x_j(0)>0$,
the replicator gives
\[
\frac{d}{dt}\log\frac{x_j}{x_k}
=\widetilde q_j-\widetilde q_k\le-m
\quad\text{a.e.}
\]
Integration proves the odds bound. Coordinates initially equal to zero
remain zero, so the same bound holds for them. Summing the ratios yields
\[
1-x_k(t)\le\frac{R_0e^{-mt}}{1+R_0e^{-mt}},
\qquad R_0=\frac{1-x_k(0)}{x_k(0)},
\]
and hence convergence to $e_k$. This is a conclusion for each admissible
solution defined on $t\ge0$, not an existence claim for arbitrary
optimizer selections.
\end{proof}

\paragraph{A guarantee with unavoidable representation error.}
The condition does not require $\varepsilon_J=0$. In the tied
three-feature task, $\varepsilon_J=7/16$ and the uniquely best separately
adapted action is $E$ whenever $\delta>0$. With exact fitting, exact
returns, and uniform reference data $\nu=(1/2,1/2)$, the replay mixture
has $r_E(x)\ge\alpha/2$. Consequently,
$m=\delta-7/(8\alpha)$, which is positive when $\alpha\delta>7/8$.
Taking $\alpha=\delta=19/20$ gives $m=11/380>0$ at the original capacity,
although ordinary on-policy fitting has the self-confirming trap of
\cref{thm:trap}. Thus the guarantee covers a nontrivial mixture of
current and retained data in a model with an existing trap. It is
conservative: at $\alpha=1$, uniform fitting gives the actual return gap
$\delta$ for every $\delta>0$, including values for which the bound
does not certify recovery.

\paragraph{Policy recovery and deployed return.}
The conclusion selects the best separately adapted action; it need not
attain that action's separately adapted return while replay remains in
effect. For example, take a finite parameter class with distortion vectors
$(0,2)$, $(2,0)$, and $(0.1,0.1)$, and base rewards $b=(1,0)$.
Both separate distortion minima are zero, while $\varepsilon_J=0.1$.
Under uniform replay, the third parameter choice is uniquely optimal.
With $\beta=1/2$ and zero errors, the certificate gives $m=0.8$ and
the actor converges to action 1, but its deployed return is $0.9$
rather than $\bar q_1=1$. Recovering the latter requires refitting the
model to the selected action. Finally, if $x_k(0)=0$, the replicator
never introduces that action, even when its observations are present
in replay.

\Cref{fig:replay-ranking} uses this finite class at the same policy
$x=(0.02,0.98)$ in both panels. Fitting to $x$ selects distortion
$(2,0)$ and gives deployed returns $(-1,0)$, reversing the separately
adapted ordering $\bar q=(1,0)$. Mixing in uniform reference data with
$\alpha=0.4$ gives $r=(0.212,0.788)$, selects distortion $(0.1,0.1)$,
and restores the ordering with returns $(0.9,-0.1)$. The guaranteed
floor is $\beta=0.2$, distinct from the displayed weight $r_1=0.212$.
With exact fitting and values, \cref{cor:general-replay} gives
$m=1-0.1/0.2=0.5>0$, while the actual return gap in this panel is one.

\begin{figure}[t]
\centering
\includegraphics[width=\textwidth]{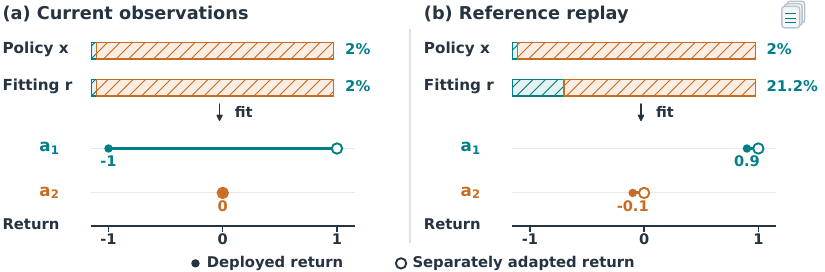}
\caption{Replay restores return ordering with residual representation
error. Both panels use the same policy and finite parameter class, with
40\% uniform reference replay added in (b). Percentages denote $a_1$'s
weight. Hollow markers use separate fits, whereas solid markers use one
shared fit (construction in \cref{app:general-replay}).}
\label{fig:replay-ranking}
\end{figure}

\subsection{Local Persistence and Two-Action Basins}
\label{app:local-persistence}

The deficit bound limits a possible reversal. A strict invasion margin adds
the local sign information needed for attraction, uniformly over the selected
optimizer.

\begin{proof}[Proof of \cref{thm:simplex-robustness}]
Compactness transfers the endpoint margin to sufficiently accurate
optimizers at nearby action distributions. To see this, let \(x_n\to e_i\),
\(\varepsilon_n\downarrow0\), and
\(\theta_n\in\cM_{\varepsilon_n}(x_n)\). Every convergent subsequence of
\(\theta_n\) has a limit in \(\cM_0(e_i)\). Indeed, approximate optimality
means that, for every \(\phi\in\Theta\),
\begin{equation*}
R_{x_n}(\theta_n)\le R_{x_n}(\phi)+\varepsilon_n.
\end{equation*}
The limit satisfies \(D_i(\bar\theta)\le D_i(\phi)\), which establishes
the claimed endpoint optimality.

Fix \(0\le\zeta<\gamma_i\) and set
\(\underline\gamma_i:=(\gamma_i-\zeta)/2>0\). There are a neighborhood \(U_i\) and a
tolerance \(\bar\varepsilon>0\) for which every
\(x\in U_i\), \(0\le\varepsilon\le\bar\varepsilon\),
\(\theta\in\cM_\varepsilon(x)\), and \(j\ne i\) satisfy
\begin{equation*}
q_i(\theta)-q_j(\theta)\ge(\gamma_i+\zeta)/2.
\end{equation*}
Otherwise a sequence approaching the vertex with tolerances tending to zero
would, by the preceding compactness argument, yield an endpoint optimizer
violating the definition of \(\gamma_i\). Relative value errors reduce
each resident advantage by at most \(\zeta\). The replicator dynamics
\(\dot x_a=x_a(\widetilde q_a-x^\top\widetilde q)\), with
\(\widetilde q_a=q_a(\theta)+\xi_a\), therefore give, wherever
\(x_i,x_j>0\),
\begin{equation}
\frac{d}{dt}\log\frac{x_j}{x_i}
=q_j(\theta)+\xi_j-q_i(\theta)-\xi_i\le-\underline\gamma_i,
\qquad j\ne i.
\label{eq:odds-contraction}
\end{equation}
An invader with zero mass remains absent, so only positive invaders require a
logarithm.

To turn this conditional rate into a stable neighborhood, let
\(r_j=x_j/x_i\) and \(r=\sum_{j\ne i}r_j\), and choose a sufficiently
small odds sublevel set inside \(U_i\). While a solution remains in that
set, every ratio obeys \(r_j(t)\le r_j(0)e^{-\underline\gamma_i t}\). The total odds cannot
increase, so the solution cannot leave the set, and
\begin{equation*}
1-x_i(t)=\frac{r(t)}{1+r(t)}\le r(0)e^{-\underline\gamma_i t}.
\end{equation*}
On this neighborhood, \(r(0)=(1-x_i(0))/x_i(0)\) is bounded by a fixed
multiple of \(1-x_i(0)\). The estimate therefore proves local exponential
stability with constants uniform over all admissible selections and errors.

At a pure policy, every exact optimizer has
\(D_i(\theta)=R^\star(e_i)\). Its resident return is consequently the
selection-independent value \(b_i-R^\star(e_i)\), and the stated strict
comparison makes the lower-return stable vertex strictly suboptimal.

The same margins also separate the optimizer sets of distinct stable
vertices. A representation belonging to both \(\cM_0(e_i)\) and
\(\cM_0(e_k)\) would have to satisfy both
\(q_i(\theta)>q_k(\theta)\) and \(q_k(\theta)>q_i(\theta)\), which is
impossible.
\end{proof}

The pure-$S$ margins in \cref{thm:trap,thm:matching-complement,thm:untied}
are $1-\delta$, $(d-1)^{-1}-\delta$, and $s_\eta-\delta$, respectively.
Each is strictly positive in its trap regime, so
\cref{thm:simplex-robustness} preserves local attraction to $S$ under
sufficiently small fitting and relative value errors.

For two actions, the resident advantage at the low endpoint is
\(\Delta_{D,0}-\delta\), while the advantage at the high endpoint is
\(\delta-\Delta_{D,1}\). These margins specialize
\cref{thm:simplex-robustness} without requiring a unique optimizer or a
differentiable envelope.

\begin{corollary}[Selection-Robust Endpoint Criterion]
\label{prop:selection-stability}
Define
\begin{equation*}
\Delta_{D,0}=\min_{\theta\in\cM(0)}\Delta_D(\theta),\qquad
\Delta_{D,1}=\max_{\theta\in\cM(1)}\Delta_D(\theta).
\end{equation*}
If \(\Delta_{D,1}<\delta<\Delta_{D,0}\), both endpoints are locally exponentially stable
relative to \([0,1]\), with neighborhoods and rates uniform over all
optimizer-selected trajectories. If additionally
\(\delta>R^\star(1)-R^\star(0)\), then
\(J^\star(0)<J^\star(1)\).
\end{corollary}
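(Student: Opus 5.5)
The plan is to specialize \cref{thm:simplex-robustness} to the two-action simplex with $x=(1-p,p)$, $e_S$ corresponding to $p=0$ and $e_E$ to $p=1$, and then handle the return comparison through the envelope values. At the vertex $e_S$, the exact optimizer set is $\cM(0)$. For every $\theta\in\cM(0)$ the resident advantage is
\[
q_S(\theta)-q_E(\theta)=-\delta+\Delta_D(\theta)\ge \Delta_{D,0}-\delta ,
\]
so the worst-selection invasion margin of \cref{thm:simplex-robustness} is $\gamma_S=\Delta_{D,0}-\delta$. Here the minimum defining $\Delta_{D,0}$ is attained because $\cM(0)$ is compact, being a closed subset of compact $\Theta$ on which $\Delta_D$ is continuous. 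Symmetrically, at $e_E$ the margin is $\gamma_E=\min_{\theta\in\cM(1)}[\delta-\Delta_D(\theta)]=\delta-\Delta_{D,1}$. The hypothesis $\Delta_{D,1}<\delta<\Delta_{D,0}$ makes both margins strictly positive.

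Next I would invoke \cref{thm:simplex-robustness} with $\zeta=0$ and $\varepsilon\equiv0$. This gives neighborhoods $U_S,U_E$ and rates $\underline\gamma_S,\underline\gamma_E>0$ such that $\frac{d}{dt}\log(p/(1-p))\le-\underline\gamma_S$ near $0$ and $\frac{d}{dt}\log((1-p)/p)\le-\underline\gamma_E$ near $1$. The constants are uniform over all measurable selections $\theta(t)\in\cM(p(t))$. The odds sublevel-set argument in that proof then yields local exponential stability relative to $[0,1]$. The only point I would check is that "relative to $[0,1]$" is exactly the simplex-relative statement there: the replicator in these coordinates is $\dot p=p(1-p)[q_E-q_S]$, and it coincides with the binary replicator used throughout.

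For the final claim, I would use that at a vertex every exact optimizer attains $R^\star$ at that vertex. This gives the selection-independent values $J^\star(0)=b_S-R^\star(0)$ and $J^\star(1)=b_E-R^\star(1)$. Hence
\[
J^\star(1)-J^\star(0)=\delta-\bigl(R^\star(1)-R^\star(0)\bigr),
\]
which is positive exactly under the added hypothesis.

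The main obstacle is minor: confirming that the compactness transfer in \cref{thm:simplex-robustness} really uses the minimum over $\cM(0)$ rather than the value at a particular optimizer. This matters because $\Delta_D$ may vary across optimizers at an endpoint, which is why the criterion is stated with the min at $0$ and the max at $1$. I would rely directly on the definition of $\gamma_i$ as a minimum over $\cM_0(e_i)$, which handles this without needing a unique or differentiable envelope.
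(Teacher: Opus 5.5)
Your proposal is correct and follows the paper's proof essentially verbatim. You identify the worst-selection invasion margins $\gamma_S=\Delta_{D,0}-\delta$ and $\gamma_E=\delta-\Delta_{D,1}$, apply \cref{thm:simplex-robustness} at both vertices with exact fitting and zero value error, and obtain the return comparison from $J^\star(1)-J^\star(0)=\delta-[R^\star(1)-R^\star(0)]$; your remarks on the compactness of $\cM(0)$ and on the selection-independence of the vertex values only make explicit what the paper leaves implicit.
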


\begin{proof}
In the notation of \cref{thm:simplex-robustness}, the worst-selection
invasion margins are \(\gamma_S=\Delta_{D,0}-\delta>0\) and
\(\gamma_E=\delta-\Delta_{D,1}>0\). Applying that theorem at both
vertices with exact fitting and zero payoff error gives the uniform
local exponential stability. The return comparison follows from
\(J^\star(1)-J^\star(0)=\delta-[R^\star(1)-R^\star(0)]\).
\end{proof}

A differentiable envelope determines the intervening basins as well as the
endpoint signs. A strictly decreasing distortion slope makes the deployed
gap cross zero once.

\begin{proposition}[Endpoint-Slope Criterion]
\label{prop:general-stability}
Assume \(R^\star\) is differentiable on \((0,1)\), its derivative decreases
strictly and continuously from \(\Delta_{D,0}\) to \(\Delta_{D,1}\), and
\(\Delta_{D,1}<\delta<\Delta_{D,0}\). Then \cref{eq:general-flow} has a unique unstable interior
fixed point and both endpoints are asymptotically stable relative to
\([0,1]\). If additionally
\(\delta>R^\star(1)-R^\star(0)\), the low endpoint has strictly lower return.
\end{proposition}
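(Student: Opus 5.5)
The plan is to reduce everything to the sign of the scalar field $\dot p=p(1-p)[\delta-R^{\star\prime}(p)]$ from \cref{eq:general-flow}, which holds at every interior point because $R^\star$ is assumed differentiable on $(0,1)$. By \cref{lem:concavity}, at every such point all optimizers share the slope $\Delta_D(\theta)=R^{\star\prime}(p)$. Consequently the field is single-valued and independent of the optimizer selection. The assumption that $R^{\star\prime}$ is continuous and strictly decreasing from $\Delta_{D,0}$ to $\Delta_{D,1}$, with $\Delta_{D,1}<\delta<\Delta_{D,0}$, then gives by the intermediate value theorem a unique $p^\dagger\in(0,1)$ with $R^{\star\prime}(p^\dagger)=\delta$. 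The gap is negative on $(0,p^\dagger)$ and positive on $(p^\dagger,1)$, so $p^\dagger$ is the only interior equilibrium.

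Next I handle the endpoints. Here I would invoke \cref{prop:selection-stability} directly. One check is needed: the quantities $\Delta_{D,0},\Delta_{D,1}$ in this statement, which are limits of $R^{\star\prime}$, agree with the min/max over $\cM(0),\cM(1)$. This follows from \cref{lem:concavity}, since the one-sided derivatives at the endpoints are exactly those extrema, and for a concave function the one-sided endpoint derivative equals the limit of interior derivatives. The corollary then gives local exponential stability of both endpoints relative to $[0,1]$, uniformly over selections.

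Then I would establish instability of $p^\dagger$ and the basin structure. The field is continuous on $(0,1)$ because $R^{\star\prime}$ is continuous. Trajectories starting in $(0,p^\dagger)$ therefore decrease monotonically, remain in the interval, and converge to a zero of the field. The only available zero is $0$, and likewise trajectories in $(p^\dagger,1)$ converge to $1$. Any neighborhood of $p^\dagger$ contains initial points whose trajectories leave a fixed neighborhood, which gives instability in the sense of \cref{def:policy-stability}(iii). I expect this step to be the main technical point. Continuity alone does not give uniqueness of solutions, so the argument must rely only on sign-based monotonicity, which holds for every absolutely continuous solution. This avoids needing a Lipschitz field.

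Finally, the return comparison uses $J^\star(p)=b_S+p\delta-R^\star(p)$. From this, $J^\star(1)-J^\star(0)=\delta-[R^\star(1)-R^\star(0)]$, which is positive under the added hypothesis.
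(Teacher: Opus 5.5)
Your proposal is correct and follows the paper's proof: a unique crossing $R^{\star\prime}(p_\delta)=\delta$ from continuity and strict monotonicity, sign-based basins and instability of $p_\delta$, endpoint attraction through \cref{prop:selection-stability}, and the identity $J^\star(1)-J^\star(0)=\delta-[R^\star(1)-R^\star(0)]$. Your extra checks only spell out details the paper's short proof leaves implicit: identifying the endpoint limits of $R^{\star\prime}$ with the optimizer-set extrema via \cref{lem:concavity} and concavity, and using sign-based monotonicity instead of uniqueness of solutions.
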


\begin{proof}
Continuity and strict monotonicity give a unique \(p_\delta\) with
\(R^{\star\prime}(p_\delta)=\delta\). The gap
\(J^{\star\prime}=\delta-R^{\star\prime}\) is negative below this point
and positive above. Every initial probability in \((0,p_\delta)\) therefore
moves toward zero, and every initial probability in \((p_\delta,1)\) moves
toward one. The interior equilibrium is unstable. The strict limiting gap
signs give local endpoint attraction, as in
\cref{prop:selection-stability}. Their return difference remains
\begin{equation*}
J^\star(1)-J^\star(0)
=\delta-[R^\star(1)-R^\star(0)].
\end{equation*}
\end{proof}

For any continuous mobility \(m(p)>0\) in the interior, the chain rule
along \(\dot p=m(p)J^{\star\prime}(p)\) gives
\begin{equation*}
\frac{d}{dt}J^\star(p(t))
=m(p)[J^{\star\prime}(p)]^2\ge0.
\end{equation*}
For replicator mobility \(m(p)=p(1-p)\), this is the differentiable
two-action form of \cref{eq:envelope-energy}. At a nonsmooth tie, its
pointwise directional version remains an inequality, while the replicator
identity still holds almost everywhere along an absolutely continuous
exact-optimizer trajectory. The endpoint return comparison explains how
ascent of a convex envelope can support stable boundary points with unequal
returns.

\subsection{Local Geometry with Finite-Rate Adaptation}

A finite adaptation rate adds representation coordinates to the dynamics.
The following result preserves the local sink and saddle classifications when
the representation loss has positive curvature at the equilibrium. It makes
no assumption that the representation tracks its optimum along the full
trajectory.

\begin{proposition}[Finite-Rate Local Geometry]
\label{prop:general-finite-rate}
Let \(z\) range over an open set \(\mathcal Z\subseteq\R^d\). Suppose
\(D_a\in C^2(\mathcal Z)\) and
\begin{equation*}
\dot z=-\kappa P(x,z)\nabla_zR_x(z),\qquad
\dot x_a=x_a\left(q_a(z)-\sum_bx_bq_b(z)\right),
\end{equation*}
where \(\kappa>0\) and \(P\) is \(C^1\), symmetric, and positive definite
near the equilibria. At a pure policy \(e_i\), let \(z_i\in\mathcal Z\) be an
isolated global representation optimum with
\(H_i:=\nabla_z^2D_i(z_i)\succ0\). If
\(q_i(z_i)>q_j(z_i)\) for every \(j\ne i\), then \((e_i,z_i)\) is a
hyperbolic sink for every \(\kappa>0\).

At an interior equilibrium, use the first \(A-1\) policy probabilities as
coordinates \(y\), write
\(x(y)=(y_1,\ldots,y_{A-1},1-\mathbf 1^\top y)\), let
\(H=\nabla_z^2R_{x(y)}(z)\succ0\), and form
\begin{equation*}
G=\bigl[\nabla_z(D_1-D_A),\ldots,
\nabla_z(D_{A-1}-D_A)\bigr].
\end{equation*}
If \(G\) has full column rank, the linearization has exactly \(d\) negative
and \(A-1\) positive real eigenvalues. Thus the joint equilibrium is a
hyperbolic saddle of unstable dimension \(A-1\), for every \(\kappa>0\).
\end{proposition}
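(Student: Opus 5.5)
The plan is to linearize the joint vector field at each equilibrium and read off the inertia from block structure, as in the three-feature computation of \cref{prop:finite-rate}. Write $F(x,z)=(-\kappa P(x,z)\nabla_zR_x(z),\ C(x)q(z))$.

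\emph{Pure policy.} Work in the coordinates $x_j$, $j\ne i$, which are local coordinates on the simplex near $e_i$. At $(e_i,z_i)$, optimality of the isolated global optimum gives $\nabla_zR_{e_i}(z_i)=\nabla_zD_i(z_i)=0$. Hence differentiating $P$ contributes nothing, and the $z$-block of the Jacobian is $-\kappa P_iH_i$. For $j\ne i$, the policy equation is $\dot x_j=x_j(q_j(z)-x^\top q(z))$. Its derivative in $z$ vanishes because $x_j=0$, and its derivative in $x_k$ is $\mathbf 1\{j=k\}\,(q_j(z_i)-q_i(z_i))$. The Jacobian is therefore block triangular. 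The policy block is diagonal with negative entries $q_j-q_i$, and $-\kappa P_iH_i$ is similar to $-\kappa P_i^{1/2}H_iP_i^{1/2}\prec0$. All eigenvalues are real and negative, so the equilibrium is a hyperbolic sink for every $\kappa>0$.

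\emph{Interior equilibrium.} Here $\nabla_zR_{x}(z)=0$, and all $q_a(z)$ are equal because $x$ has full support. So the representation block is again $-\kappa PH$, with $P$ evaluated at the equilibrium. In $y$-coordinates,
\[
\partial_{y_k}\nabla_zR_{x(y)}=\nabla_z(D_k-D_A),
\]
so the $(z,y)$ block is $-\kappa PG$. For the policy rows, equal payoffs kill the derivative in $y$, and $\partial_z(C(x)q)=C(x)\nabla_zq=-C(x)\nabla_zD$. Restricting to the first $A-1$ rows and using $C(x)\mathbf 1=0$ gives
\[
\partial_z\dot y=-\widetilde C\,G^\top,
\]
where $\widetilde C$ is the leading $(A-1)\times(A-1)$ block of $C(x)$. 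This block is $\operatorname{diag}(y)-yy^\top$, which is positive definite because $x_A>0$. Thus
\[
J=\begin{pmatrix}-\kappa PH&-\kappa PG\\-\widetilde CG^\top&0\end{pmatrix}.
\]
Conjugate by $T=\operatorname{diag}(P^{1/2},\kappa^{1/2}\widetilde C^{1/2})$ to obtain the symmetric matrix
\[
\begin{pmatrix}-\kappa P^{1/2}HP^{1/2}&-\kappa^{1/2}P^{1/2}G\widetilde C^{1/2}\\-\kappa^{1/2}\widetilde C^{1/2}G^\top P^{1/2}&0\end{pmatrix}.
\]
Its upper-left block is negative definite. Its Schur complement is $\widetilde C^{1/2}G^\top H^{-1}G\widetilde C^{1/2}$, which is positive definite because $G$ has full column rank. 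Haynsworth/Sylvester inertia then gives exactly $d$ negative and $A-1$ positive eigenvalues, all real and nonzero, so the equilibrium is a hyperbolic saddle of unstable dimension $A-1$.

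\emph{Main obstacle.} The delicate step is getting the policy-row derivative exactly right. One must restrict $C(x)\nabla_z q$ to $A-1$ coordinates and check that it factors as $\widetilde C G^\top$ with the same $G$ that appears in the $z$-rows. That factorization is what makes the symmetrizing conjugation possible. The check needs $C(x)\mathbf 1=0$, and it needs the $y$-derivatives of $x^\top q$ to vanish at equal payoffs. Positive definiteness of $\widetilde C$ also requires $x_A>0$.
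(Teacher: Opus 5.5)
Your proposal is correct and follows the paper's proof essentially step for step: a block-triangular Jacobian at the vertex, and at the interior the same Jacobian $\begin{pmatrix}-\kappa PH&-\kappa PG\\-\widetilde CG^\top&0\end{pmatrix}$, symmetrized by a diagonal similarity and classified via the Schur complement $\widetilde C^{1/2}G^\top H^{-1}G\widetilde C^{1/2}\succ0$ and Sylvester inertia. There is one slip in the similarity: with your $T=\operatorname{diag}(P^{1/2},\kappa^{1/2}\widetilde C^{1/2})$, the off-diagonal blocks of $T^{-1}JT$ are $-\kappa^{3/2}P^{1/2}G\widetilde C^{1/2}$ and $-\kappa^{-1/2}\widetilde C^{1/2}G^\top P^{1/2}$, which are transposes of each other only when $\kappa=1$; the scaling must be $\kappa^{-1/2}$, as in the paper, which gives exactly the symmetric matrix you display and leaves the rest of your argument intact.
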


\begin{proof}
At a pure policy, the invader probabilities give local policy coordinates.
The representation gradient vanishes at \(z_i\), so derivatives of the
preconditioner contribute zero to the linearization. An invader equation has
zero derivative with respect to \(z\) because its policy mass is zero.
The Jacobian at \((e_i,z_i)\) is therefore block triangular.

Its representation block is \(-\kappa PH_i\), which is similar to the
negative definite symmetric matrix
\(-\kappa P^{1/2}H_iP^{1/2}\). Every representation eigenvalue is negative,
and the remaining eigenvalues are the strict invader disadvantages
\(q_j(z_i)-q_i(z_i)<0\). Thus the pure equilibrium is a hyperbolic sink
for every \(\kappa>0\).

At an interior equilibrium, put \(n=A-1\) and use
\(x(y)=(y_1,\ldots,y_n,1-\mathbf 1^\top y)\). The reduced policy mobility
is
\begin{equation*}
C(y)=\operatorname{diag}(y)-yy^\top.
\end{equation*}
All action probabilities are positive, making this reduced matrix positive
definite. In particular, with \(x_A=1-\sum_{a=1}^ny_a>0\),
Cauchy--Schwarz gives, for every \(w\ne0\),
\begin{equation*}
w^\top C w
=\sum_a y_aw_a^2-\left(\sum_a y_aw_a\right)^2
\ge x_A\sum_a y_aw_a^2>0.
\end{equation*}
Let \(a_j=q_j-q_A\). The reduced actor equation is
\(\dot y=C(y)a(z)\), and all payoff differences vanish at an interior
equilibrium. The loss-gradient coupling then gives
\begin{equation*}
\partial_y\nabla_zR_{x(y)}=G,
\qquad \nabla_za=-G^\top.
\end{equation*}
All quantities evaluated at the equilibrium consequently yield the Jacobian
\begin{equation*}
J=\begin{pmatrix}
-\kappa PH&-\kappa PG\\
-CG^\top&0
\end{pmatrix}.
\end{equation*}
The change of basis
\(T=\operatorname{diag}(P^{1/2},\kappa^{-1/2}C^{1/2})\) makes
\(T^{-1}JT\) symmetric. Its upper-left block is
\(-\kappa P^{1/2}HP^{1/2}\), and its off-diagonal block is
\(-\sqrt\kappa P^{1/2}GC^{1/2}\). The representation block is negative
definite, while its Schur complement is
\begin{equation*}
C^{1/2}G^\top H^{-1}GC^{1/2}\succ0.
\end{equation*}
The Schur complement is positive definite precisely because \(G\) has
full column rank. Sylvester's law of inertia therefore gives exactly
\(d=\dim z\) negative and \(A-1\) positive eigenvalues, with no zero
eigenvalues. Similarity to a symmetric matrix makes all of these eigenvalues
real, proving the stated saddle classification for every \(\kappa>0\).
\end{proof}

\section{Executable Algebra Checks}
\label{app:numerical}

The supplementary scripts check algebraic identities and numerical
implementations underlying the figures. These checks supplement the proofs.
With seed 20260808, \texttt{scripts/check\_theory.py} regenerates
\texttt{figures/phase\_data.csv} and compares the pair-loss formula with
800,000 Gaussian samples. It compares 24 randomized weighted-frame
solutions with \(1201\times1201\) angle grids and checks the phase
boundaries and envelope identity. The untied rational loss is compared with 300 direct matrix inversions and with endpoint slopes at six noise levels.

The same script reconstructs central angle representatives and checks their
stationary gradients and finite-rate Jacobian inertia at three values of \(p\)
and three values of \(\kappa\). It also checks the policy-floor symmetry
identity. For the codimension-one lift, the script verifies the exact block
construction for \(d=2,\ldots,8\) and checks the spectral lower bound on 480 deterministic random frames.

The separate deterministic script
\texttt{scripts/untied\_phase\_sweep.py} eliminates the controller exactly,
searches a \(361\times361\) global angle grid followed by three local
refinements, and writes \texttt{figures/untied\_phase\_data.csv}. Across
\(\eta\in\{0.2,1,5,20\}\), the largest optimized-value symmetry residual was
\(2.0\times10^{-10}\), the largest slope antisymmetry residual was
\(6.6\times10^{-5}\), and no monotonicity violation was observed on the
51-point grid of \(p\) values.

The recorded maximum errors were \(8.47\times10^{-4}\) for pair-loss Monte
Carlo, \(1.39\times10^{-5}\) for frame value,
\(2.36\times10^{-3}\) for squared Gram entries,
\(7.4\times10^{-11}\) for the envelope identity,
\(1.1\times10^{-16}\) for the lifted block value (with minimum random-frame
objective and spectral-bound slacks \(0.32\) and \(7.2\times10^{-3}\)),
\(1.7\times10^{-16}\) for finite-rate central stationarity,
\(5.6\times10^{-16}\) for untied direct inversion, and
\(5.3\times10^{-7}\) for the exact endpoint slope. These endpoint checks
use \(\eta\in\{0.05,0.2,1,5,20,100\}\), separate from the four noise
levels in the sweep over \(p\). The smallest signed
untied slope magnitude at \(p\in\{0.4,0.6\}\) was
\(8.0\times10^{-5}\).

\section{Finite-sample Geometry and Deployed Control}
\label{app:sampled-check}

\begin{figure}[!ht]
  \centering
  \includegraphics[width=\textwidth]{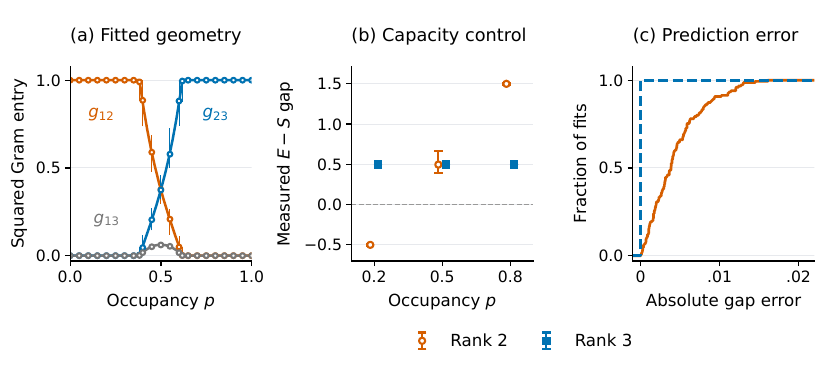}
  \caption{Finite-sample fits recover the predicted geometry and preference reversal at low \(p\); orthogonal capacity removes the reversal.
Curves in (a) are analytic predictions; markers and bars in (a--b) show
medians and 10--90\% ranges over 50 datasets per value of \(p\).}
\label{fig:e1-recovery}
\end{figure}

\begin{figure}[t]
  \centering
  \includegraphics[width=\textwidth]{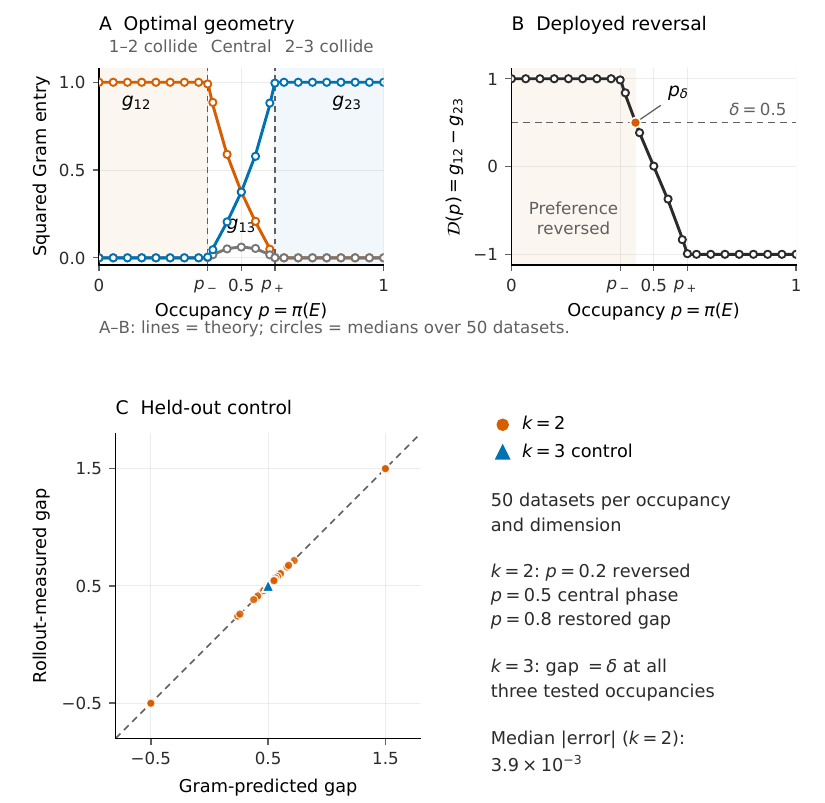}
  \caption{E1 recovers the solved chain from the action distribution to deployed control.
  Panel A shows the finite-sample squared Gram entries, with circles marking
the medians over 50 datasets. The medians lie on the three theory curves, and
the collision moves from pair \(\{1,2\}\) to pair \(\{2,3\}\) across the
golden-ratio boundaries \(p_\pm\), as \cref{thm:frame} predicts. Panel B
  then reads the same fits through the order parameter \(\cD(p)\), which
  crosses the task preference \(\delta=0.5\) at the separator \(p_\delta\),
  so the deployed preference is reversed on the shaded side. Panel C closes the chain by comparing the Gram-predicted gap with fresh
forced-rollout measurements. The scarce-rank fits at \(p=0.2\) and \(p=0.8\)
lie on the identity line at the reversed and restored gaps, while the central-phase fits spread along the line. The orthogonal \(k=3\) control stays at
gap \(\delta\), and the median absolute bridge error for rank-two codes is \(3.9\times10^{-3}\).}
  \label{fig:e1-solved}
\end{figure}

\paragraph{Frozen Protocol.}
E1 tests finite-sample optimization of the tied representation objective.
The protocol and pass/fail criteria were fixed before the primary run in
\texttt{experiments/sampled\_phase/PROTOCOL.md}. Each dataset contains
\(N=4096\) episodes from the branch/pair generator of \cref{sec:exact},
with independent standard-normal active feature values and \(\sigma=0\).
This noise choice preserves the predicted geometry because tied-code noise
adds only a \(V\)-independent constant in \cref{eq:branch-loss}.
If pair
\(\{i,j\}\) is active, its sampled reconstruction loss is
\begin{equation*}
\ell(V)=(v_i^\top v_j)^2(Z_i^2+Z_j^2).
\end{equation*}
Thus each fixed raw dataset has empirical objective
\begin{equation*}
\widehat L(V)=\widehat A g_{12}+\widehat B g_{13}+\widehat C g_{23},
\qquad
\E(\widehat A,\widehat B,\widehat C)=(p,1,1-p).
\end{equation*}
The implementation first applies the tied encoder and controller on raw samples, and an
automatic check requires its loss and angle gradient to equal those of this exact sufficient-statistic reduction. The maximum observed discrepancies were
\(6.7\times10^{-16}\) for both loss and gradient.

The grid of \(p\) values contains \(\{0,0.05,\ldots,1\}\) and the two population
phase thresholds, for 23 values of \(p\) with 50 independent datasets each. The
common-rotation gauge fixes \(v_3=(1,0)\), and the other two line angles
start independently and uniformly on \([0,\pi)\). Each dataset uses eight
initializations and unregularized full-batch gradient descent, with step
size \(0.05\), at most 20,000 steps and projected-gradient tolerance
\(10^{-10}\). All initializations are retained. The independently derived
weighted-frame formula supplies an exact empirical optimum after training,
separating optimization error from sampling-induced movement of the phase
boundaries. The main markers select the lowest empirical-loss initialization
to match the theorem's global-best-response assumption. The single-start
rate is reported separately.

\paragraph{Results and Controls.}
All frozen criteria were met. Among 1,150 independent datasets,
1,148 best-of-eight solutions had empirical suboptimality below \(10^{-8}\);
the two remaining errors were \(1.1\times10^{-8}\) and \(1.9\times10^{-8}\),
both at a population phase threshold. Across all 9,200 individual starts,
\(99.75\%\) met the same certificate. The median maximum squared-Gram error
relative to the exact empirical optimum was \(3.3\times10^{-20}\); its 90th,
99th, and maximum values were \(5.5\times10^{-10}\), \(1.6\times10^{-6}\), and
\(4.6\times10^{-4}\), respectively.

At \(p=0.2\), every dataset yielded \(g_{12}>0.99\) and
\(g_{13},g_{23}<0.01\); every dataset at \(p=0.8\) yielded the symmetric
\(2\)-\(3\) collision. Consequently all learned two-dimensional codes at
\(p=0.2\) reversed the deployed gap for the pre-fixed \(\delta=0.5\). We also
trained three unit vectors in \(\R^3\) on the same datasets sampled at interior values of \(p\)
and with the same optimization budget. All 150 controls had every squared Gram
entry below \(10^{-2}\) and retained a positive deployed gap. We exclude
three-dimensional endpoint controls because one angle is empirically
unidentified when its pair is absent.

Independent forced-branch evaluation used \(65{,}536\) fresh episodes per
branch for each of the 150 two-dimensional fits at
\(p\in\{0.2,0.5,0.8\}\) and their 150 three-dimensional controls. The
median absolute discrepancy between a code's
measured \(D_E-D_S\) and the Gram prediction \(g_{12}-g_{23}\) was
\(3.9\times10^{-3}\) for the two-dimensional codes, with 90th percentile
\(9.3\times10^{-3}\). For the orthogonal three-dimensional controls these
values were \(1.5\times10^{-22}\) and \(5.9\times10^{-22}\).

\Cref{fig:phase} shows sampling variation and the independent rollout
comparison. The population curve in panel (a) is exact; open marks and
10--90\% bars summarize independent finite-sample fits. Panels (b)--(c)
use fresh forced-branch rollouts to test whether the fitted geometry predicts
deployed distortion.

\begin{figure}[t]
    \centering
    \input{figures/phase_diagram}
    \caption{Finite-sample recovery of the solved chain: (a) collision
    transfer, (b) gap reversal at low \(p\) and its removal by orthogonal
    capacity, and (c) agreement between Gram prediction and fresh rollout.
    Marks and bars are medians and 10--90\% ranges over 50 datasets; panel (c)
    shows every dataset. The actor is not trained.}
    \label{fig:phase}
\end{figure}

\input{experiments/sampled_phase/outputs/sample_size_table_rows}
\begin{table}[!h]
\centering
\caption{Sampling variation in the empirical squared-Gram optimum. Entries
are the median / 90th percentile of the maximum absolute difference between
empirical-optimum and population-optimum squared Gram entries over 100
independent datasets per cell. The table shows the two phase thresholds and
the central value of \(p\); the released data also include four values
away from the thresholds.}
\label{tab:sample-size}
\small
\begin{tabular}{rccc}
\toprule
\(N\) & \(p=p_-\) & \(p=1/2\) & \(p=p_+\) \\
\midrule
\SampleSizeTableRows
\bottomrule
\end{tabular}
\end{table}

The sample-size study draws 100 datasets for each combination of
\(N\in\{256,1024,4096,16384\}\) and
\(p\in\{0.2,0.35,p_-,0.5,p_+,0.65,0.8\}\), for 2,800 datasets.
It evaluates the exact empirical optimum directly to isolate sampling
variation, without another optimization run or a fitted convergence rate.
The complete, unfiltered outputs include every dataset, initialization,
empirical certificate, held-out branch estimate, three-dimensional control, and
sample-size run. The experiment is deliberately limited to optimization of the
solved representation model. It does not test joint actor learning or establish
prevalence in nonlinear deep RL agents.

\section{Coupled Updates and Two-step Neural Control}
\label{app:e2-e3-gates}

\begin{table}[t]
\centering\footnotesize\setlength{\tabcolsep}{3.5pt}\renewcommand{\arraystretch}{1.1}
\caption{Registered quantities of the three experiments. Bounds are the
frozen pass criteria, and every verdict is reported, including the two
failures.}
\label{tab:results}
\begin{tabularx}{\textwidth}{@{}lXll@{}}
\toprule
Experiment & Quantity & Bound & Measured \\
\midrule
E1 & fits within \(10^{-8}\) of the exact empirical optimum & \(\ge0.99\) & 1,148 of 1,150 \\
E1 & median forced-rollout gap error, rank two & --- & \(3.9\times10^{-3}\) \\
E2 & basin agreement with \(\operatorname{sign}(p_0-p_\delta)\) & \(\ge0.99\) & 746 of 746 \\
E2 & separator error over 19 preferences, \(\kappa=80\) & \(\le0.01\) & 0.0044 \\
E2 & floor separator error, 4 points & \(\le0.01\) & \(6.3\times10^{-4}\) \\
E2 & replay transition error, 2 references & \(\le0.02\) & 0.0123 \\
E2 & orthogonality gate, fixed initialization & --- & fail \\
Bridge & constrained-class boundary error, 6 cells & \(\le0.01\) & \(\le0.007\) \\
Bridge & full-rank control trapped & 0 & 0 of 11,700 \\
Bridge & registered return gate, constrained arm & --- & fail \\
E3 & low-start rank-two seeds trapped & LCB \(>0.5\) & 96 of 100 (LCB 0.901) \\
E3 & high-start and rank-64 seeds trapped & 0 & 0 of 100 each \\
E3 & rank cost of return, low start & LCB \(>0\) & 0.225 [0.219, 0.230] \\
E3 & additional low-start cost & LCB \(>0\) & 0.133 [0.127, 0.137] \\
\bottomrule
\end{tabularx}
\end{table}

\Cref{fig:e3-distributions} retains the full endpoint distributions
behind \cref{fig:empirical-mechanism,tab:ppo-results}, including the four
low-start rank-two runs that finish favoring $E$. The main-text bars use
all 100 training seeds in each condition. Their 95\% percentile bootstrap
intervals use 20,000 within-condition resamples and quantify uncertainty
in the group means.
The cross-talk contrast is \(x_{12}-x_{23}\), and forced values include
the task preference \(\delta=0.15\). Both forced and deployed returns use
mean terminal controls, with stochastic routing for deployment. The
cutoff \(0.5\) classifies endpoints and is not a proved separator of
neural dynamics. Ranks share training seeds within each start, whereas
low and high starts use independent seed sets.

\begin{figure}[t]
  \centering
  \includegraphics[width=\textwidth]{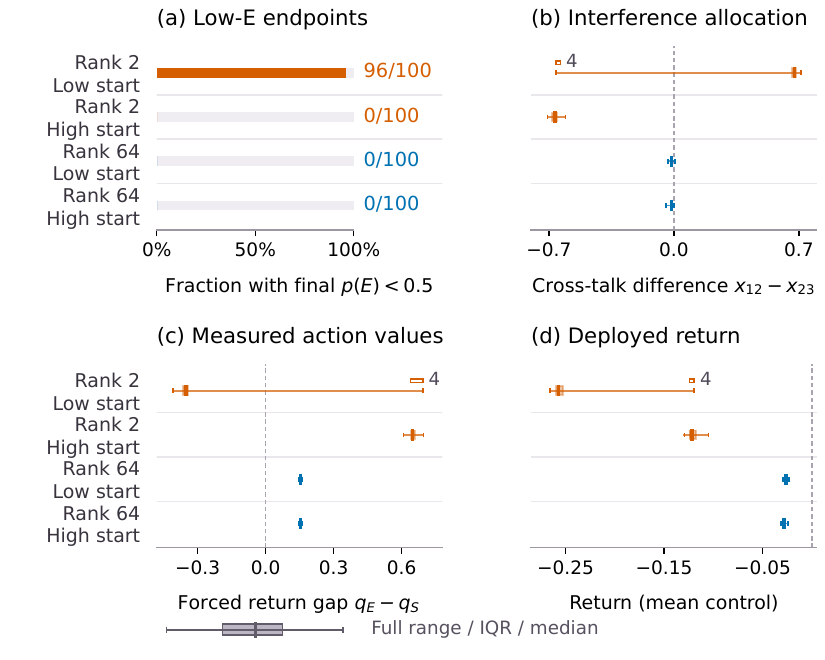}
  \caption{All 100 seeds per arm. Boxes, center lines, and whiskers show
interquartile ranges, medians, and full ranges. Outlined ranges marked 4
identify low-start rank-two runs ending at high \(E\).}
\label{fig:e3-distributions}
\end{figure}

\begin{figure}[t]
  \centering
  \includegraphics[width=\textwidth]{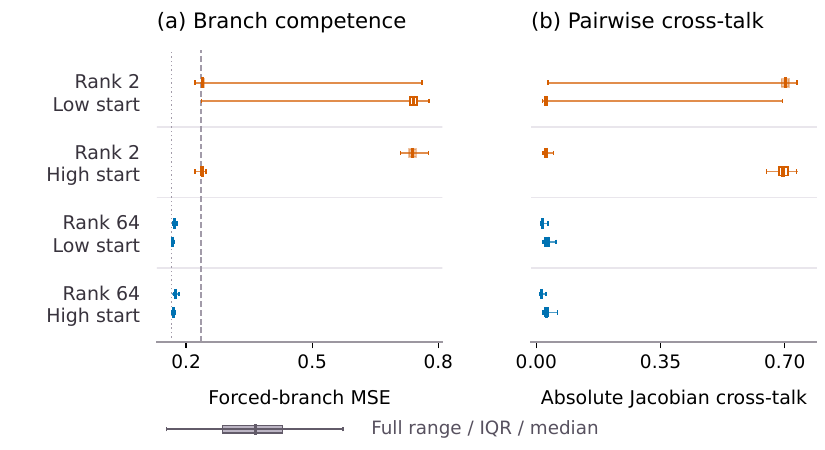}
  \caption{Branch competence and absolute cross-talk for the same 400
  PPO runs as \cref{fig:e3-bridge}. In each row, upper filled bands and
  lower open bands show (a) forced-branch reconstruction MSE on $S$ and
  $E$, respectively, and (b) cross-talk $x_{12}$ and $x_{23}$, on the
  pairs exclusive to $E$ and $S$, respectively. Cross-talk is a
  symmetrized absolute Jacobian response, not an encoder angle.
  Dotted and dashed MSE guides mark the Bayes floor ($0.165$) and an
  achievable rank-two reference ($0.235$); the latter is not a proved
  neural-class optimum. Boxes show interquartile ranges, center lines
  medians, and whiskers full ranges over all 100 training seeds per arm,
  including the four alternate-endpoint runs. Reconstruction uses the
  mean terminal control.}
  \label{fig:e3-final}
\end{figure}

\subsection{Closed-loop Basins and the Sampled Bridge Cohort}
\begin{figure}[t]
  \centering
  \includegraphics[width=\textwidth]{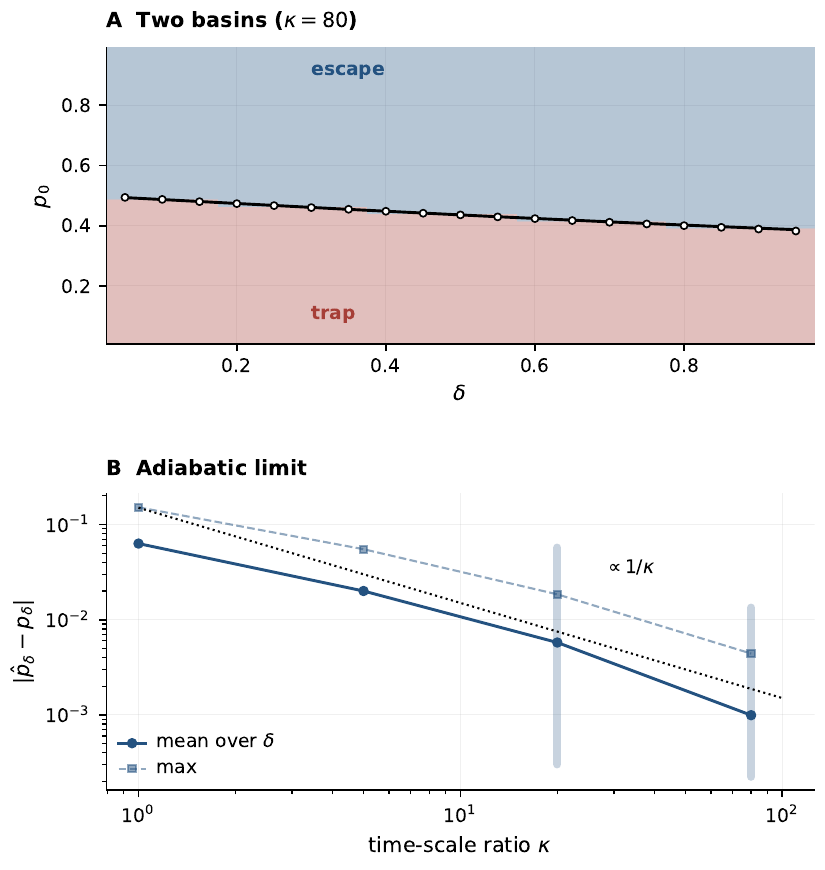}
  \caption{Closed-loop dynamics in the solved class. Panel A draws the two basins with
the analytic separator on the \(\kappa=80\) grid as an illustration, while
the registered basin gate runs on the \(\kappa=20\) grid
(\cref{app:e2-e3-gates}). Panel B shows the separator error decreasing roughly as \(1/\kappa\) over the tested update counts, with bands over ten representation initializations.}
  \label{fig:e2-core}
\end{figure}

The solved-class loop still gives the actor an exact representation map, so
a bridge cohort replaces that map with a sampled learner while keeping the
environment fixed. Three arms differ only in what the representation may express: a two-dimensional encoder constrained to the solved class, a two-dimensional
unconstrained encoder, and a full-rank three-dimensional control. Each arm
trains on fresh seeds under frozen gates, for 35{,}100 records in total.

The learned boundaries match their class predictions
(\cref{fig:e2-neural}). The constrained arm meets the solved-class boundary
within \(\pm0.007\) at all six registered cells. The unconstrained arm asks
more of the theory, because its population reference is optimized numerically and carries no global certificate; the measured boundaries
still meet this reference within \(\pm0.023\) at all six cells. Capacity
closes the comparison, with the full-rank control ending trapped in 0 of
11{,}700 runs. The visitation-floor intervention transfers as well, moving each boundary in
the predicted direction at all six registered pairs. As a development check
beyond the direction gate, the three constrained-class shift magnitudes also
cover their point predictions.

Within the runs, the collision and gap links of the mechanism hold, with
one qualified stratum. The predicted collision pair dominates in 1.000 of
trapped and 0.999 of escaped two-dimensional runs. The collision then predicts the forced gap with the predicted sign in 0.992 and 1.000 of the pooled runs.
Eleven of the twelve strata lie at 0.99 or above, and the one stratum at
0.718 coincides with the anomalous return stratum recorded in
\cref{app:e2-e3-gates}. The
registered return gate fails in the constrained arm, and the diagnosis is
an estimand error rather than a mechanism failure. This gate compared returns under a balanced reference action distribution, where the constrained class is
provably symmetric between the two endpoints, so it tested a quantity the
theory itself sets to zero. A development check of the on-policy return deficit agrees with the floor prediction \(-(1-2\epsilon)\delta\) at all twelve cells. The nominal return result of the unconstrained arm has no confirmatory reading either, since its analysis pooled the floor levels and resampled records rather than seeds, so the confirmatory return chain stays open pending a registered rerun.

\begin{figure}[t]
  \centering
  \includegraphics[width=\textwidth]{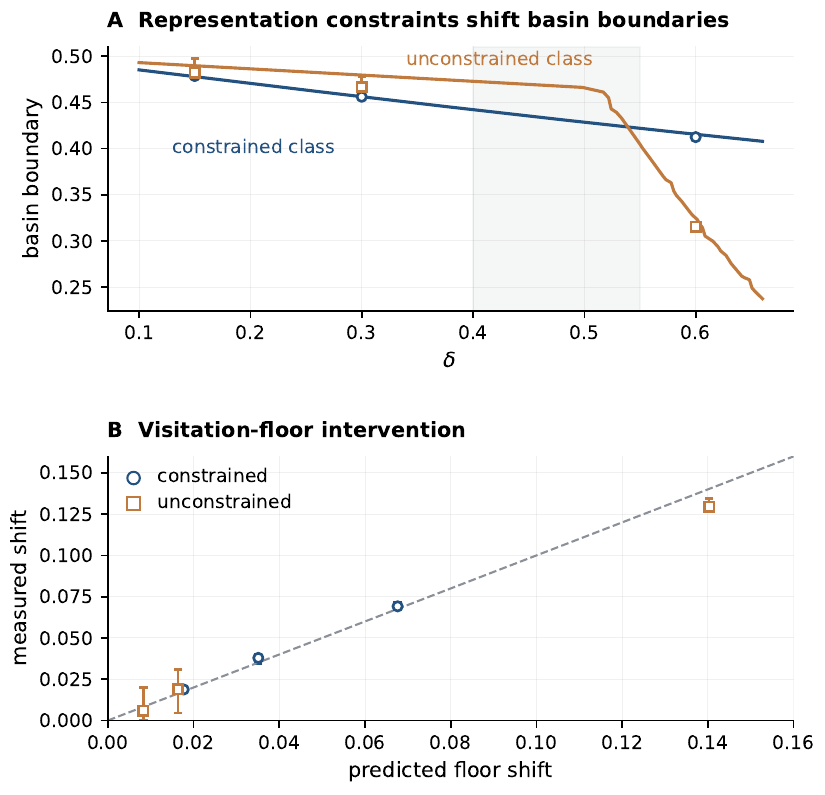}
  \caption{Fresh-seed cohort of the minimal learned-representation bridge.
  Representation constraints shift the basin boundary toward each class's
population prediction, with the capacity control at 0 of 11{,}700 trapped
runs (A). The visitation-floor intervention moves the boundary in the
predicted direction (B), where the dashed equality line is a secondary
reference and the registered gate tests direction.}
  \label{fig:e2-neural}
\end{figure}

E2 used a deterministic grid with one fixed representation initialization;
its multiple-initialization and fresh-solve comparisons are development
checks. The bridge and final PPO cohorts reserved fresh seeds after their
gate definitions, tolerances and analysis code were fixed. Rank partners
within each cohort deliberately reuse seed identities for paired comparisons.
The final PPO executable was also deployed from the frozen commit to a
directory keyed by its hash.

The bridge reporting rule marks a target added after execution as a
development check. A boundary interval with more than five percent non-finite
bootstrap resamples is unresolved and cannot pass its registered gate.
\Cref{tab:e2-gates,tab:bridge-gates,tab:e3-gates} report the registered
verdicts; the tables do not promote a later explanatory analysis to a pass
of the original criterion.

\begin{table}[t]
\centering
\caption{Registered gates for the solved-class loop E2. All separator and
threshold errors are absolute deviations from the computed values of
\cref{sec:dynamics,sec:interventions}.}
\label{tab:e2-gates}
\small
\setlength{\tabcolsep}{5pt}
\renewcommand{\arraystretch}{1.12}
\begin{tabularx}{\textwidth}{@{}l >{\raggedright\arraybackslash}X l l l@{}}
\toprule
Gate & Registered estimand & Bound & Measured & Verdict \\
\midrule
G1 & trapped runs, three-dimensional control & $0$ & $0$ of $779$ & pass \\
G2 & basin agreement with $\operatorname{sign}(p_0-p_\delta)$ & $\ge0.99$ & $746$ of $746$ & pass \\
G3 & separator error over $19$ preferences, $\kappa=80$ & $\le0.02$ & $0.0044$ & pass \\
G4 & critical temperature inside the computed bracket & $3$ of $3$ & $3$ of $3$ & pass \\
G5 & floor-mapped separator error, four points & $\le0.02$ & $<0.00063$ & pass \\
G6 & fixed-start escape point against $\lambda_{\rm crit}$ & $\le0.1$ & $0.169$, $0.101$ & fail \\
G7 & replay threshold error, two replay anchors & $\le0.02$ & $0.0122$, $0.0046$ & pass \\
\bottomrule
\end{tabularx}
\end{table}

The E2 grid has 19 task preferences and 41 initial values of \(p\), for 779
cells per arm. G2 excludes the 33 cells satisfying
$|p_0-p_\delta|\le0.02$, leaving 746 eligible cells. The frozen G2 line
wrote $\kappa=80$ while the grid specification and executed code used
$\kappa=20$; the reported verdict uses that executed grid. G3 separately
uses the $\kappa=80$ separator estimates. G4 also requires reference-sink
error at $\tau=0.8$ below $0.01$, with measured error at numerical
precision.

G6 contains two subchecks. Its fixed-start escape thresholds differ from
$\lambda_{\rm crit}$ by $0.1690$ and $0.1007$ at
$\delta=0.3$ and $0.6$, exceeding the $0.1$ tolerance. The endpoint
subcheck at penalty $\lambda=1.5$ also fails for the low-start
$\delta=0.6$ run. It measures $-0.6111$ against the low-endpoint target
$+0.6111$, an absolute discrepancy of $1.2222$, consistent with the
trajectory reaching the opposite endpoint. The other endpoint comparisons
agree at numerical precision. The basin-width follow-up in
\cref{fig:e2-ortho-appendix} is development evidence and does not replace
either failure. Its final measured widths are $0.0052$ and $0.0046$;
linear extrapolation from the last three penalty values gives zero-width
points $2.824$ and $1.526$, compared with theoretical thresholds $2.798$
and $1.523$.

\begin{table}[t]
\centering
\caption{Registered gates for the sampled bridge, with 35,100 records
from 50 fresh seeds. Per-side entries read trapped/escaped and pool the
two-dimensional arms. Boundary intervals resample seeds jointly across
matched arms and floors and use the 2.5th and 97.5th percentiles. The return-gate resampling defect is described
below.}
\label{tab:bridge-gates}
\small
\setlength{\tabcolsep}{5pt}
\renewcommand{\arraystretch}{1.12}
\begin{tabularx}{\textwidth}{@{}l >{\raggedright\arraybackslash}X >{\raggedright\arraybackslash}p{0.23\textwidth} l@{}}
\toprule
Gate & Registered estimand and bound & Measured & Verdict \\
\midrule
B1 & full-rank control trap rate $\le0.005$ & $0$ of $11{,}700$ & pass \\
B2 & constrained boundary error CI within $\pm0.02$, six cells & $6$ of $6$, widest $\pm0.007$ & pass \\
B3 & unconstrained boundary error CI within $\pm0.03$, six cells & $6$ of $6$, all within $\pm0.023$ & pass \\
B4 & floor shift CI lower bound $>0$, six pairs & $6$ of $6$ & pass \\
B5 & collision-pair order $\ge0.95$ per side & $1.000$/$0.999$ & pass \\
B6 & forced-gap sign $\ge0.95$ per side & $0.992$/$1.000$ & pass \\
B7 & balanced-return difference CI upper bound $<0$ & constrained $0$ of $3$ & fail \\
\bottomrule
\end{tabularx}
\end{table}

The bridge crosses three representation arms, three task preferences,
two floors and 39 initial values of \(p\), with the same 50 seed identities
used throughout these paired comparisons. Each arm therefore contributes
11,700 records, and all 35,100 records are finite. The boundary is the
interpolated 50\% trapped-rate crossing on the grid of initial \(p\) values;
its 95\% percentile interval uses 2,000 seed-bootstrap resamples. B5 evaluates collision
ordering on 10,562 trapped and 12,838 escaped two-dimensional runs. Its
counts are 10,562 and 12,820, respectively. B6 uses the same denominators,
with 10,482 negative forced gaps among trapped runs and 12,838 positive
forced gaps among escaped runs.

B6 passes in its registered pooled form, but the trapped-run sign rate is
$201/280=0.718$ in the unconstrained stratum with $\delta=0.6$ and
$\epsilon=0.25$. The other eleven trapped-run strata have rates at least
$0.99$. The failed stratum also contains the anomalous return comparison.
A recorded, untested explanation concerns the steep dependence on \(p\) of
the unconstrained population reference. That reference was optimized
numerically and passed the recorded solver-stability checks, but carries
no global optimality certificate.

B7 compares trapped and escaped codes under a balanced reference action distribution,
pooling floor levels within each class and task preference. In the
constrained class, the two ideal endpoint codes have equal balanced
population return by symmetry, so a strictly negative difference is not
the theory's prediction. All three constrained comparisons fail. The
unconstrained nominal pass is also unsuitable for confirmatory attribution
because floor levels were pooled and records were resampled as independent
units despite repeated seed identities. The on-policy return comparison
reported in the main text is a development analysis. A valid registered
return comparison remains uncompleted.

\begin{table}[t]
\centering
\caption{Registered gates for the final PPO cohort, with 100 fresh seeds
per main arm. Lower bounds are the lower endpoints of exact two-sided
95\% binomial intervals. Return contrasts use 95\% seed-bootstrap
intervals. The two endpoint-conditioned entries in F1, F5 and F6 concern
96 trapped low-start runs and 100 escaped high-start runs.}
\label{tab:e3-gates}
\small
\setlength{\tabcolsep}{5pt}
\renewcommand{\arraystretch}{1.12}
\begin{tabularx}{\textwidth}{@{}l >{\raggedright\arraybackslash}X >{\raggedright\arraybackslash}p{0.23\textwidth} l@{}}
\toprule
Gate & Frozen estimand and bound & Measured & Verdict \\
\midrule
F1 & occupied-branch error $\le1.10\times$ achievable rank-two reference; lower bound $>0.8$ & $96$ of $96$, $100$ of $100$ & pass \\
F2 & trap rate given low start, lower bound $>0.8$ & $0.96$, lower bound $0.901$ & pass \\
F3 & escape rate given high start, lower bound $>0.8$ & $1.00$, lower bound $0.964$ & pass \\
F4 & rank-64 trapped runs, at most $1$ of $100$ per arm & $0$ and $0$ & pass \\
F5 & cross-talk order on both sides, lower bound $>0.5$ & $96$ of $96$, $100$ of $100$ & pass \\
F6 & forced-gap sign on both sides, lower bound $>0.5$ & $96$ of $96$, $100$ of $100$ & pass \\
F7 & three return contrasts, CI lower bound $>0$ & $0.225$, $0.131$, $0.133$ & pass \\
tail & final-window policy drift at most $0.1$ in every arm & all arms & pass \\
\bottomrule
\end{tabularx}
\end{table}

The neural actor uses a nonlinear width-64 encoder with three noisy
target inputs and five nuisance inputs. A fixed rank-2 or rank-64
projector feeds an independently parameterized
controller; trainable parameter counts match across ranks.
The final PPO cohort has 400 main runs, from two projector ranks and two
initialization conditions with 100 seeds each. A further 30 low-start
rank-two runs use the full root-head learning rate as a secondary
comparator, for 430 records in total. The primary configuration has
$\delta=0.15$, $\epsilon=0.05$, $\sigma=0.30$, 400 updates
(819,200 two-step episodes per run) and
root-head learning-rate multiplier $0.3$. The unmodified branch
probabilities initialize at $q_0=0.05$ or $0.9$ before application of the
visitation floor. A trapped endpoint has final probability $p(E)$ below $0.5$;
the threshold classifies outcomes and is not an estimated dynamical
separator. The tail check bounds maximum-minus-minimum $p(E)$ over the
last quarter of updates by $0.1$ in every run, including the comparator.

The rank-two competence reference is the explicit linear construction
with error
\[
\frac{\sigma^2}{1+\sigma^2}
+\frac{2\sigma^2}{1+2\sigma^2}=0.235111\ldots
\qquad (\sigma=0.30).
\]
One coordinate stores the always-active feature of the evaluated branch;
the other stores the sum of its two mutually exclusive features. The
construction supplies a feasible comparison and does not certify global
optimality over the learned nonlinear class. F1 uses this reference on
branch $S$ for trapped low-start runs and branch $E$ for escaped
high-start runs. The four escaped low-start runs remain in the reported
arm summaries and return comparisons, but are not in these conditional
F1, F5 or F6 denominators.

The Jacobian cross-talk statistic in F5 averages
$(|\partial\widehat Z_i/\partial x_j|
+|\partial\widehat Z_j/\partial x_i|)/2$ over 4,096 fresh observations
with equally weighted branches, where $x_i$ is sensory feature input $i$.
Competence and forced-gap evaluation use 20,000 fresh observations per
branch. Deployment return uses 20,000 fresh episodes with the learned
branch probabilities and the mean terminal action.

F7 tests three contrasts. The first is the rank-64-minus-rank-two return
difference under low initialization, $0.225\,[0.219,0.230]$. The second
is the rank-two high-minus-low difference,
$0.131\,[0.125,0.135]$. The third subtracts the high-initialization rank
difference from the low-initialization rank difference,
giving $0.133\,[0.127,0.137]$. The high-initialization rank difference
is $0.092$, so rank restriction has a cost under both starts. A causal
interpretation of the additional low-start cost requires ordinary
capacity effects to be comparable across starts; the contrast does not
uniquely identify a cross-talk path.

Low-start seeds are 6000--6099 and high-start seeds are 6100--6199. The
rank partners within each initialization share six bound random streams
for initialization, data, observation noise, action sampling, optimization
and evaluation. The bootstrap preserves these rank pairings and resamples
the independent low- and high-start seed groups separately, using 2,000
resamples. The secondary comparator reuses low-start seeds 6000--6029.

Two pilots preceded the final cohort and failed their registered gates.
The first used a mismatched full-information competence reference for the
rank-two arm. The second passed six of seven gates, but its trap count of
28 out of 30 failed the exact confidence-bound criterion. Both verdicts
are retained. The final cohort then ran once on fresh seeds, with planned
power $0.965$ at the pilot-observed trap rate. Its later success does not
change either pilot verdict.

\section{Diagnostics across Action Distributions}
\label{app:cross-occupancy-diagnostic}

The diagnostic tests whether a change in training exposure alters
representation interference and thereby changes deployed feedback. A new
application must specify its feature reference, intervention, measurement
support, tolerances and experimental unit before the chain can be interpreted.
For a confirmatory application, these choices and the handling of missing
measurements are fixed before outcome access. The completed experiments
above test their stated subsets of this procedure.

\begin{enumerate}[leftmargin=*]
\item Compare controlled exposures. Choose at least two declared action distributions and
  hold singleton marginals, compute, evaluator support, and all other factors
  fixed. Record the complete intended denominator. If
  this isolation or support fails, return \emph{measurement invalid}.

\item Validate the representation measurement. Before attributing a change
  in measured geometry to competent feature coding, require the registered
  competence, effect-norm, additivity, and
  partner-context checks. A failed validity gate returns \emph{measurement
  invalid}, not absent superposition. Retain undefined cells as missing under
  the registered rule rather than changing the denominator.

\item Measure the forward effects. Test the predeclared contrast between action distributions
  in collision allocation, then support-matched joint injury or cross-talk,
  then the deployed gap after controller-access checks and fresh forced rollouts.
  Return the earliest unsupported link as the \emph{first failed rung}, and a later effect leaves that rung failed.

\item Test feedback and a matched intervention. Starting from the registered fresh
  actor state, test signed early motion, finite-budget persistence, and the
  declared perturbation restoration. Then apply the matched intervention in capacity, routing,
  the reference action distribution, or replay. Return \emph{chain supported}
  only when the mediator and downstream gap, policy, and return move in the
  registered order. Reward improvement without mediator movement is not
  diagnostic.
\end{enumerate}

An actor floor can restore visitation while retaining collision, so its
effect on the action distribution alone does not establish a repair of the representation.
The output \emph{chain supported} identifies the measured sequence in the
specified architecture and training budget. Exact global representation
response, asymptotic stability, a global basin, spontaneous discovery of
the relevant features and prevalence each require additional evidence.
For the registered studies reported here, the infrastructure-recovery rule
permits continuation from a checkpoint of the same job while preserving all
scientific state. It excludes reinitialization, replacement and
outcome-driven relaunches.

\section{Sampled Co-activation Studies}
\label{app:matching-pilot}
\label{app:matched-sampled}

\subsection{The four-feature matching task}

This exploratory study tests whether changing co-activation alone can
change the learned allocation and the ensuing policy updates. It uses the
four-feature, two-dimensional instance of
\cref{thm:matching-complement}. The independent replication unit is a
dataset and training-stream seed. We use four such seeds, numbered
0--3, with four random initializations nested within each seed and paired
across two training exposures, giving 32 fits and 32 continuations.
Every initialization is retained.

\paragraph{Task and matched samples.}
Let the latent features be independent standard Gaussians. Branch \(S\)
draws its active pair uniformly from \(\{13,14,23,24\}\), and branch
\(E\) from \(\{12,14,23,34\}\). Each feature therefore has activation
probability \(1/2\) on either branch. For a unit-column code
\(V\in\mathbb R^{2\times4}\), the encoder maps
\(X=\mathbf1_G\odot Z\) to \(h\), and the tied controller maps \(h\) to \(a\):
\[
 h=VX+\nu,\qquad a=V^\top h,\qquad
 \nu\sim\mathcal N(0,0.3^2I_2).
\]
The noise is in the latent channel; \(0.3\) is its standard deviation.
Reward is \(b_B-\sum_{i\in G}(a_i-Z_i)^2\), with
\(b_S=0\) and \(b_E=0.5\).

For each seed, each fixed exposure \(r\in\{0.4,0.6\}\) contains 4,000
control observations. At \(r=0.4\), each of the four \(S\) pairs occurs
600 times and each \(E\) pair 400 times; the counts exchange at
\(r=0.6\). Every feature occurs exactly 2,000 times in either dataset.
The two conditions also use the same pool of 2,000 active Gaussian
amplitudes for each feature, independently permuted into their active
slots, and share the channel-noise array. Thus both feature counts and
empirical active-amplitude distributions match exactly, while joint
co-activation changes. This is a stratified paired design, rather than
independent sampling of every active mask.

\paragraph{Fitting and fresh control evaluation.}
Each initialization normalizes the columns of an independent Gaussian
matrix and is shared across the two exposures. Fitting takes 2,000
projected SGD updates of size \(0.03\), differentiating the sampled
reconstruction loss through both uses of \(V\), projecting each gradient
onto the unit-column tangent spaces, and renormalizing after each update.
Each update selects one of ten fixed 400-observation batches with
replacement. Every batch has the same eight-stratum proportions as its
dataset; paired conditions use the same batch index.

The population optimum is \(0.58\) at either exposure. Across all 32
fits, the largest evaluated population excess is
\(4.444\times10^{-4}\), and the largest absolute error in a squared
Gram entry relative to the predicted matching is \(0.002418\).
These comparisons concern the known population objective; they do not
certify optimization of each noisy empirical objective. Analytic
objectives and Gram predictions are used for evaluation only.

Before further adaptation, each fitted code is held fixed for fresh
forced-branch evaluation. Each branch receives 40,000 control observations
in 20 independently generated blocks of 2,000, with its four pairs equally
represented within each block. Evaluation samples are shared across
the paired fits within a dataset seed and are separate from fitting data.
The resulting payoff-gap means are \(-0.495803\) and \(1.495990\) for
\(r=0.4\) and \(0.6\). Across the four seed means, the corresponding
ranges are \([-0.501893,-0.487099]\) and
\([1.487152,1.506790]\). These are descriptive ranges, not confidence
intervals. The Monte Carlo blocks quantify conditional evaluation noise;
they are not additional training seeds.

\paragraph{Sampled continuation from a common policy.}
Every fitted code begins a 2,000-update continuation at actual
\(p_E=0.5\). At each update, 100 branch draws follow the current policy;
each draw supplies one fresh observation from each of that branch's four
pairs. This gives 400 stratified control observations for one encoder
update of size \(0.03\). A separate batch with the same design then
measures rewards using the updated code and the same current policy.
Thus an encoder batch and an actor batch each have 100 branch draws,
not 400 independent branch choices. Their random streams are separate,
and the two exposure conditions use paired random numbers.

The actor uses branchwise payoff estimates with exponential averaging
coefficient \(0.9\), initialized by the first observed branch mean and
retained when a branch is absent. Its update is
\[
 u_{t+1}=\operatorname{clip}_{[-40,40]}
 \bigl(u_t+0.02(\widehat q_E-\widehat q_S)\bigr),\qquad
 p_E=0.05+0.9\operatorname{sigmoid}(u).
\]
The drive is zero until both branches have been observed. This is a
floored sampled-payoff replicator, rather than PPO or an ordinary logit
policy-gradient update. Neither encoder nor actor receives an analytic
optimizer or population payoff. Scalar policy and gap traces are saved
at every update, with complete code diagnostics every 20 updates.

All 16 continuations from \(r=0.4\) finish near \(p_E=0.05\), and all
16 from \(r=0.6\) near \(0.95\). The final returns average
\(-0.205028\) and \(0.244966\). Here return is the exact environmental
expectation of the saved terminal code and its actual stochastic policy,
\[
 J(V,p)=0.5p-(1-p)D_S(V)-pD_E(V),
\]
not a fresh rollout estimate or a fully adapted pure-policy benchmark.
For the ideal matching codes at these floored endpoints, the returns are
\(-0.205\) and \(0.245\), separated by \(0.45\); the separately
adapted pure-policy gap is \(0.5\).

\begin{table}[t]
\centering
\small
\setlength{\tabcolsep}{4pt}
\caption{Matching study by dataset seed and exposure. Gap, policy and
return entries average the four nested initializations; the last column
is the largest squared-Gram error among those four fits. The Monte Carlo
gap is measured before continuation, and the analytic gap uses the same
fitted codes. No initialization or evaluation block is treated as an
independent dataset seed.}
\label{tab:matching-pilot-seeds}
\begin{tabular}{@{}rr rrr rr@{}}
\toprule
Seed & \(r\) & MC gap & Analytic gap & Final \(p_E\) & Final \(J\) & Max Gram error \\
\midrule
0 & 0.4 & -0.492603 & -0.499766 & 0.05 & -0.205030 & 0.000787 \\
0 & 0.6 & 1.487152 & 1.499311 & 0.95 & 0.244970 & 0.001044 \\
1 & 0.4 & -0.487099 & -0.498816 & 0.05 & -0.205024 & 0.002418 \\
1 & 0.6 & 1.492229 & 1.498647 & 0.95 & 0.244970 & 0.001726 \\
2 & 0.4 & -0.501893 & -0.499512 & 0.05 & -0.205016 & 0.000907 \\
2 & 0.6 & 1.497788 & 1.498831 & 0.95 & 0.244974 & 0.001812 \\
3 & 0.4 & -0.501618 & -0.498992 & 0.05 & -0.205044 & 0.001384 \\
3 & 0.6 & 1.506790 & 1.499331 & 0.95 & 0.244949 & 0.001521 \\
\bottomrule
\end{tabular}
\end{table}

\paragraph{Equal loss at the symmetric exposure.}
A separate assay directly constructs
\(C_S=(e_1,e_1,e_2,e_2)\) and
\(C_E=(e_1,e_2,e_1,e_2)\), where \(e_1,e_2\) are orthonormal.
At \(r=0.5\), both are globally optimal with loss \(0.68\).
Their branch distortions are respectively \((0.18,1.18)\) and
\((1.18,0.18)\), giving payoff gaps \(-0.5\) and \(1.5\).
Using the same 20-block forced-control design, measured balanced losses
are \(0.679465\) and \(0.679633\), and gaps are \(-0.499199\) and
\(1.497135\). An orthogonal four-dimensional code gives exact
distortions \((0.18,0.18)\) and gap \(0.5\), with measured gap
\(0.497293\). This assay concerns explicitly constructed codes, not
learned optima, and demonstrates different actionwise loss allocations
under the same aggregate objective. It does not assert that the midpoint
is a stable trap. The learned continuations establish the observed
finite-rate feedback in this controlled class; four development seeds do
not establish its prevalence in neural reinforcement learning.

\subsection{Learning across dimensions}
\label{app:dimension-learning}

The four settings in \cref{fig:matching-sampled}b instantiate the
equal-frequency partition construction of
\cref{app:dimension-partitions} with
$(d_{\mathrm{in}},d_{\mathrm{out}})\in\{(16,2),(16,4),(32,2),(32,4)\}$.
For each setting, partition the features into $d_{\mathrm{out}}$ groups
of size $m=d_{\mathrm{in}}/d_{\mathrm{out}}$. Branch $S$ draws uniformly
from the $N=d_{\mathrm{in}}(d_{\mathrm{in}}-m)/2$ pairs joining distinct
groups. Branch $E$ uses the partition obtained by exchanging one feature
between the first two groups. Every feature has probability
$2/d_{\mathrm{in}}$ on either branch. With unit encoder columns, tied
controller weights, and independent Gaussian channel noise of standard
deviation $0.3$, each branch has minimum population loss $0.18$.
We set $b_S=0$ and $b_E=\delta=2(m-1)/N$, half the excess loss that
an $S$-optimal code incurs on $E$. These choices give strict return
reversal at the $S$ optimum and a better separately adapted $E$ policy.
Because the branch laws and reward gap change with dimension, this
comparison tests the feedback across constructions rather than a
monotonic effect of representation size.

\paragraph{Matched fitting data.}
Each setting uses 20 dataset seeds with two code initializations per
seed, paired between pure-$S$ and pure-$E$ fitting. These are separate
branch datasets, rather than the $r=0.4,0.6$ mixtures in the four-feature
experiment. Pair counts exactly follow each branch law. The observation
count is the smallest multiple of $N$ giving at least 4,096 active
observations per feature. Each feature uses the same Gaussian value pool
in the two datasets, independently permuted into its active slots, and
the channel-noise arrays are shared. Thus feature counts and empirical
active-value distributions match exactly. Initial encoders are Gaussian
with normalized columns. We take 5,000 projected Adam updates on the
empirical active-coordinate squared error, with learning rate decaying
from $0.03$ to $0.0015$ on a cosine schedule. Sufficient statistics
accelerate this finite-sample objective, with loss and gradient checked
against raw observations. Population optima are evaluation references
and do not enter training. All 320 fitted codes are retained.

\paragraph{Joint code and policy learning.}
Every fitted code begins a 4,096-update continuation from $p(E)=0.5$.
Each update fits the code on 512 fresh policy-collected observations
using projected SGD with learning rate $0.03$. The current controller
then receives 1,024 separate observations from each branch to estimate
its returns. These forced-branch samples update only the policy, so both
returns remain measurable even when a branch is rarely selected.
The estimates use exponential averaging with coefficient $0.95$, and
the policy logit follows
\[
 u_{t+1}=\operatorname{clip}_{[-12,12]}
 \bigl(u_t+0.2(\widehat q_E-\widehat q_S)\bigr),
 \qquad p(E)=\operatorname{sigmoid}(u).
\]
This sampled-payoff update does not use analytic returns. Its numerical
limits are approximately $6.14\times10^{-6}$ and $1-6.14\times10^{-6}$.
Codes, probabilities, and expected returns are recorded every 32
updates. Separate local continuations start from $0.05$ for $S$-fitted
codes and $0.95$ for $E$-fitted codes, giving 640 continuations in total.
All runs are retained. In \cref{fig:matching-sampled}b, the first 1,024
updates show the transition from the common initial policy. Curves
average the two initializations within each dataset seed and then the
20 seeds, with pointwise 95\% percentile intervals from 5,000 bootstrap
resamples of seed means.
The full 4,096-update common-start curves appear in
\cref{fig:dimension-learning-full}.

\paragraph{Fitting, return reversal, and continuation.}
\Cref{tab:dimension-learning} reports the largest initial population
fitting excess, mean initial branch-return gaps, and final return
advantage of the $E$-fitted group. All 40 $S$-fitted codes per setting
initially favor $S$, and all 40 $E$-fitted codes favor $E$. Both groups
retain their respective return-gap signs at every saved continuation
checkpoint and approach opposite pure policies under both start
protocols. The $E$-fitted group earns more at the same capacity in every
setting. Initial and final branch returns are also checked with 16 fresh
evaluation blocks of 4,096 observations per branch. The table uses
analytic expected returns of the learned codes and actual final
policies, not returns obtained by exact refitting during continuation.
These finite trajectories test persistence during learning and do not
estimate the theoretical basin boundaries.

\begin{table}[t]
\centering
\small
\setlength{\tabcolsep}{5pt}
\caption{Nearly optimal fits yield opposite return rankings across four
dimension settings. Final return differences compare the common-start
continuations after 4,096 updates.}
\label{tab:dimension-learning}
\begin{tabular}{@{}lrrrr@{}}
\toprule
\shortstack{Features $\to$\\dimensions} &
\shortstack{Maximum\\fitting excess} &
\shortstack{Initial $E-S$\\$S$-fitted} &
\shortstack{Initial $E-S$\\$E$-fitted} &
\shortstack{Final return\\$E$-fitted $-$ $S$-fitted} \\
\midrule
$16\to2$ & $9.25\times10^{-5}$ & $-0.21873$ & $0.65623$ & $0.21875$ \\
$16\to4$ & $2.30\times10^{-4}$ & $-0.06246$ & $0.18746$ & $0.06250$ \\
$32\to2$ & $7.50\times10^{-5}$ & $-0.11718$ & $0.35155$ & $0.11719$ \\
$32\to4$ & $1.98\times10^{-4}$ & $-0.03643$ & $0.10935$ & $0.03646$ \\
\bottomrule
\end{tabular}
\end{table}

\begin{figure}[t]
\centering
\includegraphics[width=\textwidth]{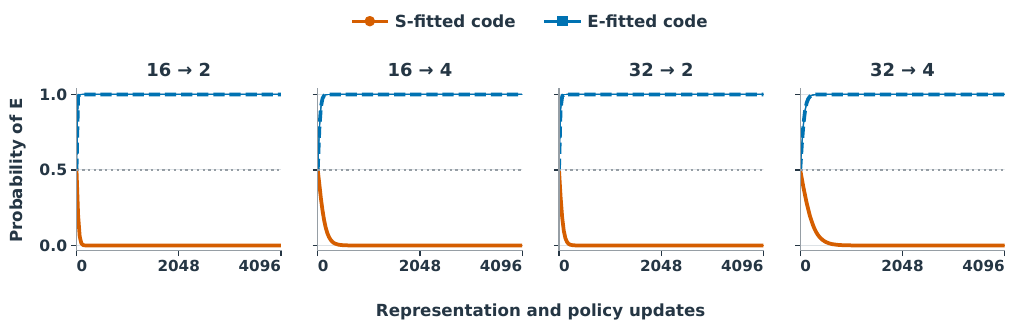}
\caption{The branch preferences in \cref{fig:matching-sampled}b persist
throughout continued learning. Curves and bands summarize all 40
continuations per fitting condition in each setting.}
\label{fig:dimension-learning-full}
\end{figure}

\section{Sequential Control with Learned Representations}
\label{app:sequential-neural}

This experiment studies capacity and training exposure beyond the solved
two-step construction. It retains known sensory factors so that control
errors can be measured, while introducing repeated routing decisions,
stochastic transitions, and reward-only neural PPO.

\subsection{Task and learner}
\label{app:sequential-task}

The environment has stages \(s\in\{0,1,2,3\}\), two exits, and an
intrinsic budget of 32 phases. The first stage \(s=0\) is called Hub.
A routing phase selects \(S\) or \(E\);
the following control phase supplies three real actions. On successful
control, \(S\) terminates with bonus \(1+0.5s\), while \(E\) advances
one stage, terminating with bonus 3 at stage 3. Failure returns to routing
at the current stage. Thus four stages need not mean four decisions or
four steps. Every phase costs \(0.02\). A control phase also costs its
realized error
\[
L=\sum_{i\in G}(a_i-Z_i)^2,
\qquad \Pr(\text{success}\mid L)=\exp(-L/0.5).
\]
The success event is sampled from this probability. Budget exhaustion is
terminal, with zero value bootstrap; remaining budget is observed.

The independent latent coordinates are standard Gaussian. Branch \(S\)
draws \(G\) uniformly from \(\{13,23\}\), and \(E\) from
\(\{12,13\}\). The eight sensory inputs contain
\(X_i=Z_i\mathbf 1\{i\in G\}+\epsilon_i\), with independent
\(\epsilon_i\sim\mathcal N(0,0.3^2)\), and five independent Gaussian
nuisance coordinates. The actor receives neither the active mask nor
latent targets. Stage, phase, branch and normalized remaining budget form
nine context coordinates; branch context is zero during routing.

An \(8\to64\) tanh encoder is followed by a fixed orthogonal rank-\(k\)
projector and gain \(\sqrt{64/k}\). The projected representation and context feed
two 64-unit tanh layers, a categorical route head, and a three-dimensional
Gaussian control head. The sensory representation is zero during routing. Ranks 2 and
64 have the same trainable parameter count and matched initial actor
parameters; the fixed projector restricts the learned sensory map, not
the number of weights. Route probabilities are
\(0.05+0.9\operatorname{softmax}(\ell)_E\). A learned diagonal control
standard deviation starts at \(e^{-0.5}\), with log standard deviations
clipped to \([-5,1]\). The independent critic has two 64-unit tanh layers
and receives observable sensory and context inputs. Its inputs contain
no latent targets, and its gradients do not update the actor encoder.

Training uses 128 environments, 128 phases per rollout, four PPO epochs,
minibatches of 1,024, Adam learning rate \(3\times10^{-4}\) and
\(\epsilon_{\rm Adam}=10^{-5}\), discount \(0.99\), GAE parameter
\(0.95\), clipping ratio \(0.2\), entropy coefficient \(0.001\), and
global gradient-norm clipping at \(0.5\). The value term is
\(0.5\times\tfrac12(V-\widehat R)^2\).
Advantages are normalized over the rollout. Likelihoods and entropies use
only the action head active in each phase. Environment rewards and
transitions provide no differentiable path or supervised target to the
learner.

\subsection{Ordinary training and matched forks}
\label{app:sequential-design}

\begin{figure}[t]
\centering
\includegraphics[width=\textwidth]{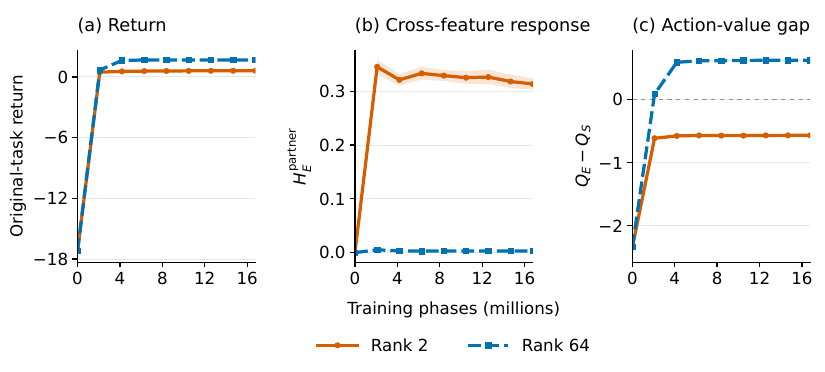}
\caption{Complete trajectories for \cref{fig:sequential-baseline},
including initialization. Both ranks use \(p_0=0.1\) and 20 training
seeds. Means and pointwise 95\% seed-bootstrap intervals use the
same seeds and saved checkpoints throughout. Initial returns are
strongly negative before either controller learns the task.}
\label{fig:sequential-full-training}
\end{figure}

Ordinary training compares both ranks at actual initial advance
probabilities \(p_0=0.1\), \(0.5\), and \(0.9\), with 20 seeds per condition
and 16,777,216 phases per run. \Cref{tab:sequential-baseline} gives all
six results. All 60 narrow estimated root rollout gaps are negative,
whereas all 60 wide gaps are positive. No tested initial probability
produces a second narrow outcome class. This neural configuration
therefore does not reproduce the initialization-selected bistability of
the solved model.

\begin{table}[t]
\centering\small
\caption{Ordinary training from Hub, 20 seeds per cell and 16,777,216
phases per run. Entries are stochastic original-Hub return with
pointwise 95\% seed-bootstrap intervals. This cohort has a different
training history and budget from \cref{tab:sequential-results}.}
\label{tab:sequential-baseline}
\begin{tabular}{@{}rll@{}}
\toprule
Initial $p_E$ & Rank 2 & Rank 64 \\
\midrule
0.1 & $0.606\ [0.600,0.612]$ & $1.673\ [1.666,1.679]$ \\
0.5 & $0.605\ [0.600,0.610]$ & $1.676\ [1.669,1.684]$ \\
0.9 & $0.606\ [0.600,0.612]$ & $1.675\ [1.666,1.683]$ \\
\bottomrule
\end{tabular}
\end{table}

The fork experiment uses separate parents trained for 4,194,304 phases
at \(p_0=0.9\). Half the simulator lanes train from Hub. The other half
start valid control episodes at stage 3, branch \(E\), with one phase
remaining and masks 12/13 equally likely. This supplies balanced
advance-branch exposure through the existing reward. Half the lanes means
half the transitions, approximately two-thirds of control samples, since
ordinary trajectories alternate routing and control. Both ranks receive
the same pretraining protocol and budget, although their on-policy data
and competence can differ.

Each narrow parent supplies four forks: uniform-stage versus Hub starts,
crossed with trainable versus frozen encoder. Each wide parent supplies
the two trainable-encoder forks. A uniform reset samples all four routing
stages equally with a fresh 32-phase budget; it never forces a route
action. Reset changes affect future episodes. The first 4,194,304 fork
phases use the assigned starts, and the next 4,194,304 use Hub in every
arm. Forks inherit model, optimizer, simulator and random-generator
states. Encoder freezing leaves the controller, route policy and critic
trainable. Checkpoint comparisons verify that all 40 frozen encoders
remain unchanged.

\subsection{Measurements and paired statistics}
\label{app:sequential-measurements}

Each policy is evaluated on 2,048 fresh first episodes from Hub, retaining
both route and Gaussian action sampling. Reported task return is
undiscounted. To estimate the root route-value gap, two further rollout
evaluations force only the first route to \(E\) or \(S\), then use the
current stochastic continuation policy. Their discounted-return difference
uses \(\gamma=0.99\); it is not a critic estimate.

Control probes use 2,048 samples per valid mask. The root context has 31
remaining phases. Let \(\mu_j(X)\) be the mean control action in coordinate
\(j\) produced from sensory input \(X\), and let \(G\) be equally weighted
over masks 12 and 13. Two different
functional statistics are retained:
\begin{align*}
H_E^{\rm partner}
&=\mathbb E_G\sum_{j\in G}\tfrac12\mathbb E\bigl[
 \bigl(\mu_j(X)-\mu_j(X_{-i},X_i')\bigr)^2\mid G\bigr],\\
H_E^{\rm joint}
&=\mathbb E_G\sum_{j\in G}\tfrac12\mathbb E\bigl[
 \bigl(\mu_j(X)-\mu_j(X_j,X_{-j}')\bigr)^2\mid G\bigr].
\end{align*}
Here \(\{i\}=G\setminus\{j\}\), and primes denote independent draws from
the conditional-mask sensory law.
The baseline presentation uses active-partner harm; the fork analysis
uses joint non-target harm, resampling all seven other inputs. These
statistics differ from the earlier two-step PPO Jacobian cross-talk.
Root $E$ MSE averages
\(\mathbb E[\sum_{j\in G}(\mu_j(X)-Z_j)^2\mid G]\) equally over the two
masks. It sums the active-target errors and excludes Gaussian action
variance. Success is evaluated separately as the sample mean of
\(\exp(-L/0.5)\).

\paragraph{Why the interference statistic measures control error.}
Fix the root context and define
$\bar\mu_{j,G}(X_j)=\mathbb E[\mu_j(X)\mid X_j,G]$.
In this simulator, $(Z_j,X_j)$ is independent of $X_{-j}$ conditional
on $G$. Averaging out the other inputs therefore removes a squared-error
component without discarding information about the target:
\begin{equation}
\mathbb E_G\sum_{j\in G}\mathbb E[(\mu_j(X)-Z_j)^2\mid G]
=
\mathbb E_G\sum_{j\in G}\mathbb E[(\bar\mu_{j,G}(X_j)-Z_j)^2\mid G]
+H_E^{\rm joint}.
\label{eq:sequential-interference-decomposition}
\end{equation}
To see this, the original and resampled outputs are independent and
identically distributed conditional on $(X_j,G)$, giving
$\tfrac12\mathbb E[(\mu_j(X)-\mu_j(X_j,X_{-j}'))^2\mid G]
=\mathbb E[(\mu_j(X)-\bar\mu_{j,G}(X_j))^2\mid G]$.
Expanding $\mu_j(X)-Z_j$ around $\bar\mu_{j,G}(X_j)$ leaves a cross term
whose conditional expectation is zero by the stated independence.
Summing over active coordinates and averaging over masks proves the
identity. Thus $H_E^{\rm joint}$ measures harmful dependence on inputs
that provide no additional target information under this diagnostic law.
The equality holds for population expectations, and the paired
resampling statistic is a Monte Carlo estimate of its final term.

The conditional averages above use the evaluator's mask and need not be
jointly realizable by the same constrained encoder and controller. They justify the
measurement rather than supply a deployable repair. The identity also
does not locate the interference in the encoder or establish how much
of an episode-return gain it explains. We measure task return through
separate rollouts because the learned policy also changes subsequent states
and decisions.

Formal seeds are 2--21; development seed 1 is kept separate. All 120
ordinary runs, 40 parents and 120 forks completed. Each paired analysis
uses 20 complete seed blocks: four narrow arms for the encoder comparison,
or four trainable rank/start arms for the capacity comparison. The analyses
share narrow runs. Pointwise percentile intervals use 10,000 bootstrap
resamples of seeds, preserving the relevant pairings; episodes and
checkpoints are not independent replicates. Numerical comparisons align at
total phase counts 4,194,304, 8,388,608 and 12,582,912, including Hub
controls whose curriculum duration is zero. The learning curves in
\cref{fig:sequential-intervention-time} include all nine saved evaluations,
spaced 1,048,576 phases apart from the parent onward. Each point reports
the mean and pointwise 95\% interval over the same 20 paired seeds,
evaluated from the original Hub. Unrecorded optional short-horizon and
deep-transfer probes are unavailable rather than zero. Configurations,
source hashes, checkpoints and every declared outcome are retained.

\subsection{What the interventions resolve, and what remains open}
\label{app:sequential-limits}

\begin{figure}[t]
\centering
\includegraphics[width=\textwidth]{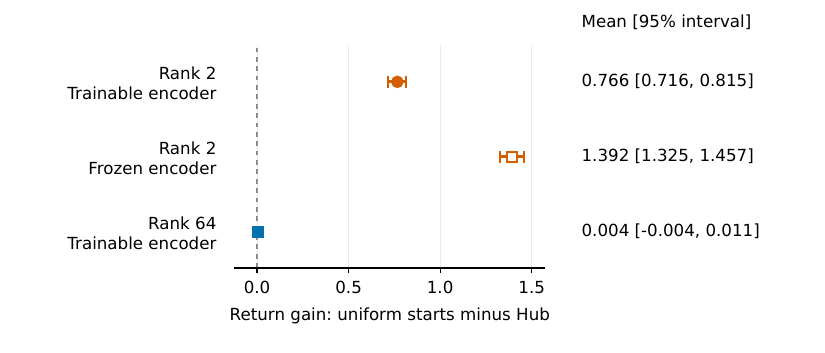}
\caption{Paired final return differences for \cref{tab:sequential-results}.
Points and bars show means and pointwise 95\% bootstrap intervals over
20 complete training-seed blocks.}
\label{fig:sequential-paired-effects}
\end{figure}

The paired uniform-minus-Hub return gain is positive in all 20 narrow
seeds, including with frozen encoders. Its trainable-minus-frozen
interaction is \(-0.626\,[-0.665,-0.585]\), while its rank-two-minus-rank-64
interaction is \(0.762\,[0.712,0.812]\). Thus a positive curriculum
effect does not require continued encoder learning. Initially useful
narrow control also deteriorates under Hub-only adaptation: root E MSE
rises from \(0.323\) to \(0.610\), against \(0.219\) after uniform
starts. Corresponding joint non-target harm is \(0.057\) at the parent,
\(0.259\) after Hub starts, and \(0.039\) after uniform starts. Its paired
uniform-minus-Hub difference is \(-0.220\,[-0.237,-0.202]\).
All 20 paired trainable-encoder seeds have lower final root $E$ MSE
and joint non-target harm after uniform starts, alongside higher Hub
return. This contrast chiefly captures prevented degradation. With the
encoder frozen, root $E$ MSE is $0.283$ after Hub starts and $0.246$ after
uniform starts, improving in all 20 pairs. Joint non-target harm changes
from $0.070$ to $0.062$ and decreases in 16 of 20 pairs, although return
improves in all 20. The return benefit therefore does not require every
seed to improve this particular root diagnostic.

Wide parents already return \(1.655\) before the fork, against
\(-0.333\) for narrow parents. Their small curriculum gain therefore
also reflects less remaining improvement opportunity; matched training
histories do not match competence. Successful narrow forks begin from
these balanced-pretrained parents, not the mature failed baseline.
\Cref{tab:sequential-development} preserves the unsuccessful late
interventions. These findings establish exposure-dependent neural control
in this constructed task, without an optimality certificate or empirical
verification of every condition in \cref{def:trap}.

\begin{table}[H]
\centering\small
\caption{Retained sequential development failures, all using seed 1.
Late interventions start from the 8,388,608-phase ordinary-training pilot
checkpoint. Fixed-mask forks receive 4,194,304 additional phases;
balanced-E forks receive 8,388,608. The fresh pilot receives 4,194,304
pretraining and 4,194,304 Hub-only phases. These results do not establish
impossibility.}
\label{tab:sequential-development}
\begin{tabularx}{\textwidth}{@{}l >{\raggedright\arraybackslash}X >{\raggedright\arraybackslash}X@{}}
\toprule
Study & Observed result & Interpretation \\
\midrule
Late fixed-mask exposure & Five forks/control runs return $0.615$--$0.628$; no observed high exits. & Mask12/13 exposure with trainable/frozen encoders did not recover this mature narrow checkpoint. \\
Late balanced-E exposure & Five forks/control runs return $0.612$--$0.636$; no observed high exits. & Balanced masks likewise did not produce late recovery in the tested budget. \\
Fresh balanced-E pilot & Hub follow-up returns $0.445$, with root E MSE $0.620$ and no observed high exits. & Useful auxiliary learning alone did not maintain downstream behavior. \\
\bottomrule
\end{tabularx}
\end{table}

In the replicated narrow Hub forks, each encoder
condition has one observed high exit among 40,960 evaluation episodes;
these rates are near zero, not exactly zero. Exit frequency alone also
does not rank policies: successful narrow forks exit high more often
than wide policies while earning less return.

\clearpage

\section{Frozen-Encoder Controller Probes}
\label{app:decoder-probe}

The tied construction makes the effect of feature interference explicit, but
its tied weights also limit how the controller can use the
code. We therefore hold the encoder fixed and ask whether an independently
fitted controller preserves the value reversal, and whether its fitting exposure
changes the answer. This is a supervised recovery diagnostic. It isolates
controller adaptation before asking what a learning policy would subsequently do.

\paragraph{Information and fitting distributions.}
The task has the original three independent signed standard-Gaussian
amplitudes. Branch $S$ samples $\{1,3\}$ or $\{2,3\}$ equally; branch $E$
samples $\{1,2\}$ or $\{1,3\}$ equally. A frozen unit-column encoder
$V\in\mathbb R^{d\times3}$ produces
\[
h=V(Z\odot m_G)+\sigma\xi,\qquad \xi\sim\mathcal N(0,I_d),
\]
where $m_G$ is the active-pair indicator. Each controller receives only $h$.
Neither the branch nor the active mask is provided as an input. The mask and
signed targets are available for the supervised diagnostic loss
$\sum_{i\in G}(f_i(h)-Z_i)^2$. Thus the noise is added after encoding,
as in the core task; the reward-only neural studies below instead add noise
to sensory inputs before encoding.

There are six encoder/original-policy cases. The exact $S$ code has
$v_1=v_2=e_1$, $v_3=e_2$, with original probability $p(E)=0.1$;
the exact $E$ code has $v_1=e_1$, $v_2=v_3=e_2$, with $p(E)=0.9$.
The fitted $S$ and $E$ groups import the first twenty retained E1
best-of-eight codes trained under these respective action distributions
(\cref{app:sampled-check}). These retain their dataset identities and are
not selected using the controller results. The two remaining cases use
$V=I_3$, paired with original probabilities $0.1$ and $0.9$.
For each case we use $\sigma\in\{0.1,0.3\}$ and fit at either the original
probability or $r=0.5$, holding $V$ fixed. Fitting exposure $r$ gives pair
probabilities $(r/2,1/2,(1-r)/2)$ on $(12,13,23)$. At $r=0.5$, balanced
branches therefore give neither uniform pairs nor equal singleton frequencies.

\paragraph{Controllers, references, and optimization.}
We compare the unchanged tied controller $V^\top h$, an independently
parameterized linear controller with no intercept, and a conventional MLP
with two width-64 ReLU hidden layers and an unrestricted linear output.
The MLP controller and signed Gaussian regression output follow the
standard construction of \citet[Sections~14.3--14.4]{goodfellow2016deep};
width 64 is our experimental choice. The linear controller uses ordinary
least squares separately for each output on samples where that output is
active. The tied reference has no fitted controller parameters; the linear
and MLP controllers have $6$ and $4{,}547$ parameters for $d=2$, and $9$ and
$4{,}611$ for $d=3$. Controller parameter counts are therefore not matched.

Each fitting distribution supplies $65{,}536$ observations to both OLS
and the MLP. The MLP uses Adam for $8{,}192$ minibatch updates of size
$2{,}048$, at learning rate $10^{-3}$, reduced to $10^{-4}$ for the final
quarter. A separate $16{,}384$-sample validation set is evaluated every
512 updates. We report the final controller, without selecting a checkpoint
by validation or test performance. Both fitting exposures share underlying
random draws and initialization within a seed block. The twenty new
probe/test seeds yield $6\times2\times20=240$ completed blocks and 480
MLP fits. The earlier seed-zero technical pilot is excluded. For analytic
codes the replication unit is a controller-fitting and test seed conditional on the
same encoder; in fitted-code groups it also includes the corresponding
retained encoder dataset.

Population linear regression and an $h$-only Bayes estimator locate the
remaining fitting error. For pair $G$, write
$\Sigma_G=V_GV_G^\top+\sigma^2I_d$ and let $\phi_{\Sigma_G}$ be its
Gaussian density. Under fitting prior $\pi_G(r)$, the Bayes estimator for
active-output loss is
\begin{equation}
 f_{i,r}^{\star}(h)=
 \frac{\sum_{G\ni i}\pi_G(r)\phi_{\Sigma_G}(h)
           v_i^\top\Sigma_G^{-1}h}
      {\sum_{G\ni i}\pi_G(r)\phi_{\Sigma_G}(h)}.
 \label{eq:decoder-probe-bayes}
\end{equation}
It minimizes risk under the fitting prior, not each test branch separately.
It uses no observed mask at inference. A further pair-aware reference is
retained as a lower bound with extra information, and is not a same-information
baseline or a training target.

\paragraph{Values and uncertainty.}
Fresh tests use $65{,}536$ observations per forced branch, with common
amplitudes and channel noise across branches, controllers, and fitting exposures
within a block. We measure the summed active-coordinate errors $D_S,D_E$,
the actual value gap $\Delta q=\delta-D_E+D_S$, and
\[
 J_{p}=p\delta-(1-p)D_S-pD_E,
\]
where $p$ remains the original policy probability even after balanced fitting.
These values use the controller output as the terminal control. Branch
contrasts use paired per-observation losses; exposure contrasts resample
whole matched seed blocks. Pointwise 95\% percentile intervals use 10,000
seed-bootstrap resamples. The reward margins $\delta=0.25,0.5,0.75$ are
postprocessed from the same errors, not separately trained cohorts.
\Cref{tab:decoder-probe-full} reports every code, noise level, and controller
at $\delta=0.5$; adding $\delta-0.5$ to either gap and
$p(\delta-0.5)$ to either return gives the other declared margins.

\paragraph{Exposure changes control errors and action order.}
For the exact $S$ code, an independent controller reduces the negative gap
substantially, but the MLP still gives negative gaps under the original
exposure in all twenty seeds at both noise levels. Balanced fitting moves
its mean gap from $-0.08118$ to $0.00978$ at $\sigma=0.1$, and from
$-0.09599$ to $0.05187$ at $\sigma=0.3$. The paired changes are
$0.09096\,[0.08949,0.09232]$ and
$0.14787\,[0.14545,0.15027]$, respectively. Balanced MLP gap estimates
are positive in $19/20$ and $20/20$ seeds. OLS retains a negative balanced
gap at $\sigma=0.1$ but makes it positive at $\sigma=0.3$.
The imported $S$ codes give similar MLP changes,
$-0.08206\to0.00958$ and $-0.09547\to0.05165$.
Those encoders already have nearly the same collinear geometry, so they
do not constitute an additional encoder architecture.

Reversing the gap does not immediately improve the unchanged policy.
For the exact $S$ code with $p=0.1$, balanced MLP fitting changes return
from $-0.03785$ to $-0.04838$ at $\sigma=0.1$, and from $-0.18341$
to $-0.20406$ at $\sigma=0.3$. It reallocates error toward the branch
that this policy still visits most. The $E$ codes retain positive gaps,
and both identity controls remain near the base gap $0.5$.
Across all 480 MLP fits, the largest validation excess over the
fitting-prior Bayes estimator is $0.001667$, and the largest absolute
validation change over the final 512 updates is $0.000541$.
These diagnostics support a well-fitted controller comparison without
certifying global MLP optimality. The study shows that fitting exposure
can change action order through an unchanged encoder; it does not test
actor escape, encoder adaptation, or closed-loop attraction. At the
smaller margin $\delta=0.25$, the low-code MLP mean gaps remain negative
even after balanced fitting, whereas at $\delta=0.75$ they are already
positive under original fitting. Recovery therefore depends on both the
controller and the task's value margin.

\begingroup
\small
\setlength{\tabcolsep}{3pt}
\renewcommand{\arraystretch}{1.06}
\begin{longtable}{@{}lllr r r r@{}}
\caption{Independent controllers and fitting exposure change action values
through frozen codes. Means use twenty seed blocks at $\delta=0.5$;
O/B denote original/balanced fitting, with returns at the original policy.}
\label{tab:decoder-probe-full}\\
\toprule
Code & $\sigma$ & Controller & $\Delta q_{\rm O}$ & $\Delta q_{\rm B}$ & $J_{\rm O}$ & $J_{\rm B}$ \\
\midrule
\endfirsthead
\multicolumn{7}{l}{\tablename\ \thetable\ (continued)}\\
\toprule
Code & $\sigma$ & Controller & $\Delta q_{\rm O}$ & $\Delta q_{\rm B}$ & $J_{\rm O}$ & $J_{\rm B}$ \\
\midrule
\endhead
\midrule
\multicolumn{7}{r}{Continued on next page}\\
\endfoot
\bottomrule
\endlastfoot
Exact $S$ & 0.1 & Tied & $-0.4990$ & $-0.4990$ & $-0.0699$ & $-0.0699$ \\*
 &  & OLS & $-0.3216$ & $-0.0261$ & $-0.0593$ & $-0.1070$ \\*
 &  & Pop. linear & $-0.3207$ & $-0.0259$ & $-0.0593$ & $-0.1071$ \\*
 &  & MLP & $-0.0812$ & $0.0098$ & $-0.0378$ & $-0.0484$ \\*
 &  & Bayes & $-0.0797$ & $0.0106$ & $-0.0374$ & $-0.0481$ \\
\addlinespace[2pt]
Exact $S$ & 0.3 & Tied & $-0.4992$ & $-0.4992$ & $-0.2300$ & $-0.2300$ \\*
 &  & OLS & $-0.2140$ & $0.0313$ & $-0.1925$ & $-0.2326$ \\*
 &  & Pop. linear & $-0.2134$ & $0.0312$ & $-0.1925$ & $-0.2325$ \\*
 &  & MLP & $-0.0960$ & $0.0519$ & $-0.1834$ & $-0.2041$ \\*
 &  & Bayes & $-0.0950$ & $0.0520$ & $-0.1828$ & $-0.2033$ \\
\addlinespace[2pt]
Exact $E$ & 0.1 & Tied & $1.4996$ & $1.4996$ & $0.3300$ & $0.3300$ \\*
 &  & OLS & $1.3203$ & $1.0259$ & $0.3407$ & $0.2926$ \\*
 &  & Pop. linear & $1.3214$ & $1.0267$ & $0.3407$ & $0.2929$ \\*
 &  & MLP & $1.0834$ & $0.9917$ & $0.3621$ & $0.3518$ \\*
 &  & Bayes & $1.0811$ & $0.9905$ & $0.3626$ & $0.3520$ \\
\addlinespace[2pt]
Exact $E$ & 0.3 & Tied & $1.4999$ & $1.4999$ & $0.1700$ & $0.1700$ \\*
 &  & OLS & $1.2133$ & $0.9690$ & $0.2075$ & $0.1673$ \\*
 &  & Pop. linear & $1.2143$ & $0.9697$ & $0.2075$ & $0.1676$ \\*
 &  & MLP & $1.0976$ & $0.9502$ & $0.2166$ & $0.1963$ \\*
 &  & Bayes & $1.0968$ & $0.9493$ & $0.2172$ & $0.1968$ \\
\addlinespace[2pt]
Fitted $S$ & 0.1 & Tied & $-0.4990$ & $-0.4990$ & $-0.0699$ & $-0.0699$ \\*
 &  & OLS & $-0.3218$ & $-0.0263$ & $-0.0593$ & $-0.1069$ \\*
 &  & Pop. linear & $-0.3207$ & $-0.0259$ & $-0.0593$ & $-0.1070$ \\*
 &  & MLP & $-0.0821$ & $0.0096$ & $-0.0378$ & $-0.0482$ \\*
 &  & Bayes & $-0.0799$ & $0.0105$ & $-0.0374$ & $-0.0480$ \\
\addlinespace[2pt]
Fitted $S$ & 0.3 & Tied & $-0.4991$ & $-0.4991$ & $-0.2300$ & $-0.2300$ \\*
 &  & OLS & $-0.2144$ & $0.0310$ & $-0.1924$ & $-0.2324$ \\*
 &  & Pop. linear & $-0.2135$ & $0.0311$ & $-0.1924$ & $-0.2324$ \\*
 &  & MLP & $-0.0955$ & $0.0517$ & $-0.1834$ & $-0.2041$ \\*
 &  & Bayes & $-0.0954$ & $0.0518$ & $-0.1827$ & $-0.2034$ \\
\addlinespace[2pt]
Fitted $E$ & 0.1 & Tied & $1.4996$ & $1.4996$ & $0.3300$ & $0.3300$ \\*
 &  & OLS & $1.3203$ & $1.0256$ & $0.3407$ & $0.2924$ \\*
 &  & Pop. linear & $1.3215$ & $1.0268$ & $0.3407$ & $0.2929$ \\*
 &  & MLP & $1.0842$ & $0.9919$ & $0.3621$ & $0.3518$ \\*
 &  & Bayes & $1.0813$ & $0.9907$ & $0.3625$ & $0.3520$ \\
\addlinespace[2pt]
Fitted $E$ & 0.3 & Tied & $1.5000$ & $1.5000$ & $0.1700$ & $0.1700$ \\*
 &  & OLS & $1.2131$ & $0.9689$ & $0.2075$ & $0.1672$ \\*
 &  & Pop. linear & $1.2144$ & $0.9698$ & $0.2075$ & $0.1676$ \\*
 &  & MLP & $1.0981$ & $0.9508$ & $0.2166$ & $0.1966$ \\*
 &  & Bayes & $1.0971$ & $0.9496$ & $0.2173$ & $0.1968$ \\
\addlinespace[2pt]
$I_3$, $p=.1$ & 0.1 & Tied & $0.5000$ & $0.5000$ & $0.0300$ & $0.0300$ \\*
 &  & OLS & $0.5000$ & $0.5000$ & $0.0302$ & $0.0302$ \\*
 &  & Pop. linear & $0.5000$ & $0.5000$ & $0.0302$ & $0.0302$ \\*
 &  & MLP & $0.4999$ & $0.5000$ & $0.0301$ & $0.0301$ \\*
 &  & Bayes & $0.5000$ & $0.5000$ & $0.0302$ & $0.0302$ \\
\addlinespace[2pt]
$I_3$, $p=.1$ & 0.3 & Tied & $0.5000$ & $0.5000$ & $-0.1300$ & $-0.1300$ \\*
 &  & OLS & $0.5001$ & $0.5001$ & $-0.1151$ & $-0.1151$ \\*
 &  & Pop. linear & $0.5001$ & $0.5001$ & $-0.1151$ & $-0.1151$ \\*
 &  & MLP & $0.4998$ & $0.5001$ & $-0.1157$ & $-0.1157$ \\*
 &  & Bayes & $0.5001$ & $0.5001$ & $-0.1151$ & $-0.1151$ \\
\addlinespace[2pt]
$I_3$, $p=.9$ & 0.1 & Tied & $0.5000$ & $0.5000$ & $0.4300$ & $0.4300$ \\*
 &  & OLS & $0.5000$ & $0.5000$ & $0.4302$ & $0.4302$ \\*
 &  & Pop. linear & $0.5000$ & $0.5000$ & $0.4302$ & $0.4302$ \\*
 &  & MLP & $0.5001$ & $0.5000$ & $0.4301$ & $0.4301$ \\*
 &  & Bayes & $0.5000$ & $0.5000$ & $0.4302$ & $0.4302$ \\
\addlinespace[2pt]
$I_3$, $p=.9$ & 0.3 & Tied & $0.5000$ & $0.5000$ & $0.2700$ & $0.2700$ \\*
 &  & OLS & $0.5001$ & $0.5001$ & $0.2850$ & $0.2850$ \\*
 &  & Pop. linear & $0.5001$ & $0.5001$ & $0.2850$ & $0.2850$ \\*
 &  & MLP & $0.5004$ & $0.5001$ & $0.2844$ & $0.2844$ \\*
 &  & Bayes & $0.5001$ & $0.5001$ & $0.2850$ & $0.2850$ \\
\end{longtable}
\endgroup

\section{Shared High-Dimensional Representations Learned by PPO}
\label{app:highdim-learning}

The controller probes keep the representation fixed and reveal what an
independent controller can recover. We next ask what happens when the encoder,
controller, and root policy must learn together from rewards as the number of
relevant features increases. This experiment preserves controlled
co-activation but uses a shared nonlinear sensory encoder, an independently
parameterized controller, and PPO \citep{schulman2017ppo}. Its comparisons
measure capacity-dependent learning and feedback in this specified neural
system; the solved linear phase diagram is not assumed to describe its
optima or dynamics.

\paragraph{Task and information available to the learner.}
For even $n$, let $M_S$ and $M_E$ be disjoint perfect matchings on
$\{1,\ldots,n\}$ whose union is one alternating cycle. Branch $B$
selects an unordered pair uniformly from the complement of $M_B$,
which contains $n(n-2)/2$ pairs. Thus every feature has activation
probability $2/n$ on either branch. Changing the policy changes pair
coverage while leaving singleton activation probabilities equal in
expectation; finite training counts are sampled, not forced equal.

Each episode has two decision steps: choose $B\in\{S,E\}$, then output
$a\in\mathbb R^n$. The terminal reward is
\[
 R=\delta_n\mathbf1\{B=E\}-\sum_{i\in G}(a_i-Z_i)^2,
 \qquad \delta_n=\frac{1}{n-2}.
\]
There is no earlier reward and $\gamma=1$. The sensory input consists
of $n$ masked standard-Gaussian amplitudes with independent Gaussian
noise of standard deviation $0.3$ on all feature coordinates, together
with $n/2$ independent standard-Gaussian nuisance coordinates.
The noise is added before the learned encoder. Only sensory input and
branch identity reach the terminal actor or critic; targets and active
masks determine the environmental reward and diagnostic counts.
No reconstruction loss, latent supervision, analytic value, or Bayes
reference enters the PPO update. The joint Gaussian likelihood includes
all $n$ action coordinates, although only the two active coordinates
are scored by the environment.

\paragraph{Capacity and controller comparison.}
A dense affine map followed by tanh maps all sensory inputs to 128 units.
A fixed orthogonal projector $P_k$ restricts this representation to rank $k$,
with gain $\sqrt{128/k}$. The projector bases are nested within a seed;
there is no sensory bypass to the actor. A branch one-hot vector is
appended after projection. The independent controller is either affine
or a two-hidden-layer width-64 ReLU MLP with a signed linear output.
The affine controller can use branch-specific biases, whereas the MLP can
also condition nonlinearly on branch context. An independent two-layer
width-64 tanh critic receives the original sensory input and branch
context, so its value loss does not use or update the actor's bottleneck.
The root policy has two learned logits and actual probability
$p(E)=0.05+0.9\operatorname{softmax}(u)_E$.

At fixed $n$ and controller, parameter counts and dense tensor dimensions
are the same across ranks. Total trainable counts for linear/MLP policies
are $11{,}268/22{,}756$ at $n=16$, $17{,}988/28{,}420$ at $n=32$,
and $31{,}428/39{,}748$ at $n=64$. The comparison changes rank without
changing these counts; it does not match parameter counts across controller
families. The rank-dependent gain also changes forward and gradient
scales, so the experiment does not isolate capacity from every
optimization consequence of the architecture.

The completed grid uses $(n,k)\in\{(16,8),(16,16),(32,8),(32,16),
(32,32),(64,32),(64,64)\}$, both controllers, and actual initial
$p(E)\in\{0.1,0.9\}$. All 28 conditions retain five training seeds
(1--5), for 140 completed runs. The seed-zero development cohort and
seed-900 technical check are kept separate. Initial encoder and critic
weights and random-stream identities are shared within each training
seed; the rank projectors are nested. Policy-dependent visitation is
allowed to change the realized training data.

\paragraph{Training and measurement.}
Every run uses 2,000 PPO updates with 4,096 two-step episodes per update:
8,192,000 episodes or 16,384,000 environment steps. PPO uses four epochs,
minibatches of 1,024, Adam at $3\times10^{-4}$ with a root-logit learning
rate multiplier of $0.3$, clipping $0.2$, GAE parameter $0.95$, value-loss
coefficient $0.5$, and gradient-norm cap $0.5$. Both entropy coefficients
are zero. Gaussian log standard deviations start at $-0.5$ and are
learned, with effective values clamped to $[-5,1]$.

Evaluations occur initially and every 100 updates, giving 21 observations
per run. Separate, common evaluation streams use 32,768 samples for each
forced branch and 32,768 deployment episodes. Let $D_B$ be summed
active-coordinate MSE with the terminal mean action. The mean-control
value gap is $\Delta q=\delta_n-D_E+D_S$, measured from paired forced
rollouts, not from the critic. Deployment reports both mean terminal
control and sampled Gaussian control, with stochastic routing in both;
\cref{tab:highdim-learning-full} uses the latter return,
$J_{\rm sampled}$. Evaluation Monte Carlo standard errors are conditional
on a trained policy. The table's standard deviations instead describe
variation across the five training seeds. Paired capacity, controller,
and initialization comparisons retain seed identity; their exploratory
percentile intervals enumerate the $5^5$ ordered seed-bootstrap resamples
and are neither simultaneous nor corrected for multiple comparisons.

The full-sensory Bayes active-error reference is
$2\sigma^2/(1+\sigma^2)=0.16514$, with mean prediction
$x_i/(1+\sigma^2)$ on a scored coordinate. It diagnoses competent
sensory control rather than asserting attainability at every rank.
Additional probes use 8,192 observations per branch to independently
resample an active sensory coordinate while retaining its partner and
branch. The partner response is
$\mathbb E[(\mu_j(x')-\mu_j(x))^2]/[2(1+\sigma^2)]$;
for a linear map it equals a squared cross coefficient. Own-coordinate
signed gains and MSE accompany this statistic, because a controller
that drops features can also have small cross-responses. These
conditional nonlinear sensitivities are not an additive decomposition
of control error. Branch, singleton, and pair exposure counts, all
scheduled evaluations, and final checkpoints are retained for every run.

\paragraph{Capacity improves return; reversal occurs in a subset of runs.}
Within every fixed task, each adjacent increase in rank improves sampled
return in all five paired seeds, for both controllers and both initial
probabilities. At $n=32$ and low initial visitation, the linear controller's
mean returns at ranks $8,16,32$ are $-0.6912,-0.3894,-0.1831$;
the corresponding MLP values are $-0.7099,-0.4212,-0.2590$.
The MLP has lower sampled return than the linear controller in all fourteen
group-mean comparisons and 69 of 70 paired-seed comparisons. This is
a finite-budget learning result for the stated PPO configurations,
not evidence that the MLP function class cannot express a good controller.

Only four final gap point estimates are negative, all at
$n=16,k=8,p_0=0.1$. The linear seed-3 estimate,
$-0.000728\pm0.002588$ (one evaluation MC standard error), leaves its
sign unresolved and its visitation is still increasing. Linear seed 4
and MLP seeds 4 and 5 have gaps
$-0.027368\pm0.003079$, $-0.023313\pm0.002999$, and
$-0.017819\pm0.002852$, respectively, with decreasing visitation over
the final 500 updates. These three trajectories are consistent with
adverse feedback over the observed interval. None is an asymptotic
stability or neural-global-optimality certificate. No final gap is
negative at $n=32$ or $64$.

The adequate-rank controls explain why low endpoint visitation alone
would be misleading. At $n=k=64$, low-start final $p(E)$ averages
$0.3630$ with the linear controller and $0.1771$ with the MLP, but both
mean gaps are positive ($0.01528$ and $0.01205$).
The MLP's low-start $D_S,D_E$ average $0.38315,0.38723$, still well
above the sensory Bayes reference. Its $D_E$ improves over the last
500 and 1,000 updates in all ten rank-64 MLP runs across both starts.
The measured learning process is therefore still changing; these
endpoints should not be classified as stable traps. Finally,
$\delta_n$ and individual-pair coverage change with $n$. The results
support within-task capacity comparisons, not a dimensional scaling
law, and the grid ends at $k=n$ without testing excess capacity $k>n$.

\begingroup
\small
\setlength{\tabcolsep}{3pt}
\renewcommand{\arraystretch}{1.10}
\begin{longtable}{@{}rrlrrrrrr@{}}
\caption{Capacity improves return, while negative-gap estimates occur only
at low-start $n=16,k=8$. All 28 conditions report five-seed means;
gap and sampled return include sample SD.}\label{tab:highdim-learning-full}\\
\toprule
$n$ & $k$ & Controller & $p_0$ & Final $p(E)$ & $D_S$ & $D_E$ & $\Delta q$ & $J_{\rm sampled}$ \\
\midrule
\endfirsthead
\multicolumn{9}{l}{\tablename\ \thetable\ (continued)}\\
\toprule
$n$ & $k$ & Controller & $p_0$ & Final $p(E)$ & $D_S$ & $D_E$ & $\Delta q$ & $J_{\rm sampled}$ \\
\midrule
\endhead
\midrule
\multicolumn{9}{r}{Continued on next page}\\
\endfoot
\bottomrule
\endlastfoot
16 & 8 & Linear & $0.1$ & $0.363$ & $0.345$ & $0.414$ & $0.0028\pm0.0204$ & $-0.3579\pm0.0119$ \\*
16 & 8 & Linear & $0.9$ & $0.949$ & $0.416$ & $0.321$ & $0.1666\pm0.0084$ & $-0.2723\pm0.0090$ \\*
16 & 8 & MLP & $0.1$ & $0.500$ & $0.363$ & $0.422$ & $0.0124\pm0.0330$ & $-0.3675\pm0.0120$ \\*
16 & 8 & MLP & $0.9$ & $0.949$ & $0.424$ & $0.347$ & $0.1484\pm0.0109$ & $-0.3008\pm0.0076$ \\
\addlinespace[3pt]
16 & 16 & Linear & $0.1$ & $0.883$ & $0.175$ & $0.174$ & $0.0726\pm0.0005$ & $-0.1206\pm0.0002$ \\*
16 & 16 & Linear & $0.9$ & $0.949$ & $0.176$ & $0.174$ & $0.0734\pm0.0014$ & $-0.1158\pm0.0003$ \\*
16 & 16 & MLP & $0.1$ & $0.872$ & $0.198$ & $0.196$ & $0.0728\pm0.0017$ & $-0.1477\pm0.0116$ \\*
16 & 16 & MLP & $0.9$ & $0.949$ & $0.194$ & $0.191$ & $0.0745\pm0.0003$ & $-0.1370\pm0.0019$ \\
\addlinespace[3pt]
32 & 8 & Linear & $0.1$ & $0.371$ & $0.656$ & $0.661$ & $0.0286\pm0.0039$ & $-0.6912\pm0.0041$ \\*
32 & 8 & Linear & $0.9$ & $0.942$ & $0.666$ & $0.657$ & $0.0429\pm0.0018$ & $-0.6731\pm0.0042$ \\*
32 & 8 & MLP & $0.1$ & $0.373$ & $0.674$ & $0.678$ & $0.0299\pm0.0025$ & $-0.7099\pm0.0029$ \\*
32 & 8 & MLP & $0.9$ & $0.941$ & $0.683$ & $0.677$ & $0.0396\pm0.0041$ & $-0.6922\pm0.0053$ \\
\addlinespace[3pt]
32 & 16 & Linear & $0.1$ & $0.637$ & $0.380$ & $0.383$ & $0.0303\pm0.0030$ & $-0.3894\pm0.0020$ \\*
32 & 16 & Linear & $0.9$ & $0.946$ & $0.388$ & $0.378$ & $0.0432\pm0.0028$ & $-0.3748\pm0.0027$ \\*
32 & 16 & MLP & $0.1$ & $0.600$ & $0.407$ & $0.409$ & $0.0314\pm0.0018$ & $-0.4212\pm0.0039$ \\*
32 & 16 & MLP & $0.9$ & $0.945$ & $0.410$ & $0.405$ & $0.0392\pm0.0026$ & $-0.4055\pm0.0036$ \\
\addlinespace[3pt]
32 & 32 & Linear & $0.1$ & $0.813$ & $0.196$ & $0.195$ & $0.0343\pm0.0006$ & $-0.1831\pm0.0018$ \\*
32 & 32 & Linear & $0.9$ & $0.948$ & $0.197$ & $0.195$ & $0.0361\pm0.0010$ & $-0.1781\pm0.0009$ \\*
32 & 32 & MLP & $0.1$ & $0.719$ & $0.255$ & $0.255$ & $0.0329\pm0.0007$ & $-0.2590\pm0.0039$ \\*
32 & 32 & MLP & $0.9$ & $0.946$ & $0.252$ & $0.249$ & $0.0362\pm0.0029$ & $-0.2446\pm0.0033$ \\
\addlinespace[3pt]
64 & 32 & Linear & $0.1$ & $0.227$ & $0.412$ & $0.414$ & $0.0141\pm0.0004$ & $-0.4687\pm0.0037$ \\*
64 & 32 & Linear & $0.9$ & $0.936$ & $0.418$ & $0.413$ & $0.0208\pm0.0006$ & $-0.4565\pm0.0031$ \\*
64 & 32 & MLP & $0.1$ & $0.167$ & $0.492$ & $0.495$ & $0.0137\pm0.0013$ & $-0.5758\pm0.0097$ \\*
64 & 32 & MLP & $0.9$ & $0.935$ & $0.501$ & $0.497$ & $0.0202\pm0.0012$ & $-0.5682\pm0.0075$ \\
\addlinespace[3pt]
64 & 64 & Linear & $0.1$ & $0.363$ & $0.254$ & $0.255$ & $0.0153\pm0.0007$ & $-0.2861\pm0.0038$ \\*
64 & 64 & Linear & $0.9$ & $0.941$ & $0.257$ & $0.254$ & $0.0193\pm0.0011$ & $-0.2770\pm0.0040$ \\*
64 & 64 & MLP & $0.1$ & $0.177$ & $0.383$ & $0.387$ & $0.0120\pm0.0012$ & $-0.4556\pm0.0049$ \\*
64 & 64 & MLP & $0.9$ & $0.937$ & $0.388$ & $0.383$ & $0.0216\pm0.0022$ & $-0.4418\pm0.0101$ \\
\end{longtable}
\endgroup

\section{Public DoorKey Study}
\label{app:public-doorkey}

\subsection{Capacity and Exploration}

We test whether increasing representation capacity consistently improves
learning on an existing sparse-reward task, MiniGrid DoorKey
\citep{chevalierboisvert2023minigrid}. A $5\times5$ calibration checks
learnability; the $8\times8$ study crosses four actor ranks with two
entropy coefficients. This comparison measures learning outcomes on a
public task without a certificate for optimal representations or a
superposition mechanism.

\paragraph{Task and comparisons.}
We use the official DoorKey environments with \texttt{FullyObsWrapper}
and \texttt{ImgObsWrapper}. The agent observes the complete symbolic grid;
the seven categorical actions, generated layouts, transitions and rewards
are unchanged. A successful episode returns $1-0.9t/T$, where $t$ is the
completion time and $T=250$ or $640$ for $5\times5$ or $8\times8$;
unsuccessful episodes return zero. Key pickup and door opening are
recorded events, with no auxiliary reward.

Both actor and critic use independent CNNs with three $2\times2$
convolutions of widths 16, 32 and 64, ReLU activations, and a
128-dimensional tanh feature layer. Only the actor applies a fixed orthogonal
rank-$k$ projector with gain $\sqrt{128/k}$, followed by two 64-unit tanh
layers and a categorical head. The projectors use a common nested basis;
$k=128$ gives the identity. We compare $k\in\{4,16,64,128\}$ on
$8\times8$, and $k\in\{16,128\}$ on $5\times5$. Trainable parameter
counts and initial weights are matched across ranks within a seed
(112,232 parameters on $5\times5$; 456,296 on $8\times8$). The critic
has no projection and shares no actor parameters. Symbolic images are
cast to floating point without division by 255.

On $8\times8$, PPO \citep{schulman2017ppo} uses entropy coefficient
$0$ or $0.01$ at each rank, giving eight conditions and 24 training runs.
The rank-4 and rank-64 runs extend the earlier rank-16/rank-128 comparison
with the same learning code, hyperparameters and budgets. All runs start
from fresh initialization. An auxiliary rank-16 comparison uses Proximal
Feature Optimization (PFO; \citealt{moalla2024representation}) with zero
entropy coefficient. PFO applies coefficient $1$ to the mean squared
displacement of the actor's final 64-dimensional MLP preactivation,
averaged over coordinates and rollout observations. A detached reference
actor is fixed before each rollout and retained throughout its
optimization epochs. This follows the official implementation's
coordinate mean; an unnormalized coordinate sum would multiply the
penalty by 64. The $5\times5$ calibration uses PPO with zero entropy
coefficient at both ranks.

\paragraph{Training and measurement.}
We use Stable-Baselines3 PPO \citep{raffin2021sb3}, version 2.9.0, with
16 environments, 128 transitions per rollout per environment, 10 epochs,
minibatches of 64, learning rate $3\times10^{-4}$, discount $0.99$, GAE
parameter $0.95$, clipping ratio $0.2$, value coefficient $0.5$, and
gradient-norm cap $0.5$. MiniGrid is version 3.1.0 and Gymnasium is 1.3.0.
The standard PPO branch is used directly when the PFO coefficient is
zero. Time-limit-only training transitions bootstrap from their terminal
observation; true terminal transitions do not. Evaluation accumulates
the original environment rewards.

All configurations use training seeds 0, 1 and 2. Each run receives
524,288 transitions on $5\times5$ or 2,097,152 on $8\times8$.
Final policies are evaluated stochastically on the same 200 declared
layout seeds and independently seeded action draws. Layout seeds begin
at 2,000,000 and action seeds at 7,000,003; different seeds need not
generate different layouts. Intermediate evaluations use 32 episodes
every 65,536 or 131,072 transitions, respectively. Tables use the final
checkpoint of every run, with no checkpoint selection. The unit of
replication is a training seed: we show all three success rates and
their arithmetic mean, without estimating confidence intervals from
three seeds. Mean return first averages episodes within a seed and then
averages seeds.

\begin{table}[t]
\centering\small
\caption{DoorKey $8\times8$: mean success with entropy regularization
is nonmonotone in actor rank. Final success (\%) and original-task return
for all three training seeds per condition; the last row is the auxiliary
PFO comparison.}
\label{tab:public-doorkey8}
\setlength{\tabcolsep}{5pt}
\begin{tabular}{@{}lr rrr rr@{}}
\toprule
 & & \multicolumn{3}{c}{Success by training seed (\%)} & Mean & Mean \\
Method & $k$ & 0 & 1 & 2 & success (\%) & return \\
\midrule
PPO & 4 & 0.0 & 0.0 & 0.0 & 0.0 & 0.000 \\
PPO + entropy & 4 & 0.0 & 3.0 & 0.5 & 1.2 & 0.004 \\
PPO & 16 & 0.5 & 0.0 & 0.5 & 0.3 & 0.002 \\
PPO + entropy & 16 & 39.0 & 26.0 & 12.0 & 25.7 & 0.161 \\
PPO & 64 & 0.0 & 0.0 & 0.0 & 0.0 & 0.000 \\
PPO + entropy & 64 & 16.5 & 0.0 & 27.5 & 14.7 & 0.091 \\
PPO & 128 & 0.0 & 1.0 & 0.5 & 0.5 & 0.001 \\
PPO + entropy & 128 & 23.5 & 6.0 & 3.5 & 11.0 & 0.064 \\
\midrule
PFO & 16 & 0.5 & 0.0 & 32.5 & 11.0 & 0.074 \\
\bottomrule
\end{tabular}
\end{table}

\begin{table}[t]
\centering\small
\caption{The smaller DoorKey task is learnable at rank 16.
Final success (\%) and return on $5\times5$, three training seeds per rank.}
\label{tab:public-doorkey5}
\begin{tabular}{@{}r rrr rr@{}}
\toprule
 & \multicolumn{3}{c}{Success by training seed (\%)} & Mean & Mean \\
$k$ & 0 & 1 & 2 & success (\%) & return \\
\midrule
16 & 100.0 & 100.0 & 100.0 & 100.0 & 0.937 \\
128 & 99.5 & 100.0 & 49.5 & 83.0 & 0.742 \\
\bottomrule
\end{tabular}
\end{table}

\paragraph{What these outcomes establish.}
The calibration demonstrates learnability at rank 16 on the smaller task
(\cref{tab:public-doorkey5}); it does not determine performance on
$8\times8$. Without entropy regularization, mean success on $8\times8$
is below $1\%$ at every rank. Entropy coefficient $0.01$ increases mean
success at all four capacities, but the capacity ordering is nonmonotone:
rank 4 has the lowest mean, while rank 16 exceeds ranks 64 and 128
(\cref{tab:public-doorkey8}). Variation across training seeds is large;
for example, rank-64 success ranges from $0\%$ to $27.5\%$, and the
best rank differs across seeds. PFO's mean improvement is driven by one
seed. These results show that the capacity ordering from the controlled
tasks does not transfer uniformly to this public task at the tested
training budget.

Recorded mean key-pickup/door-opening rates rise from
$32.3\%/2.0\%$ to $72.2\%/35.2\%$ for rank-16 PPO with entropy,
and from $32.7\%/3.8\%$ to $51.0\%/17.0\%$ for rank-128 PPO with
entropy. These events locate progress through the task but do not
identify representational interference. Three-seed mean differences,
with projection gain and optimization changing alongside rank, do not
establish an optimal capacity or a failure mechanism caused by excess
capacity. Nor does limited success within these budgets establish that a
policy class cannot solve DoorKey. The comparison therefore measures
capacity and exploration effects without certifying a superposition trap.

\subsection{Matched Exposure Interventions}
\label{app:public-interventions}

The continuation studies hold capacity fixed and change access to
training states. We use all six original $8\times8$ parents at ranks
16 and 128 with entropy coefficient $0.01$, three training seeds per
rank, saved at 2,097,152 transitions. Each parent first supplies four
arms: native versus stage-mixture exposure, crossed with trainable versus
frozen actor sensory encoder. A subsequent study adds factor-mixture
exposure with the same two encoder conditions, reusing the native arms
and parent evaluations. There are 24 stage-study continuations and 12
new factor continuations; shared controls are not new replications.
In \cref{fig:public-interventions}, points report changes in native-start
success relative to the shared native continuation. Lines connect the two
interventions for the same parent and encoder condition.

Every continuation inherits its original parent's weights and Adam
state and receives exactly 262,144 learner-controlled transitions, for
9,437,184 new transitions across the two studies. None starts from a
smoke test or a previous intervention endpoint. All use fresh paired
environment seeds and common initial action/minibatch RNG conventions,
with the PPO settings above. Encoder freezing covers the actor CNN and
its final 128-dimensional layer. The nonlinear actor MLP, action head
and independent critic remain trainable; the fixed projector does not
change. Thus freezing does not freeze every hidden representation or
hold the entire learner fixed. It also changes which gradients enter
global gradient clipping.

A stage-mixture reset is native with probability one half. Otherwise,
it uniformly selects the state immediately after pickup, unlocking or
door traversal on a shortest legal route from that freshly generated
native layout. The state retains its true prefix clock. A factor-mixture
reset is also native with probability one half; otherwise it uniformly
selects one of the four diagnostic configurations described below.
Legal pickup, unlock, deposit and possible re-closing actions reach
each configuration from the actual native initial state. No-op padding
sets all four clocks to 100. Prefixes are installed, not executed as
learner transitions. Both interventions preserve the original reward
and deadline, but supply privileged oracle access to later states.
Equal learner budgets therefore do not match information or planning
cost. Factor mixing was designed after observing the earlier study's
intervention--probe mismatch; it is an exploratory follow-up.

Final performance uses 200 native-start episodes per model, with layout
seeds starting at 4,000,000 and per-episode action streams at 11,000,003.
Actions use independent NumPy inverse-CDF categorical draws shared across
arms. The first 64 streams provide the midpoint evaluation at 131,072
additional transitions. The six original parents are reevaluated on
this same panel; these evaluations differ from the original capacity
sweep's panel and sampler. All reported continuation comparisons use
the final checkpoint at the fixed budget, with no intermediate selection.
The 36 endpoints and six parent evaluations contribute 10,704 native
evaluation episodes, including midpoints. The replication unit remains
the original training seed, not an episode or a reused endpoint.

\begin{table}[t]
\centering\small
\caption{DoorKey continuation outcomes at fixed capacity. Entries give
mean native-start success (\%) / original return over three parents.
Every arm has 262,144 additional learner transitions. Native and broad
stage controls are shared with the subsequent factor study.}
\label{tab:public-continuations}
\setlength{\tabcolsep}{5pt}
\begin{tabular}{@{}rlrrr@{}}
\toprule
Rank & Encoder & Native & Stage mixture & Factor mixture \\
\midrule
16 & Trainable & $22.5\,/\,0.1357$ & $30.5\,/\,0.2034$ & $23.2\,/\,0.1464$ \\
16 & Frozen & $35.3\,/\,0.2444$ & $40.7\,/\,0.2789$ & $28.0\,/\,0.1748$ \\
128 & Trainable & $12.5\,/\,0.0780$ & $20.8\,/\,0.1294$ & $20.5\,/\,0.1259$ \\
128 & Frozen & $20.2\,/\,0.1356$ & $25.5\,/\,0.1690$ & $19.5\,/\,0.1170$ \\
\bottomrule
\end{tabular}
\end{table}

\begin{table}[t]
\centering\small
\caption{Every parent-level native-start success rate (\%). T/F denotes
trainable/frozen sensory encoder; the actor head and critic learn in
both conditions. No parent is excluded.}
\label{tab:public-continuation-seeds}
\setlength{\tabcolsep}{5pt}
\begin{tabular}{@{}rr rrr rrr@{}}
\toprule
& & \multicolumn{3}{c}{Trainable} & \multicolumn{3}{c}{Frozen} \\
Rank & Seed & Native & Stage & Factor & Native & Stage & Factor \\
\midrule
16 & 0 & 43.0 & 51.5 & 18.0 & 54.5 & 52.5 & 34.0 \\
16 & 1 & 7.0 & 20.5 & 14.5 & 28.5 & 43.5 & 33.0 \\
16 & 2 & 17.5 & 19.5 & 37.0 & 23.0 & 26.0 & 17.0 \\
128 & 0 & 20.5 & 31.0 & 27.5 & 31.0 & 35.0 & 20.0 \\
128 & 1 & 9.5 & 12.5 & 10.5 & 18.0 & 26.5 & 17.5 \\
128 & 2 & 7.5 & 19.0 & 23.5 & 11.5 & 15.0 & 21.0 \\
\bottomrule
\end{tabular}
\end{table}

Broad stage mixing improves success and original return in all six
trainable-encoder comparisons and five of six frozen-encoder comparisons
(\cref{tab:public-continuations,tab:public-continuation-seeds}). Its one
exception is rank 16, seed 0, with a frozen encoder. Factor mixing
improves success and return in seven of twelve native comparisons: five
with a trainable encoder and two with a frozen encoder. Relative to broad
stage mixing, factor mixing lowers success and return in nine of twelve.
The gains therefore depend on which later states are supplied, not
merely on increasing their total availability. Neither public study
includes a subsequent withdrawal phase, so native-start transfer does
not establish persistence after further native-only training.

\subsection{Local Action and Representation Diagnostics}
\label{app:public-factor-diagnostic}

The fixed panel crosses an open versus closed-but-unlocked door with
two remote positions of a deposited, unused key. The agent stands
immediately left of the door facing it, carries nothing, and has clock
100. Sixteen declared layout seeds deduplicate into eight wall/door
geometries and 32 distinct states. Exact planning verifies that the
complete seven-action optimal-value vector is identical across key
positions at each door status. Forward is uniquely optimal when open,
and toggle when closed. This equality concerns optimal continuation;
later trajectories under a learned policy can still depend on key
position. The broad stage starts carry the key and do not explicitly
supply these deposited-key configurations, motivating factor mixing.

For each state we measure the optimal-action probability, greedy
correctness, and the one-action oracle regret
\[
 \ell_{\rm oracle}(s)
 =\max_a Q^*(s,a)-\sum_a\pi(a\mid s)Q^*(s,a).
\]
This score assumes that only the first action uses the learned policy
and all later actions are optimal. It is not $Q^\pi$ or the full-episode
regret. Here all first actions can finish within the horizon, so the
undiscounted score equals $0.9/640$ times the expected excess path length.
These are two units of the same measure, not independent evidence.
Absolute key sensitivity is the difference in optimal-action probability
between the two positions; it has no intrinsic direction of harm.

\begin{table}[t]
\centering\small
\caption{Factor-mixture versus native continuation on the fixed panel.
Each arrow gives the native and factor means over three parents.
Sensitivity is an absolute probability difference in percentage points;
excess steps assume optimal continuation after one sampled action.}
\label{tab:public-factor-actions}
\setlength{\tabcolsep}{5pt}
\begin{tabular}{@{}rlrrr@{}}
\toprule
Rank & Encoder & Optimal-action probability (\%) & Excess steps & Sensitivity (pp) \\
\midrule
16 & Trainable & $22.0\to40.3$ & $1.119\to0.839$ & $8.5\to19.9$ \\
16 & Frozen & $29.0\to39.5$ & $1.012\to0.782$ & $11.6\to9.6$ \\
128 & Trainable & $29.3\to39.2$ & $0.994\to0.819$ & $9.4\to10.2$ \\
128 & Frozen & $27.0\to35.4$ & $0.993\to0.859$ & $5.1\to11.9$ \\
\bottomrule
\end{tabular}
\end{table}

Factor mixing raises optimal-action probability and lowers oracle regret
in all twelve native comparisons, including all six frozen-encoder
conditions (\cref{tab:public-factor-actions}). Greedy correctness
improves in eleven and is unchanged in one. Nevertheless, key sensitivity
increases in eight and decreases in four. Broad stage mixing improves
native success while key sensitivity increases in nine of twelve
comparisons. Sensitivity therefore does not consistently track either
local decision quality or task performance.

We also compare finite responses to door status and key position at the
raw encoder output, projected representation and final actor-MLP hidden layer.
For door status $D$ and key position $k$, write the response as $h_{D,k}$,
the key displacement as $d_D=h_{D,1}-h_{D,0}$, and the mean door
displacement as
$u=\frac12\sum_k(h_{{\rm closed},k}-h_{{\rm open},k})$.
The Euclidean parallel-energy fraction is
$(u^\top d_D)^2/(\|u\|^2\|d_D\|^2)$; zero-norm directions are undefined.
No directional degeneracy occurs in this panel. Projected-code alignment
increases in five of six trainable factor arms relative to native, and
is exactly unchanged in all six frozen arms, despite local improvement
in all twelve. Across all 18 frozen endpoints, raw and projected-representation
responses equal their parents', while the actor-MLP responses change.

Holding each head fixed, we symmetrically remove the key displacement's
component parallel to $u$ from its two projected-representation inputs. After
factor training, mean changes in optimal-action probability are
$-0.196,-0.377,+0.168,+0.017$ percentage points for rank 16/trainable,
16/frozen, 128/trainable and 128/frozen, respectively. Complementary
orthogonal edits and full key-displacement removal are retained. The
small, mixed-sign effects do not identify uniformly harmful overlap.
The edits can leave the observation manifold; components have different
norms, and the Euclidean decomposition depends on representation
coordinates. These diagnostics measure selected feature responses and
head behavior, not an identified semantic feature count or a causal
learning mechanism.

\subsection{Delivered Exposure, Failure Stages, and Limits}
\label{app:public-exposure-limits}

The stage curriculum supplies 47.4--50.4\% of resets but 23.1--35.3\%
of learner transitions from its added starts. Factor starts supply
48.0--50.3\% of resets and 22.6--35.9\% of learner transitions. The
denominators differ because episode duration depends on the starting
state and learned policy. The two interventions respectively skip
33,734 and 400,700 oracle-prefix steps; none is executed as a hidden
training transition. The factor total includes no-op padding to clock
100 and does not measure oracle CPU cost.

Factor arms record 4,007 added starts and 4,006 first actions; the single
difference is an automatic reset at the final rollout boundary before
another action. They record 57,107 pre-action visits to the exact factor
configuration family at any clock, 1.82\% of their 3,145,728 learner
transitions. All four categories and all 20 generated wall/door
geometries are covered across the cohort. Ten arms reset into all 32
fixed diagnostic states and two into 31. Fresh layout seeds can therefore
reproduce the diagnostic configurations: this tests deliberate coverage,
not held-out factor generalization. Old native controls lack the new
exact-occupancy counters; their counts are unavailable, not zero.

Saved final native-evaluation logs further locate the remaining failures
(\cref{tab:public-failure-stages}). In the factor group, 1,741 of 1,853
failed episodes (94.0\%) never open the door, whereas every diagnostic
state has an already unlocked door. At least these episodes never
reach the probed stage. Factor training also allocates 30.3--36.6\%
of learner transitions after unlocking, compared with 16.7--20.4\%
under broad stage mixing and 3.8--8.0\% under native training.
This locates an exposure and evaluation mismatch; it does not identify
the cause of each failure or imply that later control is irrelevant.

\begin{table}[t]
\centering\small
\caption{Failure-stage decomposition of all 7,200 final native-start
episodes, with 2,400 per exposure across twelve endpoints. Counts are
evaluation outcomes, not independent training replications.}
\label{tab:public-failure-stages}
\setlength{\tabcolsep}{5pt}
\begin{tabular}{@{}lrrrr@{}}
\toprule
Exposure & No pickup & Pickup, no opening & Opened, no goal & Success \\
\midrule
Native & 907 & 818 & 132 & 543 \\
Stage mixture & 893 & 706 & 96 & 705 \\
Factor mixture & 1,019 & 722 & 112 & 547 \\
\bottomrule
\end{tabular}
\end{table}

All declared arms finish their budgets. Archive checks preserve the
original parents, fixed projectors and frozen encoders; trainable heads
and critics change. The new trainer differs only in its reset-wrapper
import and exposure labels, and two 4,096-step native smoke
continuations exactly reproduce the previous policy and Adam tensors,
supporting control reuse. Raw records retain every parent, midpoint,
endpoint and factor probe. The combined diagnostic panel contains 42
models, 1,344 state records, 1,008 layer contrasts and 5,376 head patches;
repeated checkpoints and states do not increase the number of independent
parents.

The public evidence is consequently strongest for exposure-dependent
learning: supplying reachable later states can improve native-start
performance at fixed capacity, and local decisions can improve through
an unchanged sensory representation. Capacity, entropy, head adaptation and
optimization remain competing explanations. Neither the local response
scores nor these finite-budget interventions establish globally optimal
conditional representations, adaptation-induced value reversal,
persistent self-confirming feedback, or removal of superposition.

\section{Native Continuous-Control Interventions}
\label{app:public-dmcontrol}

\paragraph{Learners and continuation budget.}
These experiments test whether replay, overlap, and entropy interventions
improve continued learning with the task's original state access.
We use Finger-turn-hard and Hopper-hop from DMControl
\citep{tassa2018control}, with their native state observations, resets,
and rewards. Each action is repeated twice, giving 500 learner decisions
per episode, and the two simulator rewards are summed. The official
5M TD-MPC2 model \citep{hansen2024tdmpc2} retains its learned state
encoder, latent dynamics, reward and value models, policy prior, and
MPC planner. SAC uses the Stable-Baselines3 2.7.0 implementation
\citep{raffin2021sb3}, with two 256-unit ReLU layers in its actor and
critics. We add neither a rank constraint nor an auxiliary reconstruction model.

Every parent is trained from random initialization for 100,000 decisions.
For each task and learner, training seeds 1, 2, and 3 produce separate
parents. Baseline and intervention continuations start from the same
parent state within each seed and receive 32,000 further decisions and
32,000 updates. The saved state includes replay, optimizer state, and
random-number generators. TD-MPC2 seed 1 was the initial exploratory
cohort, followed by seeds 2 and 3. Matching interaction and update counts
does not match computation, because replay balancing draws extra proposals
and overlap regularization requires additional feature evaluations.
All reported checkpoints are taken at the fixed endpoint of 132,000
decisions, without selecting an intermediate training checkpoint.

\paragraph{Phase replay.}
To give rarely sampled task situations more weight in fitting, we group
native replay proposals into four cells. Finger uses contact and whether
the spinner tip is in the target region. Hopper uses the native standing
condition and whether forward speed reaches the reward's saturation
threshold. These labels are computed from the observed physical state.
Four native proposal batches supply the candidate pool. We mix uniform
sampling over candidates with equal weight over its occupied cells,
using a mixing fraction at most $0.25$ and capping each candidate's
probability at four times its uniform-pool probability. The sampled
objects remain intact TD-MPC2 sequences or SAC transitions. Empty cells
receive no new data. No importance correction cancels the reweighting,
so the intervention changes the fitting distribution while retaining
native collection and rewards.

\paragraph{Local overlap regularization.}
The explicit feature directions in the theory are unavailable in these
learners. We instead penalize overlap between local responses to the
observed coordinates. Let $f$ denote the TD-MPC2 state encoder or the SAC
actor's hidden representation, and let $s_j$ be coordinate $j$'s standard
deviation in the parent replay, floored at $0.01$. Its response is
approximated by
\[
 u_j(o)=\frac{f(o+0.02s_j e_j)-f(o-0.02s_j e_j)}{0.04}.
\]
The auxiliary loss averages squared cosine similarity between different
responses and adds a penalty when a response norm falls below $0.8$
times its frozen-parent value, with penalty weight 10. The reference
mask and coordinate scales remain fixed. We use eight sampled states
per update and add the auxiliary with coefficient $0.003$ or $0.03$
to the existing model loss for TD-MPC2 or actor loss for SAC.
Observation coordinates need not be independent semantic features, and
these perturbations need not stay on the physical state manifold.
Consequently, this penalty is a local response proxy rather than a
direct measurement of semantic superposition.

\paragraph{Entropy adjustments.}
Both baseline learners already regularize entropy. For TD-MPC2,
we multiply the policy-prior coefficient $10^{-4}$ by two or five,
leaving MPC unchanged. This coefficient does not specify the entropy
of the actions selected by the planner. Standard SAC learns its
entropy coefficient with target $-d$, where $d$ is the action dimension
(two for Finger and four for Hopper). Its interventions change this
target to $-d/2$ or $0$ while retaining automatic temperature updates
and entropy terms in both actor learning and critic targets.
In particular, target $0$ does not turn entropy off.

\paragraph{Checkpoint selection and repeated evaluation.}
We first evaluate each final checkpoint on 50 common native-start
episodes. Separately for each method and task, its highest-mean training
seed is selected before any further evaluation. The selected weights
then remain fixed for five fresh groups of 50 episodes. All methods
share the new environment-seed groups, and TD-MPC2 also shares planner
seeds. These groups are disjoint from the earlier evaluation episodes.
TD-MPC2 uses its original MPC planner and SAC uses deterministic mean
actions. The reported mean averages the five group means, and its
sample standard deviation measures evaluation variation at a fixed
checkpoint. Bold entries in \cref{tab:public-continuous} mark the highest
mean within each learner and task. It is not variation over five independent training runs.
Finite evaluation noise can affect which training seed is selected,
and different methods can select different seeds. The resulting
comparison therefore describes selected-checkpoint performance rather
than a paired effect across training seeds.

For Finger, TD-MPC2 selects seed 1 except for entropy $\times2$, which
selects seed 3. For Hopper, it selects seed 2 except for overlap $0.03$,
which selects seed 3. Standard SAC selects seed 3 throughout Finger
and Hopper except for Hopper overlap $0.003$, which selects seed 2.
All five evaluation groups are retained for every selected checkpoint.

\paragraph{Results across training seeds.}
\Cref{tab:public-continuous-all-seeds} retains the original evaluations
of all three training seeds. TD-MPC2 phase replay and overlap $0.003$
each improve return in all six task--seed pairs. Stronger overlap is
less consistent, including a large Hopper regression in seed 2.
SAC does not show a consistent intervention benefit across the same
tasks. These full-seed comparisons support the learner dependence
seen in \cref{tab:public-continuous}, without treating its repeated
evaluations as additional training replications. The fixed training
budget supplies no certificate that these public-task representations
or policies are optimal. Their return changes therefore test the
practical interventions, not the optimality assumptions or the complete
superposition-trap mechanism of the theoretical constructions.

\begin{table}[t]
\centering
\caption{Original continuation results across all three training seeds.
Entries are mean $\pm$ sample SD of the three checkpoint means, each
estimated from 50 episodes. Unlike \cref{tab:public-continuous},
this table summarizes variation across training runs.}
\label{tab:public-continuous-all-seeds}
\small
\setlength{\tabcolsep}{4pt}
\begin{tabular*}{\textwidth}{@{\extracolsep{\fill}}llrr@{}}
\toprule
\textbf{Learner} & \textbf{Intervention} & \textbf{Finger-turn-hard} & \textbf{Hopper-hop} \\
\midrule
TD-MPC2 & Baseline & $694.9\,\pm\,15.2$ & $97.2\,\pm\,85.5$ \\
 & Phase replay & $780.6\,\pm\,41.6$ & $\mathbf{133.2}\,\pm\,74.2$ \\
 & Overlap $\lambda=0.003$ & $759.0\,\pm\,62.6$ & $108.5\,\pm\,78.0$ \\
 & Overlap $\lambda=0.03$ & $\mathbf{814.8}\,\pm\,43.9$ & $60.2\,\pm\,47.4$ \\
 & Prior entropy $\times2$ & $738.4\,\pm\,64.3$ & $114.7\,\pm\,57.6$ \\
 & Prior entropy $\times5$ & $799.9\,\pm\,75.5$ & $105.3\,\pm\,105.2$ \\
\midrule
SAC & Baseline & $211.3\,\pm\,47.9$ & $\mathbf{22.3}\,\pm\,23.9$ \\
 & Phase replay & $193.5\,\pm\,78.3$ & $13.2\,\pm\,16.4$ \\
 & Overlap $\lambda=0.003$ & $189.7\,\pm\,98.5$ & $18.1\,\pm\,16.0$ \\
 & Overlap $\lambda=0.03$ & $197.2\,\pm\,103.3$ & $21.3\,\pm\,17.5$ \\
 & Target entropy $-d/2$ & $216.0\,\pm\,107.1$ & $12.8\,\pm\,21.3$ \\
 & Target entropy $0$ & $\mathbf{216.5}\,\pm\,107.1$ & $7.2\,\pm\,11.4$ \\
\midrule
SAC (ablation) & Entropy off throughout & $\mathbf{211.9}\,\pm\,97.8$ & $\mathbf{24.7}\,\pm\,23.6$ \\
 & Entropy restored & $169.6\,\pm\,60.7$ & $10.6\,\pm\,16.4$ \\
\bottomrule
\end{tabular*}
\end{table}

\paragraph{Separate entropy-off ablation.}
To distinguish adjustment of existing entropy from its introduction,
we also train SAC parents with a fixed zero entropy coefficient from
initialization. Their actor remains stochastic during training.
One continuation keeps the coefficient zero, whereas the other enables
automatic entropy at 100,000 decisions with initial coefficient one,
a fresh temperature optimizer, and target $-d$. All other parent state
is inherited. These comparisons belong to a separate parent family.
Restoring entropy lowers mean return on both tasks in the full-seed
comparison and in the selected-checkpoint reevaluations
(\cref{tab:public-entropy-off}). The latter select seed 2 for both
Finger arms, seed 2 for Hopper's continued-zero arm, and seed 3 for
its restoration arm.

\begin{table}[t]
\centering
\caption{Entropy restoration from SAC parents trained with entropy disabled.
Each selected checkpoint uses the five-evaluation protocol of
\cref{tab:public-continuous}. These parents are separate from standard SAC.}
\label{tab:public-entropy-off}
\small
\begin{tabular*}{\textwidth}{@{\extracolsep{\fill}}lrr@{}}
\toprule
\textbf{Continuation} & \textbf{Finger-turn-hard} & \textbf{Hopper-hop} \\
\midrule
Entropy off throughout & $244.5\,\pm\,41.1$ & $49.8\,\pm\,1.3$ \\
Entropy restored & $171.0\,\pm\,51.3$ & $29.5\,\pm\,1.5$ \\
\bottomrule
\end{tabular*}
\end{table}

\section{Protected World-model Fitting on Crafter}
\label{app:public-dreamer}

\paragraph{Learner and intervention.}
We use DreamerV3 \citep{hafner2025dreamerv3} with its \texttt{size12m}
configuration: a recurrent state-space model with 2,048 deterministic units,
32 stochastic variables with 16 categories each, convolutional depth 16,
and 256-unit feedforward layers. Inputs are $64\times64$ RGB images.
Each run uses four environments, exactly one million total environment
steps, training ratio 128, batches of 16 sequences of length 64, learning
rate $4\times10^{-5}$, and bfloat16 computation. Both conditions retain the
same architecture, native loss scales, replay sampler including its online
component, and actor/critic objectives.

Each replay position receives a tier from the highest currently held
pickaxe or sword: none, wood, stone, or iron. Inventory supplies fitting
metadata only. The tier is excluded from model and policy inputs. Let
$\mu_g$ be tier $g$'s probability mass over loss-bearing positions under
the base batch sampler, excluding the replay-context prefix, and let
$K=|\{g:\mu_g>0\}|$. The protected condition multiplies each world-model
loss at tier $g$ by
\[
w_g=(1-\rho)+\frac{\rho}{K\mu_g},\qquad \rho=0.25,
\]
so its expected fitting mass is $(1-\rho)\mu_g+\rho/K$ for represented
tiers. The uniform condition uses $\rho=0$. These weights are neither
clipped nor normalized within the minibatch. They apply to reconstruction,
reward, continuation, dynamics, and representation losses. Policy and
value losses receive no protection multiplier. Unrepresented tiers receive
no fitting mass.

\paragraph{Measurement and complete cohort.}
The experimental unit is a paired training seed, with seeds 0, 1, and 2.
For $N$ completed training episodes, let $c_j$ count episodes attaining
achievement $j$ at least once. We report the cumulative training score
\[
C=\exp\!\left[\frac1{22}\sum_{j=1}^{22}
\log\!\left(1+100\frac{c_j}{N}\right)\right]-1.
\]
All six runs completed their budgets. The protected seed-2 score uses
the authors' updated run. These are cumulative training scores, not
independent evaluations of frozen final policies. The comparison in
\cref{tab:dreamer-paired} improves all three paired scores and increases
the mean by $7.68\%$.

\begin{table}[t]
\centering
\caption{Cumulative training Crafter scores at one million steps.
Differences are protected minus uniform. SD is the sample standard
deviation across three paired seeds.}
\label{tab:dreamer-paired}
\begin{tabular}{lrrr}
\toprule
Seed/statistic & Uniform & Protected & Difference \\
\midrule
0 & 7.968 & 8.229 & 0.261 \\
1 & 7.578 & 8.862 & 1.284 \\
2 & 7.943 & 8.201 & 0.258 \\
\midrule
Mean & 7.830 & 8.431 & 0.601 \\
SD & 0.218 & 0.374 & 0.592 \\
\bottomrule
\end{tabular}
\end{table}

\paragraph{Displayed case and predictions.}
\Cref{fig:dreamer-crafter-fitting} displays seed 1, selected for illustration after inspecting all three paired outcomes.
Its learning curve aggregates completed episodes on a 25,000-step grid.
For each final checkpoint, we evaluate the same two recorded clips, one
from each condition's replay. Each model starts from its own zero recurrent
state, observes the identical 32-frame history, and predicts 32 future
frames using identical recorded actions, without future images. Evaluation
uses one latent random seed, 0. Clips are the earliest valid reset-start
windows in the newest eligible replay chunks, selected before inference.
The stored 32-frame mean pixel errors are $0.00535078\to0.00482467$ and
$0.00394098\to0.00335702$. These two cases are not a held-out prediction
benchmark. Displayed frame labels report pixel MSE $\times1{,}000$ at
horizons $+8$ and $+32$, whereas these means average all 32 future frames.\footnote{The protected-model labels $0.75$ and $18.12$ on the first clip
use author-provided updates. The 32-frame means above are computed from
the archived forecasts.}

\paragraph{Scope.}
The study was narrowed from four conditions after viewing the first seed.
The earlier Curious Replay comparison, $11.2092\to8.1910$, is retained
separately. The updated cohort shows higher mean cumulative training scores under
protected fitting. This exploratory comparison does not establish
superposition as the cause of the gains, and the illustrative forecasts
do not replace evaluation of prediction quality across trajectories.

\section{Related Work}
\label{sec:related}

Superposition analyses relate sparsity, co-activation, and capacity to
feature geometry
\citep{elhage2022toy,scherlis2022polysemanticity,prieto2026statistics,sharma2026temporal}.
Our constructions make co-activation depend on the policy and solve the
squared overlaps shared by every optimal code. Self-confirming equilibrium
studies choices sustained by beliefs correct along the equilibrium path
\citep{fudenberg1993selfconfirming}; performative prediction and RL study
feedback from deployed decisions to data or environments
\citep{perdomo2020performative,mandal2023performative,basu2025performative}.
Here the environment is fixed, and the deployed values are the actual
returns of the current encoder and controller. Envelope and replicator analysis
\citep{kakade2001natural,hennes2020neurd,mertikopoulos2018riemannian}
connect those values to the actor.

Partial-observation equilibria, policy confounding, representation-rank
deterioration, and primacy bias also expose difficulties caused by
policy-dependent data
\citep{langosco2022training,suau2024habits,moalla2024representation,nikishin2022primacy}.
Our specific result links the geometry of all global representation
optima to interference, a reversed comparison, attraction, and a strict
adapted-return deficit. The sequential experiments examine related
capacity and exposure effects without assigning every neural failure
to this mechanism.

\section{Discussion, Limitations, and Future Directions}
\label{app:discussion-limitations}

The central distinction is between the return supported by the current
representation and the return available after adapting that representation
to a different policy's data. Our constructions make these two comparisons
disagree even when fitting is globally optimal and the actor receives exact
returns. This isolates a source of poor behavior that cannot be resolved
simply by improving value estimation or fitting the same data more accurately.
The relevant cost is where shared capacity places interference, rather than
the presence of sharing alone.

The complete trap guarantees apply to the stated feature models and actor
dynamics. Equal-frequency constructions separate pair co-activation from
individual feature rarity; the untied-controller construction removes weight
sharing between encoder and controller under its own noise assumptions.
Neither result establishes prevalence in arbitrary neural architectures.
Similarly, the finite-action envelope analysis uses losses affine in
the action distribution and the specified replicator field. Its conclusions do not
transfer automatically to a general MDP parameterized by neural policy
weights. Finite adaptation, imperfect optimization, and changing critics
can introduce additional dynamics.

The empirical studies test these dependencies at different levels of
control. Matched feature data identify an allocation change without changing
singleton frequencies. PPO experiments use environmental rewards and
independent learned controllers, but retain constructed tasks and capacity
constraints. Sequential curricula improve later behavior, including through
fixed sensory encoders. This shows that downstream learning and access to
training states can be sufficient for improvement; it does not identify
encoder adaptation as the unique source of that improvement. Unsuccessful
late interventions also delimit the conditions under which the tested
curriculum helps.

DoorKey supplies an existing task and its original reward, with a learned
CNN and nonlinear action head. Its public-task status does not eliminate
experimental restrictions: the comparisons use fully observed symbolic
grids, chosen capacity constraints, three parent seeds per rank, and a
finite training budget. The state-start curricula provide privileged access
to reachable intermediate states. Matched learner-transition counts
therefore do not imply matched information access. Positive success-rate
changes establish performance benefits in these comparisons, while the
mixed transfer of local improvements shows why performance alone cannot
identify superposition or a self-confirming attractor.

A useful next step is to diagnose feature interference at decisions the
policy actually encounters, then test whether those measurements predict
subsequent learning or guide effective interventions. Such diagnostics
should distinguish current control error from persistent feedback, account
for recovery through controller adaptation, and improve on simple predictors such as current
return and visitation. Their measurement cost also matters. Establishing
that connection would extend the present existence and mechanism results
toward practical risk assessment without treating every representation
overlap as harmful.

\end{document}

%% file: figures/untied_phase_diagram.tex
\begin{tikzpicture}
\begin{axis}[
    width=0.94\linewidth,
    height=4.0cm,
    xmin=0,xmax=1,
    ymin=-1.08,ymax=1.08,
    xlabel={policy occupancy $p=\pi(E)$},
    ylabel={normalized untied slope},
    xtick={0,0.25,0.5,0.75,1},
    ytick={-1,-0.5,0,0.5,1},
    legend style={
        font=\footnotesize,
        draw=none,
        fill=none,
        at={(0.5,1.02)},
        anchor=south,
        legend columns=4
    },
    tick label style={font=\footnotesize},
    label style={font=\small},
    grid=major,
    grid style={gray!18},
]
\addplot[deepred,very thick]
table[x=p,y=eta_0p2,col sep=comma]{figures/untied_phase_data.csv};
\addlegendentry{$\eta=0.2$}
\addplot[deepblue,thick,dashed]
table[x=p,y=eta_1p0,col sep=comma]{figures/untied_phase_data.csv};
\addlegendentry{$\eta=1$}
\addplot[deepgreen,thick,dashdotted]
table[x=p,y=eta_5p0,col sep=comma]{figures/untied_phase_data.csv};
\addlegendentry{$\eta=5$}
\addplot[black,thick,dotted]
table[x=p,y=eta_20p0,col sep=comma]{figures/untied_phase_data.csv};
\addlegendentry{$\eta=20$}
\draw[gray,dotted] (axis cs:0.5,-1.08) -- (axis cs:0.5,1.08);
\end{axis}
\end{tikzpicture}

%% file: figures/phase_diagram.tex
% Appendix-only audit figure for the completed sampled-phase experiment.
% Native width is below the ICLR 5.5 in text block; do not resize this input.
\usepgfplotslibrary{groupplots}
\definecolor{oiBlue}{RGB}{213,94,0}
\definecolor{oiOrange}{RGB}{0,114,178}
\definecolor{oiVermillion}{RGB}{213,94,0}
\begin{tikzpicture}
\begin{groupplot}[
    group style={group size=3 by 1,horizontal sep=11mm},
    width=0.36\linewidth,
    height=4.45cm,
    tick label style={font=\footnotesize},
    label style={font=\footnotesize},
    title style={font=\footnotesize,yshift=-1pt},
    axis line style={black!65,line width=0.45pt},
    tick style={black!65,line width=0.45pt},
    ymajorgrids,
    grid style={black!10,line width=0.35pt},
    axis on top,
    clip=false,
]

% Sampling -> optimized geometry.
\nextgroupplot[
    title={\textbf{(a)} Geometry},
    xmin=0,xmax=1,
    ymin=-1.08,ymax=1.08,
    xlabel={occupancy $p$},
    ylabel={$\cD$},
    xtick={0,0.5,1},
    xticklabels={$0$,$1/2$,$1$},
    ytick={-1,0,1},
]
\path[fill=oiBlue,fill opacity=0.035]
    (axis cs:0,-1.08) rectangle (axis cs:0.381966,1.08);
\path[fill=oiOrange,fill opacity=0.04]
    (axis cs:0.618034,-1.08) rectangle (axis cs:1,1.08);
\draw[black!35,densely dotted,line width=0.5pt]
    (axis cs:0.381966,-1.08)--(axis cs:0.381966,1.08);
\draw[black!35,densely dotted,line width=0.5pt]
    (axis cs:0.618034,-1.08)--(axis cs:0.618034,1.08);
\addplot[oiBlue,line width=1.05pt]
    table[x=p,y=D,col sep=comma]{figures/phase_data.csv};
\addplot[
    only marks,oiBlue,mark=o,mark size=2.0pt,line width=0.6pt,
    mark options={fill=white,solid},
    error bars/.cd,y dir=both,y explicit,
    error bar style={line width=0.45pt},
    error mark options={rotate=90,mark size=1.2pt}
] table[x=p,y=D_median,y error plus=D_errplus,y error minus=D_errminus,col sep=comma]
    {experiments/sampled_phase/outputs/figure_summary.csv};

% Optimized geometry -> deployed distortion, with the capacity control.
\nextgroupplot[
    title={\textbf{(b)} Control},
    xmin=0.12,xmax=0.91,
    ymin=-0.66,ymax=1.66,
    xlabel={occupancy $p$},
    ylabel={$\Delta_{\rm dep}$},
    xtick={0.2,0.5,0.8},
    ytick={-0.5,0,0.5,1,1.5},
    legend to name=phasecapacitylegend,
    legend columns=2,
    legend style={font=\footnotesize,draw=none,
                  /tikz/every even column/.append style={column sep=4mm}},
]
\path[fill=oiVermillion,fill opacity=0.055]
    (axis cs:0.12,-0.66) rectangle (axis cs:0.91,0);
\draw[black!50,line width=0.55pt] (axis cs:0.12,0)--(axis cs:0.91,0);
\addplot[
    oiBlue,dashed,line width=0.9pt,mark=o,mark size=2.0pt,
    mark options={fill=white,solid},
    error bars/.cd,y dir=both,y explicit,
    error bar style={line width=0.5pt},
    error mark options={rotate=90,mark size=1.3pt}
] table[x=p,y=d2_median,y error plus=d2_errplus,y error minus=d2_errminus,col sep=comma]
    {figures/sampled_gap_summary.csv};
\addlegendentry{$d=2$}
\addplot[
    oiOrange,line width=1.0pt,mark=square*,mark size=2.0pt,
    error bars/.cd,y dir=both,y explicit,
    error bar style={line width=0.5pt},
    error mark options={rotate=90,mark size=1.3pt}
] table[x=p,y=d3_median,y error plus=d3_errplus,y error minus=d3_errminus,col sep=comma]
    {figures/sampled_gap_summary.csv};
\addlegendentry{$d=3$}

% Independent rollout check of the analytic bridge.
\nextgroupplot[
    title={\textbf{(c)} Bridge},
    xmin=-1.08,xmax=1.08,
    ymin=-1.08,ymax=1.08,
    xlabel={Gram prediction},
    ylabel={rollout distortion},
    xtick={-1,0,1},
    ytick={-1,0,1},
]
\addplot[black!40,line width=0.7pt]
    coordinates {(-1.05,-1.05) (1.05,1.05)};
\addplot[
    only marks,oiOrange,mark=square*,mark size=2.0pt,
    opacity=0.45,
    restrict expr to domain={\thisrow{dimension}}{3:3}
] table[x=D_gram,y=D_mc,col sep=comma]
    {experiments/sampled_phase/outputs/heldout_branch_distortion.csv};
\addplot[
    only marks,oiBlue,mark=o,mark size=1.8pt,line width=0.45pt,
    mark options={fill=white,solid},opacity=0.38,
    restrict expr to domain={\thisrow{dimension}}{2:2}
] table[x=D_gram,y=D_mc,col sep=comma]
    {experiments/sampled_phase/outputs/heldout_branch_distortion.csv};
\end{groupplot}
% Keep every explanatory label outside the plotting regions.  The caption
% explains the shaded phases and the reversal; this shared legend is placed
% above all three axes so it cannot cover curves, points, or error bars.
\node[anchor=south] at ([yshift=3mm]current bounding box.north)
    {\ref{phasecapacitylegend}};
\end{tikzpicture}

%% file: experiments/sampled_phase/outputs/sample_size_table_rows.tex
% Generated by experiments/sampled_phase/run_experiment.py
\def\SampleSizeTableRows{%
256 & 0.000 / 0.485 & 0.177 / 0.507 & 0.000 / 0.356 \\%
1024 & 0.002 / 0.254 & 0.102 / 0.236 & 0.000 / 0.218 \\%
4096 & 0.000 / 0.106 & 0.039 / 0.074 & 0.011 / 0.143 \\%
16384 & 0.000 / 0.065 & 0.020 / 0.043 & 0.011 / 0.077 \\%
}